\renewcommand{\min}[1]{\underset{#1}{\operatorname{min}}\;}
\newcommand{\minimize}[1]{\underset{#1}{\t{minimize}}\;}
\renewcommand{\max}[1]{\underset{#1}{\operatorname{max}}\;}
\renewcommand{\lim}[1]{\underset{#1}{\operatorname{lim}}\;}
\newcommand{\subjectto}{\text{subject to}}
\newcommand{\etal}{\textit{et al}. }
\newcommand{\new}[1]{#1}
\newcommand{\smplong}{Planning on Sequenced Manifolds}
\newcommand{\smp}{PSM$^*$}
\renewcommand{\v}{\boldsymbol}
\renewcommand{\t}[1]{{\textrm{#1}}}
\newcommand{\R}{\mathbb{R}}
\newcommand{\opt}{^\star}
\renewcommand{\T}{^\top}
\newcommand{\T}{^\top}
\newcommand{\mat}[3][.9]{
  \renewcommand{\arraystretch}{#1}{\scriptscriptstyle{\left(
    \hspace*{-1ex}\begin{array}{#2}#3\end{array}\hspace*{-1ex}
  \right)}}\renewcommand{\arraystretch}{1.2}
}
\newcommand{\specialcell}[2][c]{%
	\begin{tabular}[#1]{@{}c@{}}#2\end{tabular}}
\newcommand{\inv}{^{-1}}
\newcommand{\comma}{~,}
\newcommand{\period}{~.}
\newcommand{\ga}{\alpha}
\newcommand{\gb}{\beta}
\renewcommand{\ge}{\epsilon}
\renewcommand{\gg}{\gamma}
\newcommand{\gl}{\lambda}
\newcommand{\gr}{\rho}
\newtheorem{lemma}{Lemma}
\newtheorem{theorem}{Theorem}
\newtheorem{definition}{Definition}
\DeclarePairedDelimiter\abs{\lvert}{\rvert}%
\DeclarePairedDelimiter\norm{\lVert}{\rVert}%
\let\oldabs\abs
\def\abs{\@ifstar{\oldabs}{\oldabs*}}
\let\oldnorm\norm
\def\norm{\@ifstar{\oldnorm}{\oldnorm*}}
\newcommand{\ecmnnlong}{Equality Constraint Manifold Neural Network}
\newcommand{\ecmnn}{ECoMaNN}
\newcommand{\configspace}{\mathcal{C}}
\newcommand{\dimambient}{n}
\newcommand{\dimconstraint}{l}
\newcommand{\jacobian}{\mathbf{J}}
\newcommand{\jointposition}{\mathbf{q}}
\newcommand{\eye}{\mathbf{I}}
\newcommand{\constraintmanifold}{M}
\newcommand{\constraintfunction}{{h}_{\constraintmanifold}}
\newcommand{\constraintmanifoldjacobian}{\jacobian_\constraintmanifold}
\newcommand{\onconstraintconfigspace}{{\configspace}_{\constraintmanifold}}
\newcommand{\offconstraintconfigspace}{{\configspace}_{\backslash {\constraintmanifold}}}
\newcommand{\numnearestneighbor}{K}
\newcommand{\idxnearestneighbor}{k}
\newcommand{\KNNset}{\mathcal{K}}
\newcommand{\origKNN}{\hat{\KNNset}}
\newcommand{\recenteredKNN}{\tilde{\KNNset}}
\newcommand{\nearestneighborjointposition}{\hat{\jointposition}}
\newcommand{\recenterednearestneighborjointposition}{\tilde{\jointposition}}
\newcommand{\coordframe}{\mathcal{F}}
\newcommand{\orthobasisvec}{\vct{b}}
\newcommand{\orthobasismat}{\mathbf{B}}
\newcommand{\flippedorthobasismat}{\bar{\orthobasismat}}
\newcommand{\randunitvec}{\vct{u}}
\newcommand{\randscalarweight}{w}
\newcommand{\designmatrix}{\mathbf{X}}
\newcommand{\samplecovariancematrix}{\mathbf{S}}
\newcommand{\diagsingularvalues}{\boldsymbol{\Sigma}}
\newcommand{\eigval}{\lambda}
\newcommand{\righteigmat}{\mathbf{V}}
\newcommand{\covdiagsingularvalues}{\diagsingularvalues}
\newcommand{\coveigval}{\eigval}
\newcommand{\covrighteigmat}{\righteigmat}
\newcommand{\lpcacoordframe}{\coordframe_L}
\newcommand{\normalspaceid}{N}
\newcommand{\normalspaceatjointposition}{\normalspaceid_{\jointposition}\constraintmanifold}
\newcommand{\lnormalcoordframe}{\coordframe_\normalspaceid}
\newcommand{\coveigvec}{\vct{v}}
\newcommand{\constrjaceigvec}{\vct{e}}
\newcommand{\nullspaceid}{\text{N}}
\newcommand{\covnullspaceeigmat}{\righteigmat_{\nullspaceid}}
\newcommand{\covnullspaceolmat}{\mathbf{W}_{\nullspaceid}}
\newcommand{\flippedrighteigmat}{\bar{\righteigmat}}
\newcommand{\flippedcovnullspaceeigmat}{\flippedrighteigmat_{\nullspaceid}}
\newcommand{\constrjacrighteigmat}{\mathbf{E}}
\newcommand{\constrjacnullspaceeigmat}{\constrjacrighteigmat_{\nullspaceid}}
\newcommand{\augjointposition}{\check{\jointposition}}
\newcommand{\augidx}{i}
\newcommand{\augmagnitude}{\epsilon}
\newcommand{\loss}{\mathcal{L}}
\newcommand{\normloss}{\loss_\text{norm}}
\newcommand{\siamreflectionloss}{\loss_\text{reflection}}
\newcommand{\siamfracloss}{\loss_\text{fraction}}
\newcommand{\siamsimilarloss}{\loss_\text{similar}}
\newcommand{\osaloss}{\loss_\text{osa}}
\newcommand{\rotmat}{\mathbf{R}}
\newcommand{\diffson}{\rotmat_{\normalspaceid}}
\newcommand{\globalalignmentrotmat}{\rotmat_{G}}
\newcommand{\skewsymmmat}{\mathbf{L}}
\newcommand{\mstnumnearestneighbor}{H}
\newcommand{\nnsparsegraph}{\mathcal{G}}
\newcommand{\mst}{\mathcal{T}}
\newcommand{\dagedges}{\mathcal{E}}
\newcommand{\rootid}{r}
\newcommand{\rootjointposition}{\jointposition_{\rootid}}
\newcommand{\directededge}{\vct{e}}
\newcommand{\alignedcovnullspaceeigmat}{\covnullspaceeigmat^{aligned}}
\newcommand{\currentnodeid}{d}
\newcommand{\parentnodeid}{p}
\newcommand{\argmin}{\operatorname*{arg\:min}}
\newcommand{\argmax}{\operatorname*{arg\:max}}
\renewcommand{\Re}{\mathbb{R}}
\newcommand{\vct}[1]{\boldsymbol{#1}} 
\renewcommand{\ge}{\epsilon}
\renewcommand{\gg}{\gamma}
\renewcommand{\t}[1]{{\textrm{#1}}}
\newcommand{\gsutast}{\bgroup\markoverwith{\textcolor{green}{\rule[0.5ex]{2pt}{0.4pt}}}\ULon}
\newcommand{\ragst}{\bgroup\markoverwith{\textcolor{magenta}{\rule[0.5ex]{2pt}{0.4pt}}}\ULon}
\newcommand{\gtlocations}{\mathbold{G^\#}}
\newcommand{\gtsensedlocations}{\mathbold{X^\#}}
\newcommand{\gtsensedvalues}{\mathbold{Y^\#}}
\newcommand{\sensedlocations}{\mathbold{X}}
\newcommand{\sensedvalues}{\mathbold{Y}}
\newcommand{\location}{g}
\newcommand{\sensedlocation}{x}
\newcommand{\quantiles}{Q}
\newcommand{\quantile}{q}
\newcommand{\estimatedquantilevalues}{\tilde{V}}
\newcommand{\estimatedquantilevalue}{\tilde{v}}
\newcommand{\quantilevalues}{V}
\newcommand{\quantilespatiallocations}{\mathcal{Q}}
\newcommand{\estimatedquantilespatiallocations}{\tilde{\quantilespatiallocations}}
\newcommand{\score}{l_s}
\newcommand{\numtiles}{|\quantiles|}
\newcommand{\objectivefunction}{f}
\renewcommand{\t}[1]{{\textrm{#1}}}
\newcommand{\mjci}{Maritz-Jarrett }
\newcommand{\thirdcolumn}{.32\columnwidth}
\newcommand{\threeboxplot}{0.26\columnwidth}
\newcommand{\fourboxplot}{.45\columnwidth}
\newcommand{\boxplotheight}{6.2cm}
\newcommand{\nrobots}{N}
\newcommand{\spread}{\alpha}
\newcommand{\estimatedquantilevaluesfinal}{\estimatedquantilevalues_{\textrm{final}}}
\newcommand{\boxheight}{0.5\linewidth}
\newcommand{\boxwidth}{0.49\linewidth}
\newcommand{\psig}{p \leq 5\textrm{e}{-2}} 
\newcommand{\pppsig}{p \leq 1\textrm{e}{-3}}
\newcommand{\wpres}[2]{T {=} #1,~p {\leq} #2}
\newcommand{\ego}{i}
\newcommand{\other}{j}
\newcommand{\othermeasurements}{M^j}
\newcommand{\msg}{\mathcal{M}}
\newcommand{\utility}{u}
\newcommand{\aggutility}{U}
\newcommand{\utilthresh}{T}
\newcommand{\successprob}{p_{\textrm{success}}}
\newcommand{\newobjectivevalues}{\mathcal{Y}}
\begin{document}

\title{ADVANCING ROBOT AUTONOMY FOR LONG-HORIZON TASKS}

\author{Isabel M. Rayas Fern\'andez}

\majorfield{COMPUTER SCIENCE}

\submitdate{August 2023}

\begin{preface}
  \prefacesection{Acknowledgements}
  On its surface, this dissertation appears to be the culmination of 5 years of my work at USC, but it is impossible to overlook the huge community of people throughout my life who have helped me to get to this point. 
I hope to acknowledge a small number of them here.

Firstly, I'd like to thank Gaurav, my PhD advisor, for achieving the perfect balance of hands-on and hands-off mentoring which has allowed me to learn from his deep wisdom and yet grow on my own. His is a voice of calm reason, which was often exactly what I needed. I am grateful to have had a PhD experience where, though it certainly had its stressful aspects, my advisor was never one of them. Thank you for believing in me, and telling me so, when I felt I couldn't possibly do it. 

I'd also like to thank the other members of my PhD dissertation committee, David Caron and Stefanos Nikolaidis, who have been kind and supportive, and always had insightful questions for me.

I am fortunate to have received support from the National Science Foundation Graduate Research Fellowship Program as well as the USC Annenberg and USC Kunzel Fellowships, which together have funded my research and granted me stability and flexibility during my PhD.

My experience would have been immeasurably less pleasant without Lizsl De Leon as my academic advisor and liaison to all things administrative at USC. Her care for me as a student was remarkable and her ability to resolve administrative tasks seemingly instantaneously was a gift to my sanity.

I am grateful for Maja Matarić's mentorship my first year at USC and for her unwavering support when I decided to make the huge move to several desks away. She was instrumental in my decision to come to USC, and for that I am thankful.

It is not an overstatement to say I would have not pursued graduate school at all without Julie Shah, my academic advisor and role model at MIT. She saw potential in me that I did not know was there, and without her gentle but insistent prodding, I would not have sought out research as an undergrad with her bright and talented student, Lindsay Sanneman. My positive experience with Lindsay in Julie's Interactive Robotics Group was key in leading me to where I am today. I'm so thankful to have learned from both of them.

In addition to Julie, there are so many faculty and staff I had the privilege of interacting with and learning from at MIT. To name a few: Marie Stuppard, Joyce Light, Prof. Balakrishnan, Prof. Radovitzky, Prof. Darmofal, Prof. Karaman, Prof. Williams,  Prof. How, Prof. Stirling, Prof. Miller, Prof. Spakovszky, and the entire AeroAstro department; Prof. Gibson, Prof. Gabrieli, Prof. Desimone, Prof. Fee, Prof. Hogan, Prof. Tenenbaum, and the Brain and Cognitive Sciences department; Prof. Minicozzi; Prof. Staffilani; and Prof. Kaelbling. My experiences at MIT were an indispensable foundation for everything following those four years.

I have also been fortunate to have been a part of several student groups throughout my many years as a student. 
MIT Women in Aerospace Engineering gave me a sense of community and friendship as well as an opportunity to develop leadership skills, including allowing me the joys of planning social events.
With such a positive experience with MIT WAE, it was no surprise that I found comfort in being a part of the committee for the USC PhD Women and Gender Minorities in Computing Club as well.

My education has been enhanced by the multiple internships I've been able to have. 
At NASA JPL, I'd like to thank Celina Clewans for her compassionate mentorship, intelligence, and patience, and Rich Rieber for his infectious enthusiasm and willingness to spend seemingly endless time teaching his curious intern. 
Anne Dorsey at Waymo has been a wonderful mentor, teacher, and friend over the past years, and I am lucky to have had such a positive experience even dealing with the realities of remote internships. With her and Jenny Iglesias, I always felt like a valued member of the team, and I'm thankful for their support and constant guidance both in technical and life questions.
I am also thankful to all the other coworkers and mentors that I've had the chance to interact with at both JPL and Waymo. You all made these experiences some of the highlights of the past five years.

If my future mentors are welcoming of even 1\% of the amount of questions that I asked Mr. Skerbitz in my high school Differential Equations \& Linear Algebra class, I will be unbelievably lucky. Thank you for not just tolerating but encouraging my truly incessant questions, even at 7:00am.
When asking for help feels daunting, I lean on the belief that it is not too much or too burdensome to ask, and my sense of curiosity is due in part to Mr. Skerbitz' extraordinary teaching.

I'm thankful for Mr. Kilkelly, another high school math teacher, whose joy in the \textit{aha} moments of understanding made me eager to keep learning and gave me determination to keep trying, even when I struggled. I'm also grateful for all my other teachers  and mentors at Wayzata and St. Louis Park -- Ms. Hagen, Mr. French, Mr. Bisanz, Ms. Speers, Mr. Gulsvig, Ms. Gohman, Ms. Brown, Mr. Cole, Mr. Ellingson, Mr. Heebink, Mr. Bailer, Mary Kay Williams, Mr. Vrudny, Mrs. Westman, Mrs. Anderson, Mrs. Blaeser, Sr. Franco, and Sr. Bleske are just a tiny few -- for giving me the foundations of my education and fostering a safe and nourishing environment for me to grow up in. 

My music teachers throughout the years were instrumental in encouraging creativity and inspiring me to see and hear beauty in every part of life. To Mr. Gitch and Ms. P-O, thank you. 

Pottery and ceramics have been a constant source of happiness and restoration for me. Thank you to Mr. Braun for introducing me to the art, and to Jay, Darrell, Julia, Tom, and Ki for their gentle direction and helpful instruction which have allowed me to experiment and continue building skills.

I've been shaped by my varied experiences abroad, which would not have been possible without the instruction and inspiration from all my many language teachers over the years, as well as my high school advisor Mr. Nagel's trust in me to complete a semester of courses abroad in Germany, and the support and many opportunities abroad that the MIT MISTI office offered me. To Mme. Magallanes, Mme. Janssen, M. Tuura, Mme. Levet, Mme. Showrai, Frau Gut, Frau Jaeger, Herr Suß, Frau Crocker, Nilma Dominique, Ikeda Sensei, and Matsumoto Sensei: Merci, danke, obrigada, arigatō gozaimasu for opening countless doors for me in every corner of the world and for sharing your love of languages with me. 

I am so fortunate to have many homes away from home. Thank you to Vivi and Astrid, Ann, Elli, and everyone else who has welcomed me when I have been in a new place. 

I am honored to have been a part of the Robotic Embedded Systems Lab at USC. My labmates in RESL have been supportive, engaging, and fun; they have challenged me and pushed me to be better and to learn new things; they have been excellent companions and collaborators. I am so grateful to have had the chance to work with such talented roboticists and good people. 
The work presented in this thesis would not have been possible without my collaborators. 
Peter Englert helped me navigate my first end-to-end research project and was a skillful mentor,
Ragesh Ramachandran provided indispensable technical knowledge, 
and Giovanni Sutanto brought new, interesting ideas and was excited to collaborate. 
My day-to-day research would have been undoubtedly less enjoyable without Chris Denniston to work with; our meetings were always rife with informative and interesting discussions, and I'm grateful that he convinced me early on that working on environmental robotics was very cool and fun.

To Micaela, my oldest friend -- our lasting friendship has been a great happiness to me, and knowing you are always there for me is a comfort I feel privileged to have.
Paige, your tremendous compassion and fantastic sense of humor have pulled me through some of my toughest challenges. You are a gift.  
To Kegan, thank you for your optimism and joy with the small things in life. I am lucky to call someone so kind and intelligent a close friend. 
To David, you made difficult parts of the PhD not just bearable but enjoyable, and I am grateful for your technical help and discussions, and your patience, support, and care over the years. 
Rachel, you are a star, and you inspire me every day. It's not often someone is talented in so many ways and still a sweet, caring, generous, and funny person besides the fact.
My dear friend Justin, you never fail to make me laugh, and I'm so thankful for our friendship. My life is better with you in it. Thank you for making me feel at home, wherever in the world we are.
Lilly, I couldn't have done half the things I've done without your constant, unconditional support, although you would say that of course I could have. Thank you for always understanding me even and especially when you're the only one in the world who does. 

Finally, nothing would have been possible without my family. My parents have shown me love, encouragement, courage, discipline, balance, joy, and strength. They've believed in me from the start, and I feel incredibly fortunate to have them and their support in my life. To my siblings, Wisi, Atzín, Agie, and Eli -- even when you're the worst, you're still the best. Thank you for bringing Gabbie, Angie, and CJ into my life, and Luca, Félix, and Mateo.

There are so many more people who have impacted me positively, both directly and indirectly, and I am so grateful to all of them. Thank you.

  {
  \hypersetup{hidelinks} 
  \tableofcontents
  \listoftables
  \listoffigures
  }

  \prefacesection{Abstract}
  Autonomous robots have real-world applications in diverse fields, such as mobile manipulation and environmental exploration, and many such tasks benefit from a hands-off approach in terms of human user involvement over a long task horizon. However, the level of autonomy achievable by a deployment is limited in part by the problem definition or task specification required by the system. Task specifications often require technical, low-level information that is unintuitive to describe and may result in generic solutions, burdening the user technically both before and after task completion. In this thesis, we aim to advance task specification abstraction toward the goal of increasing robot autonomy in real-world scenarios. We do so by tackling problems that address several different angles of this goal. First, we develop a way for the automatic discovery of optimal transition points between subtasks in the context of constrained mobile manipulation, removing the need for the human to hand-specify these in the task specification. We further propose a way to automatically describe constraints on robot motion by using demonstrated data as opposed to manually-defined constraints. Then, within the context of environmental exploration, we propose a flexible task specification framework, requiring just a set of quantiles of interest from the user that allows the robot to directly suggest locations in the environment for the user to study. We next systematically study the effect of including a robot team in the task specification and show that multirobot teams have the ability to improve performance under certain specification conditions, including enabling inter-robot communication. Finally, we propose methods for a communication protocol that autonomously selects useful but limited information to share with the other robots.
\end{preface}

\chapter{Introduction}
  \label{ch:introduction}

Many robotic tasks benefit from a hands-off approach in terms of human user involvement over a long task horizon.
However, robot systems continue to face barriers to deployment for autonomous real-world, widespread use.
One of these barriers stems from the problem descriptions, or task specifications, that robots require from their users. 
These task specifications often require a high degree of specific, technical input, which is not necessarily intuitive to describe,
or they suffer from a lack of customizability or tailoring to the user's end goal, producing outputs that require further post-processing by the user.
In contrast, we would like to be able to specify a robot task in an interpretable way, that both abstracts the specification away from technical details
and flexibly targets solutions that the user cares about.

In this thesis, we propose solutions to five problems, which advance toward this goal. 
\begin{enumerate}
    \item In long-horizon, sequenced tasks, transitions between subtasks can have a large impact on the quality of solution, but humans are not well-equipped to specify the optimal transitions to use. How can we remove the need for hand-specified transition points?
    \item Constraints on robot motions during task execution can be difficult to specify analytically but more accessible to show via demonstration. How can we automatically learn a representation of a motion constraint from data?
    \item In long-horizon environment exploration tasks, task specifications typically target finding a predefined type of location, but different application domains require different types of interest. How can we describe these distinct tasks in a unified high-level, interpretable task specification that directly leads to automatic discovery of those locations?
    \item Different users and domains will also have different resources available to them, including number of robots and inter-robot communication infrastructure, but the effect of task specifications with these conditions on performance remains understudied in a systematic way. How can different robot team setups be effectively used during environment exploration?
    \item Inter-robot communication aids collaborative task performance, but real robot networks are resource-constrained and cannot support arbitrary message loads, and not every message is equally informative. How can we autonomously decide which information is most valuable to send to other robots?
\end{enumerate}


To ground these problems, we propose solutions in two application domains. 
We consider the first two problems in the context of constrained manipulation and mobility applications and provide necessary background for this context in \Cref{ch:motion-planning-background}.

We address (1) in \Cref{ch:psm}, where we tackle the problem of motion planning for sequential tasks, where each subtask therein is defined by the constraints on the motion allowed during it. 
Our solution results in the automatic discovery of optimal transition points between subtasks, removing the need for the human to hand-specify these in the task specification.
In particular, 
we address the problem of planning robot motions in constrained configuration spaces where the constraints change throughout the motion.
The problem is formulated as a fixed sequence of intersecting manifolds, which the robot needs to traverse in order to solve the task.
We propose a class of sequential motion planning problems that fulfill a particular property of the change in the free configuration space when transitioning between manifolds, and
for this problem class, we develop the algorithm \smplong~(\smp) which searches for optimal intersection points between manifolds by using rapidly exploring random trees (RRT$^*$) in an inner loop with a novel steering strategy. 
We provide a theoretical analysis regarding \smp s probabilistic completeness and asymptotic optimality. 
We also evaluate its planning performance on multi-robot object transportation tasks and compare it to other methods.

We address (2) in \Cref{ch:ecomann}, where we propose a way to automatically describe constraints on robot motion by using demonstrated data as opposed to manually-defined constraints.
Specifically, in this work, we consider the problem of learning representations of motion constraints from demonstrations with a deep neural network, which we call {\ecmnnlong} ({\ecmnn}). 
To do so, we propose learning a level-set function of the constraint by aligning subspaces in the network with subspaces of the data such that it can be integrated into a constrained sampling-based motion planner, such as \smp.
We combine both learned constraints and analytically described constraints into the planner and use a projection-based strategy to find valid points. 
We evaluate {\ecmnn} on its representation capabilities of constraint manifolds, the impact of its individual loss terms, and the motions produced with it.

The last three problems we consider in the context of robotics in environmental sensing and monitoring, inspired by our collaboration with a team of biologists using robots to help explore and monitor harmful algae bloom growth in aquatic environments.
\Cref{ch:ipp-background} provides relevant background for this context.

We address (3) in \Cref{ch:quantiles}, where we propose a flexible, abstract task specification, requiring just a set of quantiles of interest from the user that allows the robot to directly suggest locations in the environment to the user that will contain information that they care about.
To this end, we propose to choose locations for scientific analysis by using a robot to perform an informative path planning survey, in contrast to relying on expert heuristics to choose them. 
The survey results in a list of locations that correspond to the quantile values of the phenomenon of interest.
We develop a robot planner using novel objective functions to improve the estimates of the quantile values over time and an approach to find locations which correspond to the quantile values.
We test our approach in four different environments using previously collected aquatic data and compare it to a baseline approach which attempts to maximize spatial coverage.
Additionally, we validate our work in a real field trial.

We address (4) in \Cref{ch:multirobot}, where we systematically study the effect of including a robot team in the task specification for the quantile estimation task introduced in \Cref{ch:quantiles} and show that multirobot teams have the ability to improve accuracy under certain specification conditions, one of which is when they are communication-enabled.
This work aims to understand effects of different multirobot setups, since
a multirobot team can be difficult to practically bring together and coordinate, especially when robot communication is involved.
To this end, we present a study across several axes of the impact of using multiple robots to estimate quantiles of a distribution of interest using an informative path planning formulation. We measure quantile estimation accuracy with increasing team size to understand what benefits result from a multirobot approach in a drone exploration task of analyzing the algae concentration in lakes. 
We additionally perform an analysis on several parameters, including the spread of robot initial positions, the planning budget, and inter-robot communication, and
find that while using more robots generally results in lower estimation error, this benefit is achieved under certain conditions. 
We present our findings in the context of real field robotic applications and discuss the implications of the results.

We address (5) in \Cref{ch:comms}, where we propose methods for a communication protocol which autonomously selects useful but limited information to share with the other robots.
These methods determine the utility of sharing information across a resource-constrained communication network, in which case na\"ive, constant information-sharing is infeasible,
and we show that limiting the load on the network is possible without sacrificing performance on the estimation task.
In particular, we propose online, locally computable metrics for determining the utility of transmitting a given message to the other team members and a decision-theoretic approach that chooses to transmit only the most useful messages, using a decentralized and independent framework for maintaining beliefs of other teammates.
We validate our approach in simulation on a real-world aquatic dataset, 
and show that restricting communication via a utility estimation method based on the expected impact of a message on future teammate behavior results in a decrease in network load while simultaneously maintaining good or improved task performance.

Finally, we conclude the thesis and summarize the contributions in \Cref{ch:conclusion}.


\chapter{Background on Constrained Motion Planning}
\label{ch:motion-planning-background}


In this and the next two chapters of this thesis, we place ourselves in the context of constrained mobile manipulation tasks, and in particular, sequenced tasks.
These sequenced tasks are composed of smaller individual subtasks, which can together be as short as a picking task, or as complex as a multirobot, multi-handover object manipulation task. 
Viewed through this lens, such problems are long-horizon in that their subtasks and associated motion constraints can be arbitrarily numerous. 
As we will discuss, task specifications for these problems have typically required technical descriptions of the subtasks, their constraints, and how to transfer between them.
This chapter gives relevant background for this application domain, laying the foundation for \Cref{ch:psm,ch:ecomann} which address aspects of task abstraction toward increased robot autonomy in these types of problems.
In these chapters, we will consider offline planning in observable environments.

\label{sec:psm_related_work}
\section{Sampling-Based Motion Planning}
Sampling-based motion planning (SBMP) is a broad field which tackles the problem of motion planning by using randomized sampling techniques to build a tree or graph of configurations (also called samples), which can then be used to plan paths between configurations. 
A PRM (probabilistic roadmap) is a path planner that builds a graph in the free configuration space that can be used for multiple queries \cite{amato1996randomized, kavraki1994randomized, overmars1992random}. 
Kavraki \etal \cite{kavraki1994randomized} describe the method as a two-step procedure. First, a roadmap is built by sampling collision-free nodes and edges. Second, a path is found from a start to a goal state by using a graph search algorithm. 
This technique is multi-query, as it 
can be reused for many planning problems with the same system.
Tree methods such as RRTs (rapidly-exploring random trees) are generally single-query, taking a specific start state from which a tree of feasible states is grown toward a specific goal state or region \cite{lavalle1998rapidly}. 
Many extensions to RRT exist, such as bidirectional trees and goal biasing \cite{kuffner2000rrt, lavalle2006planning}. Optimal variants RRT$^*$ and PRM$^*$ find paths that minimize a cost function and guarantee asymptotic optimality by using a rewiring procedure on the edges in the graphs \cite{karaman2011sampling}. 
All these techniques consider the problem of planning without motion constraints; that is, they plan in the free configuration space. 

\section{Constrained Sampling-Based Motion Planning}
A more challenging and realistic motion planning task is that of constrained
SBMP, where there are motion constraints beyond just obstacle avoidance which lead to a free configuration space manifold 
$\onconstraintconfigspace \in
\Re^\dimconstraint$ of lower dimension than the ambient configuration space $\configspace \in
\Re^\dimambient$. 
Many practical tasks require planning with constraints. 
Constrained SBMP algorithms \cite{stilman2010global, berenson2011task, jaillet2013asymptotically, kim2016tangent, kingston2019ijrr, csucan2012motion, cortes2004sampling} extend SBMP to these types of constrained configuration spaces (see \cite{kingston2018sampling} for a review). 
Randomized sampling in a $\dimambient$-dimensional space
will produce samples on an $\dimconstraint$-dimensional space with probability $0$ where  $\dimconstraint< \dimambient$;
these spaces usually cannot be sampled directly. 
Thus, a large focus of these methods is how to generate valid samples and steer the robot while fulfilling the constraints.
Many SBMP algorithms derive from rapidly-exploring random trees (RRT),  probabilistic roadmaps (PRM), or their optimal counterparts RRT$^\star$ and PRM$^\star$.


One family of approaches are projection-based strategies \cite{berenson2011task, stilman2010global, csucan2012motion, kaiser2012constellation} that first sample a configuration from the ambient configuration space and then project it using an iterative gradient descent strategy to a nearby configuration that satisfies the constraint. 
%
Berenson \etal \cite{berenson2011task} proposed CBiRRT (constrained bidirectional RRT) that uses projections to find configurations that fulfill constraints. The constraints are described by task space regions, which are a representation of pose constraints. Their method can also be used for multiple constraints over a single path. However, their approach requires each constraint's active domain to be defined prior to planning, or configurations are simply projected to the nearest manifold rather than respecting a sequential order. 

An alternative is to approximate the constraint surface by a set of local models and use this approximation throughout the planning problem for sampling or steering operations \cite{jaillet2013asymptotically, jaillet2013path, kim2016tangent,suh2011tangent, stilman2010global, bordalba2018randomized}.
For example, AtlasRRT \cite{jaillet2013path} builds an approximation of the constraint consisting of local charts defined in the tangent space of the manifold. This representation is used to generate samples that are close to the constraint. 
Similarly, Tangent Bundle RRT \cite{kim2016tangent, suh2011tangent} builds a bidirectional RRT by sampling a point on a tangent plane, extending this point to produce a new point, and if it exceeds a certain distance threshold from the center of the plane, projecting it on the manifold to create a new tangent plane. 

Kingston \etal \cite{kingston2019ijrr} presented the implicit manifold configuration space (IMACS) framework that decouples two parts of a geometrically constrained motion planning problem: the motion planning algorithm and the method for constraint adherence. With this approach, IMACS acts as a representative layer between the configuration space and the planner. 
Many of the previously mentioned techniques fit into this framework. They present examples with both projection-based and approximation-based methods for constraint adherence in combination with various motion planning algorithms. 

Previous research has also investigated incorporating learned constraints or manifolds into planning frameworks. 
These include performing planning in learned latent spaces
\cite{ichter2019robot}, learning a better sampling distribution in order to take
advantage of the structure of valid configurations rather than blindly sample
uniformly in the search space \cite{ichter2018learning, madaanlearning}, and
attempting to approximate the manifold (both explicitly and implicitly) of valid
points with graphs in order to plan on them more effectively
\cite{phillips2012graphs, havoutis2009motion, zha2018learning}. 

\section{Planning with Sequential Tasks}
One approach to plan sequential motions is task and motion planning (TAMP), which requires semantic reasoning on selecting and ordering actions to complete a higher-level task \cite{kaelbling2013integrated, dantam2016incremental, konidaris2018skills, toussaint2018differentiable, dantam2018task, barry2013hierarchical, cambon2009hybrid, kingston2020informing, pflueger2015multi}. 
Broadly speaking, TAMP is more general and difficult to solve compared to SBMP due to scaling and a more complicated problem definition. 
Though we do not address TAMP in this work, it is an interesting future direction that shares many characteristics with SBMP. 
\cite{hauser2011randomized} proposed the multi-modal motion planning algorithm Random-MMP, which plans motions over multiple mode switches that describe changes in the planning domain (e.g., contacts). 
Their planner builds a tree in a hybrid state using an SBMP that consists of the continuous robot configuration and a discrete mode, which changes based on the domain. 
%
Other previous work \cite{vega2016asymptotically, kingston2020informing, hauser2011randomized, simeon2004manipulation, simeon2000visibility, mirabel2016hpp, schmitt2019modeling} addressed this problem in related ways. They generally assume sampling of the modes and their boundaries, and build graphs that connect these to each other.

Finally, trajectory optimization \cite{stryk1992direct, schulman2013finding, toussaint2017newton, ratliff2015understanding, pavone2019rss} is an approach to solve sequential motion planning problems where an optimization problem over a trajectory is defined that minimizes costs subject to constraints. 
However, trajectory optimization often depends on having a good initial trajectory and can suffer from poor local minima. 
The time points in the trajectory at which costs and constraints are active also need to be specified precisely, which can be challenging since it is difficult to know in advance how long a specific part of a task takes.

\section{Manifold Theory}
\label{ssec:manifold_theory}
We include a brief background on manifold theory (see \cite{Boothby:107707} for a rigorous treatment). 
An important idea in differential geometry is the concept of a \textit{manifold} -- a surface which can be well-approximated locally using an open set of a Euclidean space. 
In general, manifolds are represented using a collection of local regions called \textit{charts} and a continuous map associated with each chart such that the charts can be continuously deformed to an open subset of a Euclidean space. 
An alternate representation of manifolds, which is useful from a computational perspective, is to express them as zero level sets of continuous functions defined on a Euclidean space. 
Since the latter representation is a direct result of the implicit function theorem, it is referred to as \textit{implicit representation} of the manifold. 
For example, a unit sphere embedded in $\R^3$ can be represented implicitly as $\{x \in \R^3 ~|~ \|x\|-1 = 0\}$. 
An implicit manifold is said to be smooth if the implicit function associated with it is smooth. 
The set of all tangent vectors at a point on a manifold is a vector space called the \textit{tangent space} of the manifold.
The \textit{null space} of the Jacobian of the implicit function at a point is isomorphic to the tangent space of the corresponding manifold at that point. 
Since the tangent spaces of a manifold are vector spaces, we can equip them with an inner product structure which enables the computation of the length of curves traced on the manifold. 
A manifold endowed with an inner product structure is called a Riemannian manifold \cite{lee2006riemannian}. 
In this thesis, we will only consider smooth Riemannian manifolds. 

\section{Manifold Learning}
\label{ssec:manifold-learning}
Manifold learning is applicable to many fields and thus has garnered a wide
variety of methods over the years. Linear methods include PCA and LDA
\cite{duda2001pattern}, and while they are simple, they lack the complexity to
describe nonlinearities. Nonlinear methods include multidimensional scaling
(MDS) \cite{kruskal1978multidimensional},
locally linear embedding (LLE) \cite{roweis2000nonlinear},
Isomap \cite{tenenbaum2000global},
and local tangent space alignment (LTSA) \cite{zhang2004principal}.
These approaches use techniques such as eigenvalue decomposition, nearest neighbor reconstructions, and local-structure-preserving graphs to visualize and represent manifolds.
MDS uses eigenvalue decomposition to produce data that preserve distances between points.
LLE preserves local structure by reconstructing each point with the same set of nearest neighbors in both the ambient and the embedding spaces. 
Isomap approximates geodesic distances on a manifold by constructing a graph from the data, and then computing all shortest paths.
In LTSA, the local tangent space information of each point is aligned to create a global representation of the manifold. 
We refer the reader to \cite{ma2011manifold} for a thorough description of these and other methods.
Recent work in manifold learning additionally takes advantage of (deep) neural architectures. 
Some approaches use autoencoder-like models \cite{holden2015learning,
chen2016dynamic} or deep neural networks \cite{nguyen2019neural} to learn
manifolds, e.g. of human motion. 
Others use classical methods combined with neural networks, for example as a
loss function for control \cite{sutanto2019learning} or as structure for the
network \cite{wang2014generalized}. Locally Smooth Manifold Learning (LSML)
\cite{dollar2007non} defines and learns a function which describes the tangent
space of the manifold, allowing randomly sampled points to be projected onto it.

\section{Learning from Demonstration}
Learning from demonstration (LfD) techniques learn a task representation from data which is usable to generate robot motions that imitate the demonstrated behavior.
One approach to LfD is inverse optimal control (IOC), which aims to find a cost function that describes the demonstrated behavior \cite{RatliffBZ06,ziebart2008maximum, ratliff2009learch, levine2012cioc}. 
Recently, IOC has been extended to also extract constraints from demonstrations \cite{puydupin2012convex, englert2017ijrr}. 
There, a cost function as well as equality and inequality constraints are extracted from demonstrations, which are useful to describe behavior like contacts or collision avoidance.

A more direct approach to LfD is based on learning parameterized motion representations \cite{schaal2003computational,Paraschos2013pmp,Pastor_RAIIC_2011}. They represent the demonstrations in a parameterized form such as Dynamic Movement Primitives \cite{Ijspeert_NC_2013}. Here, learning is easier because it is often linear regression; however, the ability to generalize to new situations is more limited. 
The advantage of these approaches lies in dynamic tasks like throwing a ball where a fixed order of states must be followed. A main advantage of using a planner compared to primitives is that planners are better at avoiding collisions between the robot and its environment.
Other approaches to LfD include learning of task space \cite{14-jetchev-AuRo} and deep learning approaches \cite{finn2016guided}. We refer the reader to the survey \cite{argall2009survey} for a broad overview on LfD.

\chapter{\smp: Planning on Sequenced Manifolds}
  \label{ch:psm}
  We begin motivated by the desire to assign the task of selecting subtask transitions to the robot, as opposed to requiring the user to preselect exactly how to transfer between subtasks.
Our solution to this problem, presented in this chapter based on the work published in \cite{englert2020sampling}, results in autonomous optimal subtask intersection point selection and motion plans which solve long-horizon problems with an arbitrary number of subtasks.

  \section{Introduction}
\label{sec:psm_introduction}
Sampling-based motion planning (SBMP) considers the problem of finding a collision-free path from a start configuration to a goal configuration. Probabilistic roadmaps \cite{kavraki1994randomized} or rapidly exploring random trees \cite{lavalle1998rapidly} generate plans for such motions while providing theoretical guarantees regarding probabilistic completeness. Optimal algorithms like RRT$^*$ \cite{karaman2011sampling} additionally minimize a cost function and achieve asymptotic optimality. However, many tasks in robotics 
require the incorporation of additional objectives like subgoals or constraints.
%
SBMP methods are difficult to directly apply to such problems because they require splitting the overall task into multiple planning problems that fit into the SBMP structure.
A disadvantage of this approach is that a goal in the configuration space needs to be specified for each subtask, and each selected subgoal will affect the future subtasks. 

We consider a 
formulation for sequential motion planning where a problem is represented as a fixed sequence of manifolds.
\begin{figure}[t]
	\centering
 	\includegraphics[width=.4\textwidth]{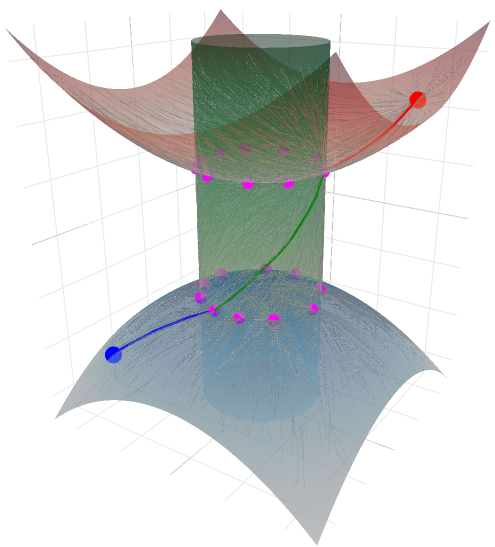}
	\caption{3D Point on Geometric Constraints -- The surfaces visualize the level sets of the three constraints. The task is to move from the start point (red dot) to the goal point (blue dot) while fulfilling the constraints. The line shows a solution found by \smp~that fulfills these constraints and the magenta points are the found intersection vertices.}
	\label{fig:psm_hourglass}
\end{figure}
We propose \smplong~(\smp), an algorithm that searches for an optimal path that starts at an initial configuration, traverses the manifold sequence, and converges when the final manifold is reached. 
Our solution is to grow a single tree over the manifold sequence.
This tree consists of multiple subtrees that originate at the intersections between pairs of manifolds. We propose a novel steering strategy that guides the robot towards these manifold intersections. After an intersection is reached, a new subtree is initialized with the found intersection points and their costs. 
We use dynamic programming over optimal intersection points which scales well to long-horizon tasks since a new subtree is initialized for every manifold.
The algorithm is applicable to a class of problems we call \emph{intersection point independent}, specified by a property which restricts how the free configuration space changes across subtasks (see \Cref{sec:problem_formulation}).

A running example in this chapter is the task of using a robot arm to transport a mug from one table to another while keeping the orientation of the mug upright, a task involving multiple phases and constraints described informally as follows.
First, the arm moves to pick up the mug. In this subtask, the arm can move freely in space and only needs to avoid collisions with obstacles. The second subtask is to grasp the mug. The third subtask is for the arm to transfer the mug with the constraint of always keeping the mug upright. The final subtask is to place and release the mug which requires the base of the mug to be near the table. 
In \Cref{sec:psm_experiments}, we demonstrate the performance of \smp~on similar sequential kinematic planning problems that involve multiple robots and compare it to alternative planning strategies.


\subsection{Background}
Our proposed method builds on the methods introduced in \Cref{ch:motion-planning-background} and extends them towards sequential tasks where the active constraints change during the motion.
We use a modified version of RRT$^*$ in the inner loop of \smp~that can handle goals defined in terms of equality constraints instead of a goal configuration. 
Restricting the problem class to intersection point independent problems (\Cref{ssec:ipip}) allows us to grow a single tree over a sequence of manifolds on which rewiring operations can still be performed. 
We show that \smp~inherits the probabilistic completeness and asymptotic optimality guarantees of RRT$^*$ (\Cref{ssec:comp-optimal}).

Our approach is perhaps closest to the CBiRRT algorithm. 
A main difference is that our method assumes a different problem formulation where the task is given in terms of a fixed sequence of manifolds where the intersections between manifolds describe subgoals that the robot should reach. 
This formulation allows us to define a more structured steering strategy that guides the robot towards the next subgoals. 
Another difference is that CBiRRT does not search for optimal paths, while we employ RRT$^*$ to minimize path lengths. We compare our method \smp~to CBiRRT in \Cref{sec:psm_experiments}.

Our work also shares similarities with others in the task and motion planning (TAMP) problem area, in that we consider planning over multiple manifolds where changes in the configuration space occur due to picking or placing objects. 
However, our work differs in several key ways. First, TAMP approaches assume the task sequence is unknown in advance, and thus selecting the transition to the next constraint is part of the planning problem. Here we specify our problem formulation over a fixed and known sequence of constraints, focusing our algorithm on optimizing the transition points between constraints to find an optimal path. 
Additionally, we do not assume direct sampling of modes or switches is possible; rather, our algorithm is designed to find the mode-switching configurations during exploration toward manifold intersections. 
Importantly, because we build one search tree, these configurations are guaranteed to be connected to the start. This eliminates sampling of configurations that are disconnected from the viable solution space.
Further, we differ by applying \smp~in the domain of intersection point independent problems, 
which we specify for sequential planning problems based on the free space transformation at each manifold intersection (\Cref{ssec:ipip}).
These problems do not include foliated manifolds and are efficiently solvable by growing a single tree.

Finally, in contrast to trajectory optimization, our approach only requires the sequential order of tasks and does not make any assumption of specific time points or durations of subtasks.

\section{Problem Formulation and Application Domain}
\label{sec:problem_formulation}
We consider kinematic motion planning problems in a configuration space $C\subseteq \R^k$. A configuration $q\in C$ describes the state of one or multiple robots with $k$ degrees of freedom in total. 
We represent a manifold $M$ as an equality constraint $h_{M}(q)=0$ where $$h_{M}(q) : \R^{k} \to \R^{l}$$
and $l$ is the intrinsic dimensionality of the constraint ($l \leq k$). The set of robot configurations that are on a manifold $M$ is given by $$C_{M} = \{q\in C ~|~ h_{M}(q) = 0\}\period$$
We define a projection operator $$q_\t{proj} =  \textit{Project}(q, M)$$
that takes a robot configuration $q\in C$ and a manifold $M$ as inputs and maps $q$ to a nearby configuration on the manifold $q_\t{proj}\in C_M$. 
We use an iterative optimization method, similar to \cite{kingston2019ijrr, berenson2011task, stilman2007task}, that iterates 
$$q_{n+1} = q_n - J_{M}(q_n)^+ h_{M}(q_n)$$
until a fixed point on the manifold is reached, which is checked by the condition $||h_M(q_\t{proj})|| \leq \ge$. The matrix $J_{M}(q)^+$ is the pseudo-inverse of the constraint Jacobian $$J_{M}(q) = \frac{\partial }{\partial q} h_{M} (q)\period$$

We are interested in solving tasks that are defined by a sequence of $n+1$ such manifolds 
$\mathcal{M} = \{M_1, M_2, \dots, M_{n+1}\}$
\new{where the number and order of manifolds is given,} and an initial configuration $q_\t{start}\in C_{M_1}$ that is on the first manifold.
The goal is to find a path from $q_\t{start}$ that traverses the manifold sequence $\mathcal{M}$ and reaches a configuration on the goal manifold $M_{n+1}$.
A path on the $i$-th manifold is defined as $$\tau_i : [0, 1] \to C_{M_i}$$
and $J(\tau_i)$ is a cost function of a path
$${J} : \mathcal{T} \to \R_{\geq 0}$$
where $\mathcal{T}$ is the set of all non-trivial paths. 

In the following problem formulation, we consider the scenario where the free configuration space $C_\t{free}$ changes during the task (e.g., when picking or placing objects).
We define an operator $\Upsilon$ on the free configuration space $C_\t{free}$, which describes how $C_\t{free}$ changes as manifold intersections are traversed. 
$\Upsilon$ takes as input the \new{endpoint of a} path $\tau_{i-1}(1)$ on the previous manifold $M_{i-1}$ and its associated $C_\t{free}$, which we denote as $C_{\t{free}, i-1}$. $\Upsilon$ outputs an updated free configuration space 
$$C_{\t{free}, i} = \Upsilon(C_{\t{free}, i-1}, \tau_{i-1}(1))$$
that accounts for the geometric changes due to transitioning to a new manifold. 
\new{Here, we assume these changes only occur at the transition between two manifolds.}
\new{We define $\Upsilon$ in this way rather than with the configuration spaces of the robot and the objects in the environment because our formulation does not have a notion of ``objects"; any object in the world is either an obstacle, or it becomes part the robot system itself.}

We assume access to a collision check routine
$$\textit{CollisionFree}(q_a, q_b, C_{\t{free}, i}) \to \{0,1\}$$ that returns $1$ if the path between two configurations on the manifold $q_a, q_b \in C_M$ is collision-free, $0$ otherwise. Note that here, we use the straight-line path in ambient space between two nearby points. 
However, other implementations for the collision check are possible, such as a path on the manifold.

Returning to our illustrative example, we can now describe one of its constraints more precisely. A simple grasp constraint can be described with \mbox{$h_M(q) = x_g - f_{\t{pos}, e}(q)$} where $f_{\t{pos}, e}$ is the forward kinematics function of the robot end effector point $e$ and $x_g$ is the grasp location on the mug. 
This constraint can be fulfilled for multiple robot configurations $q$, which correspond to different hand orientations that will affect the free configuration space for the subsequent tasks. For example, a mug grasped from the side will have a different free space during the transport phase than a mug grasped from the top. 

\subsection{Problem Formulation}
We formulate an optimization problem over a set of paths $\v\tau = (\tau_1, \dots, \tau_{n})$ that minimizes the sum of path costs under the constraints of traversing $\mathcal{M}$ and of being collision-free. The \emph{planning on sequenced manifolds problem} is
\begin{align} %
	\begin{alignedat}{2} %
	\label{eq:smp_problem}
	\v\tau\opt =~ &\argmin{\v\tau} \sum_{i=1}^n J(\tau_i) &&\\
	\text{s.t.}\quad &\tau_1 (0) = q_\t{start} &\\
	&\tau_i(1) = \tau_{i+1}(0)  &&\forall_{i=1,\dots,n-1} \\ 
	&C_{\t{free}, i+1} = \Upsilon(C_{\t{free}, i}, \tau_{i}(1))~~~~ &&\forall_{i=1,\dots,n}\\
	&\tau_i(s) \in C_{M_i} \cap C_{\t{free}, i} &&\forall_{i=1,\dots,n}~ \forall_{s \in [0, 1]}\\
	&\tau_{n}(1) \in C_{M_{n+1}} \cap C_{\t{free}, n+1} &&
	\end{alignedat} %
\end{align}
The second constraint ensures continuity such that the endpoint of a path $\tau_i(1)$ is the start point of the next path $\tau_{i+1}(0)$. The third constraint captures the change in the collision-free space defined by the operator $\Upsilon$. The last two constraints ensure that the path is collision free and on the corresponding manifolds. The endpoint of $\tau_{n}$ must be on the goal manifold $M_{n+1}$, which denotes the successful completion of the task.
\new{Note that this problem can be seen as a special case of the multi-modal motion planning problem presented in \cite{hauser2011randomized} where the manifolds are provided in their sequential order.}

\new{There are several advantages to formulating the problem with manifolds in this way.} 
One is that it is not necessary to choose a specific target configuration in $C$ and thus a wider range of goals 
can be described in the form of manifolds. 
Another is that it is possible to describe complex sequential tasks in a single planning problem, not requiring the specification of subgoal configurations. 
The algorithm proposed in \Cref{sec:method} searches for a path that solves this optimization problem for the problem class described in the next section.

\begin{figure}[t]
    \centering
    \includegraphics[width=1\linewidth]{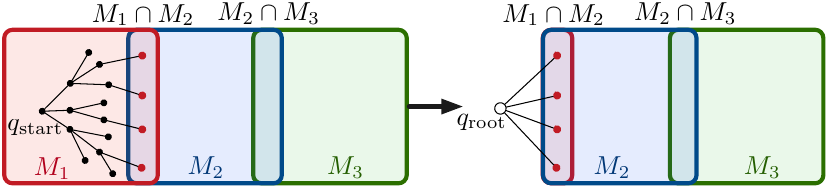}
    \caption{Initialization of a new subtree (\cref{alg:newtreeinit_start} -- \cref{alg:newtreeinit_end} of \Cref{alg:smp}). The reached goal nodes at the intersection $M_1\cap M_2$ are used to initialize the next tree where $q_\t{root}$ is a synthetic root node that maintains the tree structure.}
    \label{fig:tree_init}
\end{figure}

\subsection{Intersection Point Independent Problems}
\label{ssec:ipip}
We now use the problem formulation in 
\Cref{eq:smp_problem} to describe robotic manipulation planning problems in which the manifolds describe subtasks for the robot to complete (e.g., picking up objects). The solution is an end-to-end path across multiple manifolds.
%


For a certain class of sequential manifold planning problems, the following property holds: For each manifold intersection \mbox{${M}_{i} \cap {M}_{i+1}$}, the free space output by $\Upsilon$ is set-equivalent for every possible path $\tau_{i}$. In other words, the precise action taken to move the configuration from one constraint to the next does not affect the feasible planning space for the subsequent subtask. When this property holds for all intersections, we call the problem \emph{intersection point independent}. The condition for this class of problems is
\begin{align}
	\begin{alignedat}{2} %
		\label{eq:ip_independence}
		&\forall i \in [0, n] ~\forall \tau_i, \tau_i' \in \mathcal{T} : \tau(1), \tau'(1) \in C_{M_i} \cap C_{M_{i+1}} \\
		&\Rightarrow\Upsilon(C_{\t{free}, i}, \tau(1)) \equiv \Upsilon(C_{\t{free}, i}, \tau'(1))
	\end{alignedat}
\end{align}
where $\equiv$ denotes set-equivalence. In this work, we focus on the intersection point independent class of motion planning problems, which encompass a wide range of common problems.
For grasping constraints, a notion of object symmetry about the grasp locations results in intersection point independent problems. If the object to be grasped is a cylindrical can, for example, allowing grasps to occur at any point around the circumference but at a fixed height would be intersection point independent. Any two grasps with the same relative orientation of the gripper result in the same free configuration space of the system (robot + can).
However, suppose the grasps can occur at any height on the can. Now, a grasp near the top of the can and a grasp in the middle of the can result in different free configuration spaces, and thus this would be an intersection point dependent problem.

Focusing on intersection point independent problems allows us to define an efficient algorithm that grows a single tree over a sequence of manifolds. The more general intersection point dependent problem covers a wider class of problems. However, they are more difficult to solve because they require handling foliated manifolds \cite{kim2014randomized} (e.g., every grasp leads to a different manifold). 
Our illustrative example is one such problem, since the grasp on the handle and the grasp from the top of the mug result in different free spaces. We provide some insights in \Cref{sec:psm_conclusion} on how the proposed algorithm could be used to tackle them.

\section{\smplong}
\label{sec:method}

\begin{table}[t]
	\centering
 \caption{\smp~algorithm parameters and their description.}
\begin{tabular}{|c|l|}
	\hline
	\textbf{Symbol} & \textbf{Description}\\
	\hline
	$\ga$ & Max step size of an edge in the tree\\
	$\gb$ & Probability of SteerConstraint step\\
	$\gr$ & Min distance between intersection nodes\\
	$\ge$ & Threshold for a point to be on a manifold\\
	$r$ & Max manifold intersection projection distance\\
	\hline
\end{tabular}
\label{tab:parameter}
\end{table}

We now present the algorithm \smp~that solves the problem formulated in \Cref{eq:smp_problem} for tasks that fulfill the intersection point independent property defined in \Cref{eq:ip_independence}.
The algorithm searches for an optimal solution to the constrained optimization problem, which is a sequence of paths $\v\tau = (\tau_1, \dots, \tau_{n})$ where each $\tau_i$ is a collision-free path on the corresponding manifold $M_i$. 

The steps of \smp~are outlined in \Cref{alg:smp}.
The input to the algorithm is a sequence of manifolds $\mathcal{M}$, an initial configuration \mbox{$q_\t{start}\in M_1$} on the first manifold, 
\new{and the hyperparameters of the algorithm $(\ga, \gb, \gr, \ge, r, m)$ (see  \Cref{tab:exp_parameter}).
$V, E$ are the vertices and edges, respectively, in the tree that the algorithm builds, while $C$ is the robot configuration space and $C_{\t{free}, 1}$ is the initial free configuration space.}

The overall problem is divided into $n$ subproblems, \new{which correspond to the number of manifold intersections in $\mathcal{M}$.} Each subproblem \new{can be broadly thought of as growing a tree between two consecutive manifold intersections.} We give a high-level summary here.
In the inner loop (\cref{alg:innerloop_start} -- \cref{alg:innerloop_end}; see \Cref{ssec:phase12}), a tree is iteratively grown from a set of initial nodes toward the intersection with the next manifold.
First, a new constraint-adhering configuration $q_\t{new}$ is found by sampling a point, steering, and projecting it (\cref{alg:steer_start} -- \cref{alg:steer_project_step}).
Second, in \cref{alg:check_proj} -- \cref{alg:extend_end}, the point is checked for validity before being added to the tree as well as the set of intersection points (if applicable).
\new{After the inner loop completes}, the algorithm initializes the next subproblem with the found intersection points in \cref{alg:newtreeinit_start} -- \cref{alg:next_freespace} (see \Cref{ssec:phase3}). It returns the optimal path that traverses all manifolds in $\mathcal{M}$.


\subsection{Inner Loop: Growing a Tree to the Next Manifold}
\label{ssec:phase12}
The first phase of the inner loop focuses on producing a new candidate configuration $q_\t{new}$ which lies on the constraint manifold $M_i$ and adding it to the tree. 
The \smp \_STEER routine in \Cref{alg:steer_project} computes $q_\t{new}$.
Instead of targeting a single goal configuration as in general SBMP, we propose two novel steering strategies that steer toward the intersection between manifolds $M_i\cap M_{i+1}$. 
In \cref{alg:steer_start} -- \cref{alg:find_nearest} of \Cref{alg:smp}, a new target point $q_\t{rand}$ in the configuration space is sampled and its nearest neighbor $q_\t{near}$ in the tree is computed. Next, in \Cref{alg:steer_project} one of the following two steering strategies is selected to find a direction $d$ in which to extend the tree:
\begin{enumerate}
    \item \new{$\t{\textbf{SteerPoint}}(q_\text{near}, q_\text{rand}, M_i)$} extends the tree from \mbox{$q_\text{near}\in C_{M_i}$} towards $q_\text{rand}\in C$ while staying on the current manifold $M_i$.
    \item \new{$\t{\textbf{SteerConstraint}}(q_\text{near}, M_i, M_{i+1})$} extends the tree from $q_\text{near}\in C_{M_i}$ towards the intersection of the current and next manifold $M_i\cap M_{i+1}$.
\end{enumerate}

\subsubsection{\new{$\t{SteerPoint}(q_\t{near}, q_\t{rand}, M_i)$}}
\label{sec:steer_point}
In this extension step, the robot is at $q_\t{near}\in C_{M_i}$ and should step towards the target configuration $q_\t{rand}\in C$ while staying on the manifold $M_i$. We formulate this problem as a constrained optimization problem of finding a curve $\gg ~:~ [0,1]\to C$ that
\begin{align}
    \begin{alignedat}{2}
    	&\minimize{\gg} &&||\gg(1) - q_\t{rand}||^2\\
		&\subjectto~~&&\gg(0) = q_\t{near},\quad \int_{0}^1 ||\dot{\gg}(t)|| ~\mathrm{d}s \leq \ga\\
		& & &h_{M_i}(\gg(s)) = 0 \quad\qquad \forall~ s\in [0,1]
	\label{eq:steer_point}
	\end{alignedat}
\end{align}
This problem is hard to solve due to the nonlinear constraints. Since the steering operations are called many times in the inner loop of the algorithm, we choose a simple curve representation and only compute an approximate solution to this problem. We parameterize the curve as a straight line $\gg(s) = q_\t{near} + s\ga \frac{d}{||d||}$ with length $\ga$
\new{where the direction $d$ is chosen as orthogonal projection of $q_\t{rand}-q_\t{near}$ onto the tangent space of the manifold at $q_\t{near}$.}
We apply a first-order Taylor expansion of the manifold constraint $h_{M_i}(q_\t{near} + d) \approx h(q_\t{near}) + J_{M_i}(q_\t{near}) d$, which reduces the problem to
\begin{align}
	\begin{alignedat}{3}
	&\minimize{d} &&\frac{1}{2} ||d  - (q_\t{rand}-q_\t{near})||^2\\
	&\subjectto~~ &&J_{M_i}(q_\t{near}) d = 0
	\end{alignedat}
\end{align}
The optimal solution of this problem is
\begin{align}
	\begin{alignedat}{2}
	d &=(I -  J_{M_i}\T (J_{M_i} J_{M_i}\T)\inv J_{M_i}) (q_\t{rand}-q_\t{near})\\
	&= V_{\bot} V_{\bot}\T (q_\t{rand}-q_\t{near})
	\label{eq:steer_point_sol}
	\end{alignedat}
\end{align}
where $V_{\bot}$ contains the singular vectors that span the right nullspace of $J_{M_i}$ \cite{ratliff2014multivariate}. We normalize $d$ later in the algorithm, and thus do not include the constraint $||d||\leq \alpha$ in the reduced optimization problem. The new configuration $q_\t{near} + \alpha \frac{d}{||d||}$ will be on the tangent space of the manifold at configuration $q_\t{near}$, so that only few projection steps will be necessary before it can be added to the tree. Note that projection may fail to converge, and if this occurs the sample is discarded.

\begin{algorithm}[t]
\begin{algorithmic}[1]
	\algrenewcommand\algorithmicindent{1.5em}%
	\State $V_1 = \{q_{\t{start}}\}$; $E_1 = \emptyset$; $n=\text{len}(\mathcal{M})-1$
	\For{$i=1$ to $n$}
		\State $V_\t{goal} = \emptyset$
		\For{$k = 1$ to $m$} \label{alg:innerloop_start}
			\State $q_\t{rand} \leftarrow \t{Sample}(C)$  \label{alg:steer_start}
			\State $q_\t{near} \leftarrow \t{Nearest}(V_i, q_\t{rand})$ \label{alg:find_nearest}
			\State{$q_\t{new} \leftarrow \t{\smp\_STEER} (\alpha, \beta, r, q_\t{near}, M_i, M_{i+1}$)} \label{alg:steer_project_step}
	\If{\new{$||h_{M_{i}} (q_\t{new})|| > \ge$}} \label{alg:check_proj}
	    \State continue
	\EndIf
			\If{RRT$^*$\_EXTEND$(V_i, E_i, q_\text{near}, q_\text{new}, C_{\text{free},i})$} \label{alg:extend_start}
				\If{$||h_{M_{i+1}} (q_\t{new})|| < \ge$ \label{alg:next_manifold} \textbf{and}\\ $\hspace{5.5em} ||\t{Nearest}(V_\t{goal}, q_\t{new}) - q_\t{new}|| \geq \gr$} \label{alg:avoid_duplicates}
					\State $V_\t{goal} \leftarrow V_\t{goal} \cup q_\t{new}$
				\EndIf
			\EndIf \label{alg:extend_end}
        \EndFor \label{alg:innerloop_end}
		\State{// initialize next tree with the intersection nodes}
		\State $q_\t{root}=\t{null}, V_{i+1} = \{q_\t{root}\}; E_{i+1}=\emptyset$ \label{alg:newtreeinit_start}
		\For{$q \in V_\t{goal}$}
			\State $V_{i+1} \leftarrow V_{i+1} \cup \{q \}; E_{i+1} \leftarrow E_{i+1} \cup \{(q_\t{root}, q )\}$
		\EndFor\label{alg:newtreeinit_end}
		\State{\new{$C_{\t{free},i+1} \leftarrow \Upsilon(C_{\t{free},i}, V_\t{goal}[0])$}} \label{alg:next_freespace}
	\EndFor
	\State \textbf{return} $\mathrm{OptimalPath}(V_{1:n}, E_{1:n}, q_\t{start}, M_{n+1})$
\end{algorithmic}
\caption{\smp~$(\mathcal{M}, q_{\t{start}}, \ga, \gb, \ge, \rho, r, m)$}
\label{alg:smp}
\end{algorithm}

\begin{algorithm}[t]
\begin{algorithmic}[1]
	\algrenewcommand\algorithmicindent{1.0em}%
	\If{\new{$\t{Sample}(\mathcal{U}(0,1)) < \gb$}}
		\State $d \leftarrow \t{SteerConstraint}(q_\t{near}, M_i, M_{i+1})$ 
	\Else
		\State $d \leftarrow \t{SteerPoint}(q_\t{near}, q_\t{rand}, M_i)$
	\EndIf
	\State $q_\t{new} \leftarrow q_\t{near} + \alpha \frac{d}{||d||}$ \label{alg:steer_end}
	\If{$||h_{M_{i+1}} (q_\t{new})|| <$ \new{$ \t{Sample}(\mathcal{U}(0,r))$}} \label{alg:project_start}
		\State $q_\t{new} \leftarrow \t{Project}(q_\t{new}, M_{i} \cap M_{i+1})$
	\Else
		\State $q_\t{new} \leftarrow \t{Project}(q_\t{new}, M_{i})$
	\EndIf \label{alg:project_end}
	\State \textbf{return} $q_\t{new}$
\end{algorithmic}
\caption{$\t{\smp\_STEER} (\alpha, \beta, r, q_{\t{near}}, M_i, M_{i+1})$}
\label{alg:steer_project}
\end{algorithm}

\subsubsection{\new{$\t{SteerConstraint}(q_\t{near}, M_i, M_{i+1})$}}
\label{sec:steer_constraint}
This steering step extends the tree from $q_\t{near}\in C_{M_i}$ towards the intersection of the current and next manifold $M_i\cap M_{i+1}$, which can be expressed as the optimization problem
\begin{align}
\begin{alignedat}{2}
	&\minimize{\gg} &&||h_{M_{i+1}}(\gg(1))||^2 \\
	&\subjectto\quad &&\gg(0) = q_\t{near},\quad \int_{0}^1 ||\dot{\gg}(t)|| ~\mathrm{d}s \leq \ga\\ 
	&&&h_{M_i}(\gg(s)) = 0 \quad\qquad \forall~ s\in [0,1]
	\end{alignedat}
\end{align}
The difference from \Cref{eq:steer_point} is that the loss is now specified in terms of the distance to the next manifold $h_{M_{i+1}}(\gg(1))$. This cost pulls the robot towards the manifold intersection. Again, we approximate the curve with a line $\gamma(s)$ and apply a first-order Taylor expansion to the nonlinear terms, which results in the simplified problem
\begin{align}
\begin{alignedat}{2}
	&\minimize{d} &&\frac{1}{2}||h_{M_{i+1}}(q_\t{near}) + J_{M_{i+1}}(q_\t{near})d||^2\\
	&\subjectto~~&&J_{M_i}(q_\t{near}) d = 0\period
	\end{alignedat}
\end{align}
A solution $d$ can be obtained by solving the linear system
\begin{align}
    \label{eq:steer_constraint_sol}
    \mat{c c}{J_{M_{i+1}}\T J_{M_{i+1}} & J_{M_{i+1}}\T \\ J_{M_{i}} & 0} \mat{c}{d\\ \gl} = \mat{c}{-J_{M_{i+1}}\T h_{M_{i+1}}\\ 0}
\end{align}
where $\gl$ are the Lagrange variables. The solution is in the same direction as the steepest descent direction of the loss projected onto the tangent space of $M_i$.
Similar to the goal bias in RRT, a parameter $\gb\in[0,1]$ specifies the probability of selecting the SteerConstraint step rather than SteerPoint. 

After the steering strategy is determined and $d$ is computed, $q_\t{new}$ is projected either on $M_i$ or on the intersection manifold $M_i \cap M_{i+1}$ depending on the distance to the intersection of the manifolds measured by $||h_{M_{i+1}} (q_\t{new})||$ (\cref{alg:project_start} of \Cref{alg:steer_project}).
The threshold for this condition is sampled uniformly between $0$ and $r$, where the parameter $r\in \R_{>0}$ describes the closeness required by a point around $M_{i+1}$ to be projected onto $M_i \cap M_{i+1}$. 
This randomization is necessary in order to achieve probabilistic completeness that is discussed in \Cref{ssec:comp-optimal}, which also gives a formal definition of $r$. 

At this point, $q_\t{new}$ has been produced and the algorithm attempts to add it to the tree using the extend routine from RRT${}^*$ \cite{karaman2011sampling}, which we outline in \Cref{sec:rrt_extend}.


The final step in each inner iteration of \Cref{alg:smp} is to check two necessary conditions to determine if the point should also be added to the set of intersection nodes $V_\t{goal}$: 
\begin{enumerate}
    \item The point $q_\t{new}$ has to be on the next manifold $h_{M_{i+1}}$ (\cref{alg:next_manifold}); 
    \item $V_\t{goal}$ does not already contain the point $q_\t{new}$ or a point in its vicinity (\cref{alg:avoid_duplicates}).
\end{enumerate}

For the second condition, we introduce the parameter $\gr \in \R_{\geq 0}$ that is the minimum distance between two intersection points. As we will discuss in the theoretical analysis, $\gr$ must be $0$ in order to achieve probabilistic completeness and asymptotic optimality. In practice, we usually select $\gr > 0$, which results in better performance since it avoids having a large number of nearby and duplicate intersection points.

\subsection{Outer Loop: Initializing the Next Subproblem}
\label{ssec:phase3}
After the inner loop of \smp~completes, a new tree is initialized in \cref{alg:newtreeinit_start} -- \cref{alg:newtreeinit_end} with all the intersection nodes in $V_\t{goal}$ and their costs so far. Initializing a new tree only with the intersection nodes has the advantage that subsequent planning steps focus on these relevant nodes, which results in a lower computation time and more goal-directed growing of the tree towards the next manifold. To keep the tree structure, we add a synthetic root node $q_\t{root}$ as parent node for all intersection nodes (\Cref{fig:tree_init}). In \cref{alg:next_freespace}, the free configuration space is updated based on the reached intersection node. Convergence of the algorithm can be defined in various ways. We typically set an upper limit to the number of nodes or provide a time limit for the inner loop. After reaching the convergence criteria, the algorithm returns the path with the lowest cost that reached the goal manifold $M_{n+1}$.

\subsection{Completeness and Optimality}
\label{ssec:comp-optimal}
Under the assumption that $\rho$ is 0, \smp~is provably probabilistically complete and asymptotically optimal. Here, we outline our approach to proving these properties. Probabilistic completeness is proved in two steps. First we prove probabilistic completeness of \smp~on a single manifold. We claim subsequently that almost surely the tree grown on a manifold can be expanded onto the next manifold as the number of samples tends to infinity. The calculations related to the first step are based on \cite[Theorem 1]{Kleinbort2019}. Theorem 1 in \Cref{appendix:psm} proves the subsequent claim. 

Proving the asymptotic optimality of \smp~relies on three lemmas (Lemma 2, 3 and 4 in \Cref{appendix:psm}). Lemma 2 shows that for any weak clearance path \cite{karaman2011sampling} there exist a sequence of paths with strong clearance \cite{karaman2011sampling} that converges to a path with weak clearance. Our analysis assumes that the optimal path has weak clearance. Hence, there exist a sequence of strong clearance paths that converges to the optimal path.  Using Lemma 3 and Lemma 4, we prove that with an appropriate choice of the parameter $\gr$, the \smp~tree grown on the sequenced manifolds contains the paths which are arbitrarily close to any strong clearance path in the sequence of paths that converges to the optimal path. Finally, using continuity of the cost function, we prove that \smp~is asymptotically optimal.

Detailed  proofs of the probabilistic completeness and asymptotic optimality can be found in \Cref{sec:Probabilistic completeness} and \Cref{sec:Asymptotic optimality}.

\section{Evaluation}
\label{sec:psm_experiments}
In the following experiments, we solve kinematic motion planning problems where the cost function measures path length. We compare the following methods with each other:

\begin{itemize}[leftmargin=*]
    \item \textbf{\smp}: The method proposed here -- \Cref{alg:smp}.
    \item \textbf{\smp~(Greedy)}: \Cref{alg:smp} with the modification that only the node with the lowest cost in $V_\t{goal}$ is selected to initialize the next tree (\cref{alg:newtreeinit_start} -- \cref{alg:newtreeinit_end} of \Cref{alg:smp}).
    \item \textbf{\smp~(Single Tree)}: This method grows a single tree over the manifold sequence without splitting it into individual subtrees. The algorithm uses $\rho=0$. It is explained in detail in Algorithm 3 in \Cref{sec:smp_single_tree}.
    \item {\textbf{RRT$^*$+IK}}: This method consists of a two-step procedure that is applied to every manifold in the sequence. First, a goal point on the manifold intersection is generated via inverse kinematics by randomly sampling a point in $C$ and projecting it onto the manifold intersection. Next, RRT$^*$ is applied to compute a path on the current manifold towards this node \cite{kingston2019ijrr}. This procedure is repeated until a point on the goal manifold is reached.
    \item {\textbf{Random MMP}}: The Randomized Multi-Modal Motion Planning algorithm \cite{hauser2011randomized} was originally developed for the more general class of motion planning problems where the sequence of manifolds is unknown. Here, we apply it to the simpler problem where the sequence is given in advance and use RRT$^*$ to connect points between mode transitions.
    \item {\textbf{CBiRRT}: The Constrained Bidirectional Rapidly-Exploring Random Tree algorithm \cite{berenson2009manipulation, berenson2011task} grows two trees towards each other with the RRT Connect strategy \cite{kuffner2000rrt}. In each steering step, the sampled configuration is projected onto the manifold with the lowest level set function value. Since CBiRRT does not optimize an objective function, we apply a short cutting algorithm as a post-processing step.}
\end{itemize}
We compare these methods on the following criteria:
\begin{itemize}[leftmargin=*]
    \item \textbf{Path length} -- The length of the found path in $C$ space.
    \item \textbf{Success rate} -- The number of times a collision-free path to the goal manifold is found for different random seeds.
    \item \textbf{Computation time} -- Time taken to compute a path. All experiments are run on a 2.2 GHz Quad-Core Intel Core i7.
\end{itemize}
\new{The parameter values $\alpha, \beta, \epsilon, \rho, r, m$ for the individual experiments were chosen experimentally and are summarized in \Cref{tab:exp_parameter}.}

\begin{table}[t]
\caption{Parameters of the 3D Point on Geometric Constraints and Multi-Robot Object Transport experiments.}
    \label{tab:exp_parameter}
\centering
\renewcommand{\tabcolsep}{4pt}
    \begin{tabular}{|c|c|c|c|}
        \hline
        \textbf{Parameter name} & \textbf{Symbol} & \specialcell{\textbf{3D point}\\ (\Cref{ssec:geom_exp})} & \specialcell{\textbf{Robot transport}\\ (\Cref{ssec:robot_exp})}\\
        \hline
        max step size & $\alpha$ & $1.0$ & $1.0$\\
        goal bias probability & $\beta$ & $0.1$ & $0.3$\\
        constraint threshold & $\epsilon$ & $0.01$ & 1e-5\\
        duplicate threshold & $\rho$ & $0.1$ & $0.5$\\
        projection distance & $r$ & $1.5$ & $0.5$\\
        number of samples & $m$ & $1200$ & $2000$\\
        \hline
    \end{tabular}
\end{table}

\subsection{3D Point on Geometric Constraints}
\label{ssec:geom_exp}
\new{We demonstrate the planning performance and properties of the individual methods on a simple point that can move in a 3D space.} The point needs to traverse three constraints defined by geometric primitives. The configuration space is limited to $[-6, 6]$ in all three dimensions.
The initial state is \mbox{$q_\t{start}=(3.5, 3.5, 4.45)$} and the sequence of manifolds is
\begin{enumerate}
	\item Paraboloid: $$h_{M_1}(q) = 0.1 q_1^2 + 0.1 q_2^2 + 2 - q_3$$
	\item Cylinder: $$h_{M_2}(q) = 0.25 q_1^2 + 0.25 q_2^2 - 1.0$$
	\item Paraboloid: $$h_{M_3}(q) = -0.1 q_1^2 - 0.1 q_2^2 - 2 - q_3$$
	\item Goal point:  $$h_{M_4}(q) = q - q_\t{goal}$$ with the goal configuration \mbox{$q_\t{goal}=(-3.5, -3.5, -4.45)$}.
\end{enumerate}
We evaluate the algorithms on two variants of this problem: an obstacle-free variant (\Cref{fig:psm_hourglass}), and a variant that contains four box obstacles placed at the intersections between the manifolds (\Cref{fig:psm_hourglass_paths}). In \Cref{fig:psm_hourglass}, $q_\t{start}$ is drawn as red point and $q_\t{goal}$ as blue point. The intersection nodes $V_\t{goal}$ are shown as magenta points and a solution path from \smp~is visualized as a line. 

The results on success rate, path length and computation time are given in \Cref{tab:comparison_results}. All the methods are consistently able to find a path for all $10$ random seeds. \smp~and \smp~(Single Tree) consistently achieve the lowest cost in both scenarios. \smp~(Single Tree) has a higher computation time, because it keeps all nodes in a single tree whereas \smp~splits them into individual trees per manifold. CBiRRT does not optimize an objective function and immediately converges when it finds a feasible path, which achieves the overall lowest computation time. However, the mean cost and standard deviations are higher compared to \smp. RRT${}^*$+IK and Random MMP only optimize over the individual paths, but do not optimize the intersection point selection, which results in higher costs and standard deviations. 
\smp~(Greedy) is better in terms of computation time since it only takes one intersection point as initial point for the next tree, but achieves an overall lower performance compared to \smp. 
The results also show that only \smp~and \smp~(Single Tree) are consistently able to find the intersection regions between the obstacles in which the optimal path lies. The other methods mainly select the intersection regions that are discovered first in the exploration. \Cref{fig:psm_hourglass_paths} shows a set of found paths on the variant with obstacles.

In \Cref{fig:psm_hourglass_rho_cost}, the path costs $J(\tau)$ of \smp~are compared for various values of $\gr$. This parameter specifies the minimum distance between two intersection points, which influences the number of intersection points created during planning (\cref{alg:avoid_duplicates} in \Cref{alg:smp}). As a reference, we visualize the path costs of \smp~(Greedy) and \smp~(Single Tree). The results show that the computed paths of \smp~with a small $\gr$ value are very similar to the ones of \smp~(Single Tree) whereas for larger $\gr$ values, \smp~converges to the same performance as \smp~(Greedy) since only a single intersection point is considered. Therefore, $\gr$ can be seen as a trade-off between planning faster and achieving lower path costs. \Cref{fig:psm_hourglass_samples_cost} compares the path costs for different samples $m$. All methods improve with an increasing amount of samples and \smp~and \smp~(Single Tree) converge to similar cost values.

\begin{figure}[t]
	\centering
	\includegraphics[width=.45\textwidth]{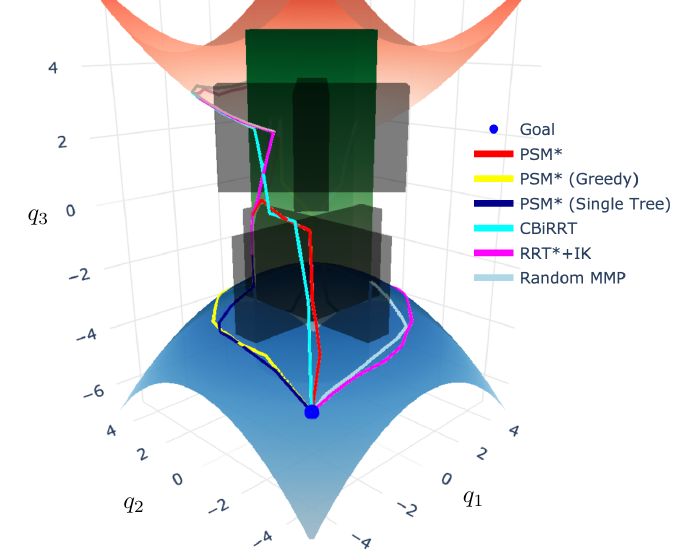}
	\caption{Samples of found paths on the 3D point on with obstacles problem (\Cref{ssec:geom_exp}).}
	\label{fig:psm_hourglass_paths}
\end{figure}

\begin{table}[t]
\centering
\caption{Results of the 3D point and robot transport problems. We report the mean and one unit standard deviation over $10$ runs with different random seeds.}
    \label{tab:comparison_results}
\renewcommand{\tabcolsep}{4pt}
    \begin{tabular}{|l|c|c|c|}
\hline
                                 & \textbf{Success} & \textbf{Path length} & \textbf{Comp. time [s]} \\
 \hline
 \textbf{3D point w/o obstacles} &                  &                      &                     \\
 PSM${}^*$                       & $10 / 10$        & $\mathbf{14.47\pm 0.04}$      & $10.64\pm 0.16$     \\
 PSM${}^*$ (Single Tree)         & $10 / 10$        & $14.47\pm 0.05$      & $13.72\pm 0.18$     \\
 PSM${}^*$ (Greedy)              & $10 / 10$        & $16.20\pm 0.05$      & $10.36\pm 0.10$     \\
 RRT${}^*$+IK                    & $10 / 10$        & $17.84\pm 2.23$      & $28.35\pm 13.50$    \\
 Random MMP                      & $10 / 10$        & $17.33\pm 1.28$      & $34.75\pm 17.21$    \\
 CBiRRT                          & $10 / 10$        & $14.70\pm 0.71$      & $\mathbf{5.04\pm 0.30}$      \\
 \hline
 \textbf{3D point w/ obstacles}  &                  &                      &                     \\
 PSM${}^*$                       & $10 / 10$        & $15.95\pm 0.13$      & $13.74\pm 0.51$     \\
 PSM${}^*$ (Single Tree)         & $10 / 10$        & $\mathbf{15.87\pm 0.18}$      & $20.42\pm 0.86$     \\
 PSM${}^*$ (Greedy)              & $10 / 10$        & $19.69\pm 0.27$      & $12.89\pm 0.51$      \\
 RRT${}^*$+IK                    & $10 / 10$        & $21.56\pm 3.05$      & $30.21\pm 7.55$     \\
 Random MMP                      & $10 / 10$        & $22.15\pm 2.20$      & $42.09\pm 17.22$     \\
 CBiRRT                          & $10 / 10$        & $16.66\pm 1.34$      & $\mathbf{3.34\pm 0.25}$      \\
 \hline
 \textbf{Robot transport A}      &                  &                      &                     \\
 PSM${}^*$                       & $10 / 10$        & $\mathbf{7.76\pm 0.84}$       & $7.22\pm 0.81$      \\
 PSM${}^*$ (Single Tree)         & $9 / 10$         & $8.53\pm 1.19$       & $11.56\pm 0.41$     \\
 PSM${}^*$ (Greedy)              & $10 / 10$        & $8.18\pm 0.91$       & $\mathbf{7.03\pm 0.09}$      \\
 RRT${}^*$+IK                    & $10 / 10$        & $11.83\pm 3.23$      & $74.65\pm 28.86$    \\
 Random MMP                      & $10 / 10$        & $11.62\pm 2.36$      & $85.45\pm 25.66$    \\
 \hline
 \textbf{Robot transport B}      &                  &                      &                     \\
 PSM${}^*$                       & $10 / 10$        & $\mathbf{14.73\pm 1.27}$      & $\mathbf{14.19\pm 0.76}$     \\
 PSM${}^*$ (Single Tree)         & $0 / 10$         & --                   & --                  \\
 PSM${}^*$ (Greedy)              & $9 / 10$         & $15.41\pm 2.38$      & $14.75\pm 0.53$     \\
 RRT${}^*$+IK                    & $2 / 10$         & $45.14\pm 0.58$      & $89.84\pm 11.00$    \\
 Random MMP                      & $10 / 10$        & $16.96\pm 4.59$      & $163.26\pm 37.80$   \\
 \hline
 \textbf{Robot transport C}      &                  &                      &                     \\
 PSM${}^*$                       & $10 / 10$         & $\mathbf{27.07 \pm 2.58}$      & $\mathbf{275.83 \pm 19.73}$   \\
 PSM${}^*$ (Single Tree)         & $0 / 10$         & --                   & --                  \\
 PSM${}^*$ (Greedy)              & $10 / 10$         & $31.75 \pm 2.51$                   & $311.81\pm 9.79$                  \\
 RRT${}^*$+IK                    & $0 / 10$         & --                   & --                  \\
 Random MMP              & $0 / 10$         & --                   & --                  \\
 \hline
\end{tabular}
\end{table}

\subsection{Multi-Robot Object Transport Tasks}
In this experiment, we evaluate \smp~on various object transportation tasks involving multiple robots. The overall objective is to transport an object from an initial to a goal location. We consider three variations of this task:
\begin{itemize}[leftmargin=*]
    \item \textbf{Task A}: A single robot arm mounted on a table with $k=6$ degrees of freedom. The task is to transport an object from an initial location on the table to a target location. This task is described by $n=3$ manifolds.
    \item \textbf{Task B}: This task consists of two robot arms and a mobile base consisting of $k=14$ degrees of freedom. The task is defined such that the first robot arm picks the object and places it on the mobile base. Then, the mobile base brings it to the second robot arm that picks it up and places it on the table. This procedure is described with $n=5$ manifolds.
    \item \textbf{Task C}: In this task, four robots are used to transport two objects between two tables. Three arms are mounted on the tables and another arm with a tray is mounted on a mobile base. Besides transporting the objects, the orientation of the two objects needs to be kept upright during the whole motion. This task is described with $n=12$ manifolds and the configuration space has $k=26$ degrees of freedom.
\end{itemize}
The initial states of the three tasks are visualized in \Cref{fig:robot_tasks} where the target locations of the objects are shown in green. The geometries of the objects were chosen such that the tasks are intersection point independent (\Cref{ssec:ipip}). Three types of constraints are used to describe the tasks. Picking up an object is defined with the constraint
$h_M(q) = x_g - f_{\t{pos}, e}(q)$
where $x_g\in\R^3$ is the location of the object and $f_{\t{pos}, e}(q)$ is the forward kinematics function to a point $e\in\R^3$ on the robot end effector. The handover of an object between two robots is described by
$h_M(q) = f_{\t{pos}, e_1}(q) - f_{\t{pos}, e_2}(q)$
where $f_{\t{pos}, e_1}(q)$ is the forward kinematics function to the end effector of the first robot and $f_{\t{pos}, e_2}(q)$ is that of the second robot.
The orientation constraint is given by an alignment constraint
$h_M(q) = f_{\t{rot}, z}(q) \T e_z - 1$
where $f_{\t{rot}, z}(q)$ is a unit vector attached to the robot end effector that should be aligned with the vector $e_z=(0,0,1)$ to point upwards.
These constraints are sufficient to describe the multi-robot transportation tasks. The parameters of the algorithms are summarized in \Cref{tab:exp_parameter}.

The costs of the algorithms are reported in \Cref{tab:comparison_results}. \new{CBiRRT could not be applied to this task since it requires a goal configuration at which the backward tree originates, which is not available for these tasks.} On task A, nearly all methods robustly find solutions. However, when task complexity increases, RRT$^*$+IK, \smp~(Single Tree), and Random MMP have difficulties to solve to the problem. Only \smp~and \smp~(Greedy) were able to find solutions of task C with the given parameters. The cost and computation time of \smp~is lower compared to \smp~(Greedy). A found solution of \smp~for Task C is visualized in \Cref{fig:robot_task_c}.

\subsection{Pick-and-Pour on Panda Robot}
\label{ssec:robot_exp}
In this experiment, we demonstrate \smp~on a real Panda robot arm with seven degrees of freedom. The task description consists of $n=7$ manifolds, which describe the individual grasp, transport, and pouring motions. We assume the position of the objects in the scene are known. The parameters are the same as in the robot transport experiments (\Cref{tab:exp_parameter}) and the planning time was 131.65 seconds. \Cref{fig:panda_pour} shows a found path executed on the real robot.

\begin{figure*}
	\centering
	\subfloat[\label{fig:psm_hourglass_rho_cost}]{\includegraphics[width=.5\textwidth]{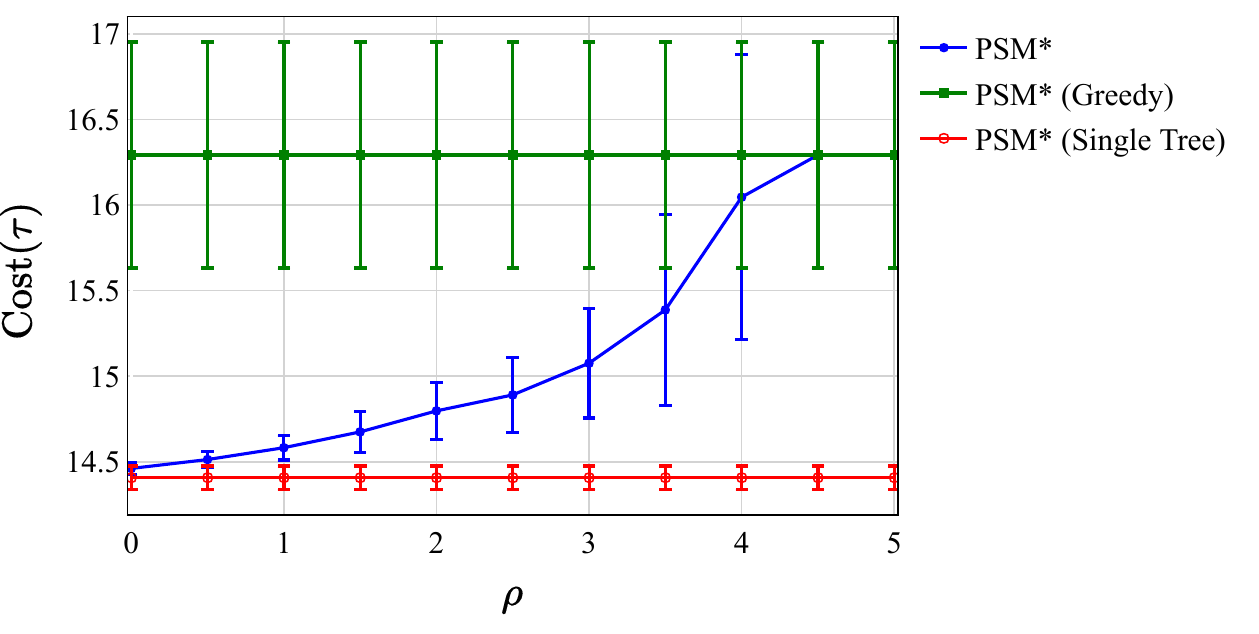}}
	\subfloat[\label{fig:psm_hourglass_samples_cost}]{\includegraphics[width=.5\textwidth]{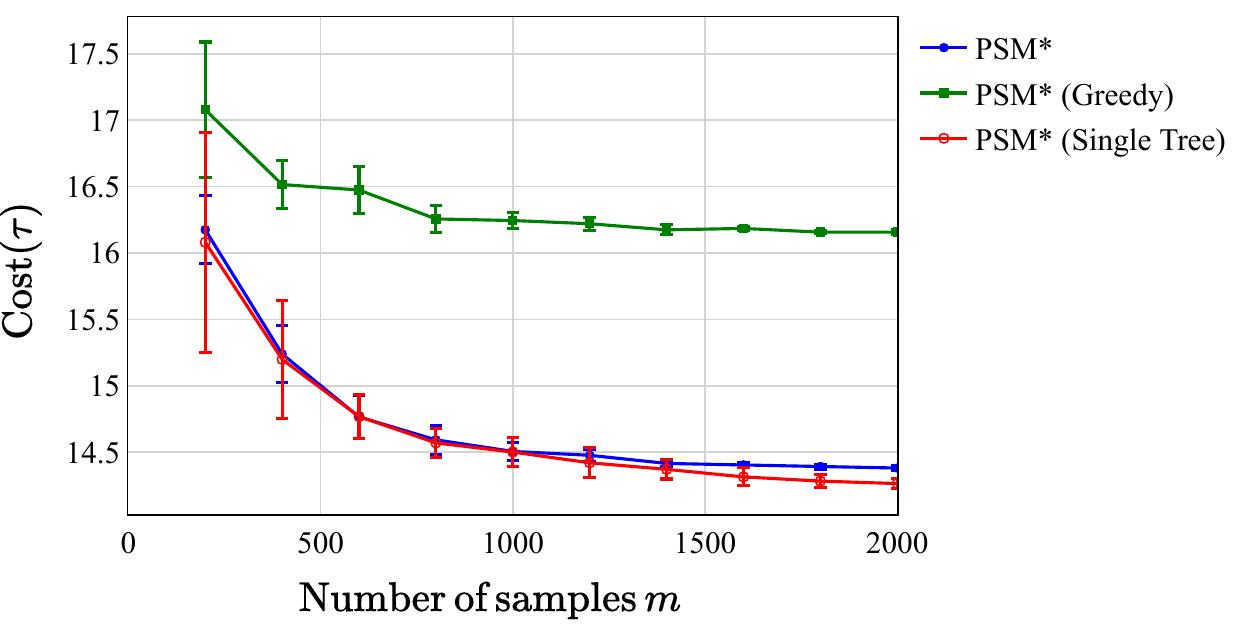}}
	\caption{Path costs over variations of the parameter $\gr$ (left) and the number of samples $m$ (right) on the geometric constraints w/o obstacles problem (\Cref{ssec:geom_exp}). The graphs show the mean and unit standard deviation over $10$ trials. Figure (a) shows the costs increase for higher values of $\gr$, meaning fewer intersection points are considered during planning. 
	In (b), the performance of all methods improves with larger $m$ values where \smp~and \smp~(Single Tree) converge to similar path costs.}
	\label{fig:psm_hourglass_eval}
\end{figure*}

\begin{figure*}
    \centering
    \subfloat[Task A]{\includegraphics[height=0.205\textwidth]{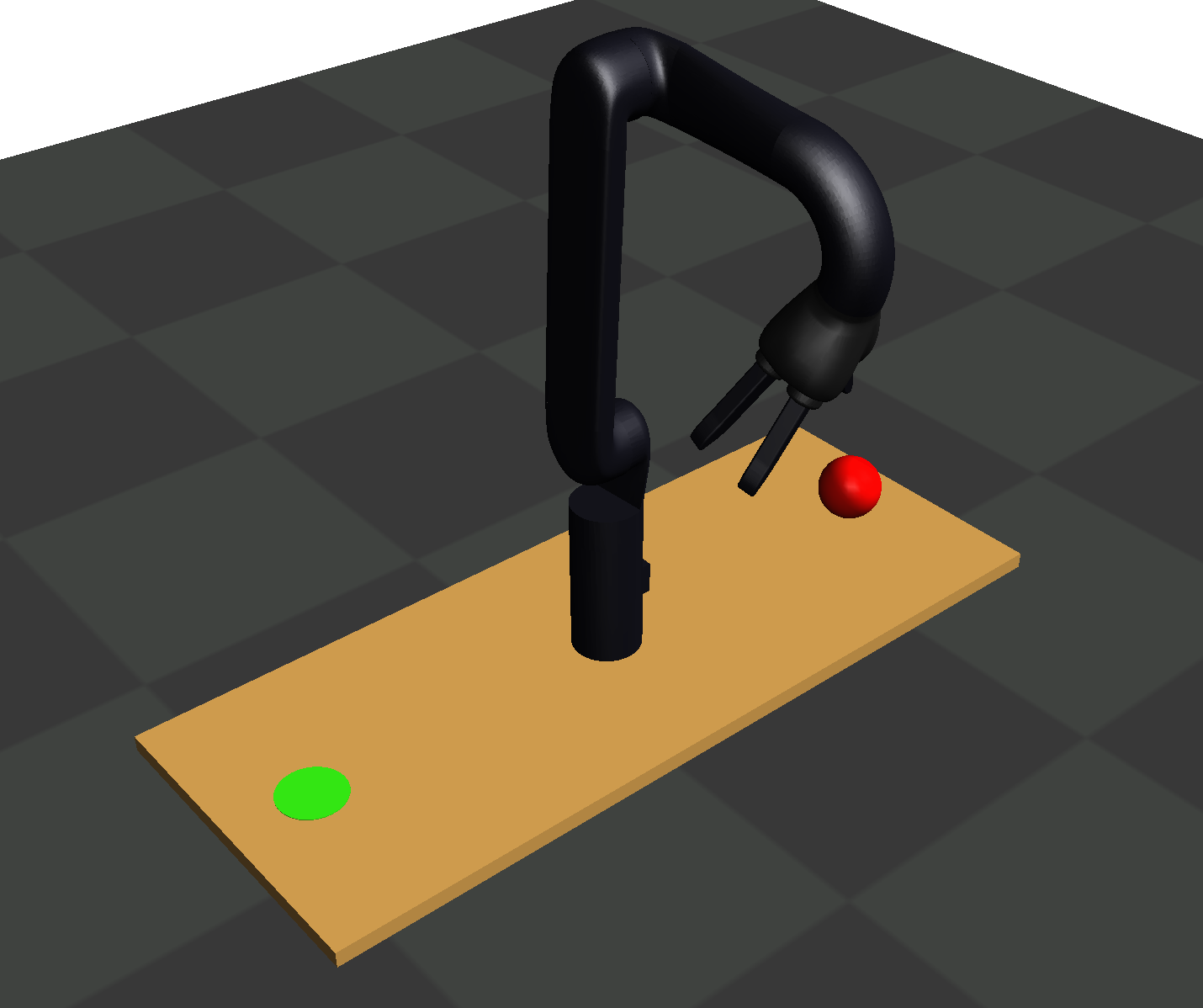}}
    \hspace{0.0in}
    \subfloat[Task B]{\includegraphics[height=0.205\textwidth]{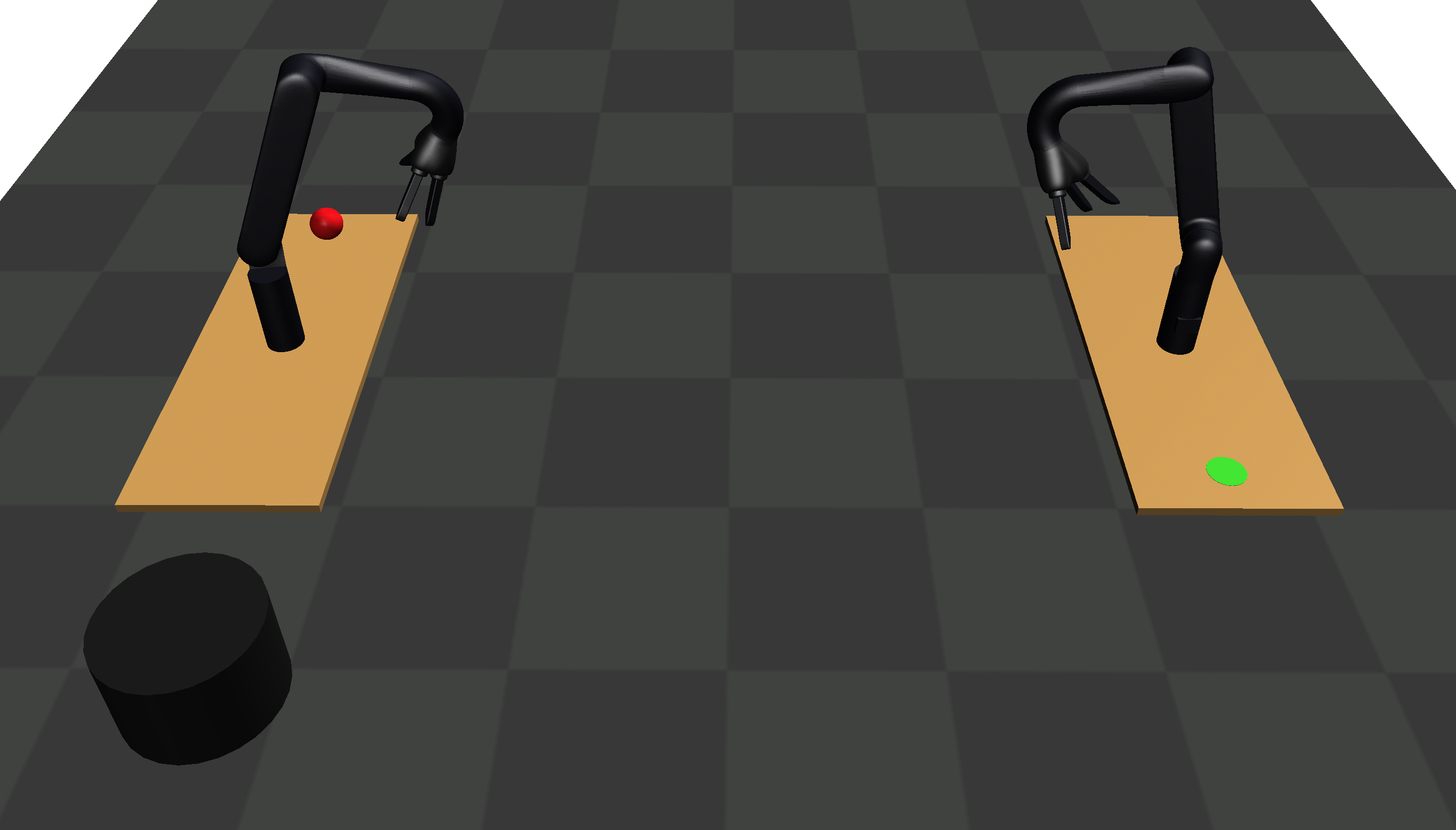}}
    \hspace{0.0in}
    \subfloat[Task C]{\includegraphics[height=0.205\textwidth]{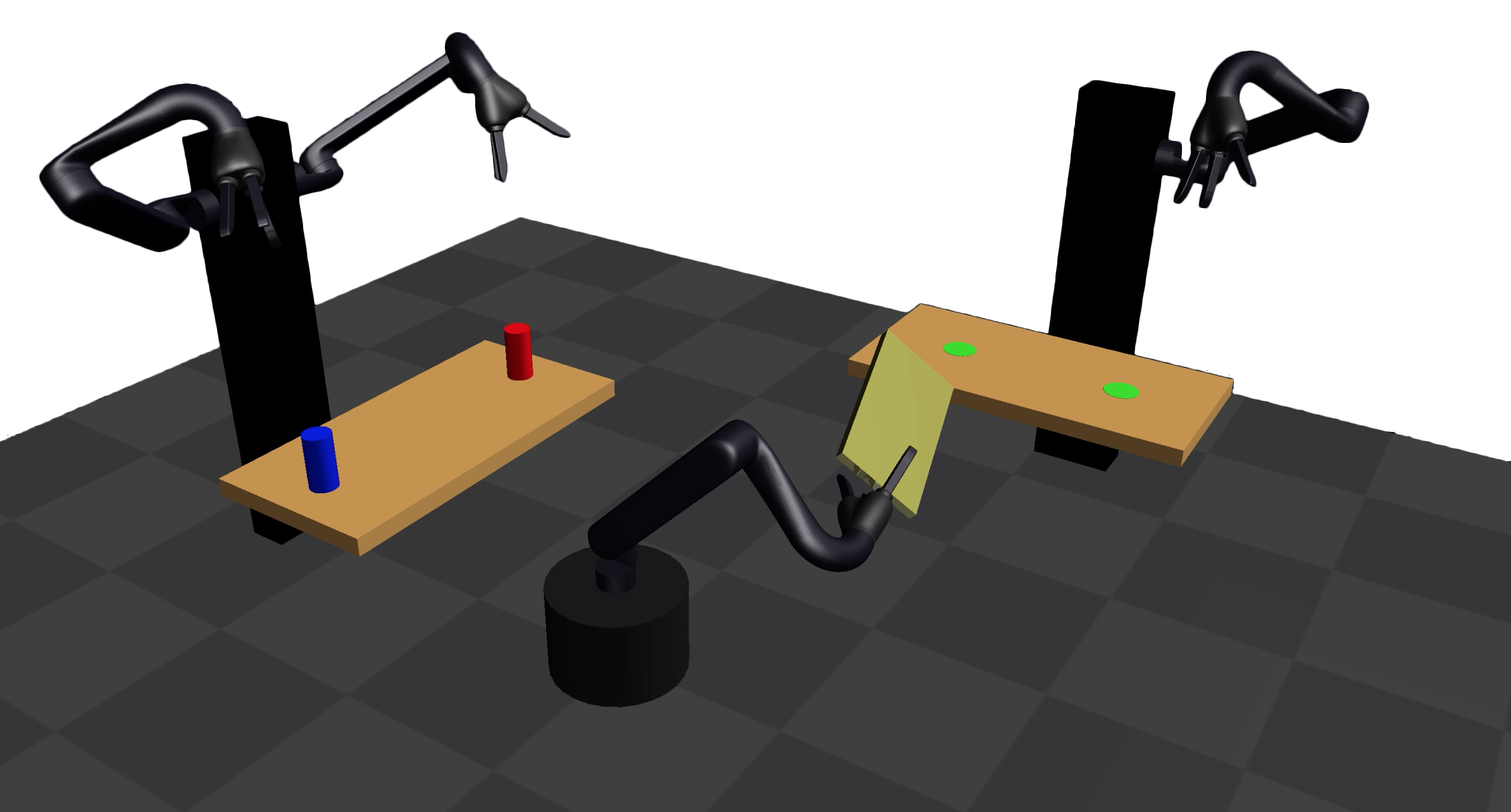}}
    \caption{Start states of the object transport tasks where the goal is to place the red and blue object to the green target locations.}
    \label{fig:robot_tasks}
\end{figure*}

\begin{figure*}
    \centering
    \includegraphics[width=1.0\textwidth]{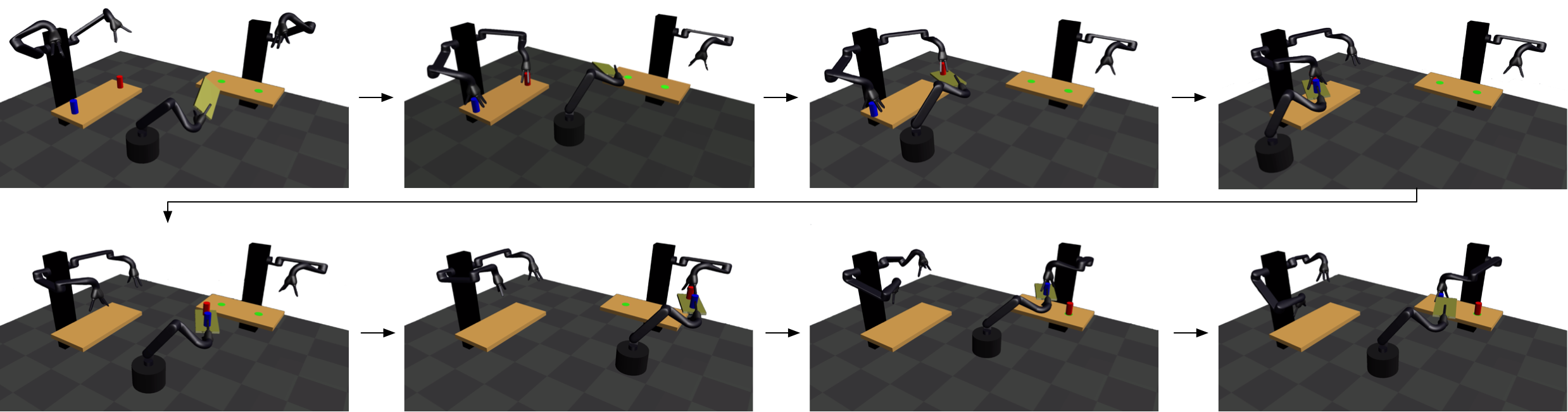}
\caption{Snapshots of the resulting motion that \smp~found for Task C.}
    \label{fig:robot_task_c}
\end{figure*}
\begin{figure*}
    \centering
    \includegraphics[width=\textwidth]{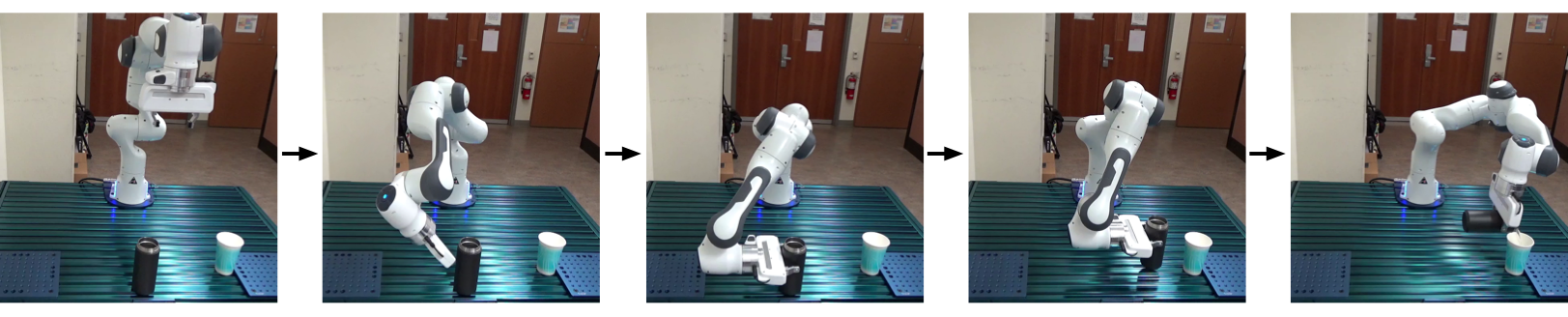}
    \caption{Snapshots of the pouring task motion.}
    \label{fig:panda_pour}
\end{figure*}

\section{Discussion}
\label{sec:psm_conclusion}
We proposed the algorithm \smp~to solve sequential motion planning 
posed as a constrained optimization problem, where the goal is to find a collision-free path that minimizes a cost function and fulfills a given sequence of manifold constraints.
The algorithm is applicable to a certain problem class that is specified by the intersection point independent property, which says that the change in free configuration space is independent of the selected intersection point between two manifolds. 
\smp~uses RRT$^*$'s extend method with a novel steering strategy that is able to discover intersection points between manifolds.
We proved that \smp~is probabilistically complete and asymptotically optimal and demonstrated it on multi-robot transportation tasks.

Restricting the problem class to intersection point independent planning problems allowed us to develop efficient solution strategies like growing a single tree over a sequence of manifolds. An interesting question for future research is how to extend the strengths of \smp~to a larger problem class; specifically, how \smp~can be used for problems that do not fulfill the intersection point independent property. For such problems, the choice of intersection points influences the future parts of the planning problem, which results in a more complex problem. 
An interesting aspect is the effect of an intersection point on subsequent manifolds and how solutions for one intersection point can be reused and transferred to other intersection points without replanning from scratch.
Further, it may be possible to reduce such problems to the simpler intersection point independent problem class, for example, by morphing object geometries into simpler shapes such that the intersection point independent property is satisfied. \smp~could be applied to the reduced problem and provide a good initial guess for solving the original problem.


\chapter{\ecmnn: Learning Equality Constraints for Motion Planning on Manifolds}
\label{ch:ecomann}
With \Cref{ch:psm}, we have a motion planner for constrained motions, where we assumed the user has the knowledge and capability to write the constraint manifolds for the entire task sequence. However, this may not always be the case, or it may be more convenient to give demonstrations of a task such that the constraints can be automatically extracted for future planning.
In this chapter, we explore the integration of learning techniques into the motion planner, specifically, neural network models that are able to automatically learn manifold constraints from robot demonstrations instead of requiring hand-specified constraints.
This chapter contains work published in \cite{sutanto2020learning}.

\section{Introduction}
Robots must be able to plan motions that follow various constraints in order for them to be useful in real-world environments.
Constraints such as holding an object, maintaining an orientation, or staying within a certain distance of an object of interest are just some examples of possible restrictions on a robot's motion.
In general, two approaches to many robotics problems can be described. One is the traditional approach of using handwritten models to
capture environments, physics, and other aspects of the problem mathematically or analytically, and then solving or optimizing these to find a solution. 
The other, popularized more recently, involves the use of machine learning to replace, enhance, or simplify these hand-built parts.
Both have challenges: Acquiring training data for learning can be difficult and expensive, while describing precise models analytically can range from tedious to impossible.
Here, we approach the problem from a machine learning perspective and propose a solution to learn constraints from demonstrations. The learned constraints can be used alongside analytical solutions within a motion planning framework.

In this work, we propose a new learning-based method for describing motion constraints, called {\ecmnnlong} (\ecmnn). {\ecmnn} learns a function which evaluates a robot configuration on whether or not it meets the constraint, and for configurations near the constraint, on how far away it is. 
We train {\ecmnn} with datasets consisting of configurations that adhere to constraints, and show results for tasks learned from demonstrations of robot tasks. 
We use a sequential motion planning framework 
to solve motion planning problems that are both constrained and sequential in nature, 
and incorporate the learned manifolds into it. We evaluate the constraints learned by \mbox{\ecmnn} with various datasets on their representation quality. Further, we investigate the usability of learned constraints in sequential motion planning problems.

\subsection{Background}
The work described in this chapter combines ideas from many fields, including constrained sampling-based motion planning (SBMP), manifold learning, and learning from demonstrations, as described in \Cref{ch:motion-planning-background}.

As opposed to other constrained SBMP methods, our method differs in that {\ecmnn} learns an implicit description of a constraint manifold via a level set function, and during planning, we assume this representation for each task. We note that our method could be combined with others, e.g. learned sampling distributions, to further improve planning results.

Our work is related to many of the manifold learning approaches described in \Cref{ssec:manifold-learning}; in particular, the tangent space alignment in LTSA is an idea that {\ecmnn} uses extensively. 
Similar to the ideas presented in this chapter, the work in \cite{osa2020learning} delineates an approach to solve motion planning problems by learning the solution manifold of an optimization problem. 
In contrast to others, our work focuses on learning implicit functions of equality constraint manifolds, which is a generalization of the learning representations of Signed Distance Fields (SDF) \cite{park2019deepsdf, mahler2015gp}, up to a scale, but for higher-dimensional manifolds.

Further, our work can be seen as a special case of inverse optimal control (IOC) where the task is only represented in form of constraints. Instead of using the extracted constraints in optimal control methods, we integrate them into 
sampling-based motion planning methods, which are not parameterized by time and do not suffer from poor initializations.

\subsection{Motion Planning on Manifolds}
\label{subsec:smp}
In this work, we aim to integrate learned constraint manifolds into 
the \smp~framework from \Cref{ch:psm}, using the same problem formulation and notation as described in \Cref{sec:problem_formulation}.
We employ data-driven learning methods to learn individual
manifolds $M$ from demonstrations with the goal to integrate them with
analytically defined manifolds into this framework. 


\section{{\ecmnnlong} (\ecmnn)}
\label{ssec:ecmnn}

We propose a novel neural network structure, called \emph{\ecmnnlong} (\ecmnn),
which is a (global) equality constraint manifold learning representation that
enforces the alignment of the (local) tangent spaces and normal spaces with the
information extracted from the Local Principal Component Analysis (Local PCA)
\cite{Kambhatla_LocalPCA} of the data. 
{\ecmnn} takes a configuration $\jointposition$ as input and outputs the
prediction of the implicit function $\constraintfunction(\jointposition)$.
We train {\ecmnn} in a supervised manner, from demonstrations. 
One of the challenges is that the supervised training dataset is collected only
from demonstrations of data points which are on the equality constraint manifold
$\onconstraintconfigspace$, called the \emph{on-manifold} dataset.
Collecting both the on-manifold $\onconstraintconfigspace$ and off-manifold
$\offconstraintconfigspace = \{ \jointposition \in \configspace \mid
\constraintfunction(\jointposition) \neq \vct{0} \}$ datasets for supervised
training would be tedious because the implicit function $\constraintfunction$
values of points in $\offconstraintconfigspace$ are typically unknown and hard
to label.
We show that, though our approach is only given data on
$\onconstraintconfigspace$, it can still learn a useful and sufficient
representation of the manifold for use in planning.

\begin{figure}[h]
    \centering
    \includegraphics[width=0.5\textwidth]{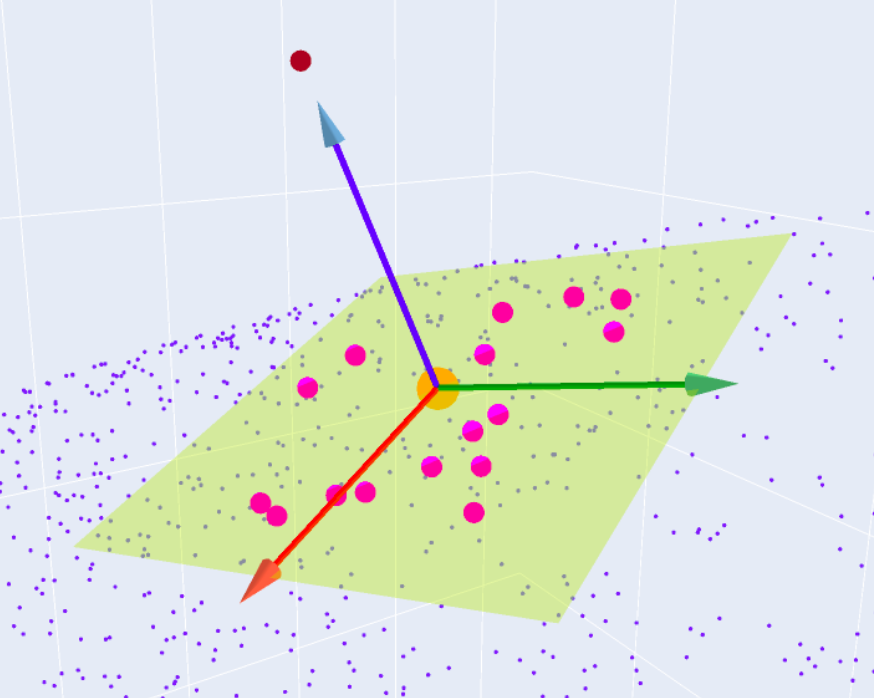}
    \caption{A visualization of data augmentation along the 1D normal space of a
    point $\jointposition$ in 3D space. Here, purple points are the dataset,
    pink points are the $\numnearestneighbor$NN of $\jointposition$, and the
    dark red point is $\augjointposition$. $\jointposition$ is at the axes
    origin, and the green plane is the approximated tangent space at that
    point.}
    \label{fig:localpca}
\end{figure}

Our goal is to learn a single global representation of the constraint manifold $M$ in the form of a neural network. 
Our approach leverages local information on the manifold in the form of the
tangent and normal spaces \cite{Deutsch2015_TensorVotingGraph}.
With regard to $\constraintfunction$, the tangent and normal spaces are
equivalent to the null and row space, respectively, of the Jacobian matrix 
$\constraintmanifoldjacobian(\acute{\jointposition}) = \left. \frac{\partial
\constraintfunction({\jointposition})}{\partial
{\jointposition}}\right|_{{\jointposition} = \acute{\jointposition}}$, 
and valid in a small neighborhood around the point $\acute{\jointposition}$.
Using on-manifold data, the local information of the manifold can be analyzed
using Local PCA. For each data point $\jointposition$ in the on-manifold
dataset, we establish a local neighborhood using $\numnearestneighbor$-nearest
neighbors ($\numnearestneighbor$NN) $\origKNN =
\{\nearestneighborjointposition_1, \nearestneighborjointposition_2, \dots,
\nearestneighborjointposition_\numnearestneighbor\}$, with $\numnearestneighbor
\geq \dimambient$. 
After a change of coordinates, $\jointposition$ becomes the origin of a new
local coordinate frame $\lpcacoordframe$, and the $\numnearestneighbor$NN
becomes $\recenteredKNN = \{\recenterednearestneighborjointposition_1,
\recenterednearestneighborjointposition_2, \dots,
\recenterednearestneighborjointposition_\numnearestneighbor\}$ with
$\recenterednearestneighborjointposition_\idxnearestneighbor =
\nearestneighborjointposition_\idxnearestneighbor - \jointposition$ for all
values of $\idxnearestneighbor$. Defining the matrix 
$\designmatrix = 
\begin{bmatrix} 
\recenterednearestneighborjointposition_1 & \recenterednearestneighborjointposition_2 & \hdots & \recenterednearestneighborjointposition_\numnearestneighbor \\
\end{bmatrix}\T \in \Re^{\numnearestneighbor \times \dimambient}
$, we can compute the covariance matrix $\samplecovariancematrix = \frac{1}{\numnearestneighbor-1} \designmatrix\T \designmatrix \in \Re^{\dimambient \times \dimambient}$.
The eigendecomposition of $\samplecovariancematrix = \covrighteigmat
\covdiagsingularvalues \covrighteigmat\T$ gives us the Local PCA. 
The matrix $\covrighteigmat$ contains the eigenvectors of
$\samplecovariancematrix$ as its columns in decreasing order w.r.t.\ the
corresponding eigenvalues in the diagonal matrix $\covdiagsingularvalues$. These
eigenvectors form the basis of the coordinate frame $\lpcacoordframe$.

This local coordinate frame $\lpcacoordframe$ is tightly related to the tangent
and normal spaces of the manifold at $\jointposition$. That is, the
$(\dimambient - \dimconstraint)$ eigenvectors corresponding to the $(\dimambient
- \dimconstraint)$ biggest eigenvalues of $\covdiagsingularvalues$ form a basis
of the tangent space, while the remaining $\dimconstraint$ eigenvectors form the
basis of the normal space. 
Furthermore, 
the $\dimconstraint$ smallest eigenvalues of $\covdiagsingularvalues$ will be
close to zero, resulting in the $\dimconstraint$ eigenvectors associated with
them forming the basis of the null space of $\samplecovariancematrix$. On the
other hand, the remaining $(\dimambient - \dimconstraint)$ eigenvectors form the
basis of the row space of $\samplecovariancematrix$. 
We follow the same technique as \cite{Deutsch2015_TensorVotingGraph} for
automatically determining the number of constraints $\dimconstraint$ from data,
which is also the number of outputs of {\ecmnn}\footnote{Here we assume that the
intrinsic dimensionality of the manifold at each point remains constant.}. 
Suppose the eigenvalues of $\samplecovariancematrix$ are $\{\coveigval_1,
\coveigval_2, \dots, \coveigval_\dimambient\}$ (in decreasing order w.r.t.
magnitude). Then the number of constraints can be determined as $\dimconstraint
= \argmax{\left(\left[\coveigval_1 - \coveigval_2, \coveigval_2 - \coveigval_3,
\dots, \coveigval_{\dimambient-1} - \coveigval_{\dimambient}
\right]\right)}$.

We now present several methods to define and train \ecmnn.

\subsection{Alignment of Local Tangent and Normal Spaces}
{\ecmnn} aims to align the following:
\begin{enumerate}
    \item the null space of $\constraintmanifoldjacobian$ and the row space of
        $\samplecovariancematrix$ (which must be equivalent to the tangent space)
    \item the row space of $\constraintmanifoldjacobian$ and the null space of
        $\samplecovariancematrix$ (which must be equivalent to the normal space)
\end{enumerate}
for the local neighborhood of each point $\jointposition$ in the on-manifold dataset. 
Suppose the eigenvectors of $\samplecovariancematrix$ are $\{\coveigvec_1, \coveigvec_2, \dots, \coveigvec_\dimambient\}$ and the singular vectors of $\constraintmanifoldjacobian$ are $\{\constrjaceigvec_1, \constrjaceigvec_2, \dots, \constrjaceigvec_\dimambient\}$, where the indices indicate decreasing order w.r.t. the eigenvalue/singular value magnitude. The null spaces of $\samplecovariancematrix$ and $\constraintmanifoldjacobian$ are spanned by $\{\coveigvec_{\dimambient-\dimconstraint+1}, \dots, \coveigvec_\dimambient\}$ and $\{\constrjaceigvec_{\dimconstraint+1}, \dots, \constrjaceigvec_\dimambient\}$, respectively. The two conditions above imply that the projection of the null space eigenvectors of $\constraintmanifoldjacobian$ into the null space of $\samplecovariancematrix$ should be $\vct{0}$, and similarly for the row spaces. 
Hence, we achieve this by training {\ecmnn} to minimize projection errors $\norm{\covnullspaceeigmat \covnullspaceeigmat\T \constrjacnullspaceeigmat}_2^2$ and $\norm{ \constrjacnullspaceeigmat \constrjacnullspaceeigmat\T \covnullspaceeigmat}_2^2$ with 
$\covnullspaceeigmat = 
\begin{bmatrix}
    \coveigvec_{\dimambient-\dimconstraint+1} & \dots & \coveigvec_\dimambient
\end{bmatrix}$
and 
$\constrjacnullspaceeigmat =  
\begin{bmatrix}
    \constrjaceigvec_{\dimconstraint+1} & \dots & \constrjaceigvec_\dimambient
\end{bmatrix}$.
%

\subsection{Data Augmentation with Off-Manifold Data}
The training dataset is on-manifold, i.e., each point $\jointposition$ in the dataset satisfies $\constraintfunction(\jointposition) = \vct{0}$. 
Through Local PCA on each of these points, we know the data-driven approximation of the normal space of the manifold at $\jointposition$. 
Hence, we know the directions where the violation of the equality constraint increases, i.e., the same direction as any vector picked from the approximate normal space. 
Since our future use of the learned constraint manifold on motion planning does not require the acquisition of the near-ground-truth value of $\constraintfunction(\jointposition) \neq \vct{0}$, we can set this off-manifold valuation of $\constraintfunction$ arbitrarily, as long as it does not interfere with the utility for projecting an off-manifold point onto the manifold. 
Therefore, we can augment our dataset with off-manifold data to achieve a more robust learning of {\ecmnn}. 
For each point $\jointposition$ in the on-manifold dataset, and for each random unit vector $\randunitvec$ picked from the normal space at $\jointposition$, we can add an off-manifold point $\augjointposition = \jointposition + \augidx \augmagnitude \randunitvec$ with a positive integer $\augidx$ and 
a small positive scalar $\augmagnitude$ (see \Cref{fig:localpca} for a visualization). 
However, if the
closest on-manifold data point to an augmented point
$\augjointposition = \jointposition + \augidx \augmagnitude \randunitvec$ 
is not $\jointposition$, we reject it.
This prevents situations like augmenting a point on a sphere beyond the center of the sphere. 
With this data augmentation, we now define several losses used to train {\ecmnn}.
\subsubsection{Training Losses}
\label{sssec:losses}
\textbf{Loss Based on the Norm of {\ecmnn} Output.} 
For the augmented data point $\augjointposition = \jointposition + \augidx \augmagnitude \randunitvec$, we set the label satisfying $\norm{\constraintfunction(\augjointposition)}_2 = \augidx \augmagnitude$. During training, we minimize the norm prediction error: 
$$\normloss = \norm{(\norm{\constraintfunction(\augjointposition)}_2 - \augidx \augmagnitude)}_2^2$$ 
for each augmented point $\augjointposition$.

\textbf{Loss for Reflection Pairs.}
For the augmented data point $\augjointposition = \jointposition + \augidx \augmagnitude \randunitvec$ and its reflection pair $\jointposition - \augidx \augmagnitude \randunitvec$, we can expect that $\constraintfunction(\jointposition + \augidx \augmagnitude \randunitvec) = -\constraintfunction(\jointposition - \augidx \augmagnitude \randunitvec)$. Therefore, during training we also try to minimize the following pairs loss:
$$\siamreflectionloss = \norm{\constraintfunction(\jointposition + \augidx \augmagnitude \randunitvec) + \constraintfunction(\jointposition - \augidx \augmagnitude \randunitvec)}_2^2$$

\textbf{Loss for Augmentation Fraction Pairs.}
Similarly, between the pair $\augjointposition = \jointposition + \augidx \augmagnitude \randunitvec$ and $\jointposition + \frac{a}{b} \augidx \augmagnitude \randunitvec$, where $a$ and $b$ are positive integers satisfying $0 < \frac{a}{b} < 1$, we can expect that $\frac{\constraintfunction(\jointposition + \augidx \augmagnitude \randunitvec)}{\norm{\constraintfunction(\jointposition + \augidx \augmagnitude \randunitvec)}_2} = \frac{\constraintfunction(\jointposition + \frac{a}{b} \augidx \augmagnitude \randunitvec)}{\norm{\constraintfunction(\jointposition + \frac{a}{b} \augidx \augmagnitude \randunitvec)}_2}$. Hence, during training we also try to minimize the pairs loss:
$$\siamfracloss = \norm{\frac{\constraintfunction(\jointposition + \augidx \augmagnitude \randunitvec)}{\norm{\constraintfunction(\jointposition + \augidx \augmagnitude \randunitvec)}_2} - \frac{\constraintfunction(\jointposition + \frac{a}{b} \augidx \augmagnitude \randunitvec)}{\norm{\constraintfunction(\jointposition + \frac{a}{b} \augidx \augmagnitude \randunitvec)}_2}}_2^2$$

\textbf{Loss for Similar Augmentation Pairs.}
Suppose for nearby on-manifold data points $\jointposition_a$ and $\jointposition_c$, their approximate normal spaces $\normalspaceid_{\jointposition_a}\constraintmanifold$ and $\normalspaceid_{\jointposition_c}\constraintmanifold$ are spanned by eigenvector bases $\lnormalcoordframe^a = \{\coveigvec^a_{\dimambient-\dimconstraint+1}, \dots, \coveigvec^a_\dimambient\}$ and $\lnormalcoordframe^c = \{\coveigvec^c_{\dimambient-\dimconstraint+1}, \dots, \coveigvec^c_\dimambient\}$, respectively\footnote{$\normalspaceatjointposition$ is the normal space at point $\jointposition$ on manifold $\constraintmanifold$.}. 
If $\lnormalcoordframe^a$ and $\lnormalcoordframe^c$ are closely aligned, the random unit vectors
$\randunitvec_a$ from $\lnormalcoordframe^a$ and $\randunitvec_c$ from $\lnormalcoordframe^c$ can be obtained by 
$\randunitvec_a = \frac{\sum_{j=\dimambient-\dimconstraint+1}^\dimambient \randscalarweight_j \coveigvec^a_j}{\norm{\sum_{j=\dimambient-\dimconstraint+1}^\dimambient \randscalarweight_j \coveigvec^a_j}_2}$ 
and $\randunitvec_c = \frac{\sum_{j=\dimambient-\dimconstraint+1}^\dimambient \randscalarweight_j \coveigvec^c_j}{\norm{\sum_{j=\dimambient-\dimconstraint+1}^\dimambient \randscalarweight_j \coveigvec^c_j}_2}$, 
where $\{\randscalarweight_{\dimambient-\dimconstraint+1}, \dots, \randscalarweight_\dimambient\}$ 
are random scalar weights from a standard normal distribution common to both the bases of $\lnormalcoordframe^a$ and $\lnormalcoordframe^c$. This will ensure that $\randunitvec_a$ and $\randunitvec_c$ are aligned as well, and we can expect that  $\constraintfunction(\jointposition_a + \augidx \augmagnitude \randunitvec_a) = \constraintfunction(\jointposition_c + \augidx \augmagnitude \randunitvec_c)$. Therefore, during training we also try to minimize the pairs loss:
$$\siamsimilarloss = \norm{\constraintfunction(\jointposition_a + \augidx \augmagnitude \randunitvec_a) - \constraintfunction(\jointposition_c + \augidx \augmagnitude \randunitvec_c)}_2^2$$
In general, the alignment of $\lnormalcoordframe^a$ and $\lnormalcoordframe^c$ is not guaranteed, 
for example due to the numerical sensitivity of singular value/eigen decomposition. 
Therefore, we introduce an algorithm for Orthogonal Subspace Alignment (OSA) in \Cref{sec:osa} to ensure that this assumption is satisfied.

While $\normloss$ governs only the norm of {\ecmnn}'s output, the other three losses $\siamreflectionloss$, $\siamfracloss$, and $\siamsimilarloss$ constrain the (vector) outputs of {\ecmnn} based on pairwise input data points without explicitly hand-coding the desired output itself. We avoid the hand-coding of the desired output because this is difficult for high-dimensional manifolds, except when we have prior knowledge about the manifold, such as in the case of Signed Distance Fields (SDF) manifolds.



\section{Experiments}
\label{sec:ecomann_experiments}
\begin{figure}
\centering
\begin{subfigure}{0.4\textwidth}
\includegraphics[trim={1cm 1cm 1cm 1cm}, clip, height=0.7\textwidth]
{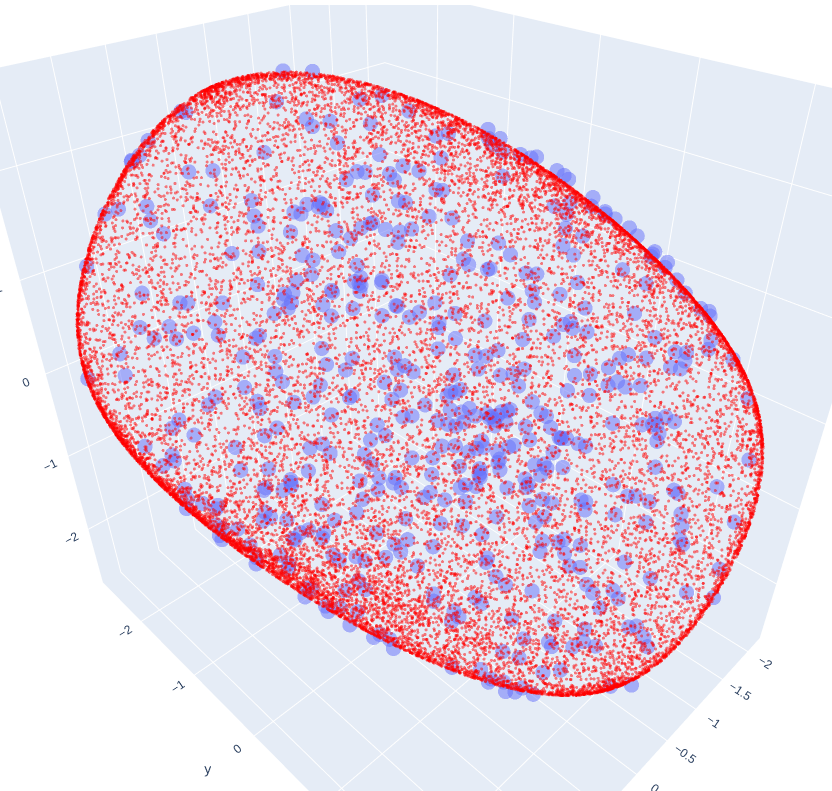}
\caption{Projected samples from {\ecmnn}}
\label{fig:ecomann_3dof_sample}
\end{subfigure}
\hspace{2em}
\begin{subfigure}{0.4\textwidth}
\includegraphics[trim={1cm 1cm 1cm 1cm}, clip, height=0.7\textwidth]
{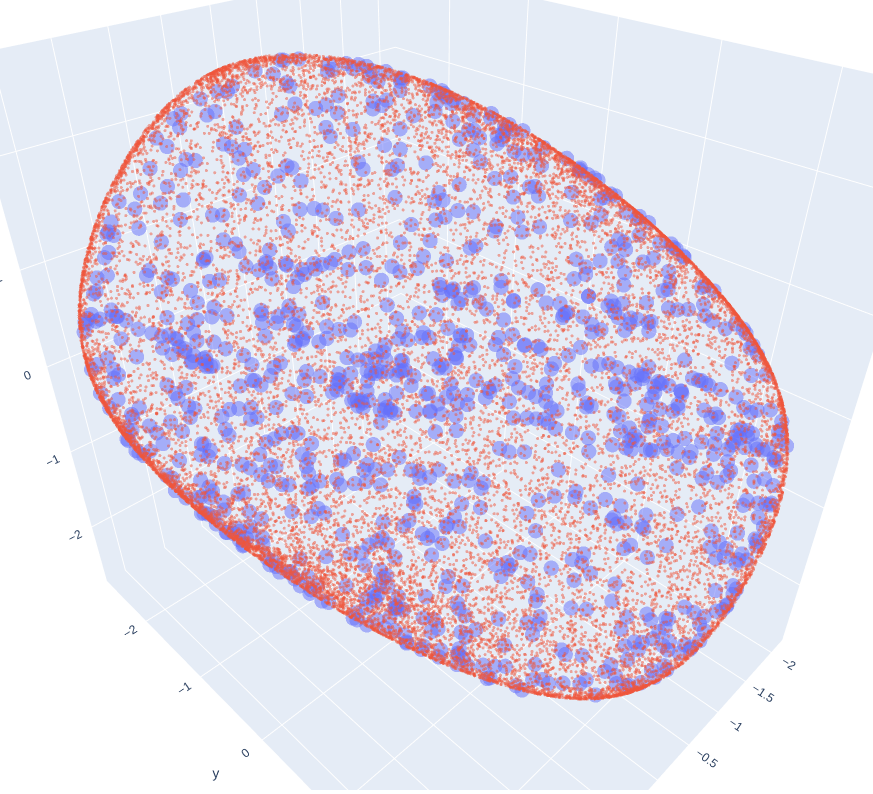}
\caption{New samples from VAE}
\label{fig:vae_3dof_sample}
\end{subfigure}
\caption{Visualization for \Cref{sec:experiment1}, Plane dataset. Red points are the training dataset and blue points are samples generated from the learned manifolds.}
\label{fig:samples}
\end{figure}
%
We use the robot simulator MuJoCo \cite{todorov2012mujoco} to generate four
datasets. For each dataset, we define a ground truth constraint $\bar{h}_M$,
randomly sample points in the configuration (joint) space, and use a constrained
motion planner to find robot configurations in $\onconstraintconfigspace$ that
produce the on-manifold datasets:
\begin{itemize}
    \item \textbf{Sphere}: 3D point that has to stay on the surface of a sphere. $N=5000,~ d=3,~ l=1$.
    \item \textbf{3D Circle}: A point that has to stay on a circle in 3D space. $N=1000,~ d=3,~ l=2$.
    \item 
    \textbf{Plane}: Robot arm with 3 rotational DoFs where the end effector has to be on a plane. $N=20000,~ d=3,~ l=1$.
    \item 
    \textbf{Orient}: Robot arm with 6 rotational DoFs that has to keep its orientation upright (e.g., transporting a cup). $N=21153,~ d=6,~ l=2$.
\end{itemize}

\subsection{Accuracy and Precision of Learned Manifolds} 
\label{sec:experiment1}
We compare the accuracy and precision of the manifolds learned by {\ecmnn}
with those learned by a variational autoencoder (VAE) \cite{kingma2013AutoEncodingVB}. 
VAEs are a popular generative model that embeds data points as a distribution in a learned latent space, and as such new latent vectors
can be sampled and decoded into new examples which fit the distribution of
the training data. 
We use two metrics: First, the distance $\mu_{\bar{h}_M}$ which measures how far a point is away from the ground-truth manifold $\bar{h}_M$ and which we evaluate for both the training data points and randomly sampled points, and second,
the percent $P_{\bar{h}_M}$ of randomly sampled points that are on the manifold $\bar{h}_M$.
We use a distance threshold of $0.1$ to determine success when calculating $P_{\bar{h}_M}$.
For {\ecmnn}, randomly sampled points are projected using gradient descent with the learned implicit function until convergence. 
For the VAE, latent points are sampled from $\mathcal{N}(0,1)$ and decoded into new configurations.

We sample 1000 points for each of these comparisons. 
We report results in \Cref{table:projection} and a visualization of the test phase in \Cref{fig:samples}. 
We also plot the level set and the normal space eigenvector field of the {\ecmnn} trained on the sphere and plane constraint dataset in \Cref{fig:contourplot_vecfield}.
In all experiments, we set the value of the augmentation magnitude $\augmagnitude$ to the square root of the mean eigenvalues of the approximate tangent space, which we found to work well experimentally.
With the exceptions of the embedding size and the input size, which are set to the same dimensionality $l$ 
as the constraint learned by {\ecmnn} 
and the ambient space dimensionality $d$ of the dataset, respectively,
the VAE has the same parameters for each dataset: 
4 hidden layers with 128, 64, 32, and 16 units in the encoder and the same but reversed in the decoder; the weight of the KL divergence loss $\beta = 0.01$; using batch normalization; and trained for 100 epochs.

Our results show that for every dataset except Orient, {\ecmnn} out performs the VAE in both metrics. 
{\ecmnn} additionally outperforms the VAE with the Orient dataset in the testing phase, which suggests more robustness of the learned model. 
We find that though the VAE also performs relatively well in most cases, it cannot learn a good representation of the 3D Circle constraint and fails to produce any valid sampled points. 
{\ecmnn}, on the other hand, can learn to represent all four constraints well.

\begin{table}[H]
  \caption{Accuracy and precision of learned manifolds averaged across 3 instances. 
 ``Train" indicates results on the on-manifold training set;  ``test" indicates $N=1000$ projected (\ecmnn) or sampled (VAE) points.}
  \label{table:projection}
  \centering
  \resizebox{\columnwidth}{!}{%
\begin{tabular}{c||ccc|ccc}
    \toprule
          & \multicolumn{6}{c}{Method} \\
     Dataset & \multicolumn{3}{c}{\ecmnn} & \multicolumn{3}{c}{VAE} \\
     \cmidrule{2-7}
              & $\mu_{\bar{h}_M}$ (train) & $\mu_{\bar{h}_M}$ (test) & $P_{\bar{h}_M}$
              & $\mu_{\bar{h}_M}$ (train) & $\mu_{\bar{h}_M}$ (test) & $P_{\bar{h}_M}$
                \\
\midrule
      Sphere 
      & $0.024\pm 0.009$ & $0.023 \pm 0.009$ & $100.0 \pm 0.0$ 
      & $0.105\pm 0.088$ & $0.161 \pm 0.165$ & $46.867 \pm 18.008$
      \\
      3D Circle     
      & $0.029 \pm 0.011$ & $0.030 \pm 0.011$ & $78.0 \pm 22.0$ 
        & $0.894 \pm 0.074$ & $0.902 \pm 0.069$ & $0.0 \pm 0.0$   \\
      Plane  
      & $0.020 \pm 0.005$ & $0.020 \pm 0.005$ & $88.5 \pm 10.5$ 
        & $0.053 \pm 0.075$ & $0.112 \pm 0.216$  & $77.733 \pm 7.721$ \\
      Orient     & $0.090\pm 0.009$ & $0.090 \pm 0.009$ & $73.5 \pm 6.5$
        & $0.010 \pm 0.037$ & $0.085 \pm 0.237$ & $85.9 \pm 1.068$   \\
      \bottomrule
  \end{tabular}
  }
\end{table}

\begin{figure}
    \centering
    \begin{subfigure}{0.45\textwidth}
        \centering
        \includegraphics[trim={0cm 0cm 0cm 0cm}, clip, height=0.8\textwidth]{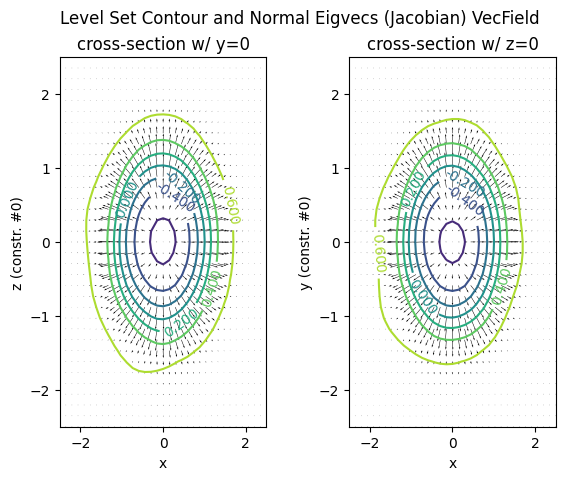}
        \label{sfig:contourplot_vecfield_sphere}
    \end{subfigure}
    \hfill
    \begin{subfigure}{0.45\textwidth}
        \centering
        \includegraphics[trim={0cm 0cm 0cm 0cm}, clip, height=0.8\textwidth]{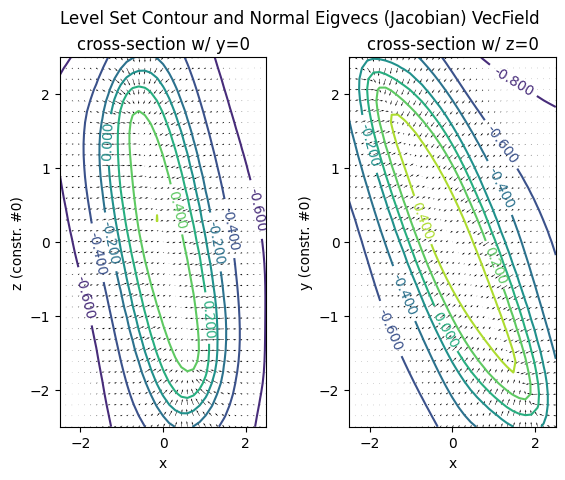}
        \label{sfig:contourplot_vecfield_3dof_traj}
    \end{subfigure}
    \caption{Trained {\ecmnn}’s level set contour plot and the normal space eigenvector field, after training on the sphere constraint dataset (left) and plane constraint dataset (right).}
    \label{fig:contourplot_vecfield}
\end{figure}


\begin{figure}[H]
    \centering
    \begin{subfigure}{0.45\textwidth}
        \centering
        \includegraphics[height=0.9\textwidth]{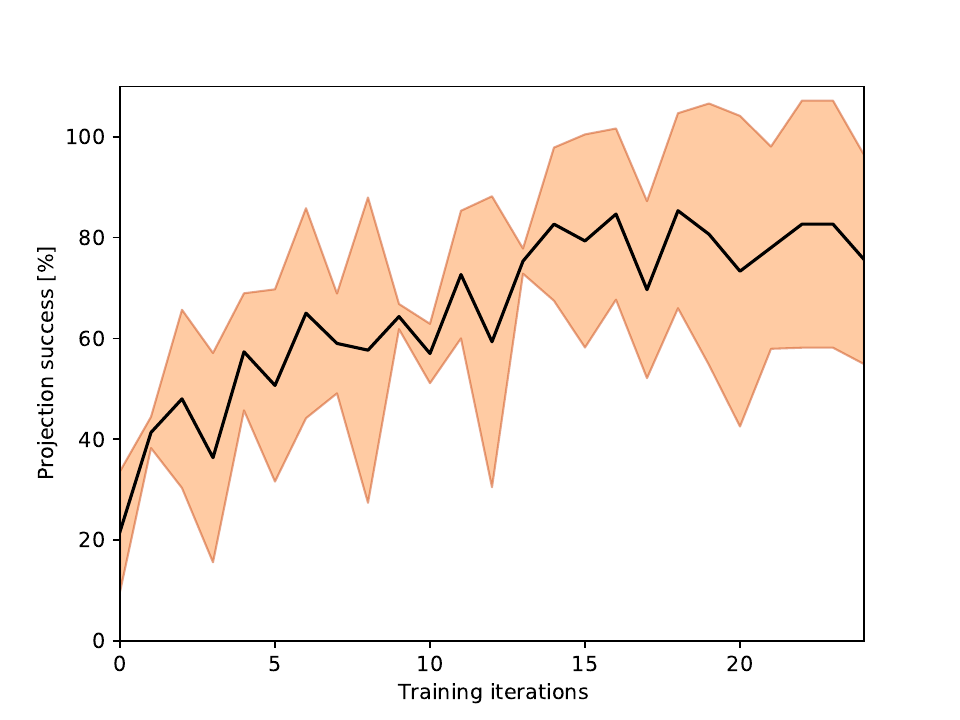}
        \caption{}
        \label{fig:3dof_success_rate}
    \end{subfigure}
    \hspace{2em}
    \begin{subfigure}{0.45\textwidth}
        \centering
        \includegraphics[height=0.9\textwidth]{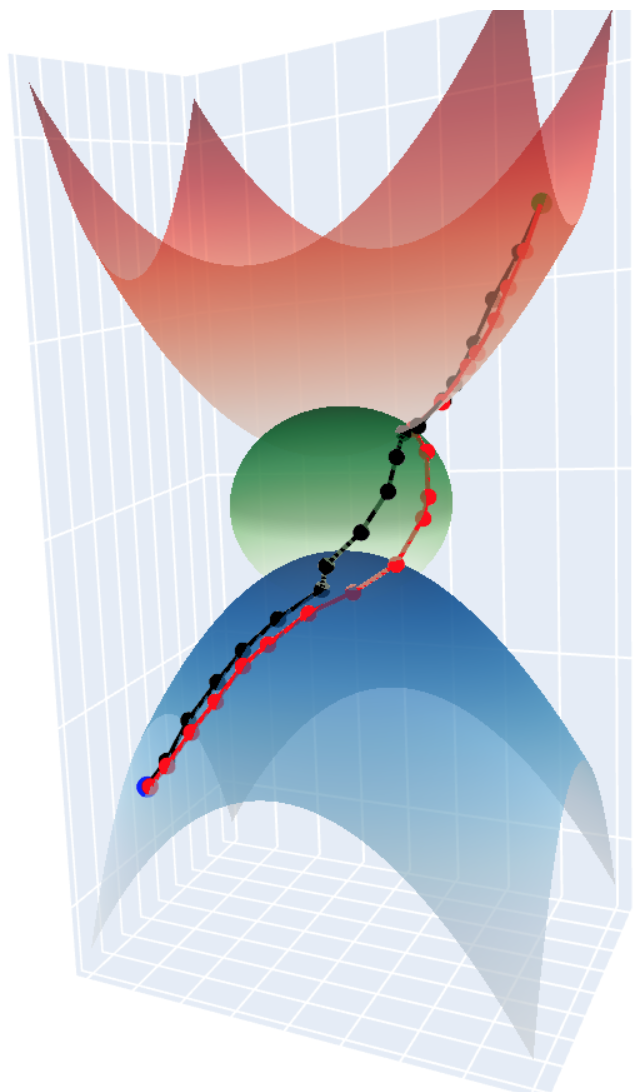} 
        \caption{}
        \label{fig:ecomann_hourglass}
    \end{subfigure}
    \caption{(\subref{fig:3dof_success_rate}): {\ecmnn} success rate on the Orient dataset over number of training iterations. (\subref{fig:ecomann_hourglass}): A path planned on the learned manifold (red) and path on the ground truth manifold (black) are visualized.}
\end{figure}

\subsection{Ablation Study of \ecmnn}
In the ablation study, we compare $P_{\bar{h}_M}$ across 4 different {\ecmnn} setups: 
\begin{enumerate}
    \item no ablation;
    \item without data augmentation;
    \item without pairs losses during training;
    \item and without orthogonal subspace alignment (OSA) during data augmentation.
\end{enumerate}
Results are reported in \Cref{tab:ablation_table}. 
For this ablation test we use {\ecmnn} with 4 hidden layers of size 36, 24, 18, and 10.
The data suggest that all three parts of the training process are essential for a high success rate during projection. 
Of the three features tested, data augmentation appears to have the most impact.
This makes sense because without augmented data to train on, any configuration that does not already lie on the manifold will have undefined output when evaluated with {\ecmnn}.

\begin{table}[ht]
    \centering
    \caption{Percentage of projection success rate for a variety of ablation on {\ecmnn} components.}
    \begin{tabular}{ |c||c|c|c| }
        \hline
        Ablation Type         & Sphere                  & 3D Circle               & Plane \\
        \hline
        \hline
        No Ablation           & (98.67 $\pm$ 1.89) \% & (100.00 $\pm$  0.00) \% & (92.33 $\pm$ 10.84) \% \\
        w/o Data Augmentation & ( 9.00 $\pm$ 2.45) \% & ( 16.67 $\pm$  4.64) \% & ( 3.67 $\pm$  3.30) \% \\
        w/o Pairs Losses    & (17.33 $\pm$ 3.40) \% & ( 65.67 $\pm$ 26.03) \% & (35.67 $\pm$  5.56) \% \\
        w/o Pairs Loss $\siamreflectionloss$ & (92.67 $\pm$ 4.03) \% & (  9.67 $\pm$  2.05) \% & (38.00 $\pm$ 16.87) \% \\
        w/o Pairs Loss $\siamfracloss$       & (88.33 $\pm$ 8.34) \% & ( 99.67 $\pm$  0.47) \% & (85.33 $\pm$ 16.68) \% \\
        w/o Pairs Loss $\siamsimilarloss$    & (83.00 $\pm$ 5.10) \% & ( 70.67 $\pm$ 21.00) \% & (64.33 $\pm$  2.62) \% \\
        w/o OSA               & (64.33 $\pm$ 9.03) \% & ( 33.33 $\pm$ 17.13) \% & (61.00 $\pm$  6.16) \% \\
        \hline
    \end{tabular}
    \label{tab:ablation_table}
\end{table}

\subsection{Learning {\ecmnn} on Noisy Data}
We also evaluate {\ecmnn} learning on noisy data. We generate a noisy unit sphere dataset and a noisy 3D unit circle with additive Gaussian noise of zero mean and standard deviation 0.01. After we train {\ecmnn} on this noisy sphere and 3D circle datasets, we evaluate the model and obtain (82.00 $\pm$ 12.83) \% and (89.33 $\pm$ 11.61) \%, respectively, as the $P_{\bar{h}_M}$ metric.\footnote{The small positive scalar $\augmagnitude$ needs to be chosen sufficiently large as compared to the noise level, so that the data augmentation will not create inconsistent data w.r.t. the noise.}

\subsection{Relationship Between the Number of Augmentation Levels and the Projection Success Rate}
Moreover, we also perform an experiment to study the relationship between the number of augmentation levels -- i.e.,  the maximum value of the positive integer $\augidx$ in the off-manifold points $\augjointposition = \jointposition + \augidx \augmagnitude \randunitvec$ -- and the projection success rate. As we vary the maximum value of $\augidx$ at 1, 2, 3, and 7 on the sphere dataset, the projection success rates are (5.00 $\pm$ 2.83) \%, (12.33 $\pm$ 9.03) \%, (83.67 $\pm$ 19.01) \%, and (97.33 $\pm$ 3.77) \%, respectively, showing that the projection success rate improves as the number of augmentation levels are increased.

\subsection{Motion Planning on Learned Manifolds}
In the final experiment, we integrate {\ecmnn} into the sequential motion planning framework described in \Cref{subsec:smp}. 
We mix the learned constraints with analytically defined constraints and evaluate it for two tasks. 
The first one is a geometric task, visualized in \Cref{fig:ecomann_hourglass}, where a point starting on a paraboloid in 3D space must find a path to a goal state on another paraboloid. 
The paraboloids are connected by a sphere, and the point is constrained to stay on the surfaces at all times.
In this case, we give the paraboloids analytically to the planner, and use {\ecmnn} to learn the connecting constraint using the Sphere dataset.
\Cref{fig:ecomann_hourglass} shows the resulting path where the sphere is represented by a learned manifold (red line) and where it is represented by the ground-truth manifold (black line). While the paths do not match exactly, both paths are on the manifold and lead to similar solutions in terms of path lengths. The task was solved in \SI{27.09}{\s} on a 2.2 GHz Intel Core i7 processor. The tree explored $1117$ nodes and the found path consists of $24$ nodes.

The second task is a robot pick-and-place task with the additional constraint that the transported object needs to be oriented upwards throughout the whole motion. 
For this, we use the Orient dataset to learn the manifold for the transport phase and combine it with other manifolds that describe the pick and place operation. The planning time was \SI{42.97}{\s}, the tree contained $1421$ nodes and the optimal path had $22$ nodes. Images of the resulting path are shown in \Cref{fig:pickandplace}. 

\begin{figure}[t]
    \centering
    \includegraphics[width=0.19\textwidth]{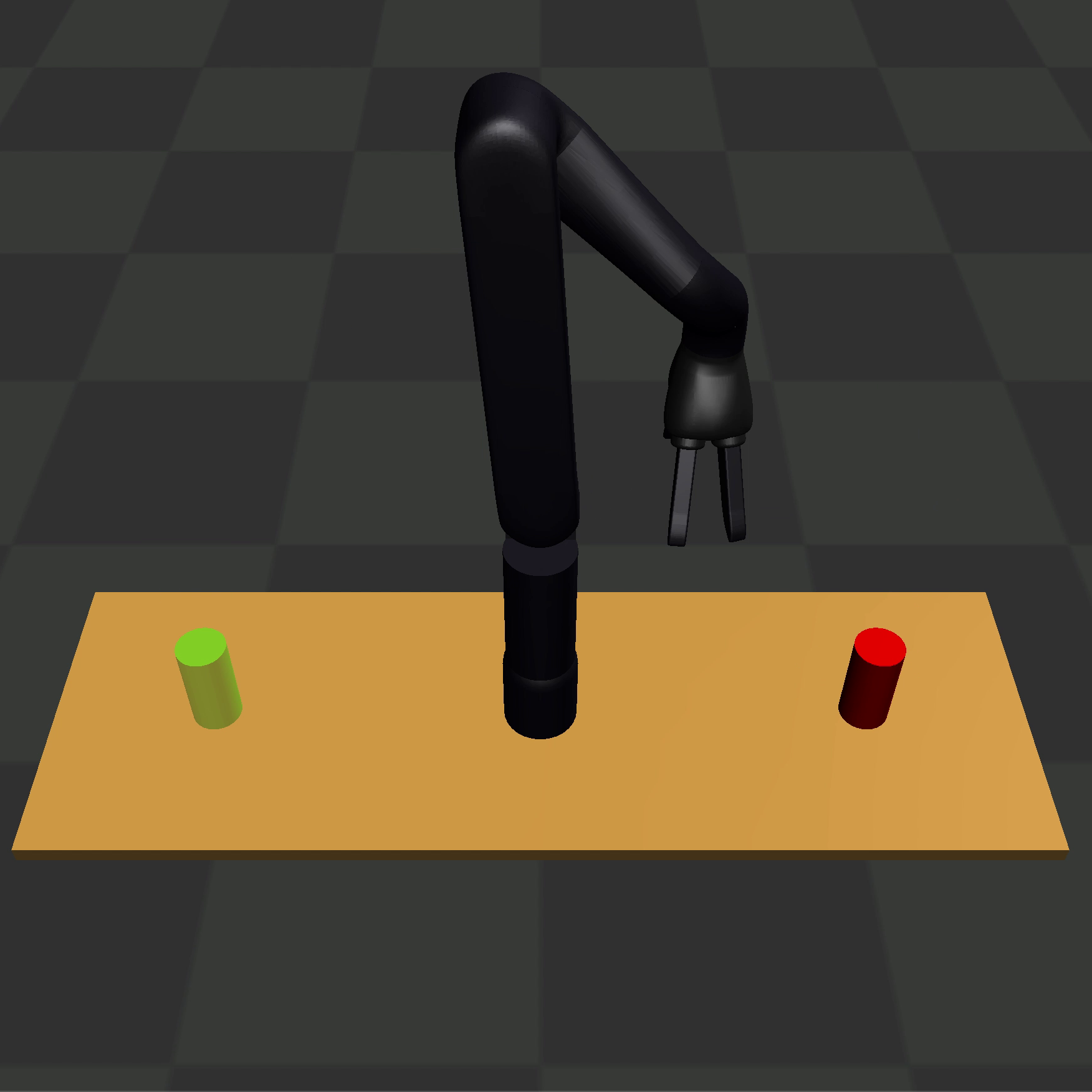}
    \includegraphics[width=0.19\textwidth]{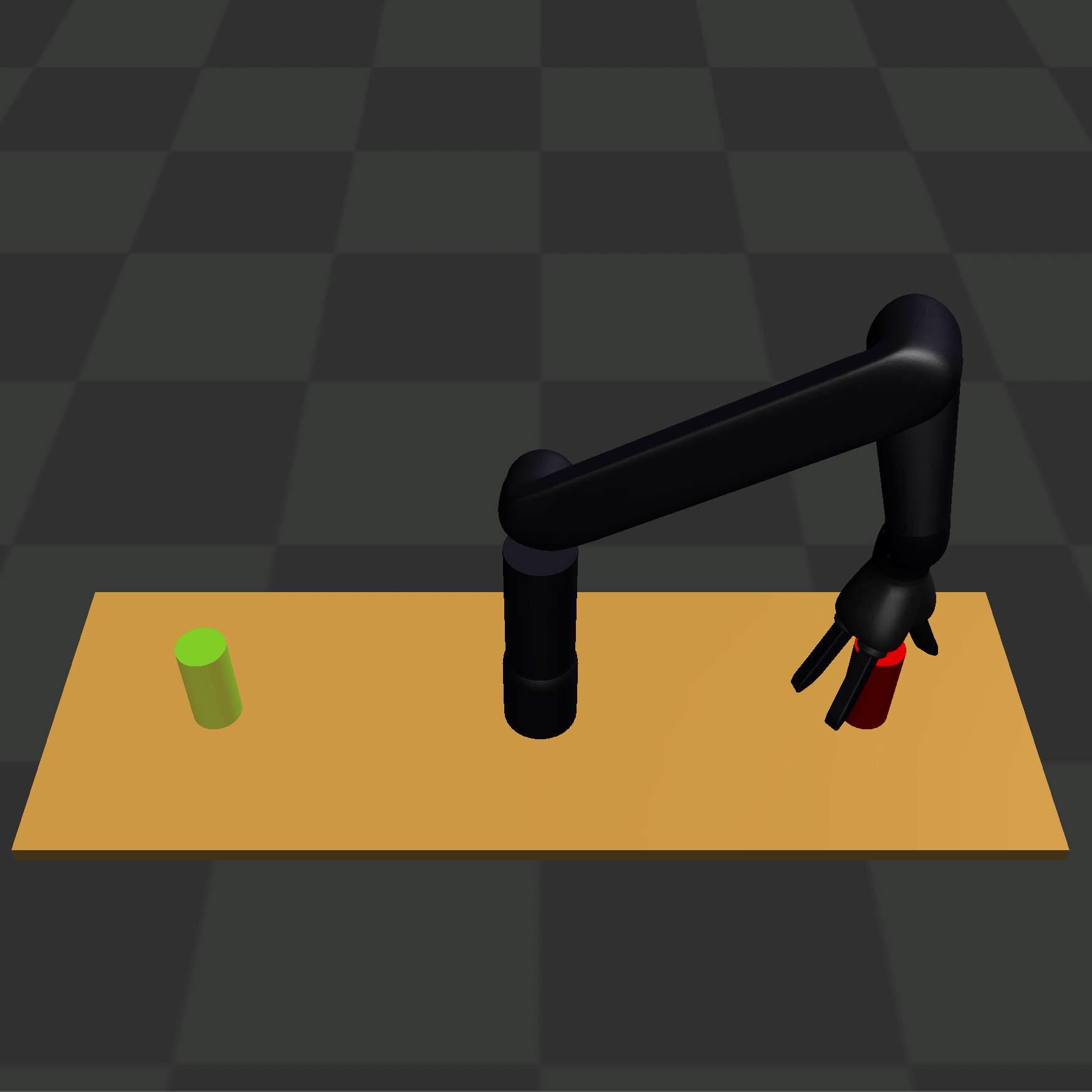}
    \includegraphics[width=0.19\textwidth]{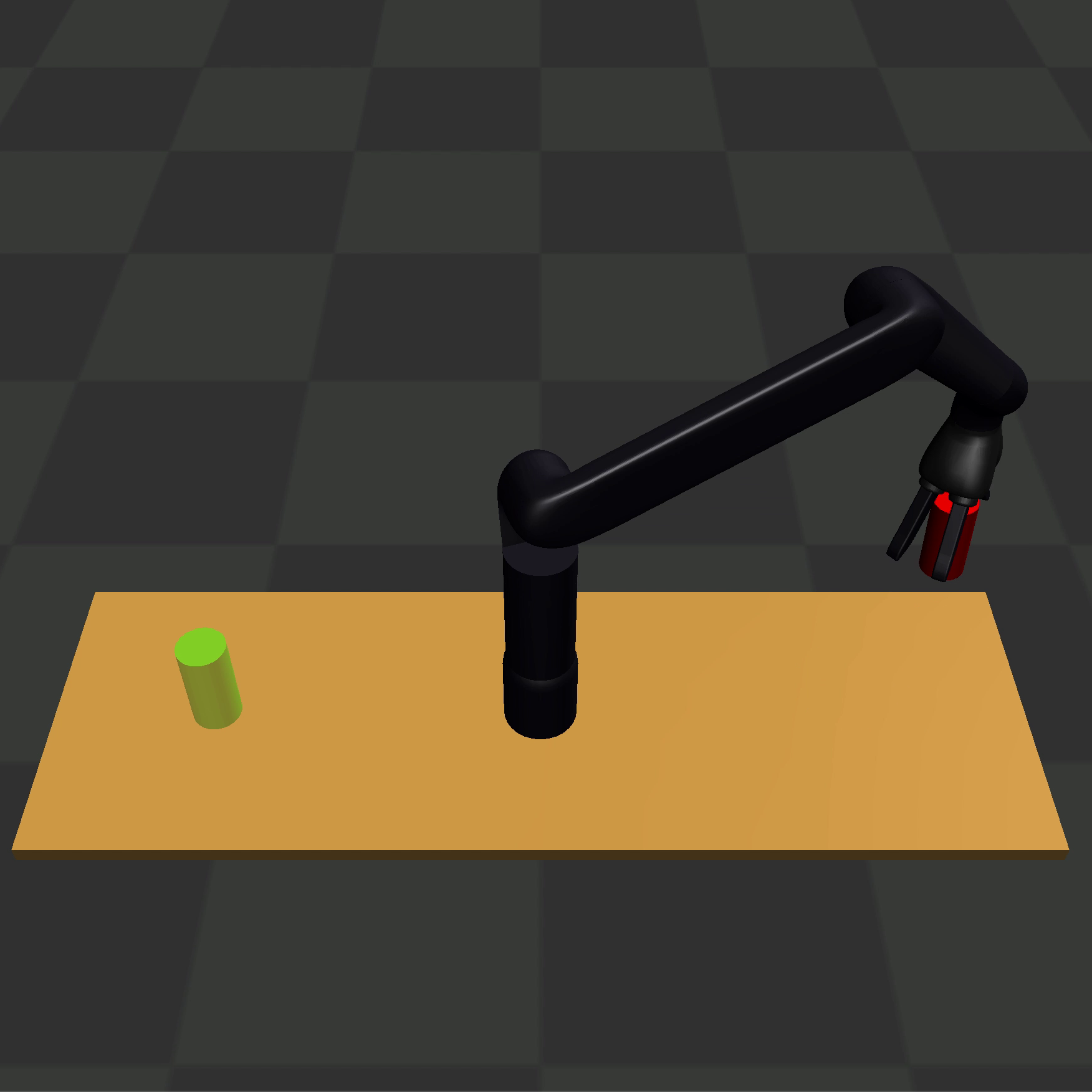}
    \includegraphics[width=0.19\textwidth]{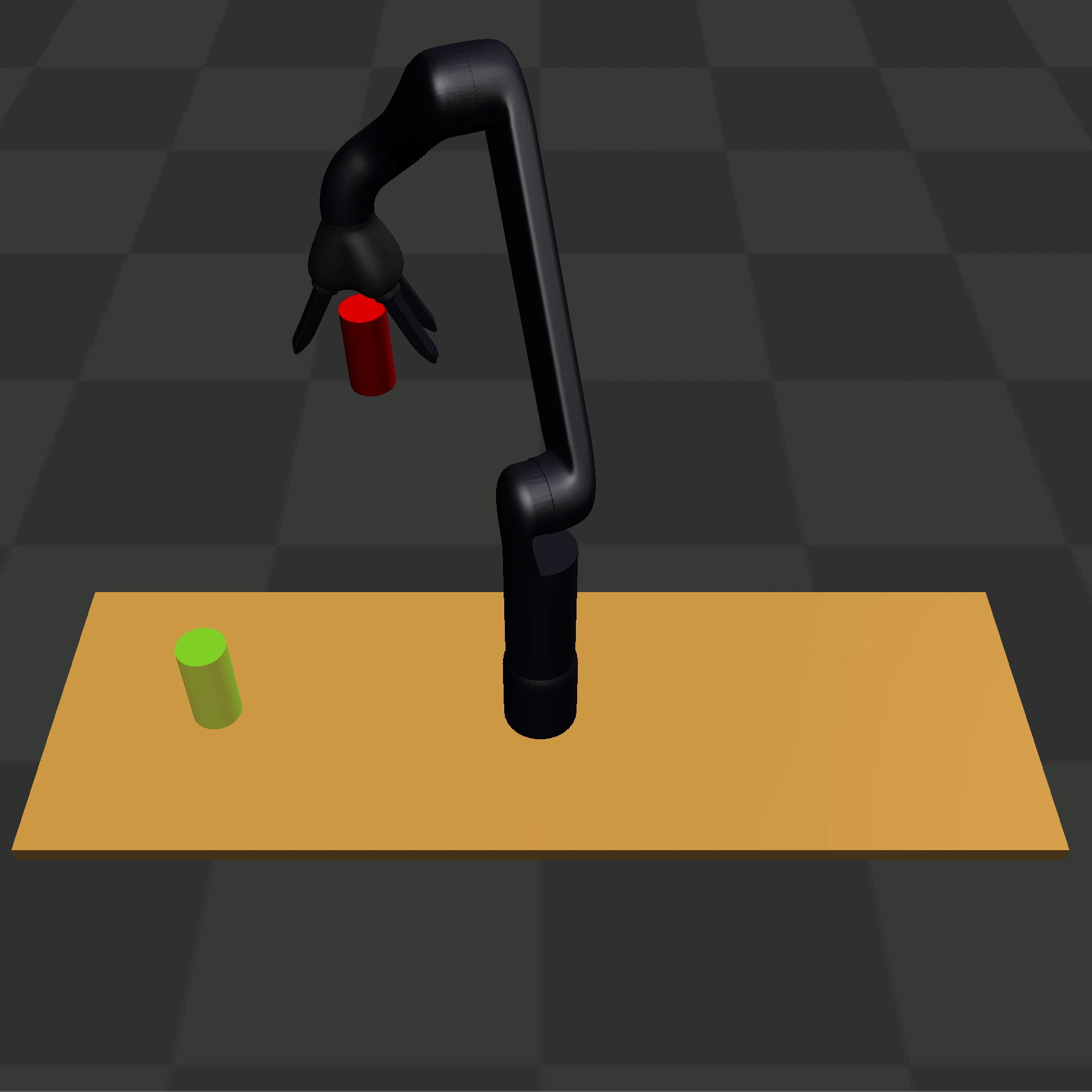}
    \includegraphics[width=0.19\textwidth]{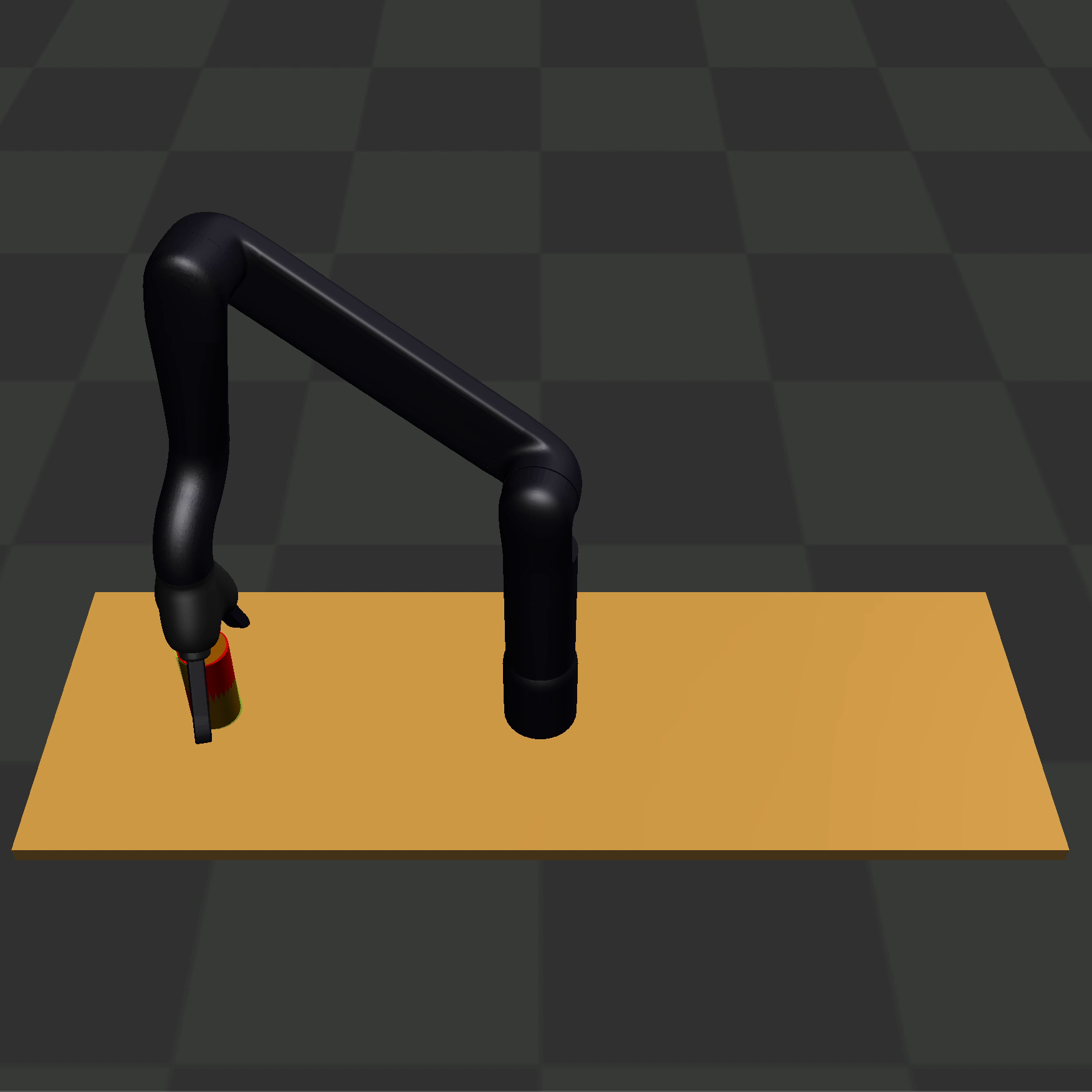}
    \caption{Visualization of a path that was planned on a learned orientation manifold.}
    \label{fig:pickandplace}
\end{figure}


\section{Discussion and Conclusion}
\label{sec:ecomann_conclusion}
In this chapter, we presented a novel method called {\ecmnnlong} for learning equality constraint manifolds from data.
{\ecmnn} works by aligning the row and null spaces of the local PCA and network Jacobian, which results in approximate learned normal and tangent spaces of the underlying manifold, suitable for use within a constrained sampling-based motion planner.
In addition, we introduced a method for augmenting a purely on-manifold dataset to include off-manifold points and several loss functions for training.
This improves the robustness of the learned method while avoiding hand-coding the labels for the augmented points. We also showed that the learned manifolds can be used in a sequential motion planning framework for constrained robot tasks.

While our experiments show success in learning a variety of manifolds, there are some limitations to our method.
First, {\ecmnn} by design can only learn equality constraints. Although many tasks can be specified with such constraints, inequality constraints are also an important part of many robot planning problems. 
Additionally, because of inherent limitations in learning from data, {\ecmnn} does not guarantee that a randomly sampled point in configuration space will be projected successfully onto the learned manifold.
This presents challenges in designing asymptotically complete motion planning algorithms, and is an important area of research.


\chapter{Background on Informative Path Planning in Natural Environments}
\label{ch:ipp-background}
Until now, we have considered offline planning in fully observable environments.
We next consider situations in which these assumptions do not necessarily hold, and the next three chapters address aspects of task abstraction in one such context. 
Inspired by our collaboration with a team of microbiologists, the robot planning problem in these chapters aims to explore an unknown environment in an online fashion, in this case, natural aquatic environments to study harmful algae blooms.
Though in a different domain, we are faced with similar concerns with task specification abstraction; these problems have historically only accepted generic specifications, and expanding these to complex multirobot systems has not been deeply investigated. 
In \Cref{ch:quantiles,ch:multirobot,,ch:comms}, these aspects of task abstractions are addressed,
and this chapter gives the background and motivation for this domain.

\section{Informative Path Planning (IPP)}
Informative path planning (IPP) is a common planning framework often used in an online manner and consists of alternating between planning and taking an action.
It uses knowledge of the environment, typically in the form of an internal model, to inform the next action that is taken, after which new information is gained and incorporated into the model, and the process repeats.
IPP uses objective functions to determine which actions are the most informative at each step.
These objective functions depend on the application, but some common functions include entropy and mutual information \cite{kemna_pilot_2018,guestrin_near-optimal_2005}.
The robot produces a planned trajectory $p$ at each planning step which maximizes (or minimizes) the objective function $\objectivefunction$, and these trajectories together create the complete planned path $P$ through the environment. 
Typically there is a planning step limit, or budget, $B$ on the problem which defines the maximum cost $c$ of the robot path \cite{denniston_icra_2020,Hollinger2014}, so the plan and act steps iterate until that budget is reached. 
This problem can be described as
\begin{equation}
P^* = \argmax_{P \in \Phi} f(P) ~ | ~  c(P) \leq B    
\end{equation}
where $\Phi$ is the space of full trajectories and $P^*$ is the optimal trajectory. 

For planning for taking measurements, Partially Observable Markov Decision Processes (POMDPs) provide a useful formulation.
POMDPs are a framework that can determine optimal actions when the environment is not fully observable or there is uncertainty or noise in the environment, resulting in their widespread applicability in planning problems.
Formulating IPP as a partially observable Markov decision process (POMDP) has been explored~\cite{borenstein_bayesian_2014}; 
previous works have used this for finding maxima~\cite{Marchanta} and to adapt the parameters of a rollout-based POMDP solver online to improve its efficiency~\cite{denniston_icra_2020}.

Gaussian Processes (GPs) are non-parametric models with uncertainty quantification which are widely used to represent the belief distribution from observations in POMDP formulations of sequential Bayesian optimization and IPP~\cite{borenstein_bayesian_2014,Marchanta,denniston_icra_2020,kemna_pilot_2018}.
They approximate an unknown function from its known outputs by computing the similarity between points from a kernel function, in our case the squared exponential kernel~\cite{Rasmussen2006}, and can incorporate spatial dependency of the distribution into its predictions of unmeasured locations.
The GP model produces an estimate of the mean value $\mu(\sensedlocation)$ and variance $\sigma^2(\sensedlocation)$ at any specific location $\sensedlocation$.

Using observations as measurements from the environment and a GP to represent the belief distribution, sequential Bayesian optimization can be formulated as a Bayesian search game~\cite{borenstein_bayesian_2014}. 
Measurements from the environment $\sensedvalues_t=GT(\sensedlocations_t)$ constitute observations which are partially observable components of the overall ground truth environment $GT$.
To adapt the Bayesian search game formulation to IPP, the belief state is augmented with the state of the robot $g_t$ and the actions the planner is allowed to take are restricted to local movements which are feasible for the robot~\cite{Marchanta}.
A complete description of the POMDP for IPP can be seen in \Cref{tab:BayesianSearchAndPOMDPs}.
Solving the POMDP allows action selection to take measurements that maximize the objective function.

\begin{table}[]
\caption{Informative Path Planning as a POMDP. After~\cite{borenstein_bayesian_2014,denniston_icra_2020}.}
\centering
\begin{tabular}{c|c}
\textbf{POMDP} & \textbf{Informative Path Planning}        \\ \hline
States         & Robot position $\location_t$, Underlying unknown function $GT$ \\ \hline
Actions        & Neighboring search points                     \\ \hline
Observations   & Robot position $\location_t$, Measured location(s) $\sensedlocations_t = o(\location_t)$   \\ 
               & Measured value(s) $\sensedvalues_t = GT(\sensedlocations_t) $ \\\hline
Belief         & GP conditioned on previously measured      \\ 
& locations  $(\sensedlocations_{0:t-1})$ and values $(\sensedvalues_{0:t-1})$ \\ \hline
Rewards        & $\objectivefunction(\sensedlocations_t)$      
\end{tabular}
\label{tab:BayesianSearchAndPOMDPs}
\end{table}

\section{Objective Functions for IPP}
Objective functions are used to specify the mission objective to maximize during planning. We describe some history here and show that there is a gap in the current literature for estimating quantile values.

Often, the objective in IPP is to generate good coverage of a distributed phenomenon using an information theoretic objective such as entropy~\cite{kemna_pilot_2018} or mutual information~\cite{guestrin_near-optimal_2005}. 
Other common objectives include finding the highest concentration location~\cite{Marchanta} or measuring near hotspots~\cite{mccammon_topological_2018}.
Sequential Bayesian optimization objectives are used to locate extreme values or areas of high concentration and have shown success in doing so in simulation and in real field scenarios~\cite{Marchanta,blanchard_informative_2021,souza_bayesian_2014}.
These specialized objective functions are typically tuned to locate a single point, often the extrema.
%
There has also been work on extending IPP missions to domains such as underwater inspection by modifying the GP and objective functions~\cite{Hollinger2013ActivePF}.
Specialized objective functions have also been developed for the task of modeling continuous and discrete variables, as well as including sensor-specific characteristics such as the increase in uncertainty due to camera distance to the subject~\cite{popovic_informative_2020}.

Other objective functions have been proposed for level set estimation which seek to classify points into sets above and below a pre-specified concentration, or a fraction of the concentration~\cite{level_set}.


\section{Physical Specimen Collection} 
Physical specimen collection is the process of collecting portions of the environment for later analysis.
The choice of physical specimen collection locations is driven by the spatial heterogeneity of the studied phenomena~\cite{ysi_whitepaper},
e.g., freshwater algal growth or crop health.
We are primarily motivated by the study of algal growth and distribution in freshwater and marine ecosystems. 
Spatial heterogeneity of algal and cyanobacterial blooms is often obvious as conspicuous accumulations
at down-wind and down-current locations at the surface of lakes, and vertical heterogeneity can
occur due to water column stratification, differential growth at different depths, or active vertical migration by some planktonic species~\cite{Seegers2015,Hozumi2020}. 
Such heterogeneity thwarts studies designed to quantify the spatial distribution of algal biomass, and the concentrations of algal toxins that may be produced by some algae and cyanobacteria.
Nonetheless, accurately characterizing this heterogeneity is fundamental for investigating average conditions (indicated by median quantiles) for investigating trends across many lakes or large geographical areas~\cite{Ho2019,Mantzouki2018}. 
Assessing heterogeneity is also essential for assessing the worst case scenarios for exposure of animals and humans to algal or cyanobacterial toxins (specifically the highest quantiles). 
Such information contributes to ecotoxicological studies used to develop thresholds that constitute significant exposure to these toxins~\cite{Mehinto2021}.

Robotic approaches for quantifying algal biomass across quantiles, often using chlorophyll or chlorophyll fluorescence as a proxy for algal biomass, have become a mainstay for quantitatively documenting heterogeneity in natural ecosystems~\cite{Zhang2020,Sharp2021}.
Adaptive surveys for selecting collection locations have typically focused on finding high-concentration areas from which to measure, and perform the specimen collection onboard the robot or by a following robot~\cite{das_data-driven_2015, other_onboard_sampling}.

\section{Quantiles and Quantile Standard Error Estimation}
\label{sec:quantile-estimation}
\textit{Quantile estimation} refers to acquiring the value of a given quantile in a distribution. For example, the median algae concentration would be given by the value of the 0.5 quantile. 
Statistical techniques have been proposed to estimate quantiles and quantile standard error of a distribution.
Quantile values can be estimated from measurements using $\estimatedquantilevalue = x_{\lfloor h \rfloor} + (h - \lfloor h \rfloor)(x_{\lceil h \rceil} -x_{\lfloor h \rfloor})$
where $h = (n - 1)\quantile + 1$, $n$ is the number of measurements, and $\quantile$ is the quantile (Equation 7 in~\cite{hyndman_sample_1996}), which is the default in the \texttt{numpy} Python package.
Standard error of a quantile estimate is a measure of the uncertainty, and is typically used in the construction of confidence intervals of a quantile estimate.
Given a probability density function $p$, the standard error of the $q$th quantile is 
$\sqrt{q(1-q)} / (\sqrt{n}p(\estimatedquantilevalue))$.
Typically $p$ is not known, but can be estimated from samples using a density estimator~\cite{WILCOX2012103}.

\section{Continuous Non-Convex Gradient-Free Optimization} 
General optimization problems can be solved using continuous, non-convex, gradient-free methods.
The cross-entropy (CE) method is a popular approach which works by maintaining an estimate about the distribution of good solutions, and iteratively updating the distribution parameters based on the quality of the solutions sampled at each step, where the quality is determined via some function of the solution configuration~\cite{de2005tutorial}.
More precisely, a prior $Pr$ and a posterior $Po$ set of samples are maintained.
$Po$ contains configurations sampled from $\mathcal{N}(\vec{\mu}$, $\vec{\sigma})$.
$\vec{\mu}$ and $\vec{\sigma}$ are computed from $Pr$, which is the best $\eta\%$ configurations from $Po$.
This continues iteratively, minimizing the cross-entropy between the maintained and target distributions~\cite{de2005tutorial}.

Simulated annealing (SA) \cite{kirkpatrick1987optimization}  is an iterative algorithm inspired by statistical mechanics. SA optimizes an energy function similar to a loss function in other optimization schemes, as configurations with lower energy are preferred. SA begins with a temperature $T = T_\t{max}$. At each iteration, $T$ is decreased exponentially, and the configuration is slightly perturbed randomly. The perturbed state is either accepted or rejected probabilistically based on its energy and $T$. 
The algorithm terminates when $T$ reaches a given threshold.

Bayesian optimization (BO) is a method for selecting parameters for difficult optimization problems such as hyperparameter selection~\cite{bohyper}.
BO is similar to IPP and sequential Bayesian optimization~\cite{Marchanta} in that these methods build a model using a GP and select points which maximally improve this model. 
BO uses acquisition functions, such as expected improvement, and iteratively selects the best point to sample from~\cite{jones_efficient_1998, qin_improving_2017}. 

Other methods include genetic or evolutionary algorithms \cite{srinivas1994genetic}.
These algorithms are inspired by biology and generally incorporate some method for crossover between two solutions or mutations to encourage exploration.

\section{Multirobot Studies}
\label{sec:multirobot-background}
Multirobot systems research has a rich history; many well-known problems have been extended to more than one robot, such as multirobot SLAM \cite{howard2006multi}, multirobot exploration \cite{howard2006experiments}, and multirobot learning \cite{sartoretti2019distributed}.
Many previous studies concerning the effect of robot team size on performance have focused on human factors such as mental demand and operator workload \cite{wang2009search, velagapudi2008scaling},
while those that have studied autonomous group performance have shown inconclusive results regarding correlation between performance and team size \cite{rosenfeld2008study}, and that multirobot problems tend to follow the Law of Marginal Returns \cite{rosenfeld2004study, stachniss2008efficient}, which states that as more resources are added to a problem, smaller returns are generated. 
This property, also known as submodularity, can be seen both in number of robots as well as measurements taken, and previous work has exploited it to approximately solve challenging planning problems \cite{Corah-2019-120007}.

\subsection{Multirobot IPP}
Multirobot informative path planning uses a team of robots to collect information about an underlying spatial distribution. 
Previous works have used objective functions like entropy or mutual information to explore phenomena such as temperature fields, plankton density, salinity, and chlorophyll.
\cite{cao2013multi} exploits spatial correlation in the concentration to solve the planning problem while also improving the tradeoff between performance and efficiency.
\cite{dutta2019multi} addresses the problem of multirobot planning with continuous connectivity constraints. In that work, bipartite graphs are used to determine the next points for each robot to travel to. Though the main experimental focus is on varying the communication radius and the amount of collected information used to make future decisions with, the authors also find a linear relationship between the number of robots and the improvement using an entropy-based objective function.
Bipartite graph matching has also been used to iteratively plan paths for each robot to the most informative points found in the environment \cite{ma2018multi}. 
A similar problem is addressed in \cite{dutta2020multi} where there is intermittent communication in cluttered environments. There, the region is partitioned into Voronoi cells to better balance the workload between robots at every point when they come into communication range with each other. 
\cite{kemna2017multi} also uses repeated Voronoi partitioning combined with limited information-sharing between underwater robots.
Non-constant connectivity has also been studied in \cite{hollinger2010multi} where multirobot search with periodic connectivity is resolved using implicit coordination to solve the problem in a scalable manner.
Reinforcement learning has been used to learn a planning policy prioritizing exploring hotspots and robust to robot failures in the environment \cite{pan2022marlas}. 

\subsection{Bandwidth-Constrained Multirobot Communication}
\label{ssec:multirobot-comms-background}
Work that has considered the problem
of deciding what or when to communicate is largely motivated by the same desire
to respect bandwidth limitations that real networks must face and to reduce
communication costs.  However, particular methods 
differ in how they approach the solution and in what context the problem is
situated.  Some of the varied contexts in which utility-based decision-making for
inter-robot communication has been investigated, include months-long data
collection to be routed to a base station~\cite{padhy2010utility},
non-cooperative multi-agent path finding~\cite{ma2021learning}, and
those with request-reply protocols~\cite{ma2021learning,zhang2019efficient}.
In addition, some metrics used to determine utility include message category or
intrinsic data features~\cite{padhy2010utility}, difference in expected reward
after incorporating a message~\cite{simonetto2014distributed,
marcotte2020optimizing, unhelkar2016contact}, and the expected effect of the
message on robot actions~\cite{ma2021learning, zhang2019efficient}, 

Previous work using a measure of utility based on expected reward have focused on scenarios where the end goal is to reach some goal locations, assigned to each robot, and the environment is fully or partially unknown \cite {marcotte2020optimizing,unhelkar2016contact}. 
Robot teams in these works are decentralized in terms of planning and reward calculation, and 
may include forward-simulation of robot teammates during planning.
Markov decision processes may be used to underlie robot planning.
Communication network models in these works make several simplifying assumptions, including that communication transmission is deterministic, so reception is guaranteed, and that the environment is perfectly measurable.

\chapter{Informative Path Planning to Estimate Quantiles for Environmental Analysis}
\label{ch:quantiles}
Motivated to be able to describe task specification as a high-level, interpretable scientific goal, this chapter describes a method that addresses this in the context of analysis of natural environments. 
In this work, based on \cite{rayas2022informative}, we move away from generic task specifications that result in users burdened by post-processing the planning survey results, and instead automate physical location selection such that scientists can directly use the output to collect useful specimens.

\section{Introduction}
\begin{figure}[h!]
\centering
    \includegraphics[width=\columnwidth]{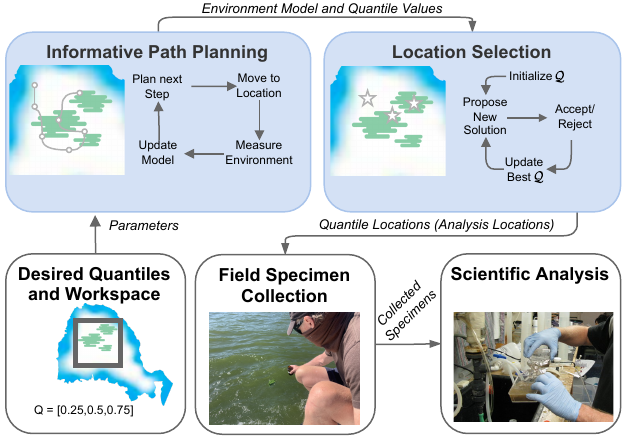}
    \caption{Full system.
    First, the parameters for the robotic survey are chosen, such as the area bounds and the quantiles for specimen collection.
    The robot performs informative path planning using our proposed objective functions, creating an environment model and an estimate of the quantile values.
    The quantile locations are then selected by minimizing our proposed loss function. These quantile locations correspond to physical locations which, when measured, have the estimated quantile values.
    After the quantile locations are chosen, humans go to them to collect field specimens which they later analyze in a laboratory setting. 
    This work focuses on the steps shown in the blue shaded boxes.
    }
    \label{fig:quantiles-hero}
\end{figure}


In order to understand biological phenomena, scientists take a small number of physical specimens for later analyses to characterize the biological community and the contextual environmental conditions at the site.
A marine biologist may capture water in a container or filter media at a location for later analysis, and an agricultural scientist may capture a small portion of a plant.
Scientists later analyze these captured portions of the environment in a detailed laboratory setting.
It is often expensive to collect these physical specimens as the scientist must go to the location and use a physical reagent, such as a container or filter media, which are limited in the field. 
Traditionally, expert heuristics underpin the selection of locations for scientific analysis. 
These expert heuristics generally attempt to spread out the analysis locations in the phenomena of interest in order to take specimens with differing concentrations for characterization of heterogeneity~\cite{ysi_whitepaper}.
The heuristics may make use of robotic surveys, but currently most robotic surveys use either pre-programmed paths~\cite{ysi_whitepaper}, or autonomy which seeks maxima~\cite{Marchanta} or to cover the area spatially~\cite{kemna_pilot_2018,guestrin_near-optimal_2005}, and do not directly plan for the specimen collection tasks.
In contrast, we propose performing an adaptive robotic survey to find locations of interest for scientific analysis.
To specify these locations, we propose calculating the quantiles of the distribution of interest so that scientists can capture specimens at varied locations in the environment based on differing values of some phenomenon of interest to them. 
For instance, if a marine biologist is interested in taking 9 physical water specimens that are spread over a range of chlorophyll concentrations, they may choose to use the deciles of the concentration. 
If only a small number of locations can be analyzed and the upper extrema values are of interest, they may choose to perform analysis at the locations of the $(0.90, 0.95, 0.99)$ quantiles.
The quantiles of interest are a flexible way to describe the objective of specimen collection that is largely invariant to the exact phenomena being measured.

Our goal is to select locations for detailed analysis by scientists in two steps. 
We use a robot to perform an adaptive survey, and then find locations to suggest for specimen collection based on the measurements it takes.
Specifically, we first aim to find the desired quantile values of the measurement distribution by adaptively selecting robot measurement locations that maximize an objective function designed to estimate quantile values, and then we produce suggested locations for specimen collection that are likely to contain these values.
\Cref{fig:quantiles-hero} summarizes the approach, showing how our objective function for informative path planning and our loss function for location selection fit into an overall physical specimen collection pipeline.

Our contributions are:

\begin{itemize}
    \item A planner with custom objective functions that adaptively improve the quality of estimated quantile values;
    \item A general loss function, which can be used with any optimizer, for selecting spatial locations for physical specimen collection which have the values estimated for the quantiles;
    \item Quantitative evaluation of our method on point and camera sensors with previously collected aquatic datasets;
    \item A demonstration of our method on a robot in a real-world crop health estimation task.
\end{itemize}

\section{Background}
We pose our problem as an IPP problem, where the overall goal is to minimize error in the estimate of a given set of quantiles of a distribution in the environment, and we use a GP to model the environment. 
The GP environment model estimates the value $\mu(\sensedlocation)$ and variance $\sigma^2(\sensedlocation)$ at a specific location $\sensedlocation$.
Similar to previous work, we use the POMDP formulation, and approximately solve this POMDP to select actions to take measurements for maximizing the objective function.

Our works differs from others primarily in the overall goal that the task specification permits. 
Specifically, as discussed in \Cref{ch:ipp-background}, previous work has typically aimed for coverage of the given area, or to find the highest concentration of the distribution. 
In contrast, our goal is to directly address the user's scientific goal by instead focusing on improving estimates for any arbitrary specific quantiles instead of broad understanding of the spatial makeup of a distribution.

In this work, we use a Gaussian kernel density estimator for quantile standard error when planning.
Other estimation methods include the \mjci method~\cite{maritz_note_1978}, and the bootstrap or jackknife methods which involve calculating the quantiles over repeatedly sampled subsets of the data. 
These last two methods can be slow for large datasets and require many iterations to converge.

The final output of this algorithm is to select the analysis locations by minimizing our proposed loss function once the IPP survey is completed (see \Cref{sec:location}), and for this,
we use continuous non-convex gradient-free optimization methods.

\section{Formulation}
We use a grid-based representation of the planning space, $\gtlocations$, which defines the set of locations that the robot could visit.
For a robot that moves in $\mathbb{R}^d$, $\gtlocations \subset \mathbb{R}^d$.
$\gtsensedlocations$ is the set of locations the robot could measure. 
If the robot sensor has finer resolution than $\gtlocations$,
e.g. if the robot uses a camera sensor or takes measurements while traversing between grid points,
then $|\gtsensedlocations| > |\gtlocations|$.
We define $\sensedlocations_{0:t}$ as the locations the robot has measured up to time $t$ and $\gtsensedvalues$ and $\sensedvalues_{0:t}$ as the values at all possible measured locations, and the values the robot has measured up to time $t$, respectively, such that $|\gtsensedlocations| = |\gtsensedvalues|$ and $|\sensedlocations| = |\sensedvalues|$.

We define the ground truth quantile values as 
$\quantilevalues = 
quantiles(\gtsensedvalues,\quantiles)$
where $quantiles$ is the function described in \Cref{sec:quantile-estimation} which computes the values $V$ of the quantiles $\quantiles$ of a set of measurements, in this case $\gtsensedvalues$.
To define the robot's estimated quantile values, we compute 
$\estimatedquantilevalues = quantiles(\mu(\gtsensedlocations),\quantiles)$
that is, the quantile values of the predicted values from the robot's current GP for all locations the robot could sense.
This is done  to prevent the number of measurements from which the quantile values are estimated from changing as the robot explores (instead of, e.g. using $\mu(\sensedlocations)$).
By doing this, we ensure we always estimate the quantiles across the entire measurable area.
During planning, we aim to minimize this error by taking actions which maximize an objective function $\objectivefunction$ that minimizes the error in the quantile value estimate.

To suggest locations for the quantile values, we aim to find a set of $\numtiles$ locations $\quantilespatiallocations$ in the continuous space whose values at those locations are equal to the quantile ground-truth values. 
A set of locations is defined as $\quantilespatiallocations \in \quantilespatiallocations^\#$ where $\quantilespatiallocations^\# \subset \mathbb{R}^{d \times |Q|}$ and $\quantilespatiallocations^\#$ is continuous over the space of $\gtsensedlocations$.
In practice, the robot only has access to $\estimatedquantilevalues$ (not $\quantilevalues$) during the selection process, so the problem of finding the estimated quantile spatial locations $\estimatedquantilespatiallocations$ can be stated as shown in \Cref{eq:point_selection} with some selection loss function $\score$ (see \Cref{eq:ps_loss}).
\begin{equation}\label{eq:point_selection}
    \estimatedquantilespatiallocations = \argmin_{\quantilespatiallocations' \subset \quantilespatiallocations^\#}  \score (\estimatedquantilevalues, \quantilespatiallocations')
\end{equation} 

\section{Approach}
\Cref{fig:quantiles-hero} illustrates our method.
We separate our approach into two steps: the survey, and the suggestion of locations.
When performing the survey using IPP (\Cref{sec:planning}), the robot takes measurements of the environment to improve its estimate of the quantiles.
After the survey has concluded, location selection (\Cref{sec:location}) produces locations for scientists to visit to perform specimen collection.

\subsection{Informative Path Planning}\label{sec:planning}
To plan which locations to measure, the robot uses a POMDP formulation of IPP. 
In order to generate a policy, we use the partially observable Monte Carlo planner (POMCP)~\cite{silver2010pomcp}.
POMCP uses Monte Carlo tree search to create a policy tree. 
To expand the tree and estimate rewards, the tree is traversed until a leaf node is reached.
From the leaf node, the reward conditioned on that action is estimated by a random policy rollout which is executed until the discounted reward is smaller than some value $\epsilon$.
We modify the rollout reward to be fixed horizon, giving a reward of zero after a certain number of random policy steps.
We adopt the t-test heuristic for taking multiple steps from a POMCP plan for IPP to improve performance of the planner with fewer rollouts~\cite{denniston_icra_2020}.
We define $GP_{i-1} = GP(\sensedlocations_{0:i-1},\sensedvalues_{0:i-1}; \theta )$
as a GP conditioned on the previous locations ($\sensedlocations_{0:i-1}$) and measurements ($\sensedvalues_{0:i-1}$) before measuring a proposed value, and 
$GP_i = GP(\sensedlocations_{0:i-1} \cup \sensedlocations_i, \sensedvalues_{0:i-1} \cup \sensedvalues_i; \theta )$ 
as a GP conditioned on the previous and proposed locations ($\sensedlocations_i$) and measurements ($\sensedvalues_i$), where $\theta$ are GP parameters and $\sensedvalues_i = GP_{i-1}(\sensedlocations_i)$.
Because the observations $GT(x)$ for unseen locations are not known during planning, the predicted mean from $GP_{i-1}$ is used~\cite{Marchanta}.
%

\textbf{Objective Functions for Quantile Estimation}
We develop two novel objective functions to improve the quality of quantile value estimates.
Both compare a measure of the quality of the quantiles estimated by the GP before and after adding a measurement to the GP,
and include an exploration term $c_{plan}\sigma^2(\sensedlocation_i)$ inspired by the upper confidence bound objective function~\cite{Marchanta}, where $c_{plan}$ is a chosen constant.
For both proposed objective functions we use \Cref{eq:general_objective}, where $\delta$ is defined by the objective:
\begin{equation}\label{eq:general_objective}
    \objectivefunction(\sensedlocations_i) = \frac{\delta(\sensedlocations_i)}{\numtiles} +  \sum_{\sensedlocation_j \in \sensedlocations_i} c_{plan}\sigma^2(\sensedlocation_j), 
\end{equation}

The first objective function, which we call \textit{quantile change}, is based on the idea of seeking out values which change the estimate of the quantile values by directly comparing the estimated quantiles before and after adding the measured values to the GP.
The idea behind this is that a measurement which changes the estimate of the quantiles indicates that the quantiles are over- or under-estimated.
This can be seen in in \Cref{eq:quantile_change}:
\begin{align}
\begin{split}\label{eq:quantile_change}
  \delta_\t{qc}(\sensedlocations_i) = \|&quantile(\mu_{GP_{i-1}}(\gtsensedlocations),\quantiles) - \\ &quantile(\mu_{GP_{i}}(\gtsensedlocations),\quantiles) \|_{1} \\
\end{split}
\end{align}
The second objective function we develop, which we call \textit{quantile standard error}, is based on the change in the estimate of the standard error for the estimated quantiles.
It draws from the same idea that if the uncertainty in the quantile estimate changes after observing a measured value, then it will change the estimate of the quantile values, shown in \Cref{eq:standard_error}:
\begin{align}\label{eq:standard_error}
\begin{split} 
   \delta_\t{se}(\sensedlocations_i) &= \|se(\mu_{GP_{i-1}}(\gtsensedlocations),\quantiles) - se(\mu_{GP_{i}}(\gtsensedlocations),\quantiles) \|_{1} \\
\end{split}
\end{align}
$se$ is an estimate of the standard error of the quantile estimate for quantiles $\quantiles$. 
$se$ uses a Gaussian kernel density estimate. 
To compute the objective function over a set of measured points, we average the objective function at each point.

\textbf{Baseline Objective Functions}
We compare against two baselines, one which maximizes spatial coverage of a phenomena and another which seeks maximal areas. 

\textit{Entropy} is a common objective function for IPP when only good spatial coverage of the environment is desired~\cite{kemna_pilot_2018,guestrin_near-optimal_2005,denniston_comparison_2019}. 
It provides a good baseline as it is often used when the specific values of the underlying concentration are unknown.
Entropy is defined \Cref{eq:entropy}:

\begin{equation}\label{eq:entropy}
   \objectivefunction_\t{en}(\sensedlocations_i) = \sum_{\sensedlocation_j \in \sensedlocations_i}\frac{1}{2} \log(2 \pi e \sigma^2(\sensedlocation_j))
\end{equation}

Another objective function we compare against is \textit{expected improvement} (EI), which is widely used in Bayesian optimization and sequential Bayesian optimization for finding maxima~\cite{jones_efficient_1998, qin_improving_2017}.
EI favors actions that offer the best improvement over the current maximal value, with an added exploration term $\xi$ to encourage diverse exploration.
The EI objective function is defined according to \Cref{eq:ei}:

\begin{equation}\label{eq:ei}
     f_{ei}(\sensedlocations_i) = \sum_{\sensedlocation_j \in \sensedlocations_i} I\Phi(Z) + \sigma(\sensedlocation_j)\phi(Z) \\
\end{equation}
where $Z =\frac{I}{\sigma^2(\sensedlocation_j)}$,  $I = \mu(\sensedlocation_j) - \max(\mu(\gtsensedlocations)) - \xi$, and $\Phi$ and $\phi$ are the CDF and PDF of the normal distribution, respectively.

\subsection{Location Selection}\label{sec:location}
Our final goal is to produce a set of $\numtiles$ locations $\quantilespatiallocations$, at which the concentration values will be equal to $\quantilevalues$, the values of the quantiles $\quantiles$. 
The selection process can be done offline as it does not affect planning.
Finding locations that represent $\quantiles$ is difficult because the objective function over arbitrary phenomena in natural environments will likely be non-convex, and in a real-world deployment, the robot will only have an estimate $\estimatedquantilevalues$ of the $\quantilevalues$ it searches for.

With the location selection problem formulation as in \Cref{eq:point_selection}, we propose the loss function 
\begin{equation}\label{eq:ps_loss}
\score(\estimatedquantilevalues, \estimatedquantilespatiallocations) = \|\estimatedquantilevalues - \mu(\estimatedquantilespatiallocations)\|_{2} + c_{select} \sigma^2(\estimatedquantilespatiallocations),
\end{equation}
where $c_{select} \sigma^2(\estimatedquantilespatiallocations)$ is an added penalty for choosing points that the GP of collected measurements is not confident about\footnote{The parameters $c_{plan}$ and $c_{select}$ are distinct.}, and $\score$ can be used in any optimization scheme.
During optimization, \Cref{eq:ps_loss} is evaluated using $\estimatedquantilevalues$ and returns the suggested specimen collection locations $\estimatedquantilespatiallocations$. 
We compare three optimization methods in our experiments to determine which best minimizes our selection loss function (\Cref{eq:ps_loss}).
A strength of these types of optimization methods is that the formulation allows for suggesting points that may be spatially far from locations the robot was able to measure if they have values closer to the quantile values of interest.

\begin{figure}[t]
    \centering
        \begin{subfigure}{\threeboxplot}
                \centering
                \includegraphics[width=\textwidth,height=\boxplotheight,trim={.8cm 0 .5cm 0},clip]{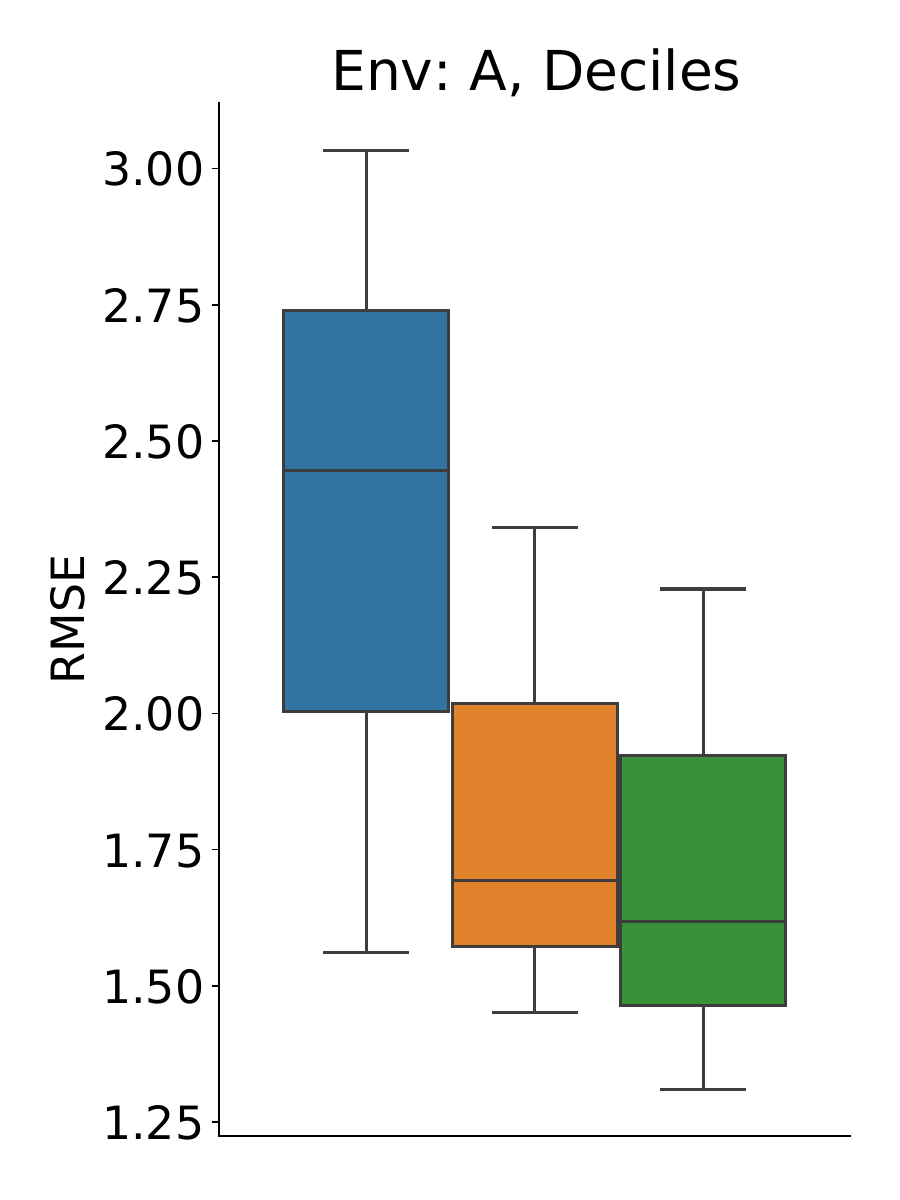}
        \end{subfigure}
        \begin{subfigure}{\threeboxplot}
                \centering
                \includegraphics[width=\textwidth,height=\boxplotheight,trim={.8cm 0 .5cm 0},clip]{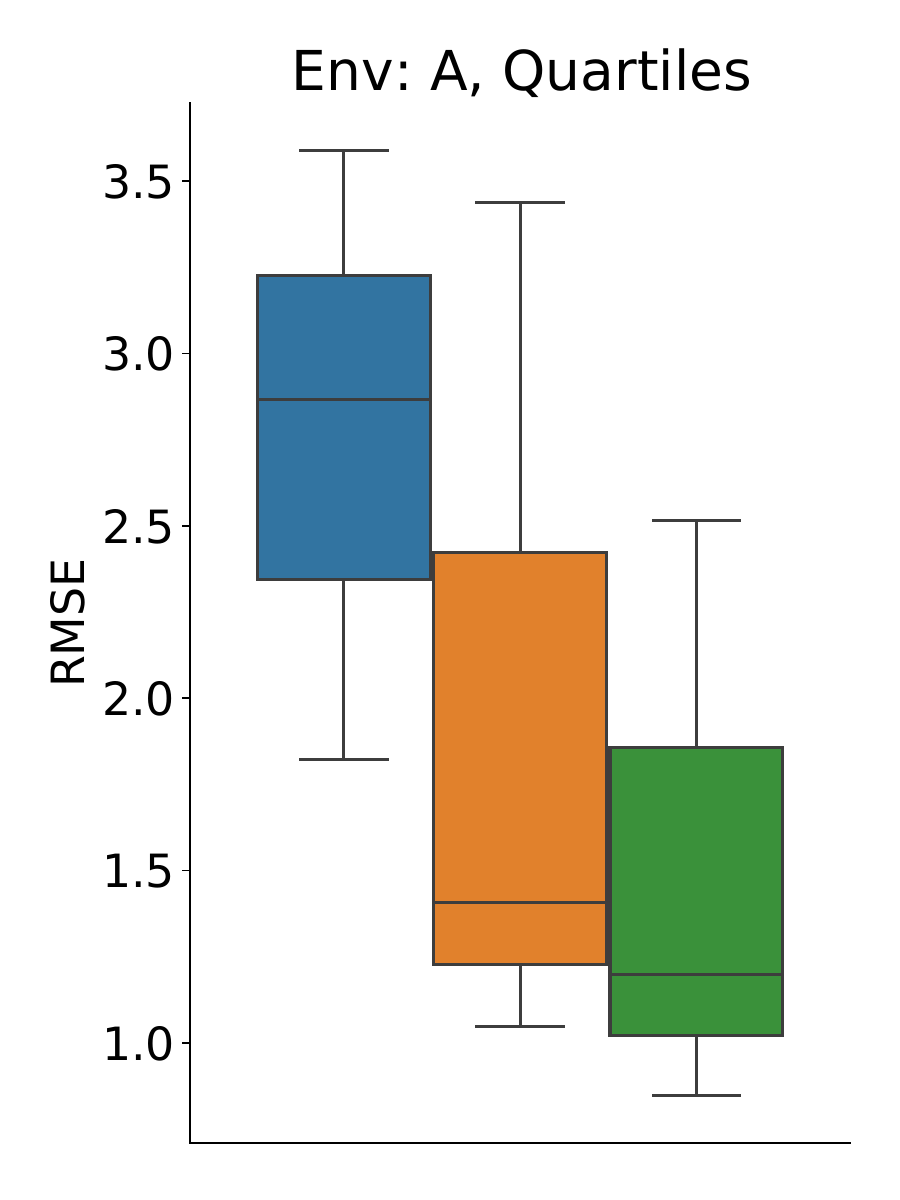}
        \end{subfigure}
        \begin{subfigure}{\fourboxplot}
                \centering
                \includegraphics[width=\textwidth,height=\boxplotheight,trim={.8cm 0 .5cm 0},clip]{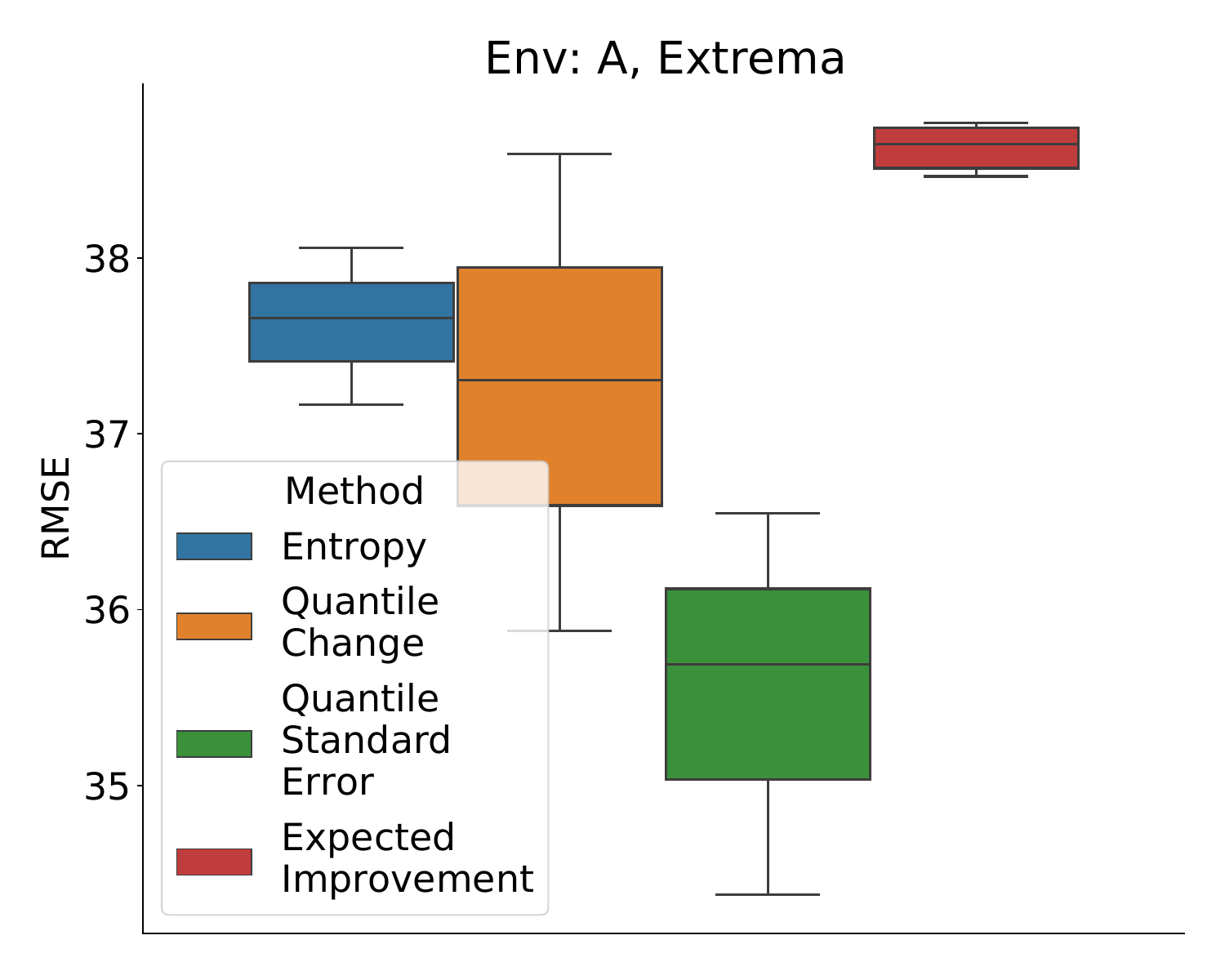}
        \end{subfigure}
        \begin{subfigure}{\threeboxplot}
                \centering
                \includegraphics[width=\textwidth,height=\boxplotheight,trim={.8cm 0 .5cm 0},clip]{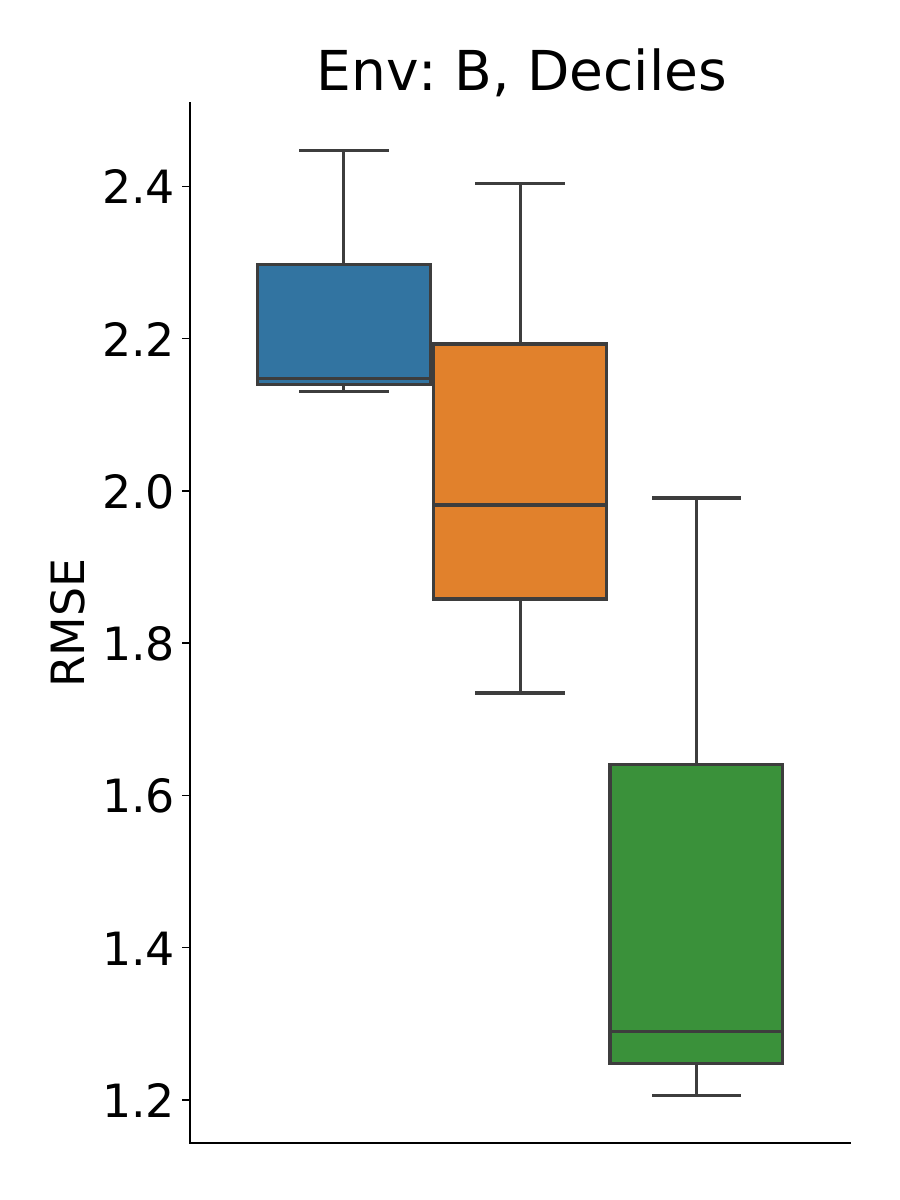}
                
        \end{subfigure}
        \begin{subfigure}{\threeboxplot}
                \centering
                \includegraphics[width=\textwidth,height=\boxplotheight,trim={.8cm 0 .5cm 0},clip]{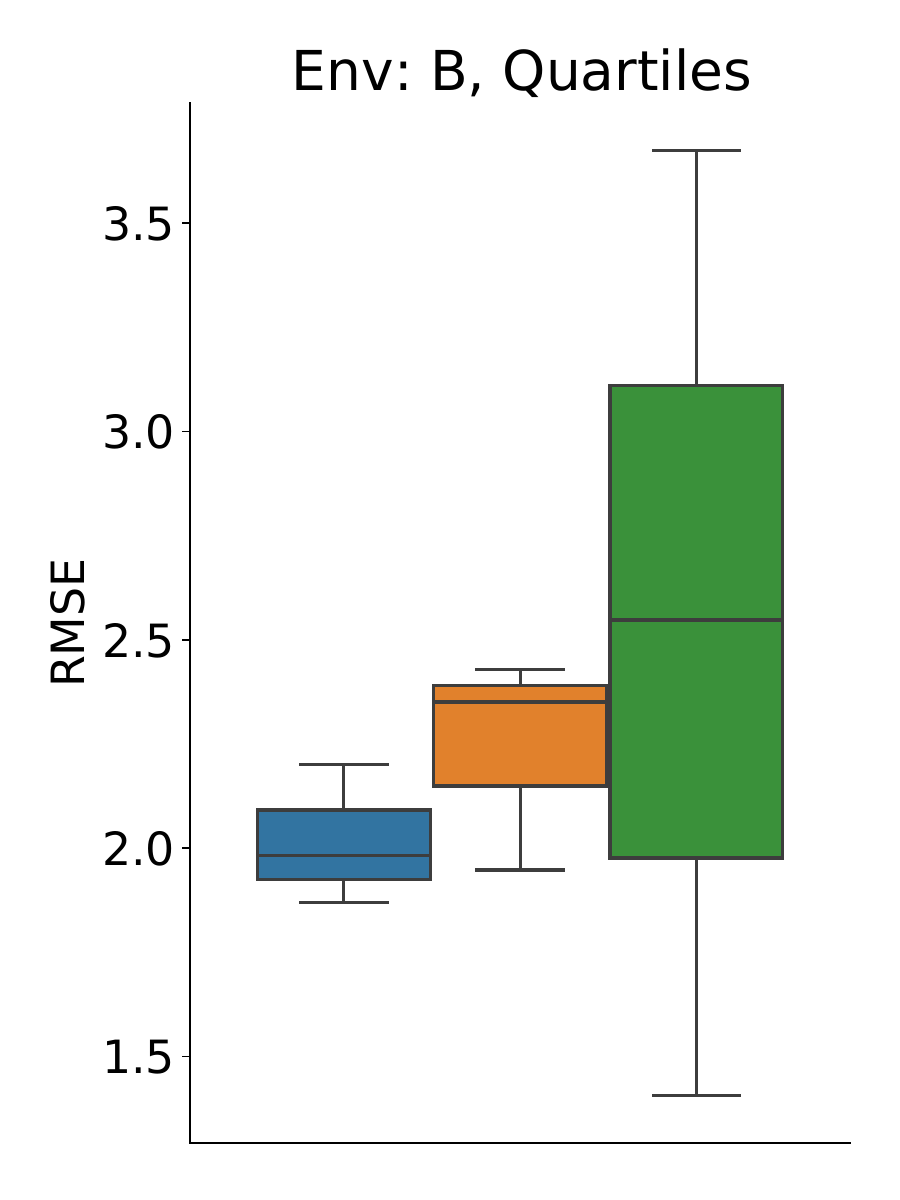}
        \end{subfigure}
        \begin{subfigure}{\fourboxplot}
                \centering \includegraphics[width=\textwidth,height=\boxplotheight,trim={.8cm 0 .5cm 0},clip]{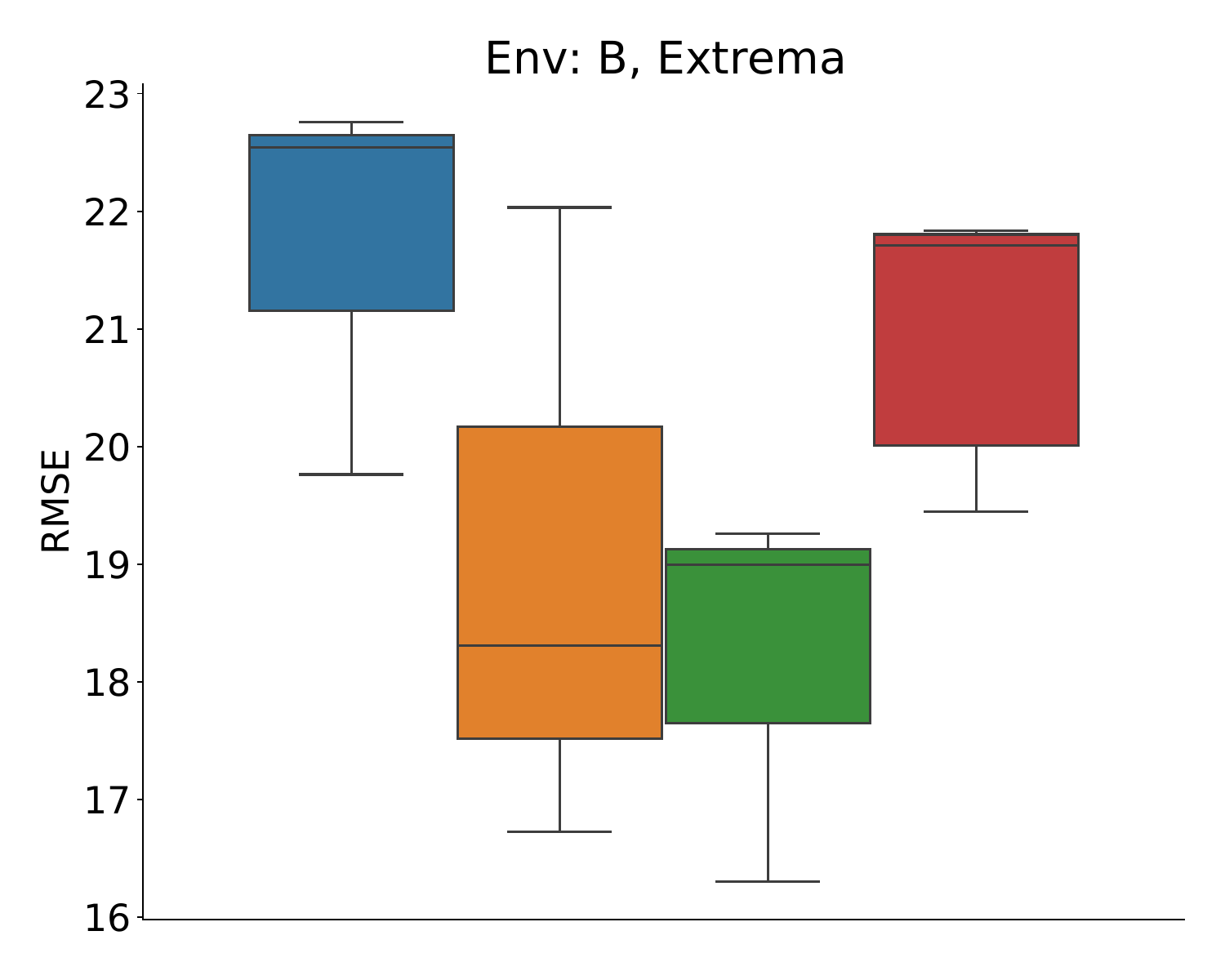}
        \end{subfigure}
        \caption{Simulated drone planning experiments with real data. Error between ground truth quantile values and estimated quantile values. 
        Datasets A and B collected using a hyperspectral camera in Clearlake, California.
        Units are 400nm channel pixel intensity ($0-255$).  Each dataset and objective function pairing is run three times.
        }\label{fig:clearlake_planning_results}
\end{figure}

\begin{figure}[b!]
    \centering
    \includegraphics[trim={0.2cm .2cm 0.3cm 0.3cm}, clip, width=.45\columnwidth]{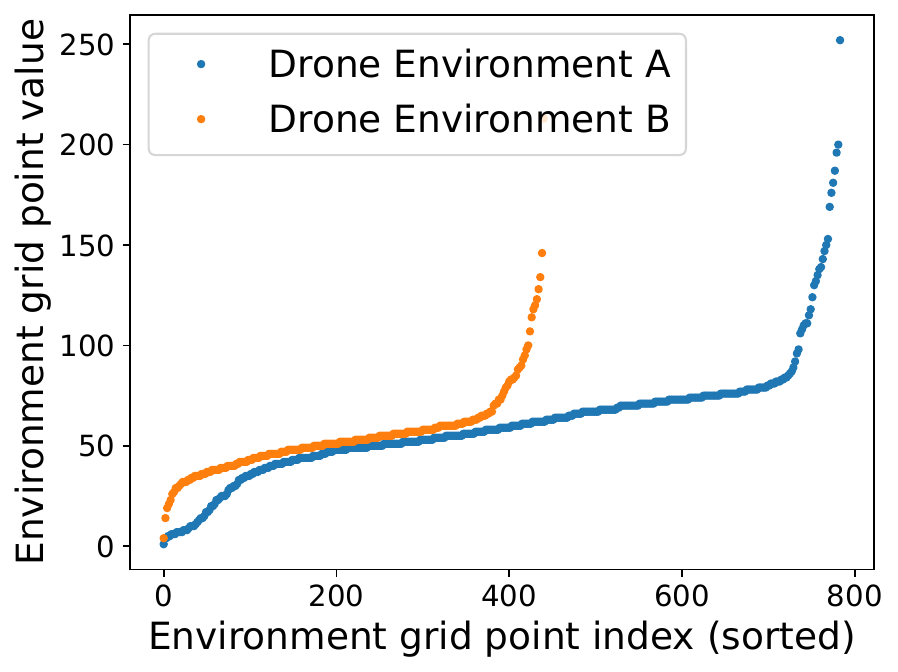}
    \includegraphics[trim={0.2cm .2cm 0.25cm 0.3cm}, clip, width=.45\columnwidth]{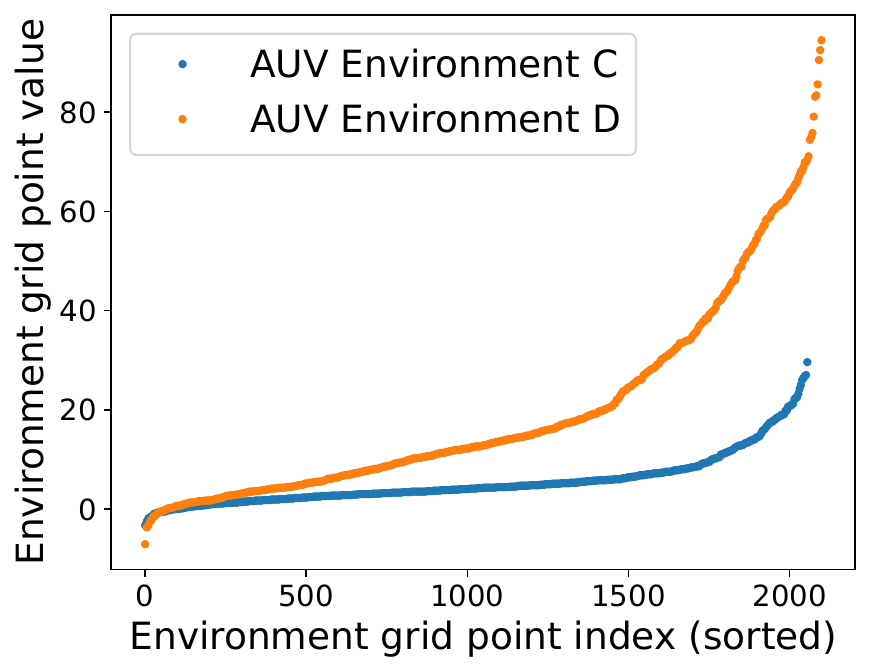}
    \caption{
    Experimental dataset distributions. Drone data is measured in pixel intensity; AUV data in $\mu g/L$ chlorophyll.
    }
    \label{fig:environment_distributions}
\end{figure}


\section{Experiments}
To evaluate our approaches for planning and for location selection, we compare against baselines in two different informative path planning tasks in simulation using four datasets collected in the real world.

In the first task, a simulated drone with a virtual camera gathers data from orthomosaics (a single image produced by combining many smaller images, called orthophotos) collected of a lake using a hyperspectral sensor. 
The orthomosaics, A and B, are taken in the same location but on different days and times.
The drone collects many measurements from one location, where each is a pixel in a downsampled image. 
As a proxy for chlorophyll concentration, we measure the 400nm channel pixel intensity ($0-255$).
The drone maintains a constant altitude and moves in a 2D plane with a north-fixed yaw, moving in either the $x$ or $y$ direction per step.

In the second task, an autonomous underwater vehicle (AUV) explores an environment.
Two AUV surveys, C and D, which were taken in the same reservoir but at different times and different areas, were conducted in a 3D lawnmower pattern using a chlorophyll sensor
and are interpolated using a GP to $\gtsensedlocations$.
At each step, the AUV moves in one $x$, $y$, or $z$ direction and takes five evenly spaced measurements when moving between locations.

We evaluate each task on their two respective datasets (A/B, C/D) and three different quantiles: deciles $(0.1, 0.2, \dots , 0.8, 0.9)$, quartiles $(0.25,0.5,0.75)$, and upper extrema $(0.9,0.95,0.99)$. See \Cref{fig:environment_distributions} for a summary of the dataset distributions.

\begin{figure}[t]
    \centering
                \begin{subfigure}{\threeboxplot}
                \centering
                \includegraphics[width=\textwidth,height=\boxplotheight,trim={.8cm 0 .5cm 0},clip]{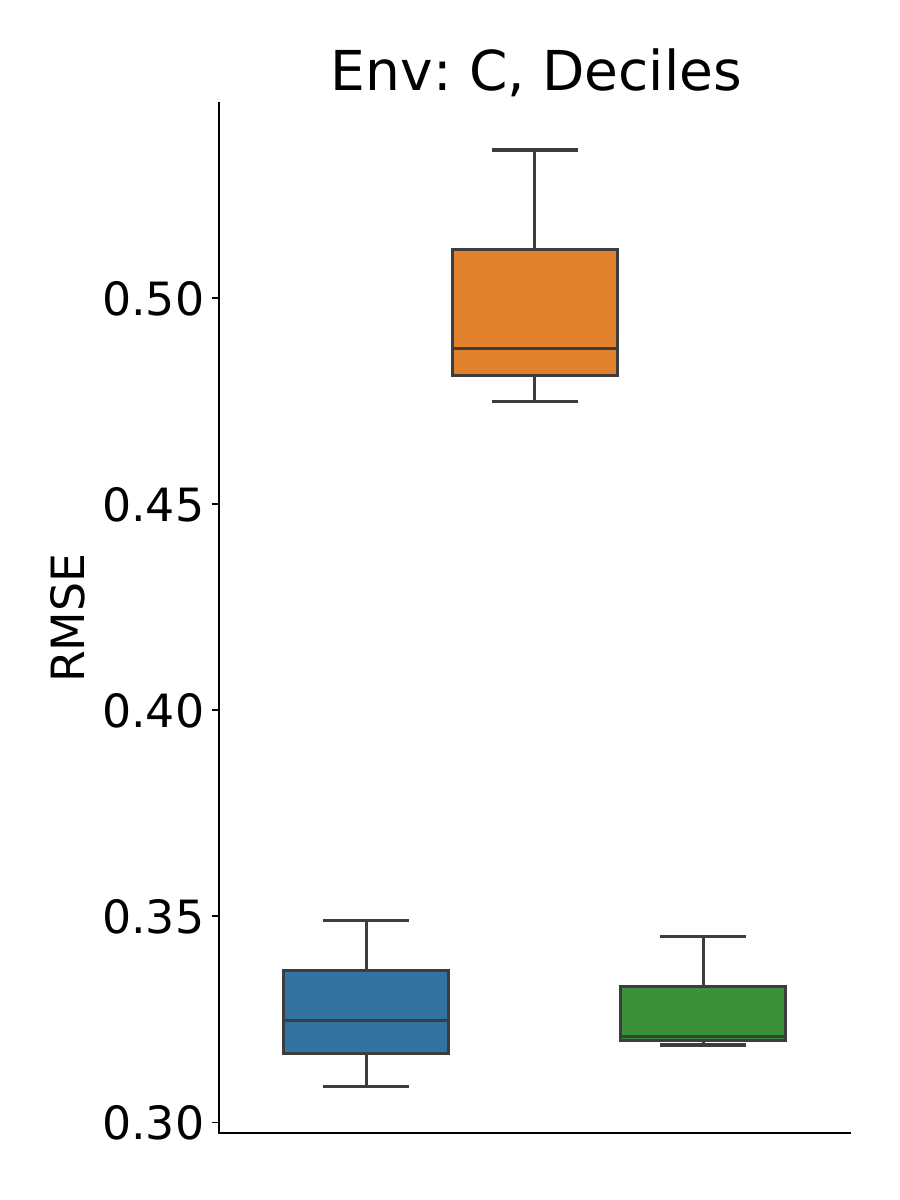}
        \end{subfigure}
        \begin{subfigure}{\threeboxplot}
                \centering
                \includegraphics[width=\textwidth,height=\boxplotheight,trim={.8cm 0 .5cm 0},clip]{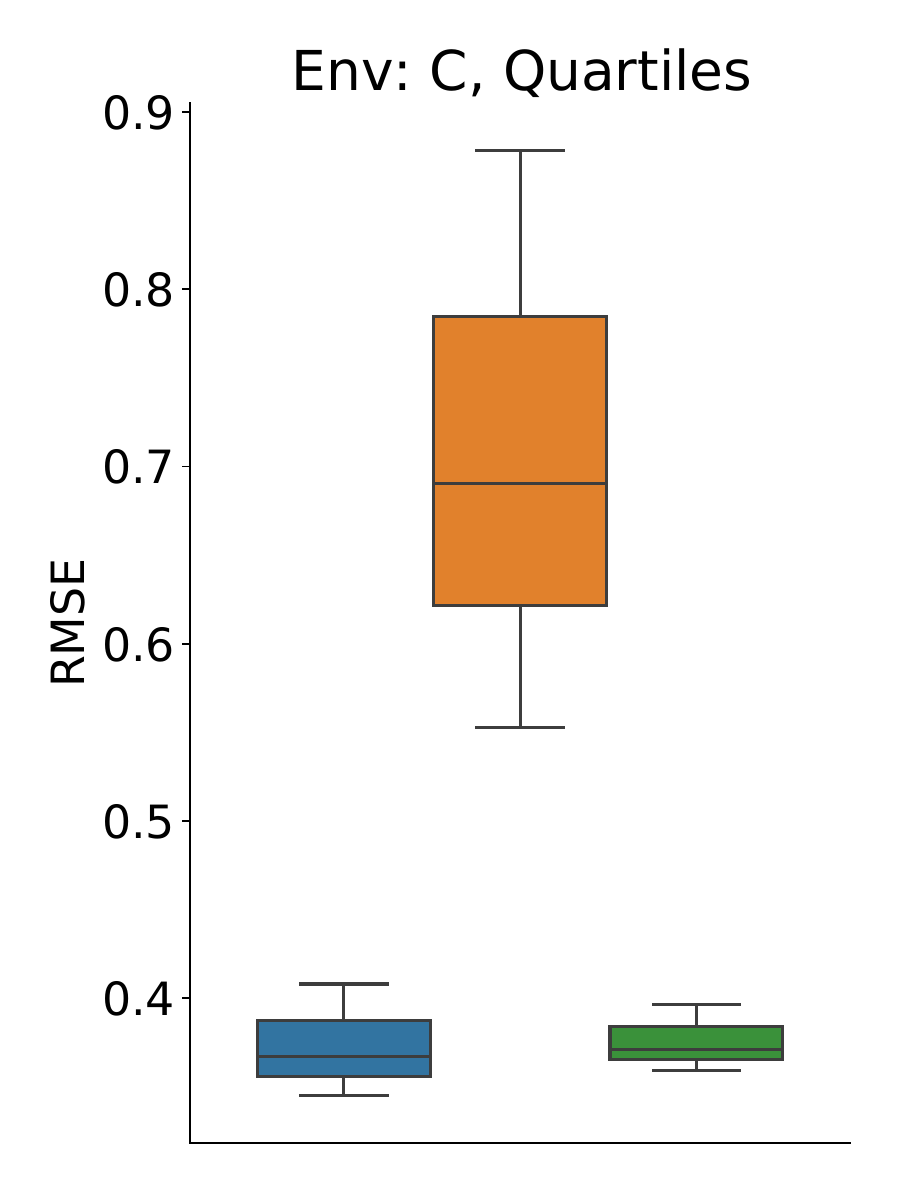}
        \end{subfigure}
        \begin{subfigure}{\fourboxplot}
                \centering
                \includegraphics[width=\textwidth,height=\boxplotheight,trim={.8cm 0 .5cm 0},clip]{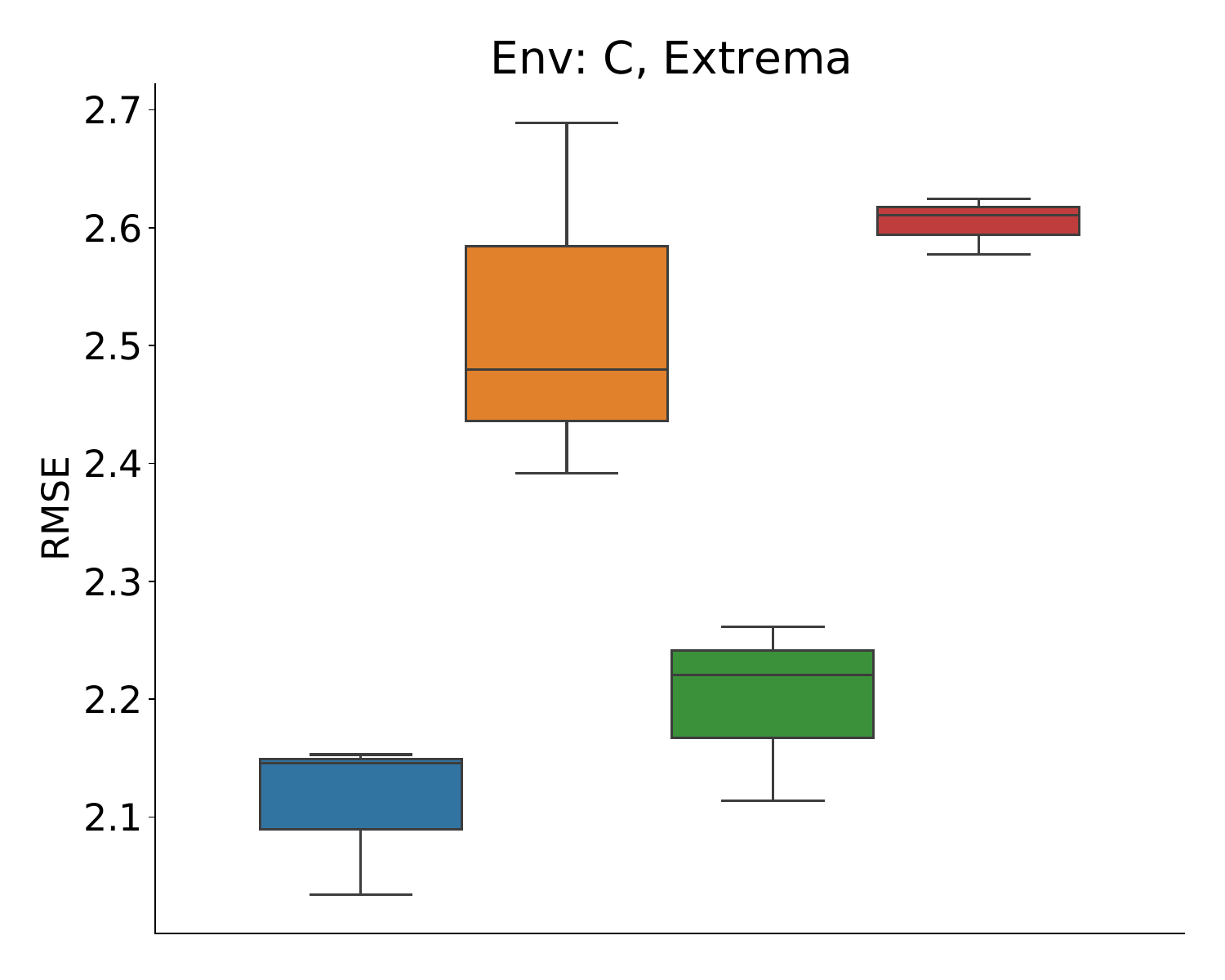}
        \end{subfigure}
        \begin{subfigure}{\threeboxplot}
                \centering
                \includegraphics[width=\textwidth,height=\boxplotheight,trim={.8cm 0 .5cm 0},clip]{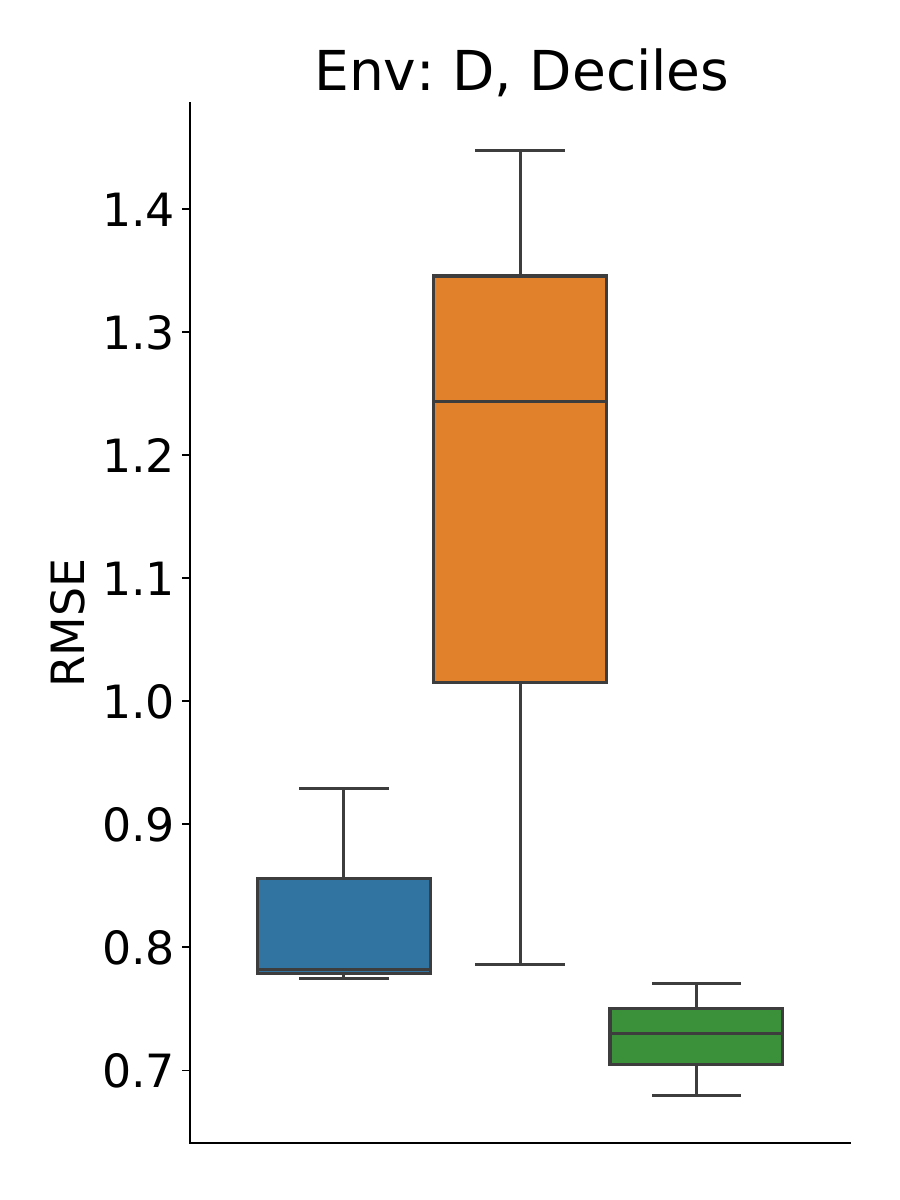}
                
        \end{subfigure}
        \begin{subfigure}{\threeboxplot}
                \centering
                \includegraphics[width=\textwidth,height=\boxplotheight,trim={.8cm 0 .5cm 0},clip]{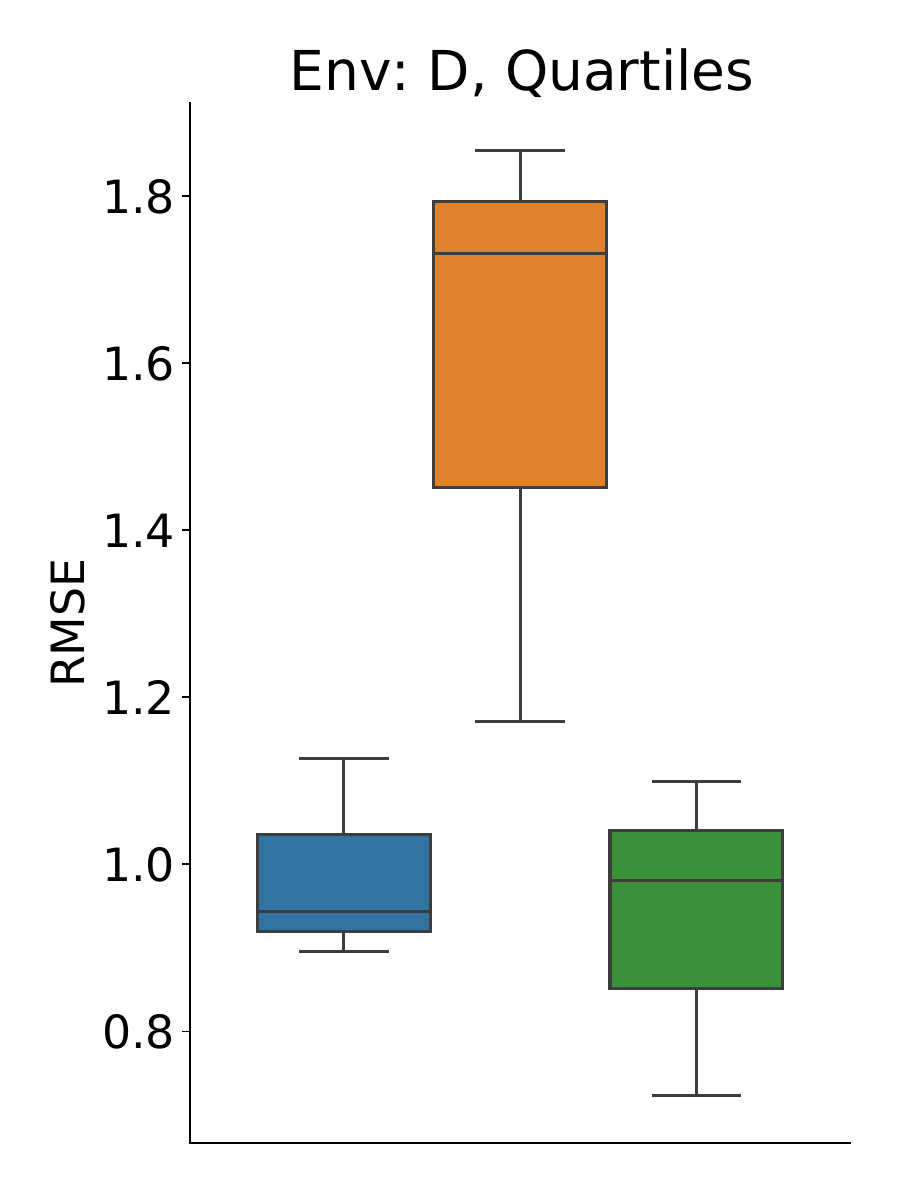}
        \end{subfigure}
        \begin{subfigure}{\fourboxplot}
                \centering \includegraphics[width=\textwidth,height=\boxplotheight,trim={.8cm 0 .5cm 0},clip]{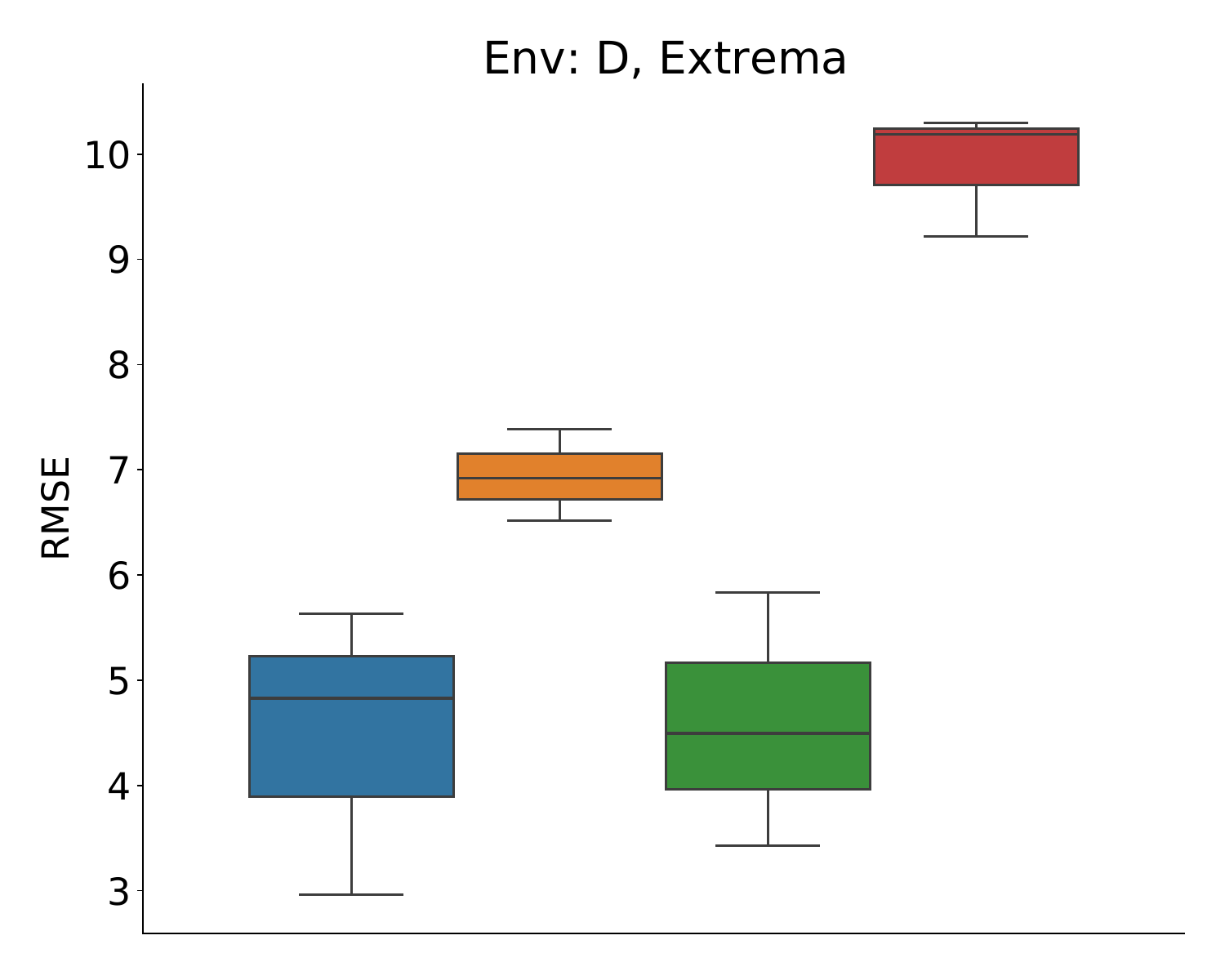}
        \end{subfigure}
        \caption{Simulated AUV planning experiments with real data. 
        Error between ground truth quantile values and estimated quantile values. 
        Datasets C and D collected from a reservoir in California using an underwater robot with a chlorophyll fluorescence sensor.
        Units are $\mu g/L$ chlorophyll.
        Each dataset and objective function pairing is run three times.}\label{fig:ecomapper_planning_results}
\end{figure}

\subsection{Informative Path Planning: Objective Functions}
\label{ssec:ipp_experiments}
To evaluate how well our proposed IPP objective functions estimate the ground truth quantile values in real environments, we compare quantile change (\Cref{eq:quantile_change}) and quantile standard error (\Cref{eq:standard_error}) against a baseline entropy objective function (\Cref{eq:entropy}).
For the upper extrema quantiles, we also compare against expected improvement (\Cref{eq:ei}), as it is similar to a sequential Bayesian optimization based IPP task.

\subsubsection{Setup}
In the planner, we use $\gamma = 0.9$, and each trial is run over 3 seeds.
The objective $c_{plan}$ parameter is set to the approximate magnitude of the rewards seen for each environment, which we found experimentally to be an adequate value.
The GP mean was set to 0 and the datasets were normalized, while the lengthscale was set to 12. The exploration constant $c_{plan}$ was set to $1E-6$ for quantile change and $1E-2$ for quantile standard error.
To compare the performance of the approaches, we use the RMSE between the ground truth quantiles, $\quantilevalues$, and estimated quantile values after performing a survey, $\estimatedquantilevalues$. 
We choose this metric rather than, e.g., comparing the error between the estimated concentration $\mu(\gtsensedlocations)$ and the ground truth concentration  $GT(\gtsensedlocations)$ because the objective of the task specifically focuses on getting the highest accuracy for the quantile values, and does not optimize for overall concentration accuracy.

\textbf{Drone with Camera}
The drone is allowed to take 30 simulated pictures out of a grid with about 300 positions. 
Each picture is downsampled to $8\times 5$ pixels ($40$ measurements) with $37.1^{\circ}$ by $27.6^{\circ}$ field of view, similar to the drone used in the field trial reported in \Cref{sec:field_trial}.
While this is downsampled from the true image, it captures the coarse trends of the concentration, which is both still scientifically useful and performant for GP evaluations.
For each trial, the GP is seeded with 100 evenly spaced measurements across the workspace, as prior data. 
The planner uses $300$ rollouts per step, and the maximum planning depth is 7.

\textbf{AUV with Chlorophyll Sensor}
The AUV is simulated for 200 steps in a $12 \times 14 \times 2$ grid.
The planner uses $130$ rollouts per step and a maximum depth of $10$.
The GP is seeded with measurements from 50 locations.

\subsubsection{Discussion}
Overall, our results are robust across multiple experimental environments as well as robot sensor types.
We find that planning with the quantile standard error objective function has a 10.2\% mean reduction in median error across all environments when compared to using entropy. 
This shows that an objective function tailored to estimating a set of quantiles will outperform a coverage planner, such as entropy, that would typically achieve low overall error in reconstructing the entire environment model.
The focus of our approach is to estimate the quantiles in a
targeted way rather than to achieve uniform coverage.

\begin{figure}[t]
    \centering
        \begin{subfigure}{\thirdcolumn}
                \centering
                \includegraphics[width=\textwidth,height=\boxplotheight,trim={.8cm 0 .5cm 0},clip]{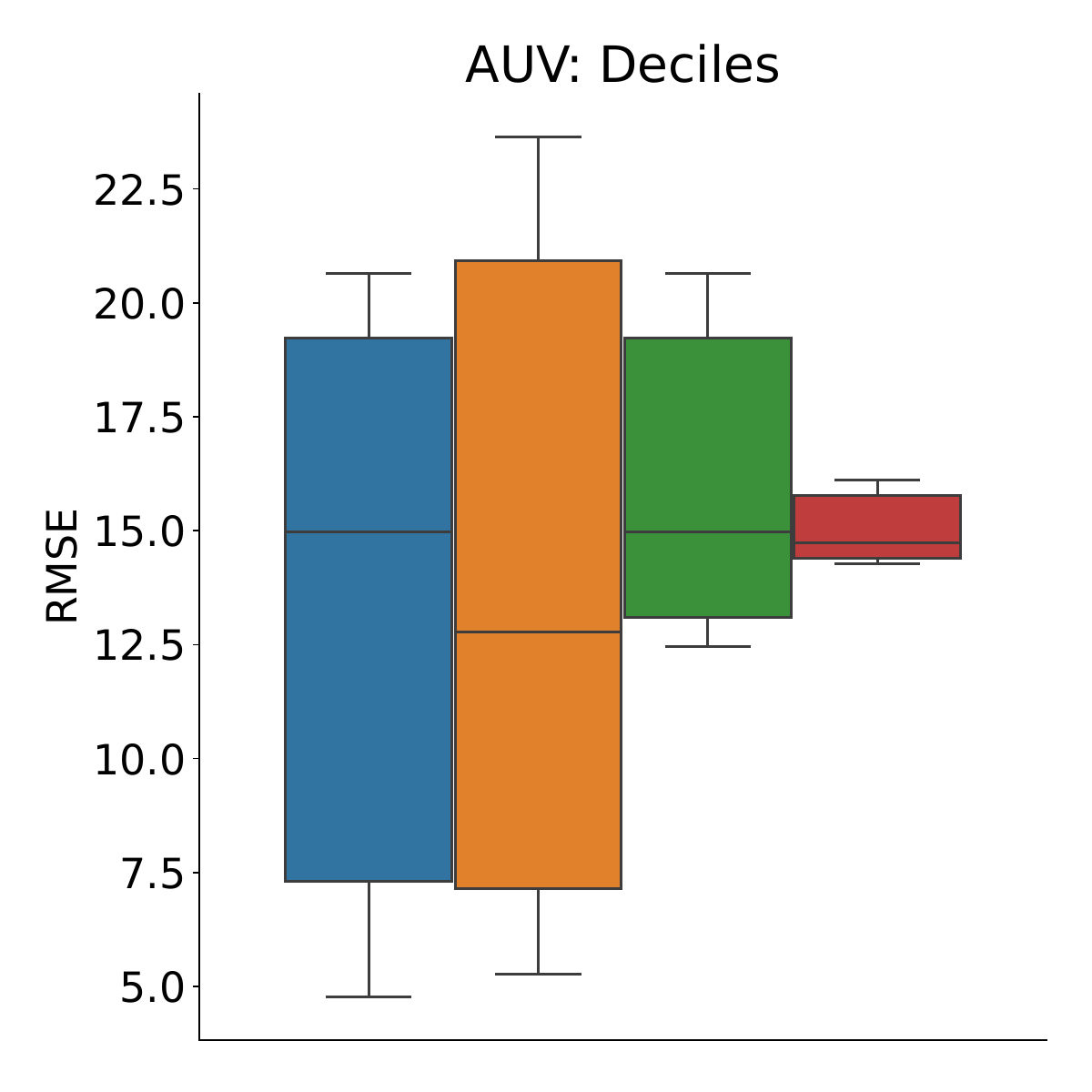}
        \end{subfigure}
        \begin{subfigure}{\thirdcolumn}
                \centering
                \includegraphics[width=\textwidth,height=\boxplotheight,trim={.8cm 0 .5cm 0},clip]{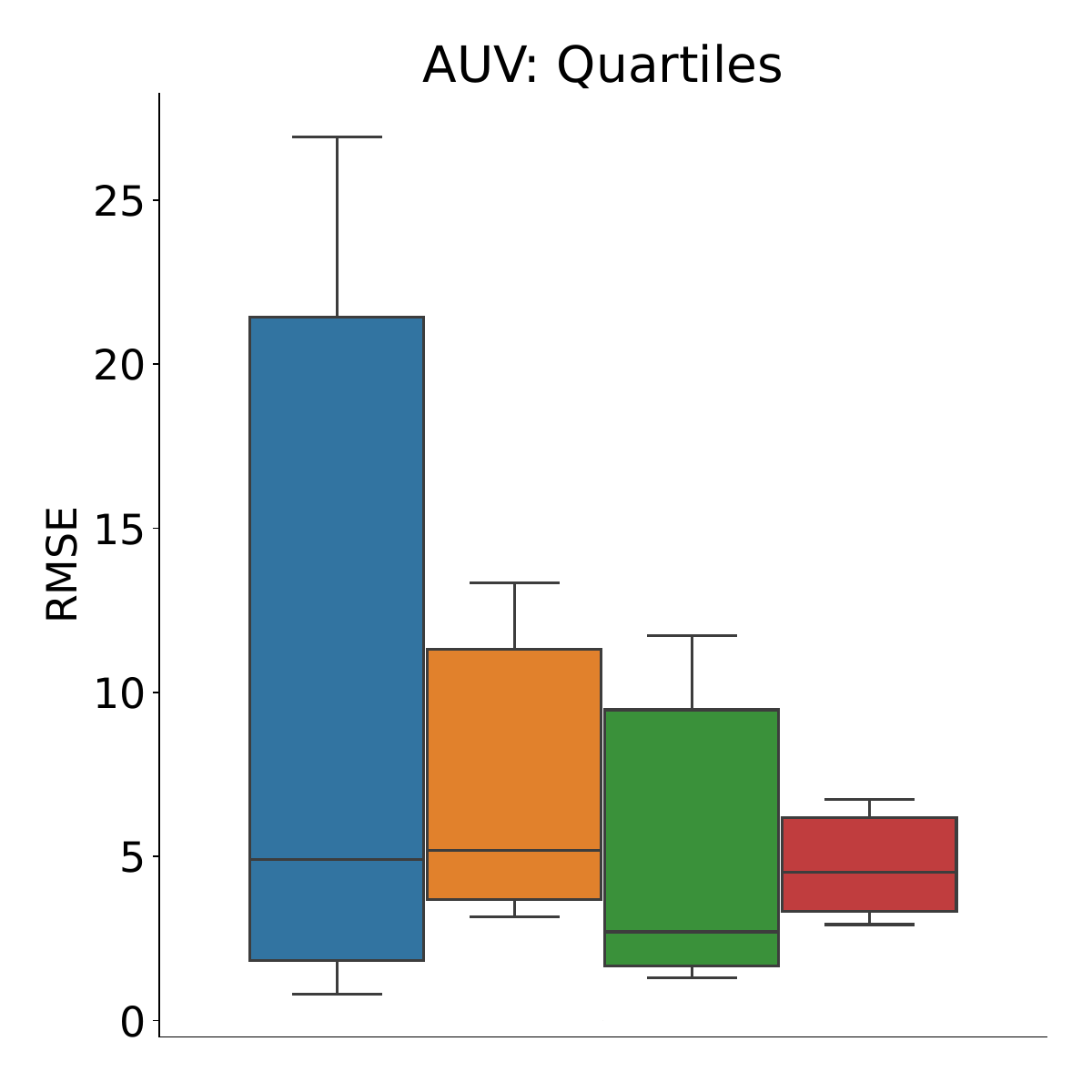}
        \end{subfigure}
        \begin{subfigure}{\thirdcolumn}
                \centering
                \includegraphics[width=\textwidth,height=\boxplotheight,trim={.8cm 0 .5cm 0},clip]{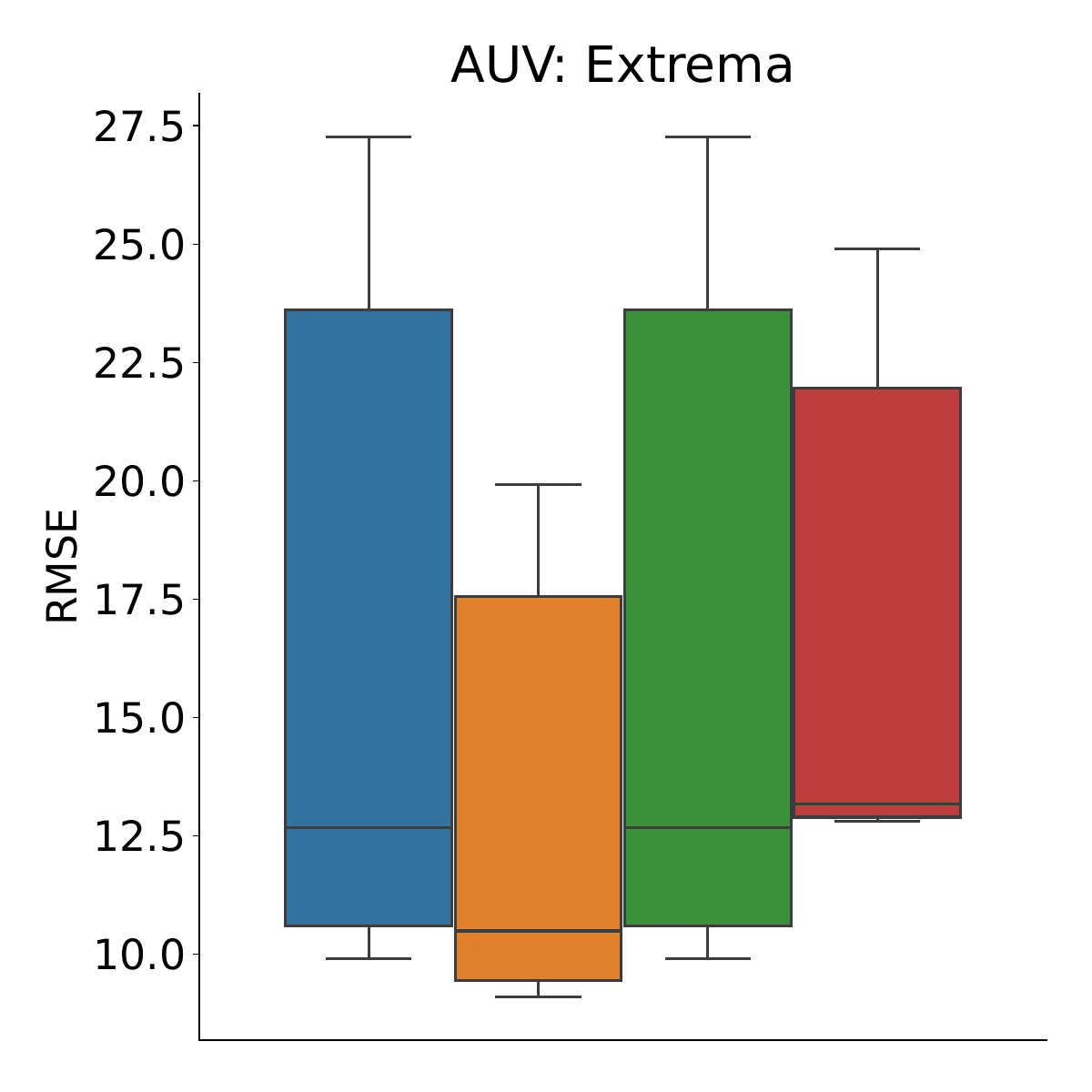}
        \end{subfigure}
                \begin{subfigure}{\thirdcolumn}
                \centering
                \includegraphics[width=\textwidth,height=\boxplotheight]{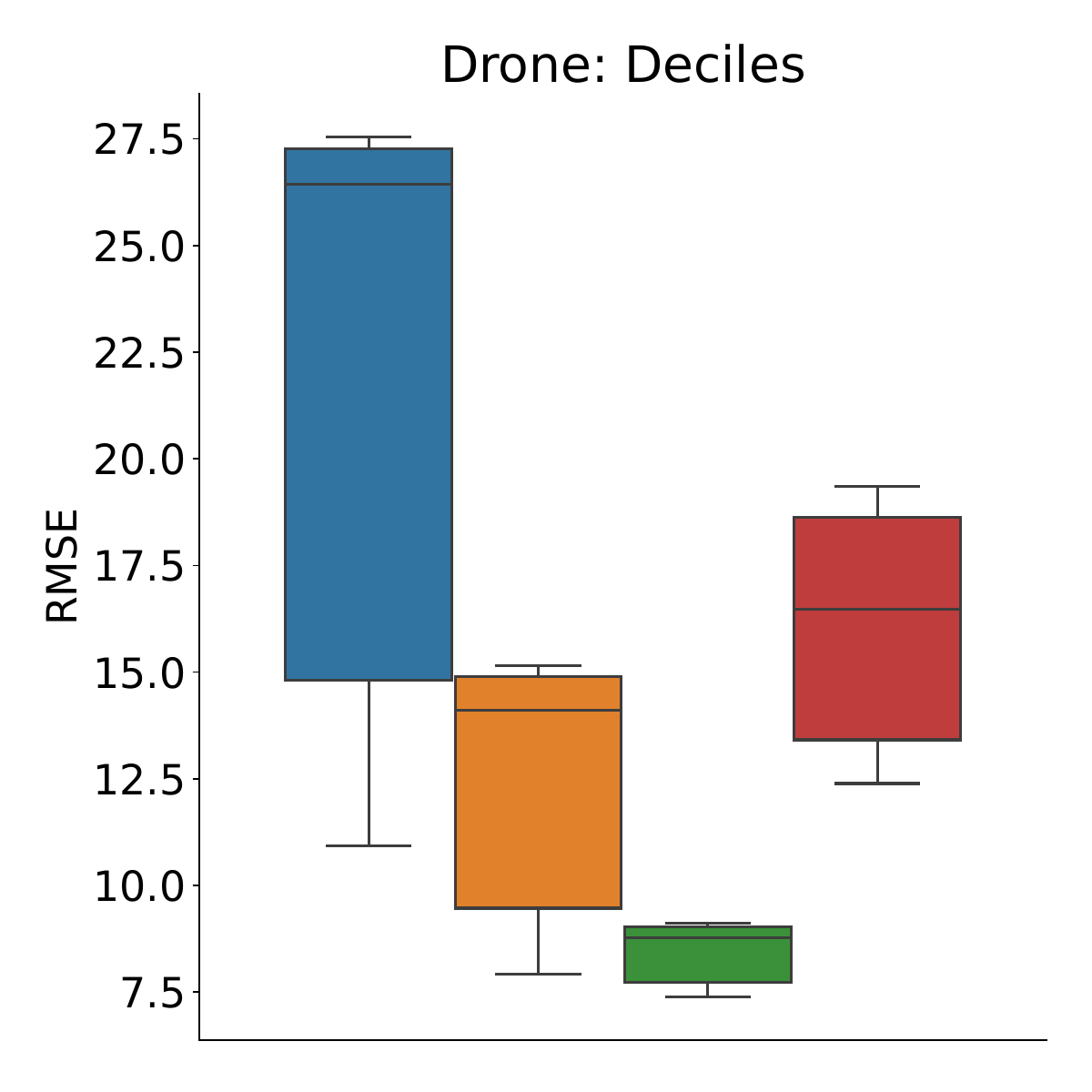}
        \end{subfigure}
        \begin{subfigure}{\thirdcolumn}
                \centering
                \includegraphics[width=\textwidth,height=\boxplotheight]{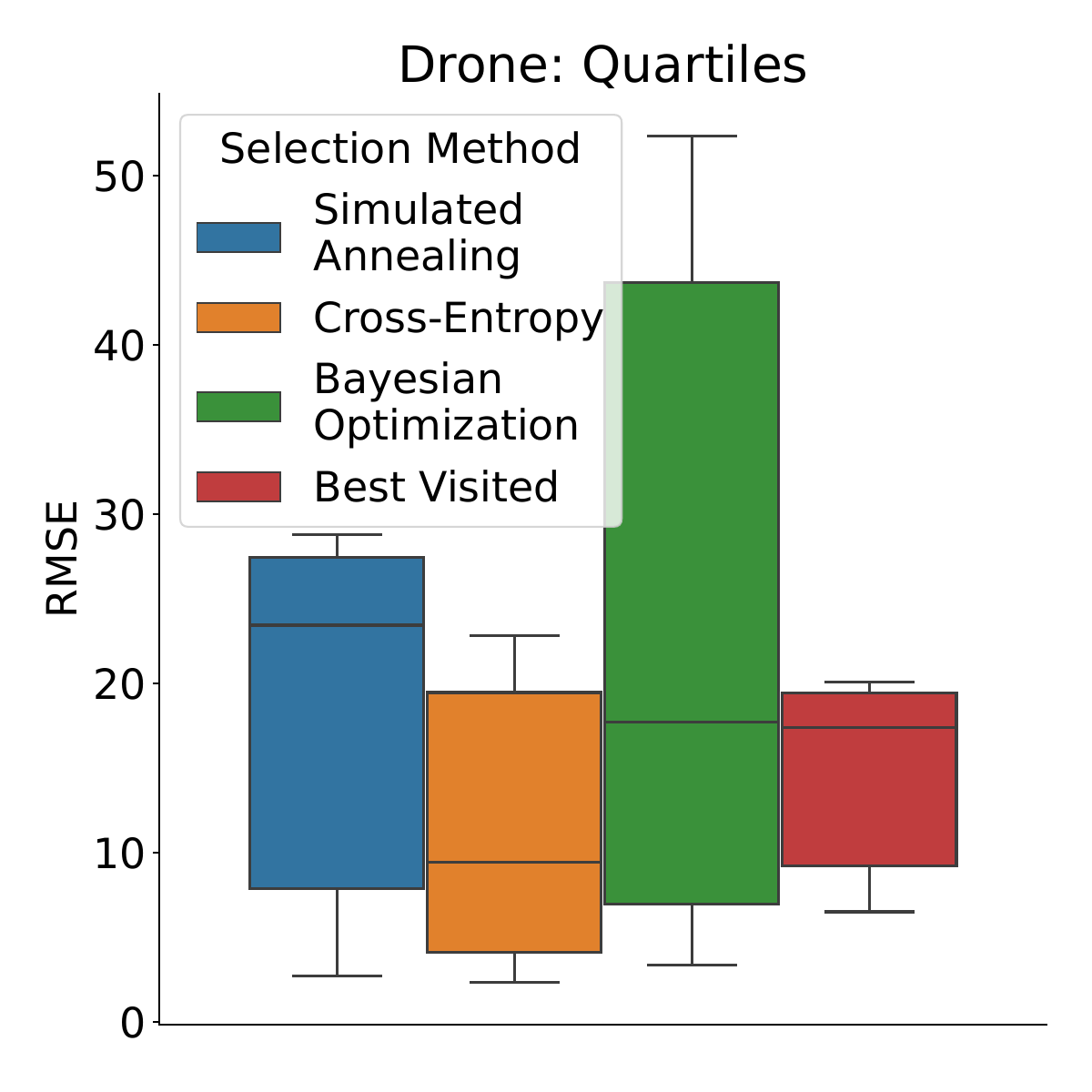}
        \end{subfigure}
        \begin{subfigure}{\thirdcolumn}
                \centering
                \includegraphics[width=\textwidth,height=\boxplotheight]{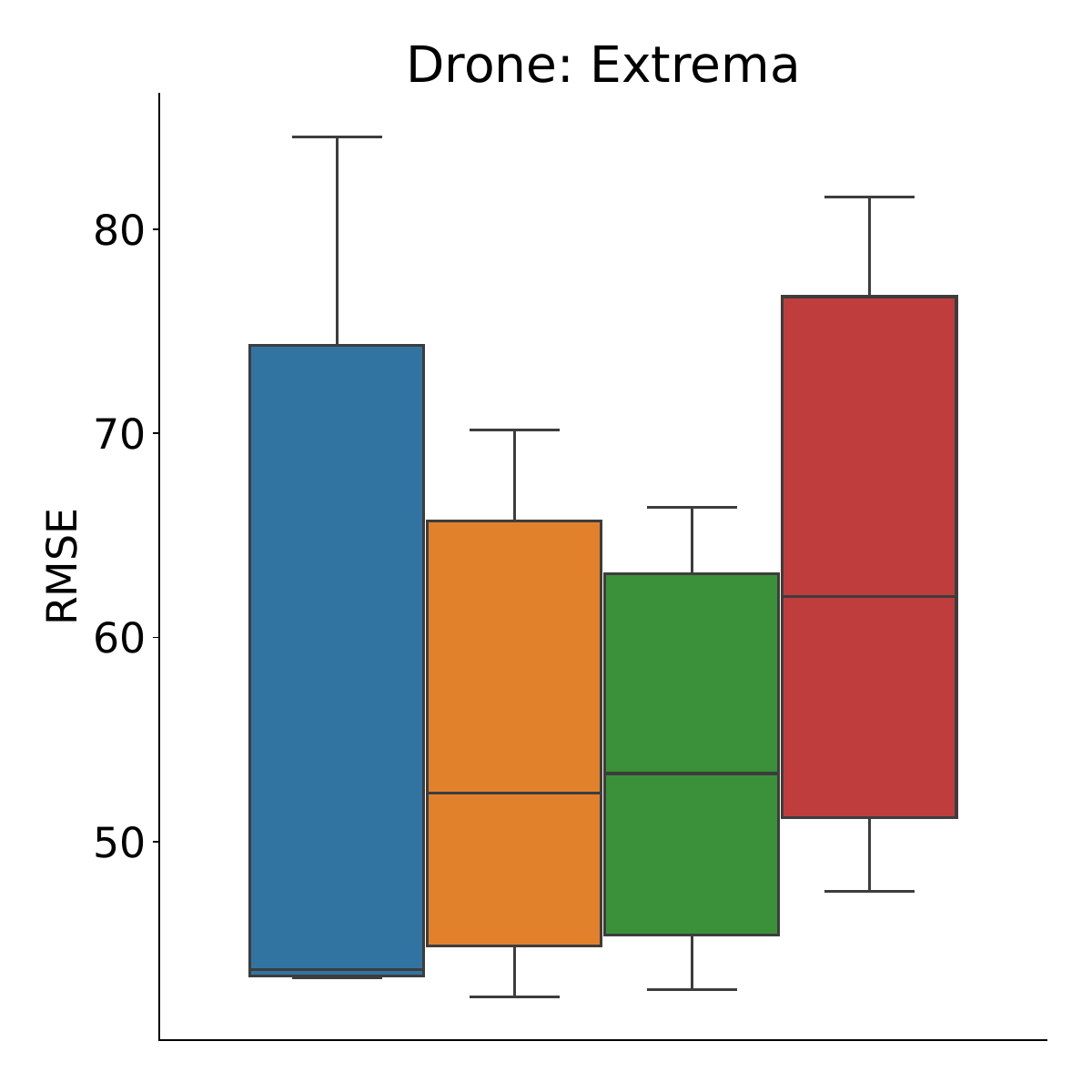}
        \end{subfigure}
        \caption{Quantile location selection results.
        RMSE between the ground truth values at the selected locations and the corresponding true quantile values.
        Units are $\mu g/L$ chlorophyll for AUV, 400nm channel pixel intensity ($0-255$) for Drone.
     }
     \label{fig:ecomapper_ps_boxplots}
\end{figure}

\Cref{fig:clearlake_planning_results,fig:ecomapper_planning_results} show the results of planning with the proposed objective functions for the drone and AUV experiments, respectively. 
In the drone tasks (A/B),
quantile change and quantile standard error outperform the baseline entropy in estimating the deciles and upper extrema.
In quartiles in environment A, both proposed methods perform well, but in environment B entropy outperforms our methods. 
For the extrema, both methods perform better than expected improvement, and entropy outperforms expected improvement in environment A. 
We believe this is because expected improvement focuses explicitly on improving the (single) maximal value and does not do a good job of localizing high concentration areas, thus overestimating the quantile values.

For the AUV tasks (C/D),
quantile standard error outperforms entropy in environment C when estimating deciles and quartiles, and performs equally well as entropy in all other tasks besides estimating the extrema in environment D.
Expected improvement performs poorly in estimating the extrema due to similar issues as with the drone.
Quantile change performs poorly in most AUV tasks in contrast to the drone tasks, where it performs comparably. 
This may indicate that quantile change performance decreases with point sensors, or it is more sensitive to variation in the absolute measurement values.
Thus, we recommend using quantile standard error unless there are computational constraints, since its performance is more consistent.

\begin{figure}[b]
    \centering
    \includegraphics[width=0.8\columnwidth,trim={.2cm .5cm .5cm .5cm},clip]{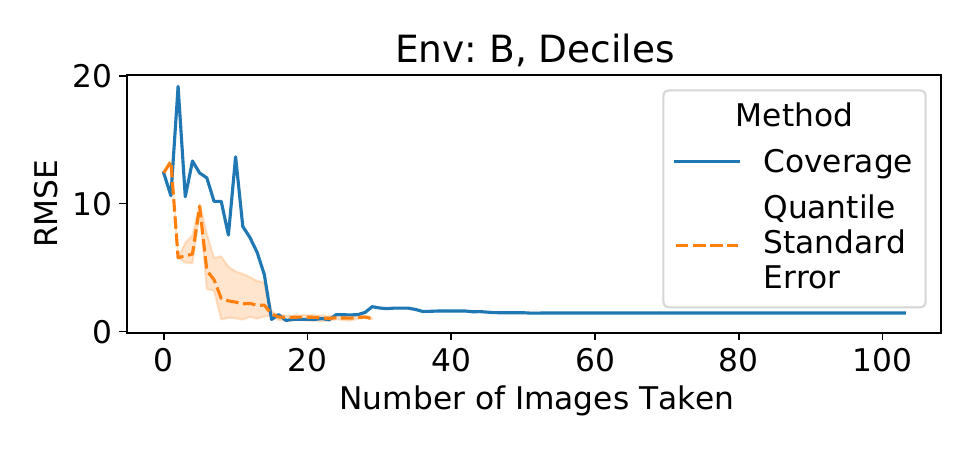}
    \caption{Comparison between quantile standard error with POMDP solver and Spiral-STC coverage planner~\cite{coverage} on Drone Environment B estimating deciles, each over 3 trials.}
    \label{fig:coverage_expr}
\end{figure}

We also compare our approach using the proposed quantile standard error objective function to the Spiral-STC coverage planner~\cite{coverage}. 
\Cref{fig:coverage_expr} shows that the proposed adaptive solution is able to reduce the error between the estimated quantiles and the ground truth quantiles more rapidly than the non-adaptive coverage planner.

\subsection{Location Selection}
To evaluate how close the values at locations suggested by our proposed loss function are to the estimated quantile values, we use the optimization algorithms simulated annealing (SA), cross-entropy (CE), and Bayesian optimization (BO) 
with the results from the quantile estimation task (\Cref{ssec:ipp_experiments}) using the quantile standard error objective function.
\subsubsection{Setup}
We compute the error by $RMSE(\quantilevalues,\estimatedquantilespatiallocations) = \|(\quantilevalues - GT(\estimatedquantilespatiallocations))\|_2$.
We set $c_{select}$ to 15 for deciles, 200 for quartiles, and 30 for extrema.
We compare the results against a Best Visited (BV) baseline which selects the location the robot took a measurement at that is closest to each quantile value, i.e. solve $\estimatedquantilespatiallocations^* = \argmin_{\estimatedquantilespatiallocations_{bv} \subset \sensedlocations} \|\estimatedquantilevalues - \mu(\estimatedquantilespatiallocations_{bv})\|_2$.

For SA, we use $T_\t{max} = 5$, $T_\t{min} = 0.001$, and cooling rate $\t{cr} = 0.005$ which leads to approximately 1000 optimization steps, and reset to the best solution every 100 steps. 
We start the optimization using the solution found by the BV baseline.
For CE, we use $\alpha = 0.9$, $\eta = 0.9$, 50 samples per iteration, and 100 iterations.
$\alpha$ is a weighting factor on new samples, which is used to prevent premature convergence.
For BO, we use the expected improvement acquisition function and initialize the GP with 50 randomly selected $\quantilespatiallocations$s as well as the solution found by the Best Visited baseline.
We use 100 iterations and report the best found solution.

\subsubsection{Discussion}
The error between the values at the estimated quantile locations and the true quantile values for the drone and the AUV experiments can be seen in \Cref{fig:ecomapper_ps_boxplots}.
Overall, we find that cross-entropy and Bayesian optimization produced the locations with values closest to the true quantile values.
Both these methods perform a global search in the space of possible locations, indicating that global search optimizes \Cref{eq:ps_loss} more effectively.
Simulated annealing had greater variability in performance.
We believe this is because it is a local search method and may fail to escape local optima.
Best Visited produced a good initialization for the other methods, but was easily outperformed.
We find that CE performs the best in three out of six possible scenarios.
In particular, we see a 15.7\% mean reduction in median error using CE with our proposed loss function compared to the BV baseline across all environments when using our proposed quantile standard error optimization function for quantile estimation during exploration.

In general, methods can find the best points when selecting for quartiles or deciles,
while the upper extrema are more difficult.
Because the robot is limited in the amount of environment it can explore, the upper extrema  are less likely to be measured during exploration.
This leads to these quantiles being more challenging to select representative points for.

\begin{figure}[t]
    \centering
    \includegraphics[trim={0.5cm .5cm 0.7cm 0.9cm}, clip, width=.45\columnwidth]{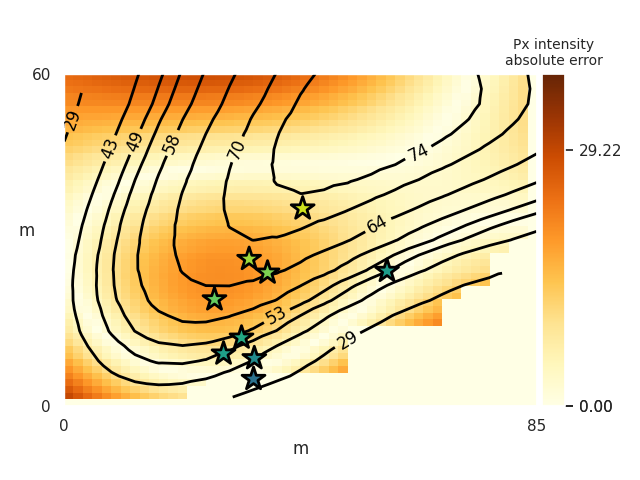}
    \includegraphics[trim={0.5cm .5cm 0.5cm 0.9cm}, clip, width=.45\columnwidth]{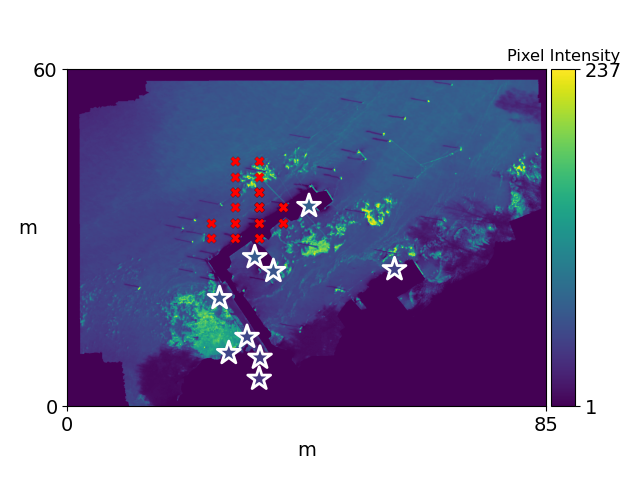}
    
    \caption{
    Physical locations (stars) selected by the cross-entropy optimizer for deciles on the drone experiment.
    [Left] Black lines: true quantile value contours, overlaid on the absolute error between $\mu(\gtsensedlocations)$ and $GT(\gtsensedlocations)$. Note that the lowest error tends to follow the quantile contour lines.
    [Right] Red crosses: locations the robot visited, overlaid on the ground truth image.
    }
    \label{fig:contours}
\end{figure}


\Cref{fig:contours} shows results for one seed of the drone task when monitoring deciles.
The suggested locations, shown as stars, align relatively closely with the true quantile values $\quantilevalues$, shown by the contours on the left image.
This demonstrates the ability of the optimizers to produce good location suggestions to guide environmental analysis.

The right part of \Cref{fig:contours} shows the same locations on top of the orthomosaic of what the drone could measure during exploration.
This part of the figure highlights the difficulty of the problem of IPP for quantiles. 
The robot could only explore 15\% of the total environment.
With only partial knowledge of the distribution, the robot's model of the phenomenon will vary based on the particular points it visited, which in turn affects the estimates of the quantiles.
%
%
Similarly, \Cref{fig:3d_points} shows the suggested locations for the upper extrema for one seed of the AUV task.
For both tasks, all the optimization methods, with the exception of Best Visited, suggest points that may be spatially far from locations the robot has been able to visit.
This allows for points with values potentially closer to the true quantile values to be selected for scientific analysis.

\begin{figure}
    \centering
    \includegraphics[trim={1cm .5cm 0.8cm 1.5cm}, clip, width=.65\columnwidth]{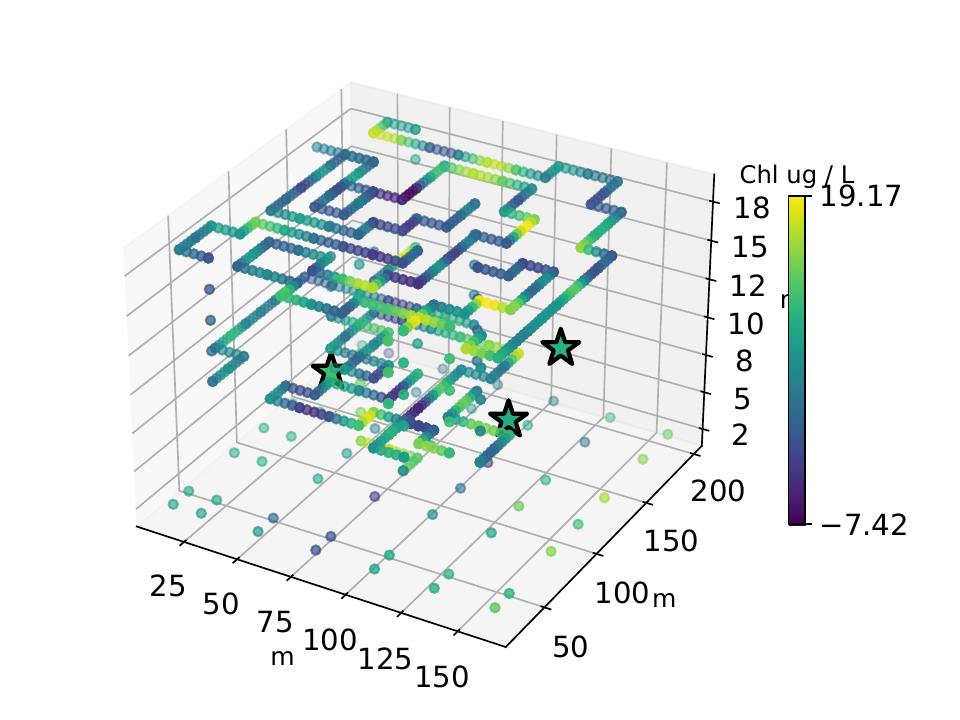}
    \vspace{1px}
    \caption{Physical locations (stars) selected by the cross-entropy optimizer for the upper extrema quantiles with an AUV using a chlorophyll point sensor.
    Blue/green points are the measured locations $\sensedlocations$.
    }
    \label{fig:3d_points}
\end{figure}

\begin{figure}[b]
    \centering
    \includegraphics[trim={4.3cm 2cm 5.1cm 2.3cm}, clip, width=.75\columnwidth, align=c]{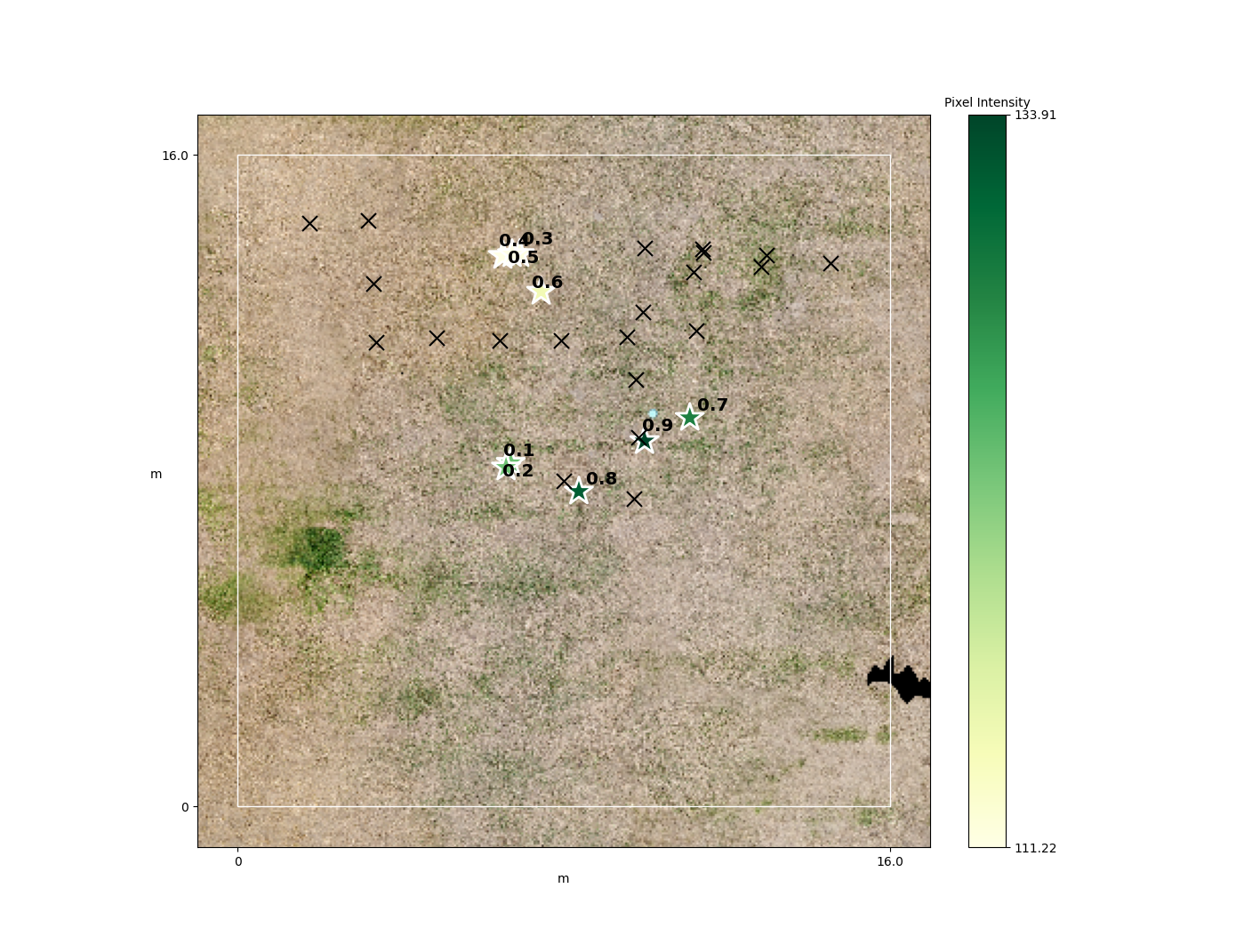}
    \includegraphics[trim={14cm 0cm 4.5cm 0cm}, clip, width=.23\columnwidth, align=c]{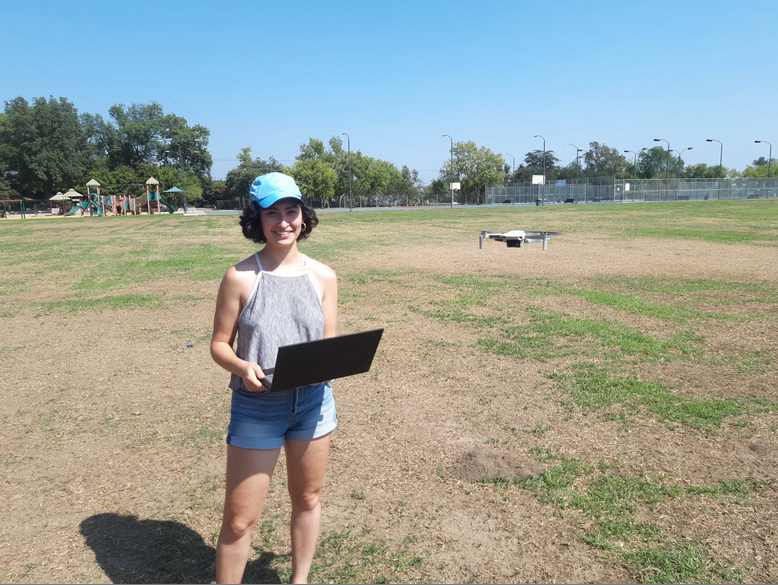}
    \caption{Visualization of a field trial modeling a crop health task.
    \textit{Left:} Crosses are locations where the drone took images. The drone is limited to only visit 20\% of the possible locations to take images.
    The quantiles of interest are the deciles and the locations are chosen by cross-entropy.
    The 9 stars show locations suggested to collect physical specimen. 
    The measurement of interest is the amount of green in each pixel. 
    \textit{Right:} The drone used in the field trial, in flight.
    }
    \label{fig:field-results}
\end{figure}

\subsection{Field trial}\label{sec:field_trial}
The goal of our final experiment is to demonstrate our method on real hardware for a crop health monitoring task in an open, grassy field, where the objective is to estimate the deciles of the green present in each pixel of the images (we use green as a proxy for plant health).

\subsubsection{Setup}
We use a commercial, off-the-the shelf drone with a standard camera to take measurements of the field at a constant height of 3m. 
Similar to the simulated drone, the drone in this experiment moves in a 2D plane with a north-fixed yaw.
The drone has a limited budget of 20 pictures (planning steps) in a $16\times16$ m square grid, where $|\gtlocations| = 10\times10$ locations, and each picture is downsampled to $8 \times 5$ pixels.
For planning, we use the quantile change objective function because it performs well for camera sensors and is faster than quantile standard error.
We operate the drone with a custom app and use onboard GPS and IMU for localization.

\subsubsection{Discussion}
\Cref{fig:field-results} shows the resulting suggested locations using CE based on the 20 steps the robot took. 
We find that, although the robot cannot explore the entire workspace due to its limited budget, and in fact does not visit a large green area, the system is able to suggest varied locations for specimen collection.
Some selected locations (e.g., those representing the $0.3, 0.4, 0.5$ quantiles) are spatially close to each other, which suggests this area contains a large gradient, or those quantile values are similar.
As the goal is to suggest locations that contain the estimated quantile values, such locations may be near one another.

\section{Conclusion}
Scientists traditionally collect physical specimens at locations selected using heuristics.
They later analyze these specimens in a laboratory to characterize a phenomenon of interest (e.g., the distribution of algae in the water).
We propose to, instead, choose these specimen collection locations by first performing an informative path planning survey with a robot and then proposing locations which correspond to the quantiles of interest of the distribution.

To accomplish this, we propose two novel objective functions, quantile change and quantile standard error, for improving the estimates of quantile values through informative path planning. 
We test these in three settings: a drone with a camera sensor over lake imagery, an underwater vehicle taking chlorophyll measurements, and a field trial using a drone for a crop health task.
Our objective functions perform well and outperform information-theoretic and Bayesian optimization baselines. 
In our experiments, our proposed quantile standard error objective function has a 10.2\% mean reduction in median error when compared to entropy as the objective function. 

We also show that using our proposed loss function with black-box optimization methods can select environment locations for analysis that are representative of a set of quantiles of interests. 
We find that a cross-entropy optimizer using our loss function outperforms a baseline of using the best measured points,
with a 15.7\% mean reduction in median error in values across all environments.

Our approach can be used to guide physical specimen collection in real field scenarios, as demonstrated by our field trial.


\chapter{Multirobot Quantile Estimation in Natural Environments}
\label{ch:multirobot}
\Cref{ch:quantiles} has given us a framework in which interpretable scientific goals can be expressed in task specification for a robot in a field exploration deployment.
However, in this context, task specifications may include multiple robots, and with that, different multirobot architectures.
This leads us to characterize performance on the exploration task in this chapter, with the intention of understanding effects of task specifications abstracted away from single-robot deployments.
This chapter is based on work published in \cite{rayas2023study}.

\section{Introduction}


Scientists who study natural environments have used robots to assist in surveying or exploring regions of interest, for example to monitor harmful algal blooms \cite{kemna_pilot_2018,harmful2022}. To describe such phenomena both flexibly and in an interpretable manner, we can specify quantiles of interest that robots can target during exploration \cite{rayas2022informative}. 
Previously, work in this context has focused on single-robot adaptive surveys; in this chapter, we study multi-robot surveys, motivated by groups that may have more than one robot available, and by collaborative surveys between groups that would like to pool their robot resources to maximize scientific output from a survey. 
Though the naive assumption may be that more robots will always be better, in this work, we aim to investigate, in a principled manner,  under what conditions this is true.
Deploying a robot in the environment can be expensive since more resources are typically required with more robots deployed, and it is unclear how having more robots in such a use case will scale.
This study assesses the impact of team size, starting location, planning budget, and communication between robots for quantile estimation tasks in field environments. We believe it is an important step toward principled decisions regarding robot deployments for field robotics in aquatic biology.

Our contributions are as follows:
\begin{itemize}
    \item We present the first study that investigates multirobot quantile estimation;
    \item we present quantitative results on the effect of team size on performance on real-world aquatic datasets;
    \item we present quantitative results on the effect of parameters including initial location spread, exploration budget, and inter-robot communication on performance, giving insight into what matters for a multirobot study;
    \item we discuss the results in the context of field applications and how they may transfer to different experimental setups.
\end{itemize}


\begin{figure}
    \centering
    \includegraphics[width=0.8\columnwidth]{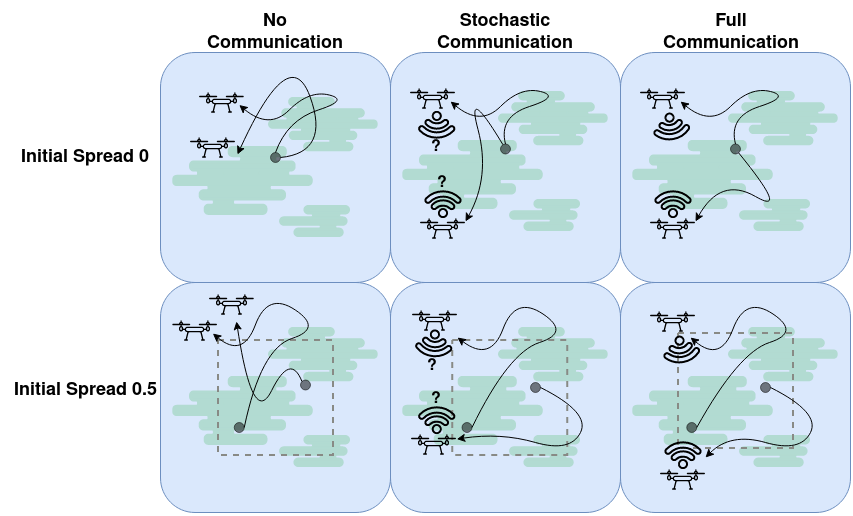}
    \caption{Some variations of multirobot planning approaches.
    Top row shows initial spread $\spread = 0$: All robots start in the center.
    Bottom row shows initial spread $\spread = 0.5$: Robots start spread in an area 50\% length and width of entire workspace.
    Left column shows no communication: Robots have no knowledge of the others; equivalent to each robot planning as if it were the only one.
    Middle column shows stochastic communication: Robots attempt to share observations but messages fail stochastically based on inter-robot distance.
    Right column shows full communication: Robots share the same environment model; equivalent to centralized planning.
    }
    \vspace{-10pt}
    \label{fig:multirobot-hero}
\end{figure}

\section{Background}
In this chapter, we are interested in the problem of multirobot quantile estimation, and specifically, we ask whether and under what conditions a multirobot approach to this problem is effective. 
We place our contributions in the context of previous work in these areas, introduced in \Cref{ch:ipp-background}.
Here, we seek to investigate the effect of using multiple homogeneous robots specifically on the task of quantile estimation in natural environments, with the aim of understanding how to most effectively use resources for challenging and resource-constrained field work problems.
Our work differs from previous work in multirobot IPP in that it explicitly addresses the question of different team sizes and other factors' impact on performance. Further, the task we study is to improve the estimates of an arbitrary set of quantiles, rather than coverage or hotspots which has typically been the focus of other work.

The work presented in this chapter builds upon \Cref{ch:quantiles}, but there are several differences.
First, the methods in \Cref{ch:quantiles} seek to produce, as a final output, a set of locations at which the desired quantile values can be found, while the goal here is to accurately estimate the desired quantile values.
Second, we now generalize the problem to the multirobot domain, where previously only one robot was considered.
Finally, where a main contribution previously was to propose new objective functions for planning that are tailored to the quantile estimation problem, our focus is now on investigating the impact of team size and other factors on quantile estimation accuracy, and thus we adopt just one of the previously proposed objective functions in all our experiments.


\section{Methods}
The problem considered in this paper is that of accurately estimating a set of quantiles of a distribution of interest, such as algae concentration in a lake, 
by exploring a finite 2D workspace with $\nrobots$ robots that take measurements at the locations they visit.



The problem setup follows from \Cref{ch:quantiles}, with slight modifications and additions to account for the multirobot aspect.
There are $\nrobots$ homogeneous robots collaborating on the estimation task
with a planning step budget of $B_T$. The locations at which a robot has taken a
measurement up to time $t$ and the corresponding values measured are represented
by $\sensedlocations_{0:t}$ and $\sensedvalues_{0:t}$, respectively.  Each robot
$n$ maintains an estimate of the quantile values which is given by
\begin{equation}
    \estimatedquantilevalues_n = quantiles(\mu^{(n)}_{GP_t}(\gtsensedlocations),\quantiles)
\end{equation}
where $\mu^{(n)}_{GP_t}(\gtsensedlocations)$ is the estimate of all possible locations
using the GP conditioned on $\sensedlocations_{0:t}$ and $\sensedvalues_{0:t}$. 
Note that, for clarity, we will omit the $(n)$ superscript in the remainder of the section.

For a given $\nrobots$ and $B_T$, our problem is to select a complete path
$\boldsymbol{P}^*$ composed of paths $P_n$ for each robot $n$ within a planning
budget constraint: 

\begin{equation}
\boldsymbol{P}^* = \argmax_{\boldsymbol{P}} f(\boldsymbol{P})
\end{equation}

where
\begin{equation}
   \boldsymbol{P} = \bigcup_{n \in \nrobots} P_n 
\end{equation}

\begin{equation}
   \qquad \forall n \in N :~\textit{length}(P_n) \leq B_n
\end{equation}
and $f$ is the objective function used during planning. 
In this work, we are interested in estimating the value of the underlying
concentration at quantiles $Q$. Note that our goal is not finding the highest
value in the environment nor optimizing model accuracy at every location.

To this end, we use the quantile standard error objective function to evaluate a
proposed location \cite{rayas2022informative}, which measures the difference in
standard error of the estimated quantile values using a robot's current environment
model $\mu_{GP_{i-1}}$, compared to using its current model updated with the
expected new measurements at the proposed location $\mu_{GP_{i}}$.
It additionally includes a term that encourages exploration of locations with
high model variance weighted by the parameter $c$, called the
\textit{exploration bias}.

\begin{equation}
\label{eq:objective}
    \objectivefunction(\sensedlocations_i) =
\frac{d}{\numtiles} +
\sum_{\sensedlocation_j \in \sensedlocations_i} c\sigma^2(\sensedlocation_j)
\end{equation}

\begin{equation}
    d =
\|se(\mu_{GP_{i-1}}(\gtsensedlocations),\quantiles) -
se(\mu_{GP_{i}}(\gtsensedlocations),\quantiles) \|_{1}
\end{equation}

We formulate the planning problem as a POMDP, as defined in \Cref{tab:BayesianSearchAndPOMDPs}, and solve the planning problem using a POMCPOW solver
\cite{sunberg2018online}.  


At each planing step, the robot selects the next
location to travel to based on the measurements it has collected and its
corresponding environment model. After moving to the next location, the robot
takes an image, collecting a set of (noisy) point measurements, which it feeds back into
its GP model.
In this work, we assume a straight-line low-level motion planner for generating
trajectories from one location to the next.
We additionally enable a best-effort communication system between
robots and compare to two other communication formulations, which are further
described in \Cref{sssec:comms}. The planning process continues with each of the
$\nrobots$ robots until the allotted budgets have been reached, at which point
planning is complete and the robots can return to the base.

At this time, all measurements from all robots are compiled and used to produce $\estimatedquantilevaluesfinal$ as follows:
\begin{equation}
    \estimatedquantilevaluesfinal = quantiles(\sensedvalues_\textrm{aggregate},\quantiles)
\end{equation}

We now introduce several variations on the multirobot approach which we compare
in our experiments, as illustrated in \Cref{fig:multirobot-hero}.
In the single robot
case, the formulation is the same, simply with $\nrobots = 1$; however, the
communication and initial spread variations have no effect on the implementation or
execution of the planning.

\subsection{Initial Location Spread}
\label{ssec:alpha-initial-location-spread}
During informative path planning, teams of robots will not cover the entire area
possible and are naturally affected by their starting area.  To this end, we
define an initial location spread parameter $\spread$ which varies from 0 (all robots
start at the same location) to 1 (robots are spread around the
entire workspace).  At spread 0.5, for example, the robots would be started evenly
spaced in a rectangle half the size of the entire workspace. We choose initial locations for robots following a variation of Lloyd's algorithm \cite{lloyd1982least} which iteratively computes the centroids of an approximate Voronoi tessellation of the space and reassigns the locations to those centroids.
We perform the process for $100$ iterations using $100 \nrobots$ randomly sampled points. The result is $\nrobots$ locations approximately uniformly spread in the allotted workspace.

\subsection{Budget}
\label{sssec:budget}
To control for the fact that a team with more robots will in practice simply
have more planning steps, and for a fairer comparison to a single robot baseline,
we implement a variation on the budget constraint which we call
\textit{shared budget}: $\forall n \in \nrobots : B_n = B_T / \nrobots$.  
When $(B_T \hspace{-3px}
\mod{\nrobots}) \neq 0$, the remainder is split evenly among the most robots
possible.
The alternative is the default of a \textit{complete budget}, which gives each robot
the full budget, i.e. $\forall n \in \nrobots : B_n = B_T$.

\subsection{Communication}
\label{sssec:comms}
To investigate the impact of communication on the effectiveness of multirobot
planning, we distinguish between three versions of information sharing.

\textit{Full communication} assumes perfect, instantaneous communication,
which translates to every robot using the same shared GP model of the
environment. This is a centralized formulation of the planning problem. After
every measurement that any robot takes, the shared GP is updated and is available immediately to other robots for planning.

At the other extreme is \textit{no communication}. In this case, each robot has its own environment model.
Essentially, this implementation is equivalent to
$\nrobots$ robots planning in the environment independently, with no knowledge of the
others; when new measurements are taken, they are only used to update that
robot's GP and motion is not coordinated at all.
We believe a comparison to no communication provides a valuable baseline to understand how much improvement arises due solely to information sharing. 
In a practical sense, if separate research groups come together to pool their robots, implementing coordinating mechanisms can demand significant cost of time and effort. 
In that case, deploying the available robots with no behavioral changes may be simplest.

The third variation, \textit{stochastic communication}, lies in the middle.
After taking measurements, a robot attempts to transmit the information to
every other robot. We model each attempt as a Bernoulli trial and
assume data is successfully transmitted with sigmoidal probability $p_{\textrm{success}}$,
following \cite{clark2021queue} and where $distance$ is the distance between the
two robots, $\eta$ is a constant defining the sigmoid steepness,
and $r$ is the distance at which communication quality degrades past a threshold:
\begin{equation}
\label{eq:comms-probability}
    p_{\textrm{success}} = \frac{1}{1 + e^{\eta (distance - r)}}
\end{equation}
We assume that if communication is successful between robots $n$ and $m$ at time
step $i$, all $\sensedlocations_i$ are received by $m$ (i.e., there are no partial
or corrupted measurements transmitted).
The receiving robot's GP is updated with the received data.

We additionally compare to splitting the area into regions, or the \textit{partitioned} case.
This uses a Voronoi partition of the workspace based on the initial locations and restricts each robot to stay within its assigned region at all times.
No communication is enabled between partitioned robots.

\begin{figure}[t!]
  \centering
  \includegraphics[width=\linewidth, height=0.25\linewidth]{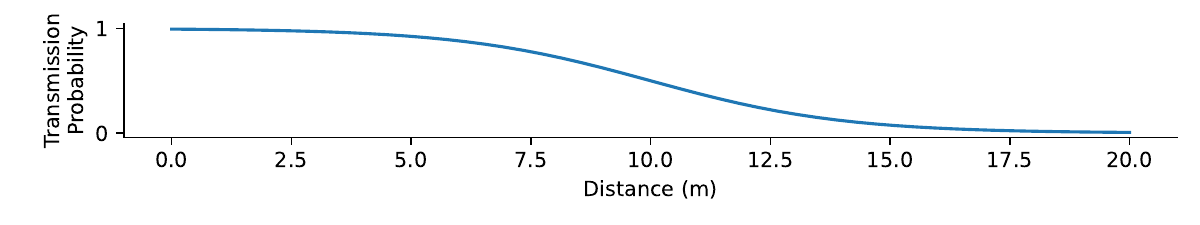}
  \caption{
  Visualization of the message transmission success probability based on inter-robot distance (\Cref{eq:comms-probability}).
}
  \label{fig:comms-probability}
\end{figure}

\section{Experimental Setup}
In our experiments, we evaluate performance on two different real-world datasets. 
The datasets
are collected using a hyperspectral camera mounted on a drone that flies over a freshwater lake. 
Each dataset is collected in the same area of the lake but on different days, leading to different algae distributions.
The robots are bounded in an area of approximately $80 \times 60$ meters.
We use the 400nm channel from the datasets as a proxy for algae and normalize the measured pixel intensity ($0-255$) to $[0, 1]$.
We test two sets of quantiles of interest in our experiments: quartiles $\quantiles = (0.25, 0.5, 0.75)$ and extrema $\quantiles = (0.9, 0.95, 0.99)$.
For each combination of parameters in all experiments, we run 2 seeds.
The available workspace is discretized into a 25x25 grid $\gtlocations$. The environment is unknown ahead of time to the robots.
We assume a communication model as described in \Cref{sssec:comms} with $\eta = 0.5$ and $r = 10$, and a visualization of \Cref{eq:comms-probability} with these parameters is shown in \Cref{fig:comms-probability}. 
We additionally add zero-mean Gaussian noise with $\sigma_\textrm{noise} = 0.05$ to each measurement taken by a robot.
In the POMCPOW solver, we set number of rollouts per step to 100,
rollout depth to 4, and the planner discount factor to 0.8. 
The lengthscale for the GP is 12. We set the drone altitude to 7m and each image
is 25 pixels (measurements).

In order to better understand under what conditions multirobot teams are beneficial for quantile estimation,
we select several parameters to vary in order to investigate their impact on
performance. 
These are listed in \Cref{tab:params}.
As a measure of performance, we report the root mean squared error (RMSE) between the ground
truth quantile values $\quantilevalues$ and the estimated quantile values
$\estimatedquantilevaluesfinal$ at the end of the robot surveys.

\begin{table}[t]
\caption{Parameters studied in our experiments.}
\centering
  \begin{tabular}{lc} 
    \textit{Parameter} & \textit{Tested values} \\ 
    \hline 
    Initial location spread ($\spread$) & 0.0, 0.33, 0.66, 1.0 \\ 
    Total budget ($B_T$) & 10, 15, 30 \\
    Budget type  & \textit{complete, shared} \\
    Communication type &  \textit{none, stochastic, full, partitioned} \\ 
  \end{tabular}
  \label{tab:params}
\end{table}

\subsection{Initial Location Spread}
\label{ssec:alpha-expt}
In this experiment, we vary $\spread = \{0.0, 0.33, 0.66, 1.0\}$.
Each variation is tested on team sizes $\nrobots = \{2, 4, 8\}$, and 
we keep the budget constant at $ B_T = 15$. 
We additionally compare each $\spread$ with no communication and with stochastic communication, but do not consider the single robot case, as $\spread$ and communication have no effect with only one robot.
To quantify the significance of the performance differences observed, we report results from the Wilcoxon signed-rank test \cite{wilcoxon1992individual}, which is a nonparametric version of the t-test and tests whether two paired samples originate from different distributions.

\subsection{Planning Budget}
\label{ssec:budget-expt}
Next, we investigate the effect of the planning budget on performance.
We vary $B_T = \{10, 15, 30\}$ in the complete budget case, setting $\spread$ to the constant value of $0.66$. 
We consider $\nrobots = 1$ as well as the multirobot teams.
In addition, we compare shared to complete budgets, setting $B_T = 15$.
In all cases, we enable stochastic communication between robots.

\subsection{Communication}
\label{ssec:comms-expt}
Finally, we compare performance with different $\nrobots$
across different levels of communication: full, stochastic, and none, and compare these to partitioning the space.
We hold previous parameters constant: $\spread = 0.66; B_T = 15$, and use $\nrobots = \{1, 2, 4, 8\}$.
Note that in the partitioned case, $\spread = 1.0$.


\section{Results}
We now present results from our experiments and include a discussion on their implications for real-world multirobot field work.
Boxplots show final RMSE between the estimated quantile values $\estimatedquantilevaluesfinal$ and the ground truth quantile values $\quantilevalues$ on the Y axis for an experiment, both aggregated across and refined by team sizes. 
Bars above indicate the Wilcoxon signed-rank significance levels for paired experiment groups \cite{florian_charlier_2022_7213391}.

\subsection{Initial Location Spread}
\label{ssec:alpha-results}
\begin{figure}[t!]
  \centering
 \includegraphics[width=\boxwidth,height=\boxheight]{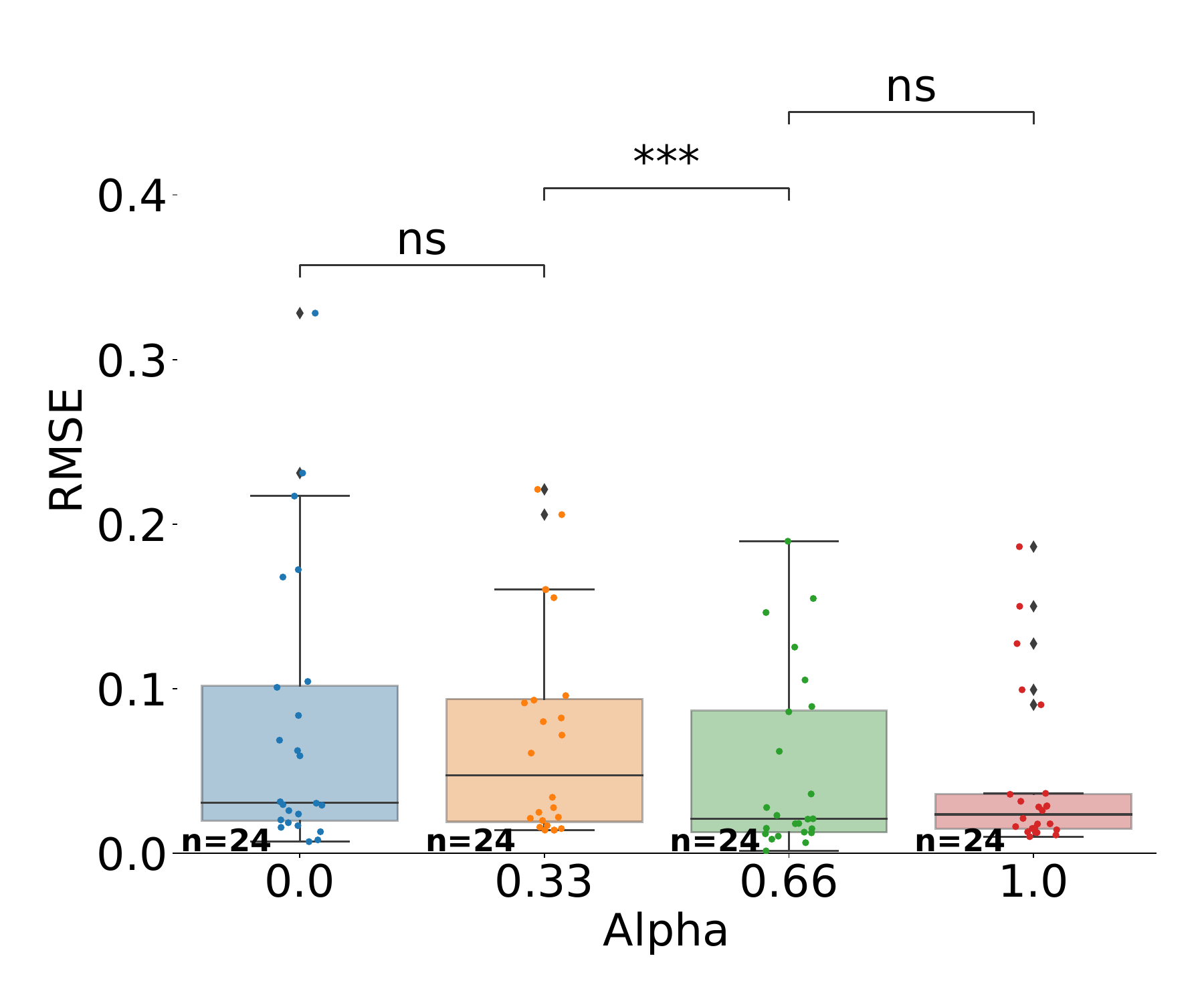}
   %
 \includegraphics[width=\boxwidth,height=\boxheight]{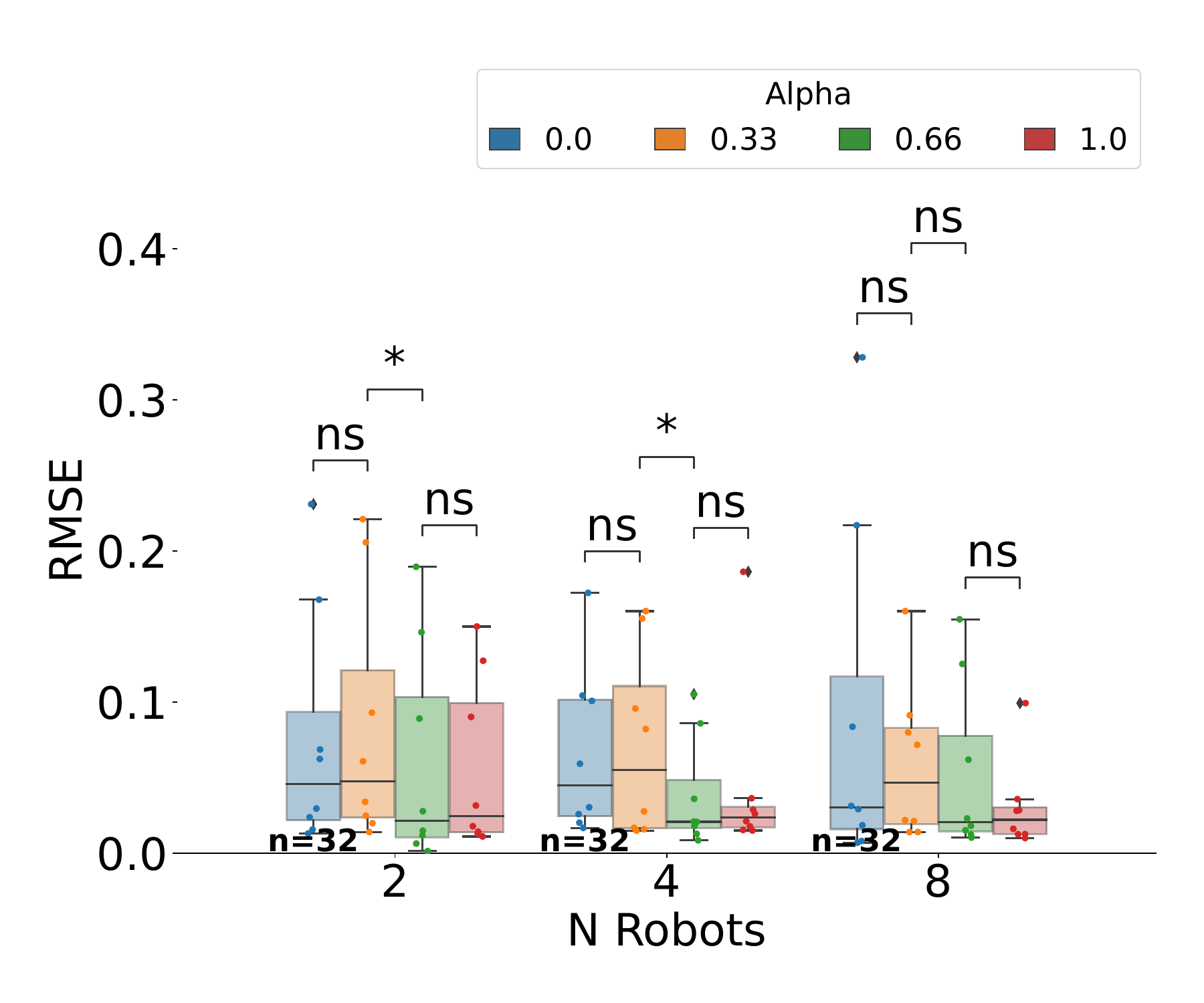}
  %
 \includegraphics[width=\boxwidth,height=\boxheight]{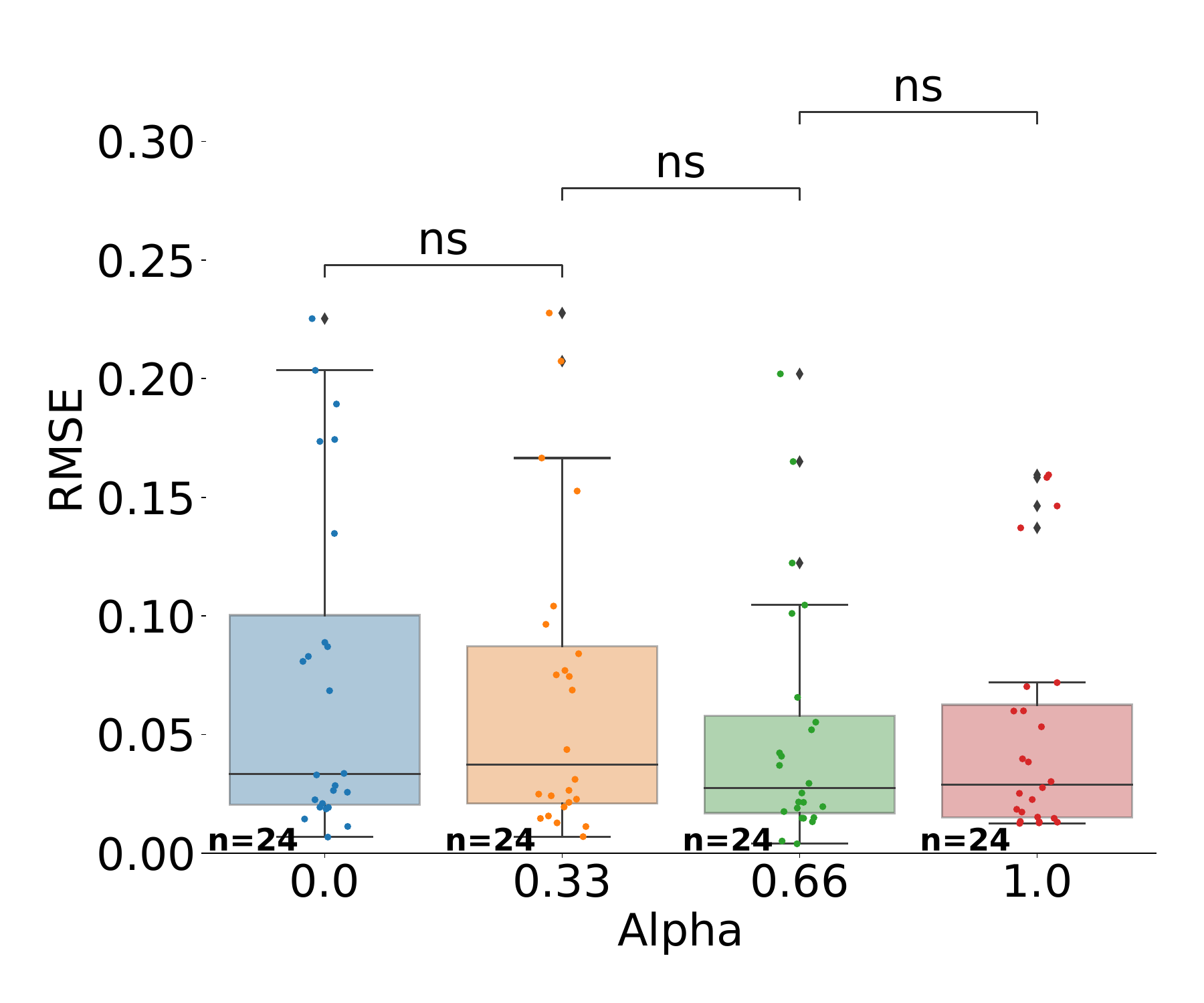}
  %
 \includegraphics[width=\boxwidth,height=\boxheight]{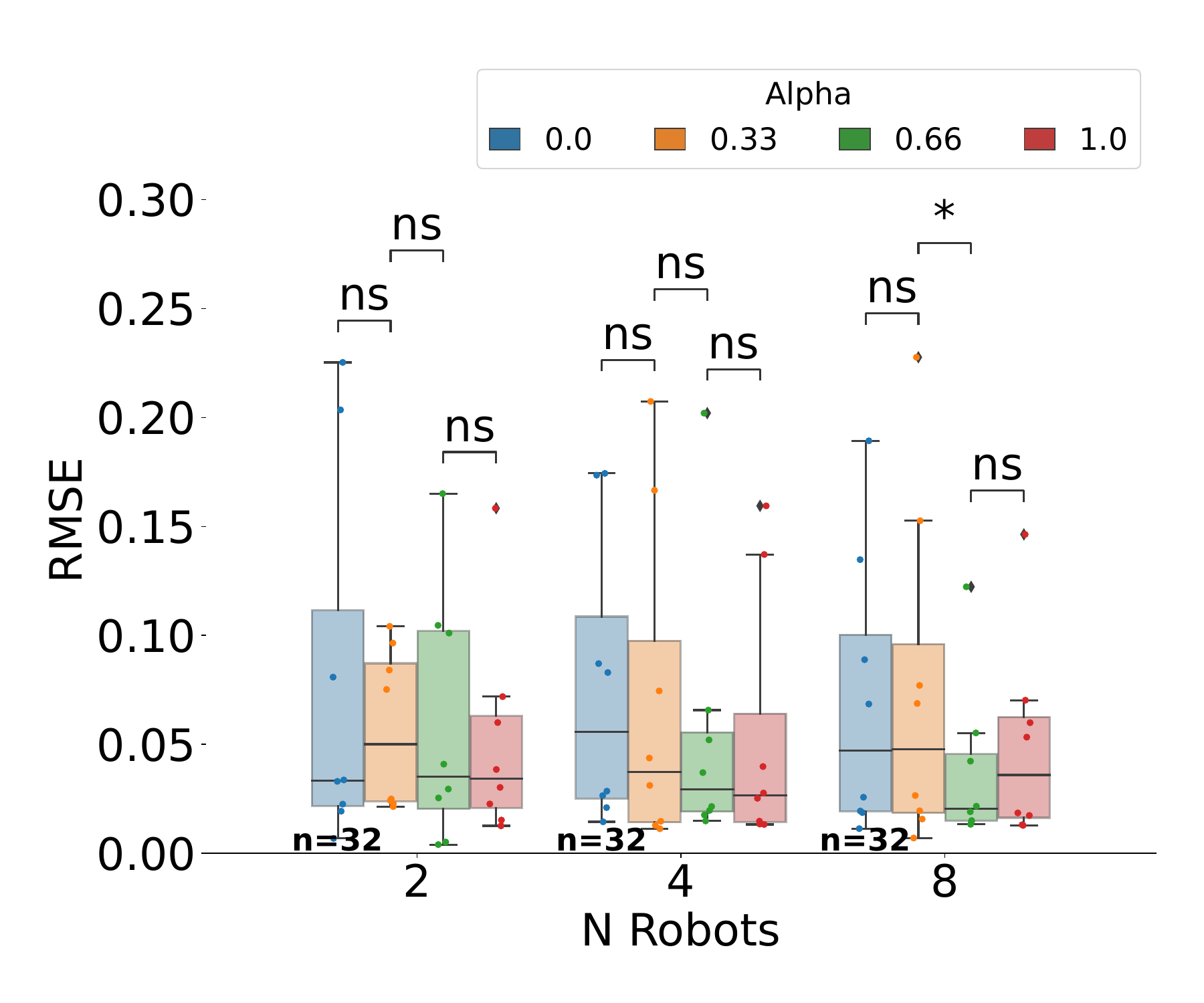}
  \caption{
  Error vs. Initial location spread ($\spread$).
  Top: No communication.
  Bottom: stochastic communication. 
  Right column shows results further separated by team size. 
  Bars above indicate significance under the one-sided Wilcoxon signed-rank test; ***: $\pppsig$, *: $\psig$, ns: no significance.
}
  \label{fig:alpha}
\end{figure}

\Cref{fig:alpha} shows the results of the experiments on $\spread$ using a one-sided Wilcoxon signed-rank test.
We observe that, in general, performance tends to improve (error decreases) with increasing $\spread$.
In particular, in the absence of stochastic communication as seen on the top row of \Cref{fig:alpha}, 
we see a statistically signficant performance improvement between $\spread = 0.33$ and $0.66$ ($\wpres{266}{0.001}$).
On the top right, further separated by $\nrobots$, we similarly observe 
a statistically significant improvement in performance between $\spread=0.33$ and $\spread=0.66$ for both $\nrobots=2$ and $\nrobots=4$ ($\wpres{33}{0.05}$, $\wpres{32}{0.05}$, respectively), 
but we do not see such an improvement between $\spread=0.66$ and $\spread=1.0$ in those cases.
In general, we also see that $\spread$ has a more drastic effect on performance the more robots there are, indicated by the decreasing maximum errors as $\nrobots$ increases.
However, in general, there appear to be diminishing returns with bigger $\spread$.

The bottom row of \Cref{fig:alpha} shows results when robots communicate stochastically with each other at different initial location spreads.
Although we see the same general trend on left, in this case, we do not see a statistically significant drop in error. 
When controlled for team size on the right, we again see increasing $\spread$ resulting in improved performance 
as indicated by lower median and maximum errors.
Compared to the case where there is no communication, we do not notice the same dependency on $\nrobots$ for the effect of $\spread$; in other words, the decrease in error due to $\spread$ appears relatively similar across different values of $\nrobots$ (shown by relative median and maximum errors).
We additionally note that for $\nrobots=8$, there is a statistically significant drop in error from $\spread = 0.33$ to $0.66$ ($\wpres{33}{0.05}$).
Based on these results, 
if robots must be deployed near each other but cannot communicate, then a small robot team may do just as well as a larger team, and vice versa; a small team that cannot communicate does not need to be spread very far to be effective.
On the other hand, a larger spread combined with a large team size will be beneficial in the absence of inter-robot communication, but there will be marginal returns with larger spread.
If communication is enabled, a larger spread will generally always be beneficial (although again with marginal returns), regardless of the team size.

\subsection{Planning Budget}
\begin{figure}
  \centering
 \includegraphics[width=\boxwidth,height=\boxheight]{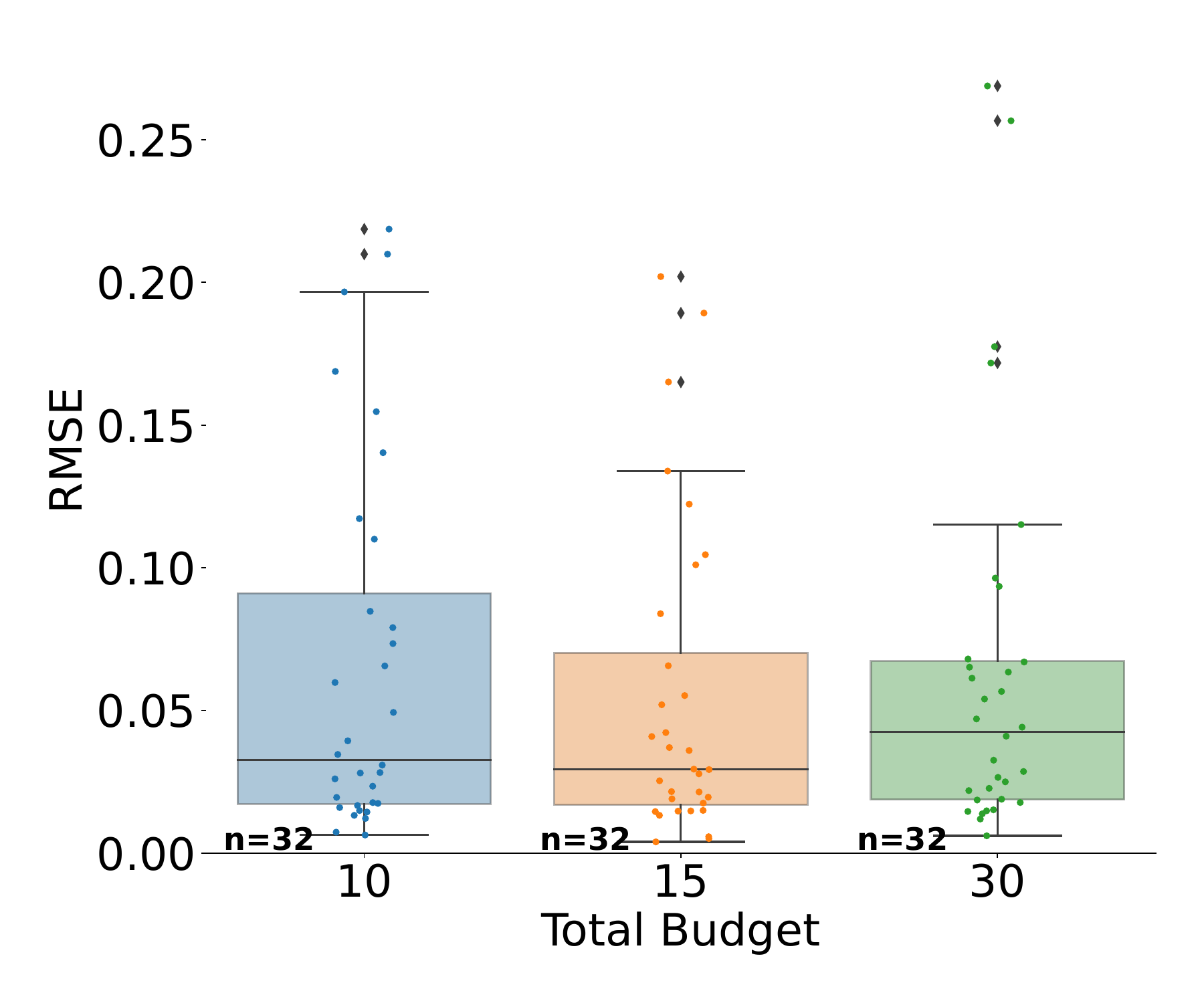}
  %
 \includegraphics[width=\boxwidth,height=\boxheight]{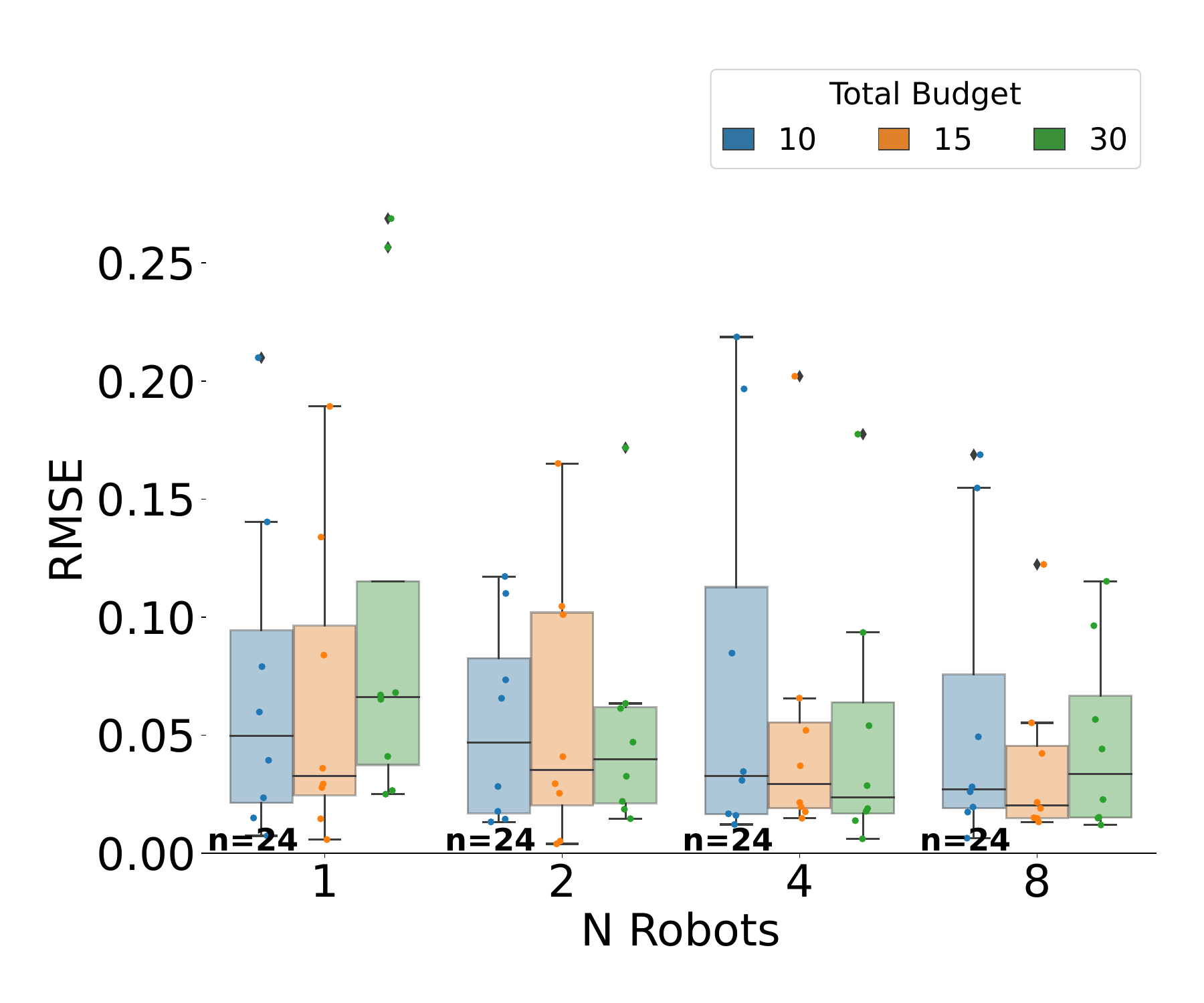}
  \caption{
  Error vs. Budget (number of planning steps $B_T$).
  Right shows results further separated by team size. 
  }
  \label{fig:budget}
\end{figure}

\begin{figure}
  \centering
 \includegraphics[width=\boxwidth,height=\boxheight]{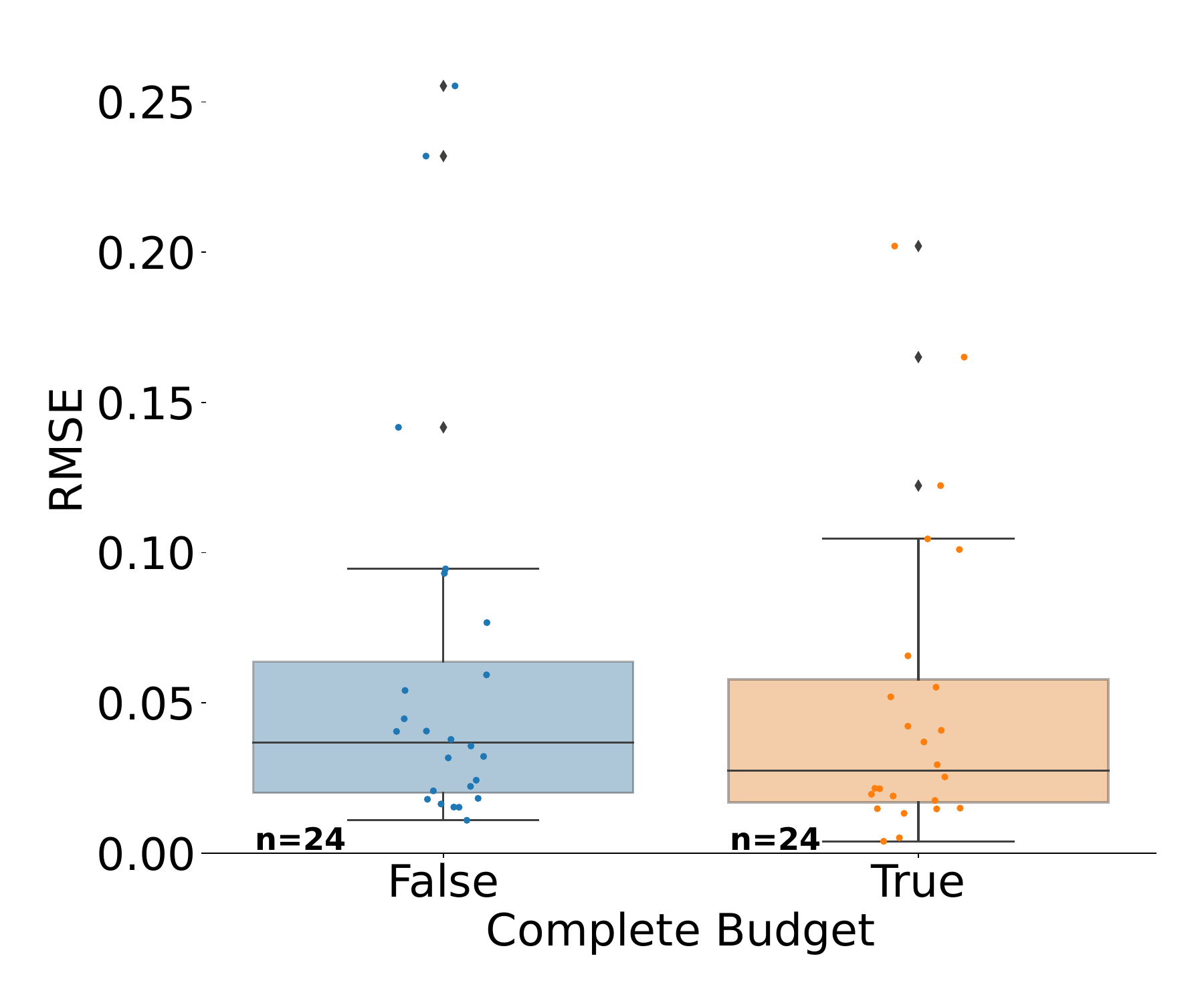}
 \includegraphics[width=\boxwidth,height=\boxheight]{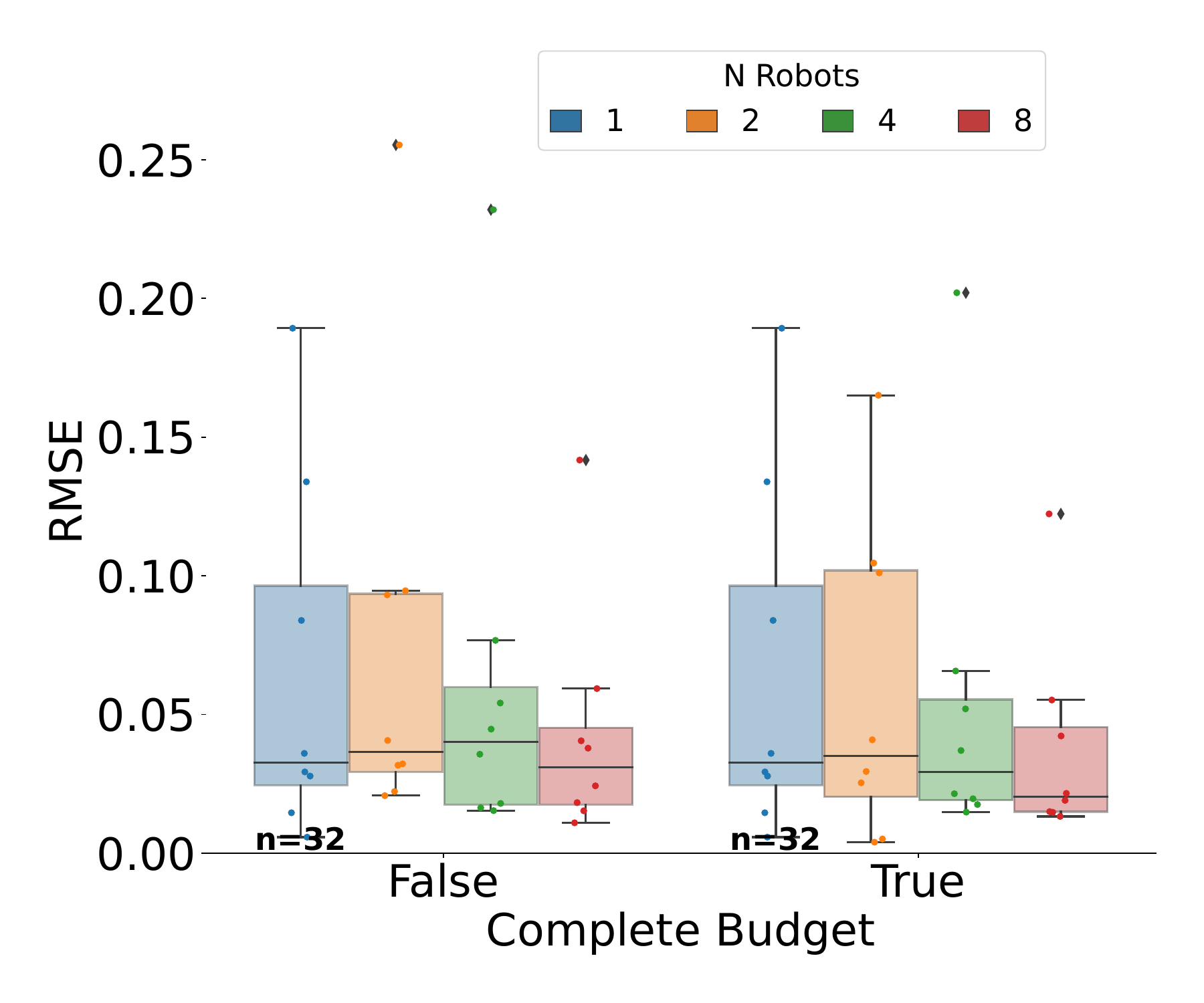}
  \caption{Error vs. Budget Type (Shared and Complete).
  Here, $B_T=15$.
  Left: Shared budget means the combined budget is $B_T$ for any $\nrobots$, while complete budget means it is $\nrobots \cdot B_T$. Note $\nrobots=1$ data not included here since budget type has no effect.
  Right: Results further separated by team size. 
  $\nrobots=1$ (blue) shown for comparison.
  }
  \label{fig:complete-budget}
\end{figure}

Observing both the median and maximum errors on the left side of \Cref{fig:budget} suggests that in general,
a larger budget results in better performance. 
This is expected, as with a larger budgets, robots have more time to explore the environment and gather information, 
and we see a similar trend on the right when controlled for team size.
The improvements are not entirely consistent, however, as we see that $B_T = 30$ is not always better than $B_T = 15$.
This is somewhat surprising, as we would expect improvement with more steps given the results on the left.
What we observe on the right, rather, is that the performance improvement comes from an increase in $\nrobots$, indicated by lower median error for higher $\nrobots$.

To investigate this further, we control for total path length $\sum_n B_n$ and show results for shared and complete budgets in \Cref{fig:complete-budget}.
Recall that with a shared budget, $\forall n \in \nrobots : B_n = B_T / \nrobots$,
and for a complete budget, $\forall n \in \nrobots : B_n = B_T$
(see \Cref{ssec:budget-expt}).
With $B_T = 15$ and $\spread = 0.66$ in this experiment, we see similar performance between shared and complete budgets, which is again counterintuitive, as we would generally expect improvement given more steps.
However, we uncover the real effect when we further control for $\nrobots$, shown on the right. 
With a complete budget, more robots (i.e., more steps) leads to better performance as well, as expected. 
However, even when the budget is shared, there is a notable improvement in maximum error when using more than $1$ robot.
What is notable is that, although we see similar median errors for all cases, the improvement in maximum error is also observed when the total path length is held constant.
This is particularly remarkable because in the $\nrobots = 8$ case, this equates to each robot taking at most 2 steps.
Given these results, it appears that having more robots, even with restricted budgets (e.g., time, path length), is preferable to having one robot with a larger budget.

\subsection{Team Size and Communication}
\begin{figure}
  \centering
  \includegraphics[width=\boxwidth,height=\boxheight]{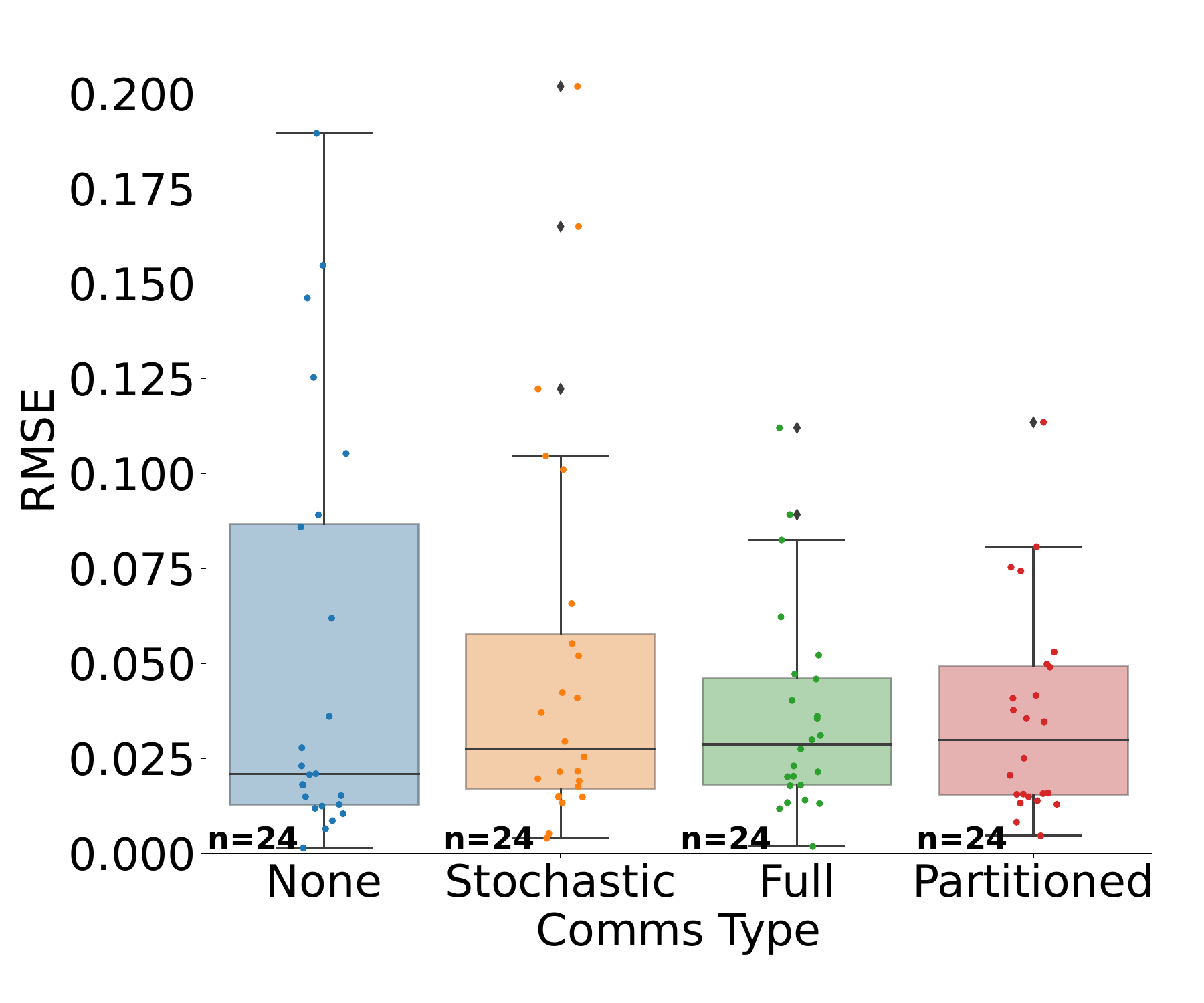}
  \includegraphics[width=\boxwidth,height=\boxheight]{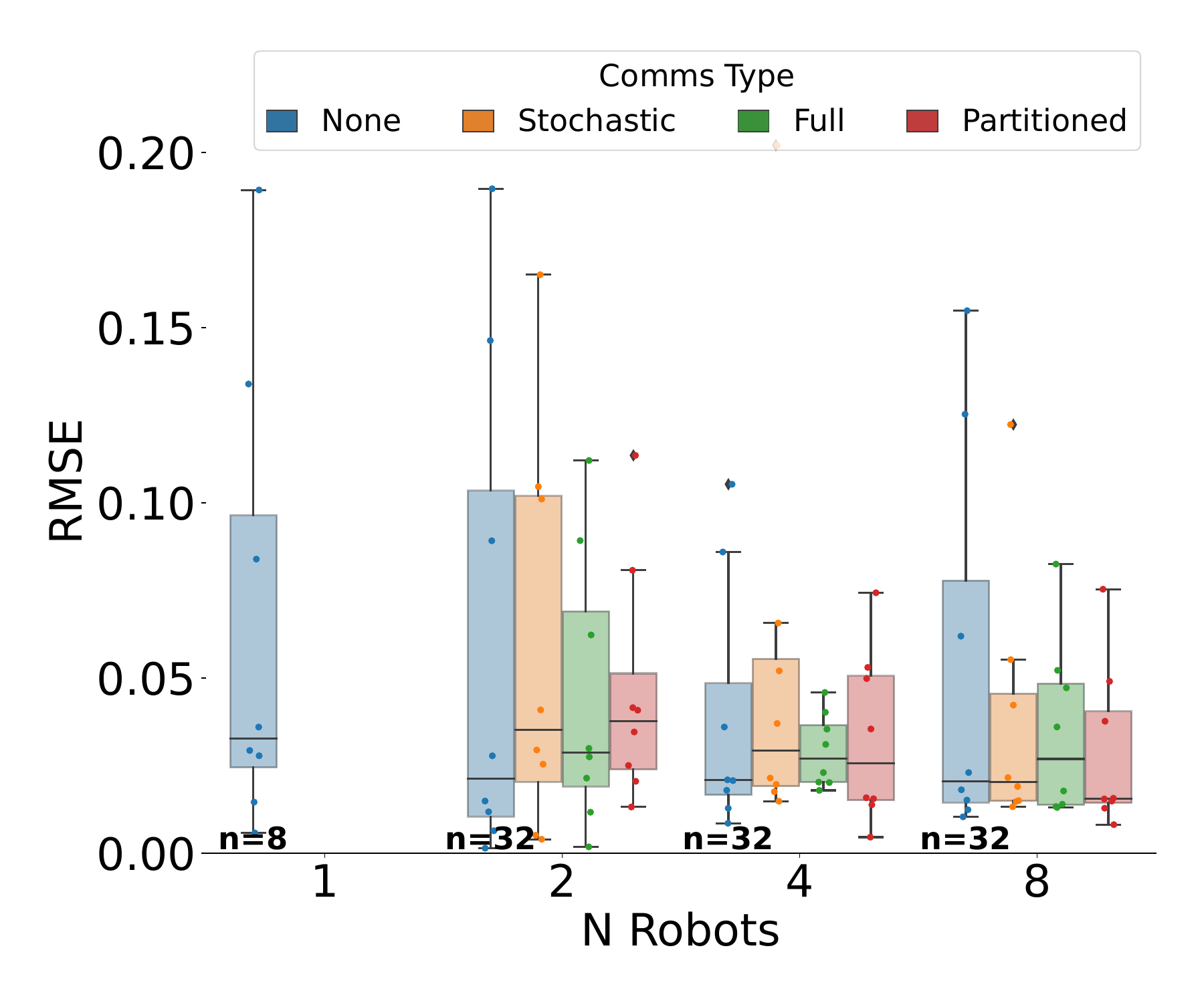} 
  \caption{
  Error vs. Communication type.
  Left: Aggregated across different $\nrobots$.
  Note that results for $\nrobots = 1$ are not included, since they are not affected by the variations.
  Right: Same results, further separated by team size. $\nrobots=1$ included for comparison.
  }
  \label{fig:multirobot-error-vs-comms}
\end{figure}

\Cref{fig:multirobot-error-vs-comms} shows the quantile estimation error results when the robot team uses different communication styles. 
Note that on the right, we include the single robot version as a baseline.

On the left, we see that overall, 
all communication types have a median performance around the same as no communication.
We see that more communication does, however, lead to improved performance, indicated by smaller upper quartile errors. 
This is likely due to the fact the robots can avoid multiple measurements of locations that are not of interest.

On the right, we see that stochastic communication does not have much of an effect compared to no communication when $\nrobots= 2$ or $4$, as seen in the similar median and upper quartile errors.
At $\nrobots = 8$, we see that although the median performance between no, stochastic, and full communication is similar,  both the upper quartile error and the maximum error for no communication is much higher. 
This suggests that enabling communication is more beneficial for larger teams for reducing error and improving performance.
We also see that in general, partitioning achieves similar median error as when there is stochastic or full communication, as well as maximum error particularly for larger teams. 
Given that in this experiment, the partitioned case used $\spread = 1.0$ rather than $0.66$, this suggests that partitioning the space is an effective strategy if the entire area is accessible for deploying robots but communication is not possible.
Based on these results, performance improvement can be achieved by enabling inter-robot communication, even when the communication is imperfect, and it is particularly beneficial with large teams.

Finally, we show some example plans produced during these experiments as a visualization of the task in \Cref{fig:raster-increase-budget,,fig:raster-shared-budget,,fig:raster-partition}.
The robots are shown in different colors and the paths they produce are shown as connected crosses.
The initial locations are marked in black. If the robots were partitioned, the assigned regions are designated with white lines.
The background of the figures is the orthomosaic of the hyperspectral images
previously collected by drones in real aquatic environments.

\begin{figure}
  \centering
  \includegraphics[trim={0.5cm 3cm 0.5cm 3.5cm}, clip, width=\boxwidth]{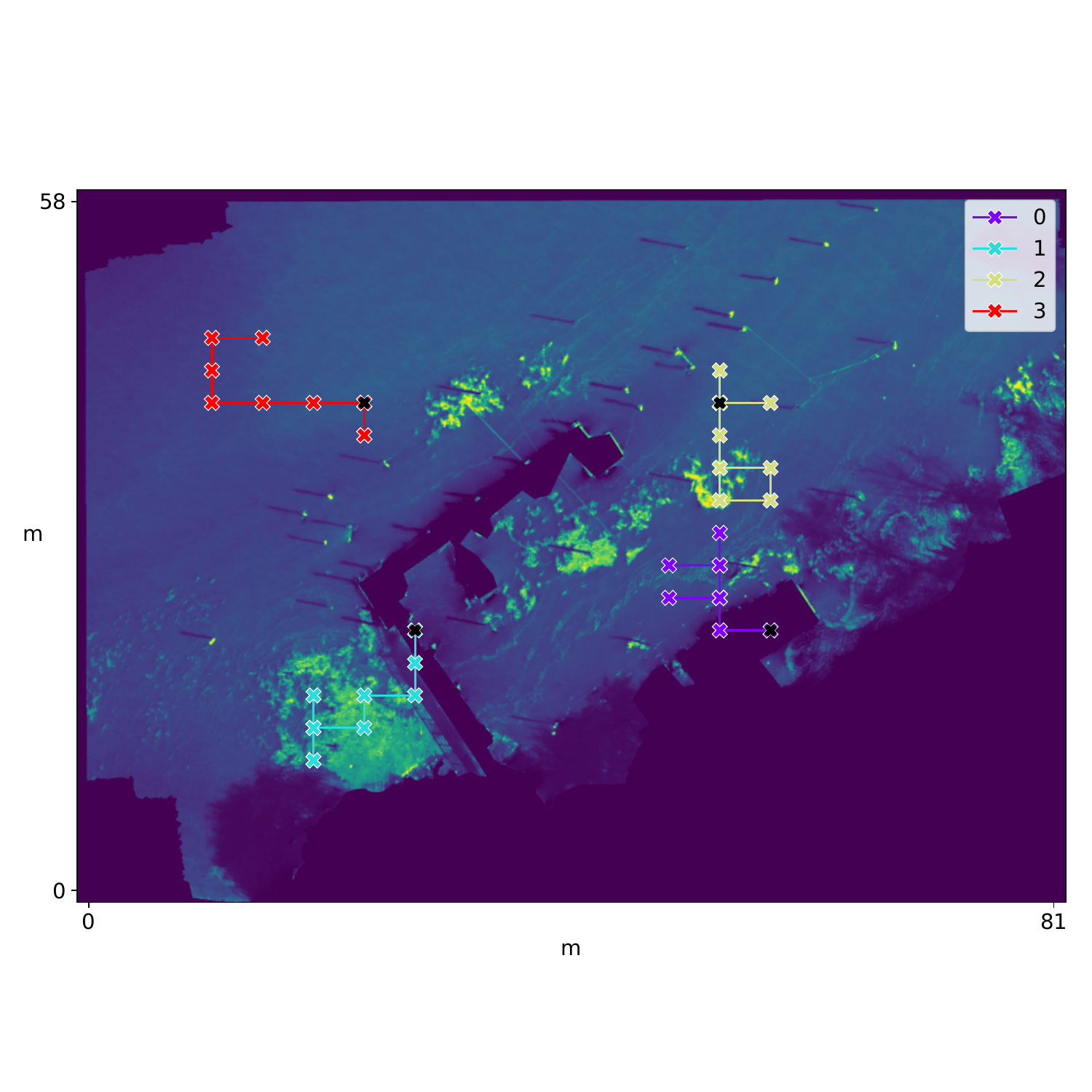}
  \includegraphics[trim={0.5cm 3cm 0.5cm 3.5cm}, clip, width=\boxwidth]{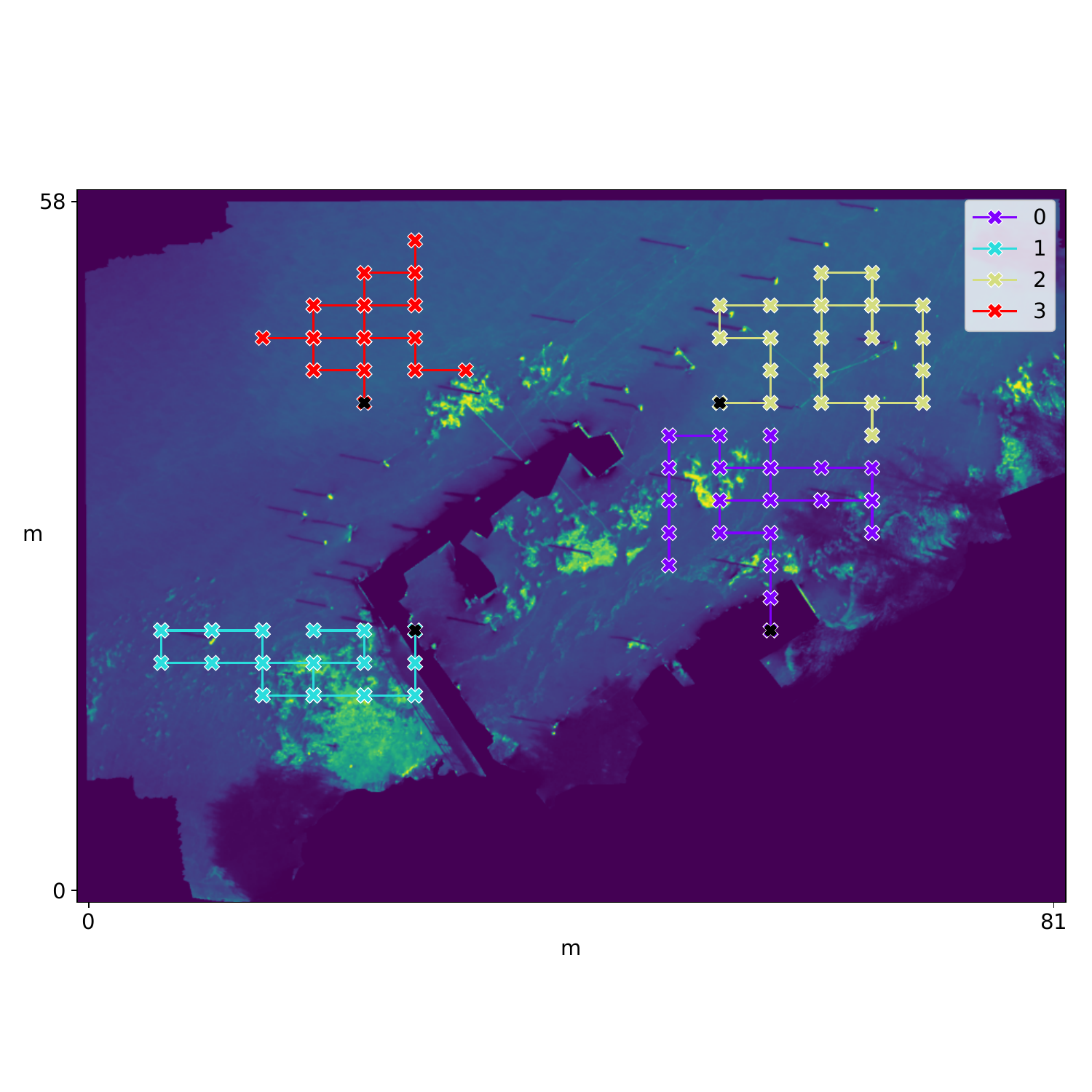}
  \caption{
  Example paths of a four-robot team with $B_T=10$ (left) and $B_T=30$ (right).
  }
  \label{fig:raster-increase-budget}
\end{figure}

\begin{figure}
  \centering
  \includegraphics[trim={0.5cm 3cm 0.5cm 3.5cm}, clip, width=\boxwidth]{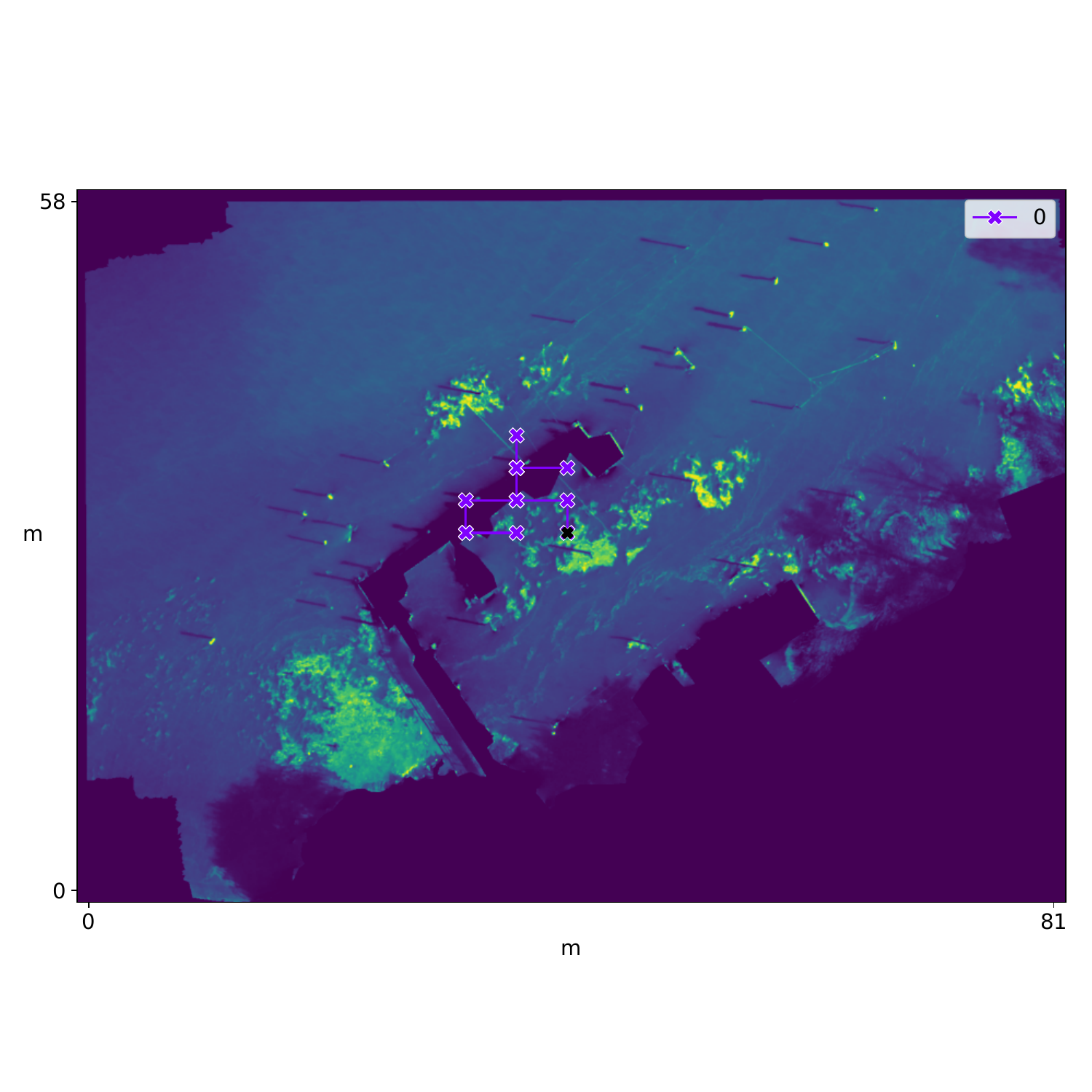}
  \includegraphics[trim={0.5cm 3cm 0.5cm 3.5cm}, clip, width=\boxwidth]{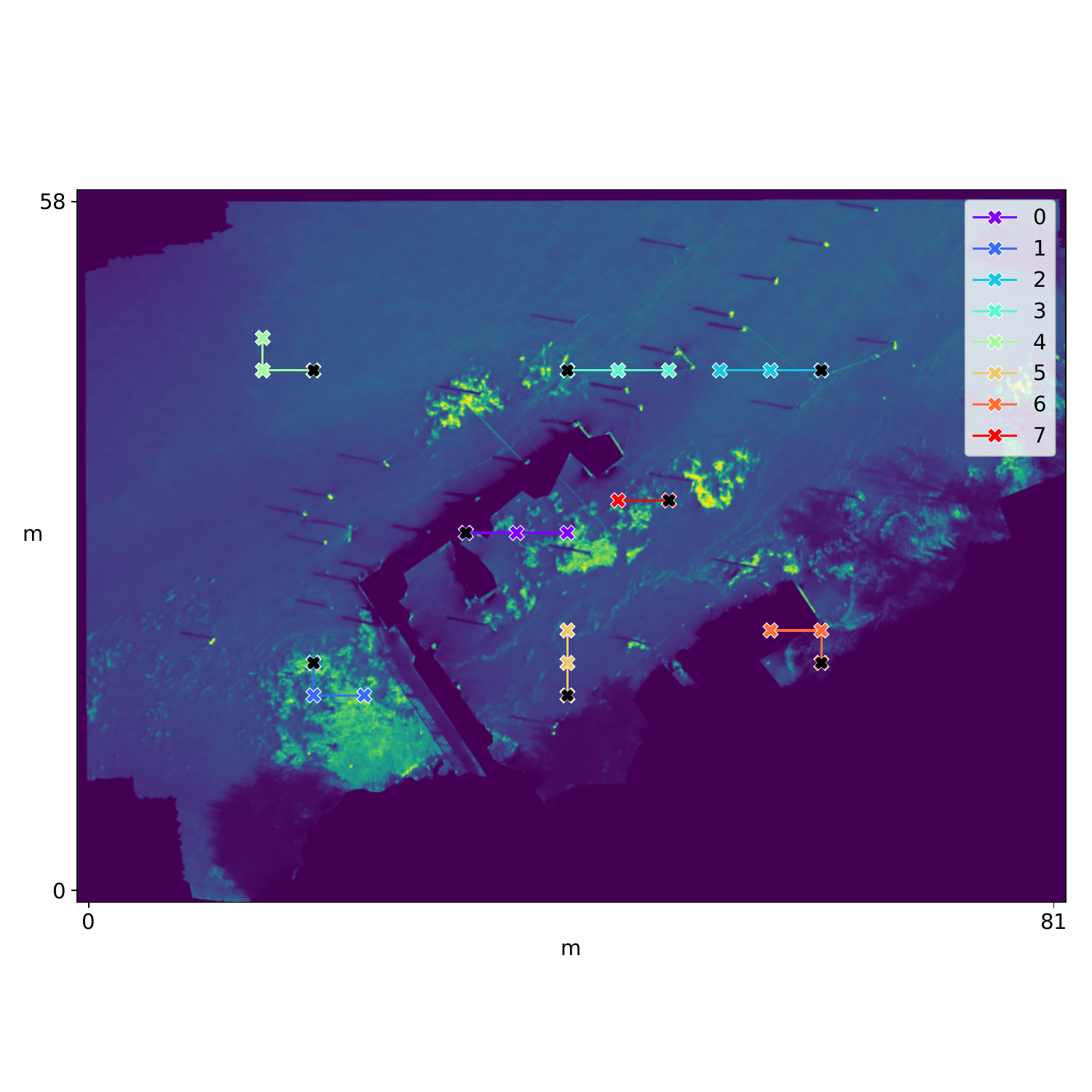}
  \caption{
  Example paths of a single robot with $B_T=15$ (left) and an 8-robot team with a shared budget of $B_T=15$ (right).
  }
  \label{fig:raster-shared-budget}
\end{figure}

\begin{figure}
  \centering
  \includegraphics[trim={0.5cm 3cm 0.5cm 3.5cm}, clip, width=\boxwidth]{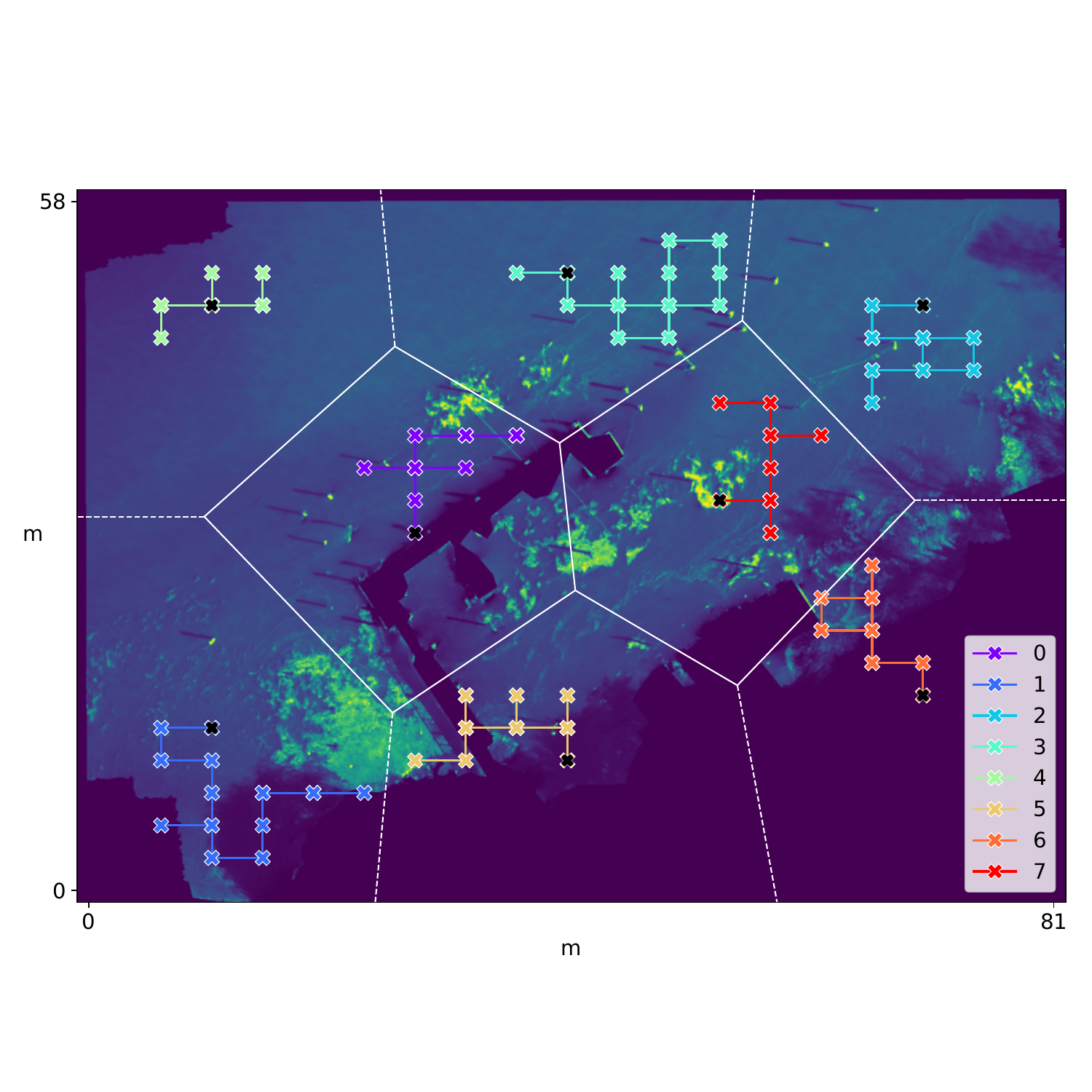}
  \includegraphics[trim={0.5cm 3cm 0.5cm 3.5cm}, clip, width=\boxwidth]{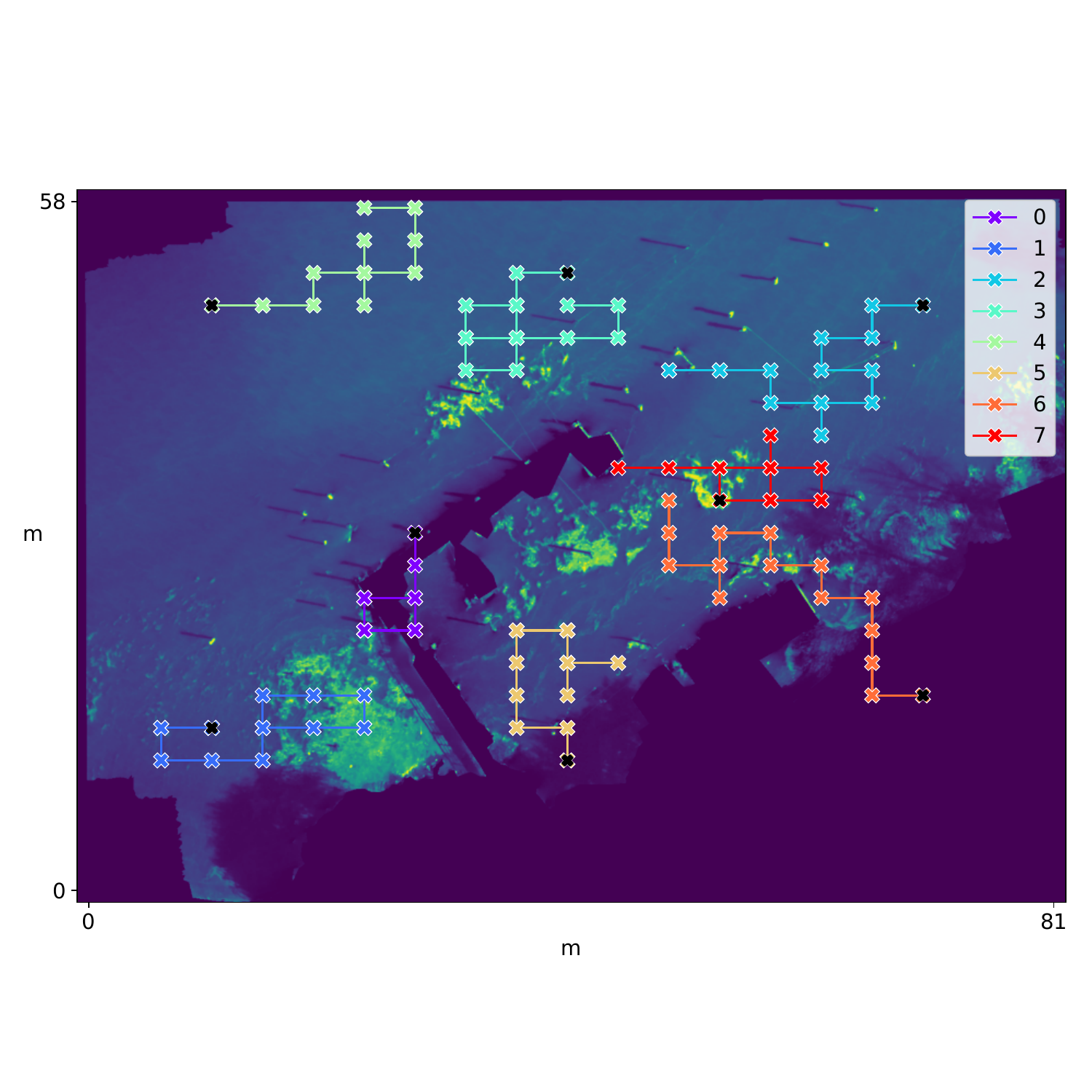}
  \caption{
  Example paths of an 8-robot team under partitions (left) and no communication (right).
  }
  \label{fig:raster-partition}
\end{figure}

\section{Discussion}
Here we summarize the main conclusions drawn from the presented results and discuss further areas of interest based on them.

With regards to the initial location spread, a large spread appears to be beneficial when $\nrobots$ is large, but not when $\nrobots$ is small.
Additionally, past a certain point between $0.33$ and $0.66$, a larger spread results in marginal improvement in error.
This may be particularly valuable information when planning real-world field deployments, when there are time and monetary costs to getting many robots to spatially diverse areas; avoiding unnecessary costs is crucial.
In addition, if $\spread$ is restricted to very low values by the survey resources or setup, then using more robots may help regain some performance.

Notably, in terms of budget $B_T$, a larger budget in and of itself does not significantly improve performance for a fixed $\nrobots$ (assuming a moderate initial spread). What we observe instead is a stronger dependency on $\nrobots$ itself; therefore, a larger number of robots, each with a restricted budget, will likely produce better quantile estimates than a single robot with a $B_T$ equal to the total budget of all of them.

Our results also indicate that enabling communication is generally beneficial to the quantile estimation problem, and in particular for larger groups. 
We note that partitioning the space into disjoint areas and restricting each robot to one area can be an effective alternative strategy,
particularly if inter-robot communication is not an option.

A natural question that arises from these results is to determine what role communication plays in this task, and we address this in \Cref{ch:comms}.
For the purposes of this study, we restricted the communication frameworks to somewhat basic implementations; results may change given a more intelligent communication protocol that takes into account the potential utility of a message, or with more nuanced communication models based on particular hardware capabilities.

It would also be interesting to investigate how this scales to team sizes beyond 8 robots.
In the context of lake monitoring for algae and other environmental monitoring tasks, swarms are not typically deployed;
however, for large-scale studies, it may be useful to understand the impact of teams larger than 8.
Based on our results in this study, we would expect to see lower error with more robots up to a certain point. 
Our results support what previous work has found in terms of diminishing returns with more robots.
We generally observe a larger decrease in error between $\nrobots = 2$ and $4$ than between $4$ and $8$.
Extrapolating from this, doubling $\nrobots$ from $8$ to $16$ robots or more would likely not be able to achieve 0 error, but would be significantly more resource-intensive.

In practice, scientist teams will need to weigh the benefit of lower estimation error with the practicality of acquiring, coordinating, and deploying more robots for their specific application.

\section{Conclusion}
In this work, we have presented the first study on multirobot quantile estimation
for environmental analysis. We investigate the impact of multiple robots on
quantile estimation accuracy, as well as the effect of initial location spread, planning budget, and communication.
We measure how well different combinations of parameters perform in terms of error in quantile value estimates, 
and we further provide statistical results quantifying the significance of the differences observed.
This work is an important step toward characterizing the elements of an effective
multirobot system for environmental analysis and has potential to help
scientists interested in larger scale or collaborative survey projects to better
understand the benefits and drawbacks of different experiment design choices. In
turn, this may improve environmental monitoring and conservation
efforts.


\chapter{Reducing Network Load via Message Utility Estimation for Decentralized Multirobot Teams}
\label{ch:comms}
Our final contribution follows from the analysis of results presented in \Cref{ch:multirobot}.
With multirobot teams in an environmental exploration task, efficient communication can be crucial to successful task completion.
The work described in this chapter considers the problem of abstracting task specification from the angle of autonomously selecting which messages to send to a bandwidth-constrained robot communication network, and follows from \cite{rayas2023reducing}. 

\section{Introduction}

\label{sec:intro}
Quantile estimation is useful for gaining an understanding of natural environments~\cite{rayas2022informative}, and the use of multirobot teams has shown promise for effectively planning exploration and accurately estimating the quantile values~\cite{rayas2023study}.
However, realistic multirobot deployments must consider limitations in communication capabilities; here, we focus on reducing the network load as measured by the number of attempted transmitted messages with the goal of sacrificing minimal quantile estimation accuracy. 
We address the question of how to decide whether a message is worth sending.
Specifically, we:
\begin{itemize}
    \item Propose methods to reduce network load for bandwidth- and resource-constrained decentralized multirobot teams that leverage utility-based message evaluation, with no guarantees on information from teammates;
    \item Describe a decision-theoretic approach to choosing whether to transmit a message to the network based on the utility-based evaluation and a cost of transmission;
    \item Present results showing significant reductions in network load are achievable with minor or no corresponding drops in performance on the problem task of estimating quantile values, even with probabilistic transmission failure and a distance-based transmission probability, on a dataset from a real aquatic environment;
    \item Discuss results in the context of real-world applications.
\end{itemize}

\begin{figure}[t]
    \centering
    \includegraphics[width=\columnwidth]{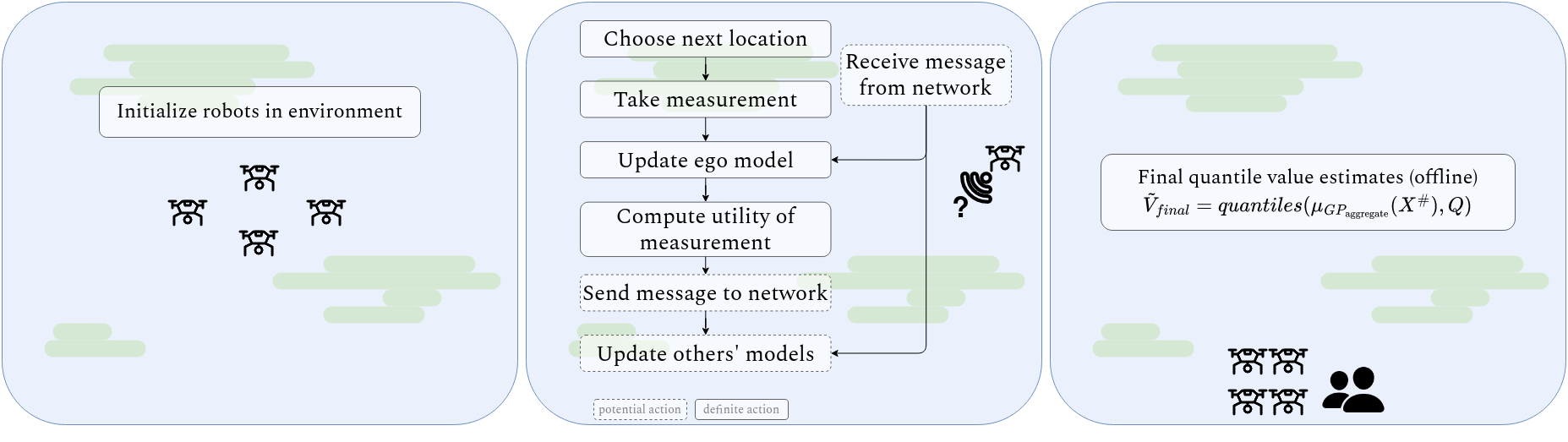}
    \caption{Overview of problem pipeline. The approach consists of 3 phases: Initialization, Exploration, and Aggregation. 
    The bulk of the work happens during the decentralized exploration phase, where robots repeatedly choose the next location to visit, measure the environment, optionally broadcast the measurement in a message, and update their models until their budget $B_T$ is exhausted.}
    \label{fig:comms-hero}
\end{figure}

We assume a decentralized team -- each robot plans independently and is not guaranteed to receive information from  others.
However, we assume a communication protocol such that a robot can, at each step, decide to broadcast its newly acquired measurements to the network. 
The network has limited bandwidth, so robots are incentivized to only send those that will be most beneficial to the others.
We encode the cost of loading the network with a message through a threshold value, where utilities higher than the threshold have a higher benefit than cost.
With these assumptions in place, the question of how to evaluate a potential message for its utility emerges as a key consideration to this problem; in this work, we propose several approaches to solve this problem.
An overview of the problem pipeline is displayed in \Cref{fig:comms-hero}.

\section{Background}
There is extensive literature in utility-based
communication, and our methods share similarities with previously proposed
approaches (see \Cref{ssec:multirobot-comms-background}).
In particular, some other works have
made use of a measure of utility based on expected reward or expected effect on robot actions, and the teams are
decentralized in terms of planning and reward calculation \cite{marcotte2020optimizing,unhelkar2016contact}. 
While our work in this chapter employs these as well, a major difference is
that our end goal is to explore the environment in whatever way results in the
best quantile estimates, while those works define a static goal location for the robots to reach. 
We also exclusively choose to plan using a
greedy algorithm for computational efficiency and tractability as opposed to using MDPs, motivated by
resource-constrained systems, and do not forward-simulate the models of other
robots while~\cite{marcotte2020optimizing} does. 
Significantly, we make the (more realistic)
assumption of stochastic communication on top of the communication decision; just because a robot decides to send a message does not guarantee
reception by all or any of the recipients. We further do not assume perfect measurement sensing. 
Despite our relatively simple planning and modeling methods and more
failure-prone communication assumptions, we are able to achieve a significant
decrease in network loading while maintaining reasonable, and in some cases improving, task performance.

\section{Methods}
\label{sec:methods}

Our formulation extends the setup used in the previous \Cref{ch:quantiles,ch:multirobot}; we briefly mention relevant definitions here. 
The team consists of $\nrobots$ aerial robots, each operating with a fixed planning budget $B_T$ at a fixed height in a discretized environment
$\gtlocations \subset
\mathbb{R}^2$. 
Robots can take one of four possible actions at each step: $\pm x$ or $\pm y$.
Since our focus is on inter-robot communications, we assume a low-level controller, a collision avoidance routine, and perfect localization.
The human operator supplies a set of arbitrary quantiles of interest $\quantiles$, and the end goal is to produce accurate estimates $\estimatedquantilevalues$ of $\quantiles$ where the true values are
defined as $\quantilevalues = \textit{quantiles}(\gtsensedvalues,\quantiles)$, 
the set of all possible locations a robot can measure with its sensor are $\gtsensedlocations$, and 
 the corresponding true values at those locations are $\gtsensedvalues$.
Each robot has a camera with which it can take noisy measurements $\sensedvalues$ representing the concentration of algae at those pixel locations in the environment $\sensedlocations$.
Throughout the deployment, each robot collects measurements $\sensedlocations, \sensedvalues$ both firsthand via its camera and via any messages received from others through the network.
The robots use all their collected measurements to construct and update their internal Gaussian process (GP) model of the environment.
%
Robot $\ego$'s estimate of the quantile values 
is given by
\begin{equation}
    \estimatedquantilevalues_{\ego} = quantiles(\mu_{GP^{\ego}}(\gtsensedlocations),\quantiles)
\end{equation}
where $\mu_{GP^{\ego}}(\gtsensedlocations)$ is the estimate of all possible locations
using the GP conditioned on the measurements $\sensedlocations^{\ego}$ and $\sensedvalues^{\ego}$ that $\ego$ has collected thus far.

In what follows, when viewing the problem from the perspective of a given robot, we will refer to it as $\ego$ or the ego robot and to any of the other $\nrobots$ as $\other$. 
Variables superscripted with $\other$ such as $GP^{\other}$ indicate the GP for $\other$ \textit{from the perspective of $\ego$}.

Motivated by the computational constraints of resource-constrained systems, we opt to use a locally computable, online, greedy planning policy.
At each planning step $t$, $\ego$ maximizes an objective function $\objectivefunction$ over feasible next measurement locations.
We use a modified version of the quantile standard error objective function
to evaluate a proposed location for time $t$
proposed in \Cref{ch:quantiles},
which measures the difference in the standard error of the estimated quantile values using 
two versions of the robot model: the current model $\mu_{GP^{\ego}_{t-1}}$, and the model updated with the hypothetical new values at the proposed locations $\mu_{GP^{\ego}_{t}}$.
We modify $\objectivefunction$ such that it is conditioned not only on the proposed measurement locations $\sensedlocations$, but also the values associated with the locations (denoted by $\newobjectivevalues$ generally) and the GP to be used.
The reasoning is that $\objectivefunction$ is also used later in the process for computing message utilities, and this allows it to generalize to these cases when $\newobjectivevalues$ and $GP$ may take on different values.
When $\objectivefunction$ is used in the planning step by $\ego$ to select $\ego$'s next location, $\newobjectivevalues$ are the expected values at the proposed location $\sensedlocations$: $\mu_{GP^{\ego}_{t-1}}(\sensedlocations)$ (just like in \Cref{ch:quantiles}).
However, when $\objectivefunction$ is used by $\ego$ to determine the utility of $\other$ receiving a real measurement that $\ego$ has already acquired, $\newobjectivevalues$ are those $\sensedvalues$, and the GP may be either $GP^{\ego}$ or $GP^{\other}$ (see \Cref{ssec:modeling}).
The final term encourages exploration of high-variance areas.
\begin{equation}
    \objectivefunction(\sensedlocations_i; \newobjectivevalues, GP) =
\frac{d}{\numtiles} +
\sum_{\sensedlocation_j \in \sensedlocations_i} c\sigma^2(\sensedlocation_j)
\end{equation}
\begin{equation}
    d =
\|se(\mu_{GP_{i-1}}(\gtsensedlocations),\quantiles) -
se(\mu_{GP_{i}}(\gtsensedlocations),\quantiles) \|_{1}
\end{equation}

\subsection{Summary of Process}
\label{ssec:procedure}
The approach consists of 3 phases: Initialization, Exploration, and Aggregation.
These are described at a high level here; further details are in the sections following.

\subsubsection{Initialization}
\label{sssec:procedure-init}
Robots begin distributed in the environment according to the initial location spread $\spread$ (\Cref{ssec:alpha-initial-location-spread} provides details). 
We assume each robot is given the number of other robots and their corresponding starting locations.

\subsubsection{Exploration}
\label{sssec:procedure-explore}
The bulk of the work happens during the decentralized exploration phase, where robots repeatedly choose the next location to visit, measure the environment, optionally broadcast the measurement in a message, and update their models until their budget $B_T$ is exhausted. The next measurement locations at $t$ are selected according to
\begin{equation}
\label{eq:greedy-planning}
    \sensedlocations^{\ego}_t = \argmax_{\sensedlocations \subset \sensedlocations^{\ego}_{\textrm{next}}} \objectivefunction (\sensedlocations;  \mu_{GP^{\ego}}(\sensedlocations), GP^{\ego})
\end{equation}
where $\sensedlocations_{\textrm{next}}$ contains the sets of locations that could be sensed one step from the current location, $\location^{\ego}_t$.
On top of this planning cycle, $\ego$ may receive a message from another $\other$ at any point, leading to further model updates (\Cref{ssec:modeling}).

\subsubsection{Aggregation}
\label{sssec:procedure-aggregate}
Once the robots have reached the allotted budgets, their data is collected. The operator can produce an aggregate GP model of the environment offline using it and, with that, obtain final estimates of quantile values:
\begin{equation}
\label{eq:final-qest}
    \estimatedquantilevaluesfinal = quantiles(\mu_{GP_\textrm{aggregate}}(\gtsensedlocations),\quantiles)
\end{equation}

\subsection{Communication Framework}
\label{ssec:comms}
The communication network consists of $\nrobots$ robots, each with the ability to broadcast a message at each step in its path. 
A message $\msg_t^{\ego} = (\ego, \location^{\ego}_t, \sensedlocations^{\ego}_t, \sensedvalues^{\ego}_t)$ sent by $\ego$ at time $t$ is composed of a header containing the sending robot's ID number and current location, and a body containing the current set of measurements. 
Messages are transmitted successfully to every other robot $\other$ in the network subject to the probability 
depending on distance and parameterized by the dropoff rate $\eta$ and communication radius $r$ \cite{rayas2023study}:
\begin{equation}
    \successprob(distance) = \frac{1}{1 + e^{\eta (distance - r)}}
\end{equation}
and a successfully transmitted message will be received whole, uncorrupted, and without delay. 
In general, we assume $\ego$ does not receive a handshake or confirmation of whether its message was received successfully or not by $\other$.
We assume that robots can self-localize and that, within the radius $r$, they can accurately sense $\location^{\other}$, which they do at each step. 

\subsection{Modeling Others and Updating Models}
\label{ssec:modeling}
A central consideration is how to model other robots using incomplete information since such a model is necessary to adequately assess the utility any given message will have for others. 
The ego robot $\ego$ maintains a separate model $(\location^{\other},~ GP^{\other})$ for each $\other$. 
$\location^{\other}$ is the most up-to-date location $\ego$ has, and is updated in two cases.
First, by $\ego$'s sensing capabilities at each step if $\other$ is within radius $r$, 
and second, anytime $\ego$ receives a message from $\other$, via the header.
$GP^{\other}$ is a GP based on the measurements $\ego$ believes $\other$ has.
We denote this set of measurements as $\othermeasurements := (\sensedlocations^{\other}, \sensedvalues^{\other})$.
%
$\othermeasurements$, and $GP^{\other}$, which is defined by it, can also be updated in two scenarios.
The first is when $\ego$ receives a message $\msg$ from $\other$, since it is clear that $\other$ must possess the data contained.
The second is more uncertain. If $\ego$ broadcasts $\msg$ to the network, it does not know whether it was successfully transmitted to any $\other$; however, recall that $\ego$ can sense the location of $\other$ within $r$. We use this to determine a proxy for reasonable confidence of transmission success: If $\other$ is within $r$, $\ego$ believes $\msg$ was received and thus updates $\othermeasurements$.%

\subsection{Computing Message Utility}
\label{ssec:utility}
We introduce several methods for computing the utility $\utility^{\other}$ of $\msg^{\ego} = (\ego, \location^{\ego}, \sensedlocations, \sensedvalues)$ (dropping the super- and subscripts for readability). 
The first,
\textit{Reward}, defines $\utility_{rw}$ as the reward $\ego$ believes $\other$ would receive, using the objective function $\objectivefunction$:
\begin{equation}
    \utility_{rw}(\sensedlocations, \sensedvalues, \other) = 
    \begin{cases}
      \infty, & \text{if} \quad | \sensedlocations^{\other} | = 0\\
      \objectivefunction(\sensedlocations ; \sensedvalues, GP^{\other}), & \text{otherwise}
    \end{cases}
\end{equation}

We call the second method \textit{Action}. Intuitively, it considers a message utile if $\ego$ believes that adding it to $GP^{\other}$ would result in $\other$ taking a different action than it would without those measurements. 
Then $\utility_{ac}$ is defined as $\infty$ if the actions $\sensedlocations_{\textrm{next,w/o}}$ and $\sensedlocations_{\textrm{next,w/}}$ differ as computed using \Cref{eq:greedy-planning}, and if they do not, $\utility_{ac}$ takes the value of $\utility_{rw}$:
\begin{equation}
    \utility_{ac}(\sensedlocations, \sensedvalues, \other) = 
    \begin{cases}
      \infty, & \text{if} ~ | \sensedlocations^{\other} | = 0\\
      \infty, & \text{if} ~ \sensedlocations_{\textrm{next,w/o}} \neq \sensedlocations_{\textrm{next,w/}} \\
      \utility_{rw}(\sensedlocations, \sensedvalues , \other), & \text{otherwise}
    \end{cases}
\end{equation}
Both $\utility_{rw}$ and $\utility_{ac}$ include the caveat that, if $\ego$ believes $\other$ has collected no measurements yet, the message should be sent.

The last three methods presented are simpler in that they do not make use of the model of $\other$.
The third method, \textit{Ego-reward}, uses the reward $\ego$ received from its measurement:
\begin{equation}
    \utility_{ego}(\sensedlocations, \sensedvalues, \other) = 
     \objectivefunction(\sensedlocations, \sensedvalues ; GP^{\ego})
\end{equation}
Finally, \textit{Always} and \textit{Never} are constant baselines, which always assign either $\infty$ or $0$:
\begin{equation}
    \utility_{al}(\sensedlocations, \sensedvalues, \other) = 
     \infty
\end{equation}
\begin{equation}
    \utility_{nv}(\sensedlocations, \sensedvalues, \other) = 
     0
\end{equation}

\subsection{Deciding to Communicate}
With the utilities of a message $\{\utility^{\other}\}$ computed for each $\other$, $\ego$ now faces the problem of deciding whether the utility overcomes the cost of loading the network with a message. 
Since the communication is broadcast-based rather than point-to-point, $\ego$ must first compute a single utility value $\aggutility$ for the team and then determine if it crosses a utility threshold $\utilthresh$ representing the cost.
For this, we describe a method 
for aggregating a set of utilities 
which computes 
\textit{expected utility}, weighting utilities by the estimated transmission success probability:
\begin{equation}
\label{eq:expected-utility}
    \aggutility_{EU}(\{\utility^{\other}\}) = \frac{1}{\nrobots - 1} \sum_{\other} (p_{est}(\other) \cdot \utility^{\other})
\end{equation}
\begin{equation}
\label{eq:estimated-success-probability}
    p_{est}(\other) = 
    \begin{cases}
      \successprob(d^{\other}), & \text{if} \quad d^{\other} \leq r \\
      \successprob(r), & \text{otherwise}
    \end{cases}
\end{equation}
where $d^{\other} = ||\location^{\ego} - \location^{\other}||_2$ and $r$ is as defined in \Cref{ssec:comms}.
The estimated success probability, computed in \Cref{eq:estimated-success-probability}, directly uses the probability based on the distance to $\other$ if it is known (i.e., if $\other$ is within the sensing radius $r$), but defaults to an optimistic probability estimate for any $\other$ that is beyond the sensing radius by using $r$ as the assumed distance.
This optimistic heuristic prevents overly restrictive utility values in the face of unknown teammate positions.
Note that 
if any $\utility^{\other}$ is $\infty$, then $\aggutility$ is considered past the threshold $\utilthresh$.
If $\aggutility \geq \utilthresh$, $\ego$ decides to transmit $\msg$.%

\section{Experiments and Discussion}

\begin{table*}
\centering
\caption{
  Number of attempted and successful transmitted messages per utility method. 
  Restricting transmissions with Action results in more than $40\%$ reduction in network load (attempted) while improving mean final quantile estimation error  by $1.84\%$ and only resulting in a less than $25\%$ decrease in successful transmissions.
  Results reported from $60$ experiments total of length 40 steps.
  All experiments use $\nrobots=4$, so 
  the most messages possibly sent total is $40\times(\nrobots-1)=120$.
  }
  \begin{tabular}{c||cc|cc|c}
    \textit{Method} & \textit{Attempted} & \textit{Median} & \textit{Successful} & \textit{Median} & \textit{Median} \\ 
     & \textit{($\mu, \sigma$)} & \textit{Decrease} & \textit{($\mu, \sigma$)} &  \textit{Decrease} &  \textit{RMSE Increase} \\ 
    \hline 
    %
%
%
Action & (66.6, 27.95) & 42.5\% & (38.2, 23.85) & 24.32\% & -1.84\% \\
Always & (120.0, 0.0) & -- & (44.93, 16.78) & -- & -- \\
Ego-reward & (76.2, 28.33) & 27.5\% & (34.4, 20.27) & 32.2\% & 5.96\% \\
Never & (0.0, 0.0) & 100.0\% & (0.0, 0.0) & 100.0 & 73.98\% \\
Reward & (68.2, 32.1) & 52.5\% & (36.0, 22.77) & 35.96\% & 16.03\% \\
  \end{tabular}
  \label{tab:comms}
\end{table*}

\begin{figure}
  \centering
 \includegraphics[width=\columnwidth]{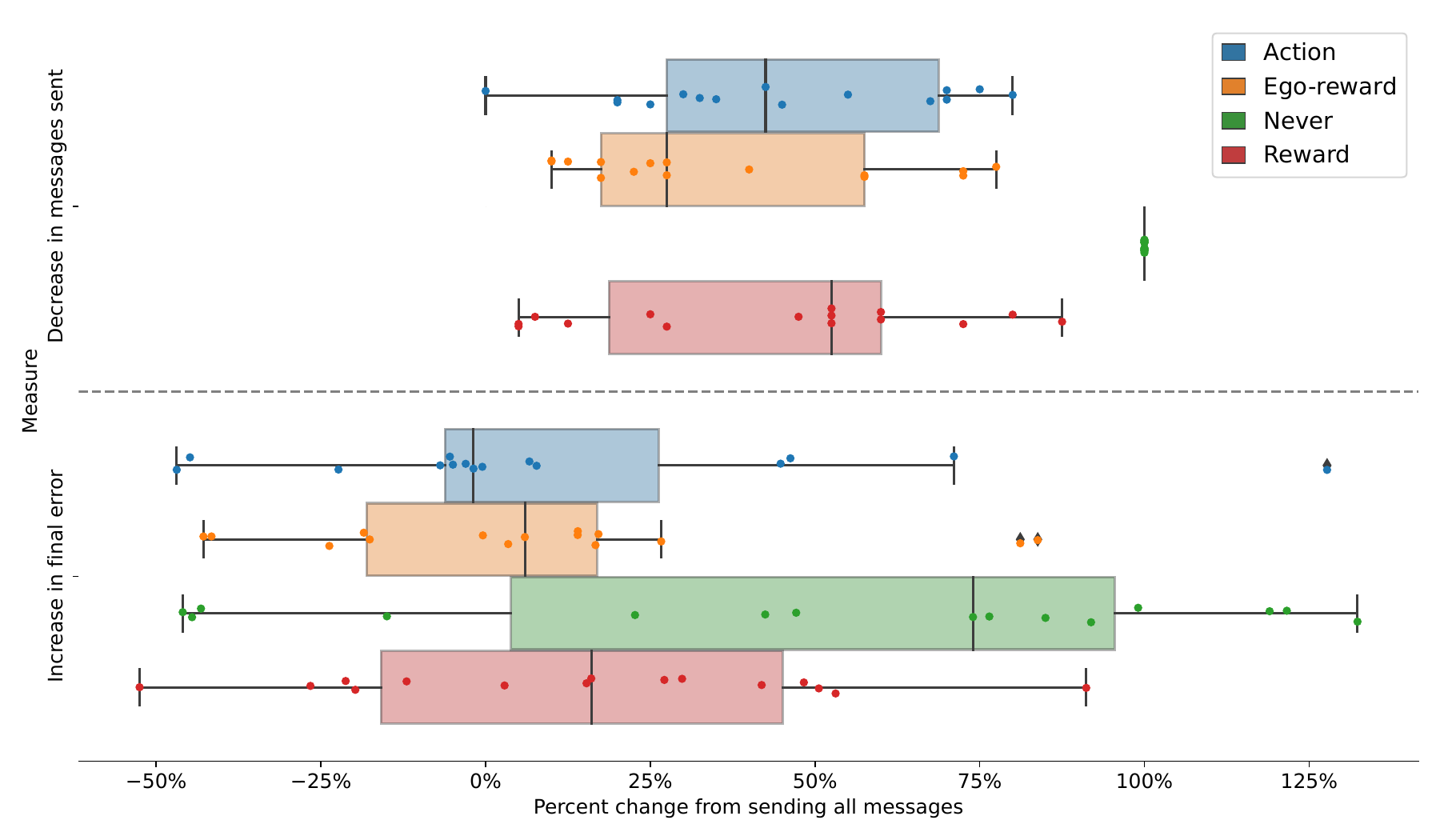}
  \caption{
  Tradeoff in network load vs. task performance for different utility methods, compared to always sending messages.
  Top: Percent decrease in messages sent (network load). Higher is better.
  Bottom: Percent increase in final quantile estimation error. Lower is better. 
  }
  \label{fig:comms-tradeoff}
\end{figure}

\begin{figure}
  \centering
 \includegraphics[width=\columnwidth]{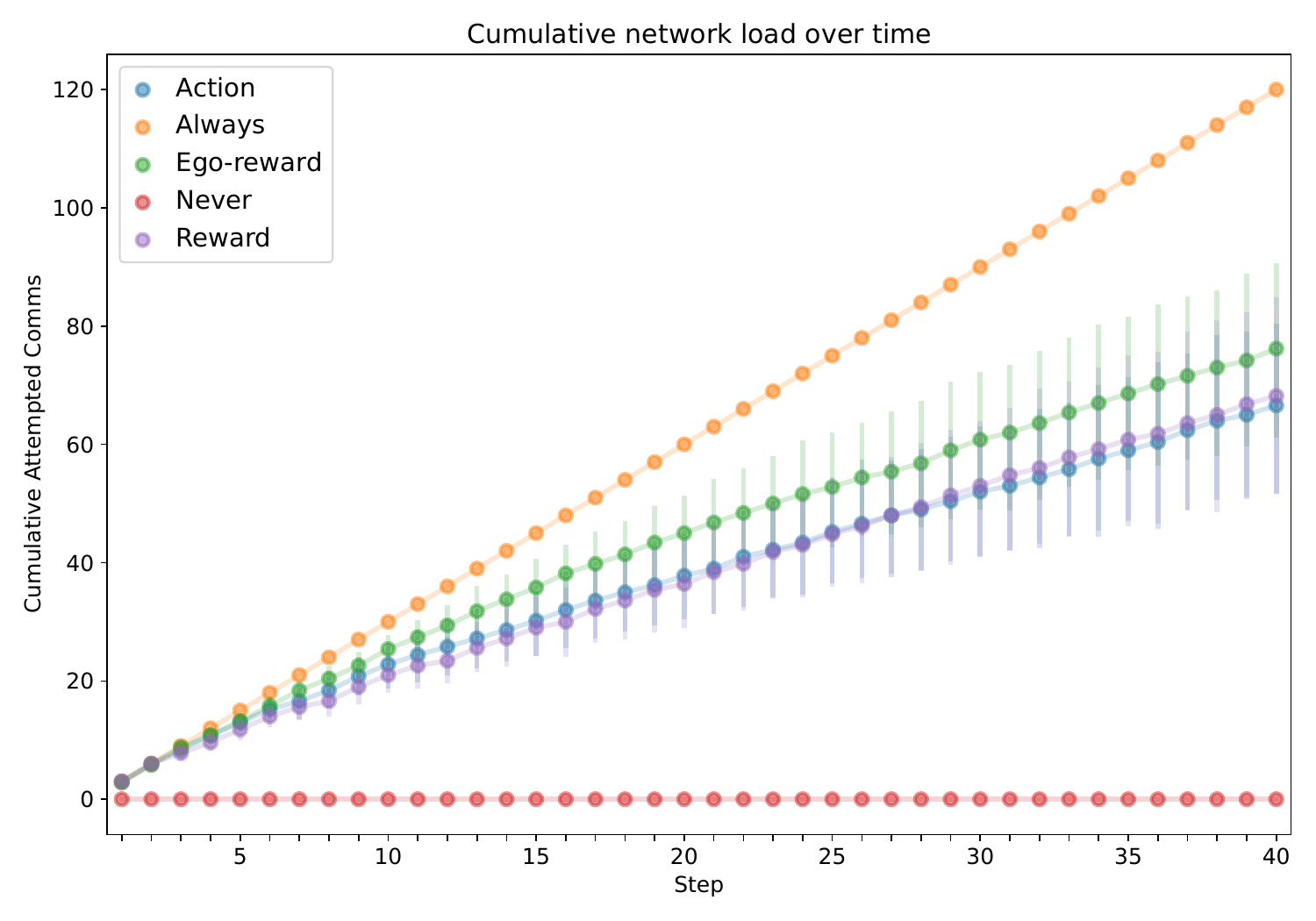}
  \caption{
  Cumulative network load, as measured by messages sent, over time. 
  Each step on the x-axis represents a robot taking a planning step. 
  }
  \label{fig:comms-cumulative}
\end{figure}

\begin{figure}
  \centering
 \includegraphics[width=0.6\columnwidth]{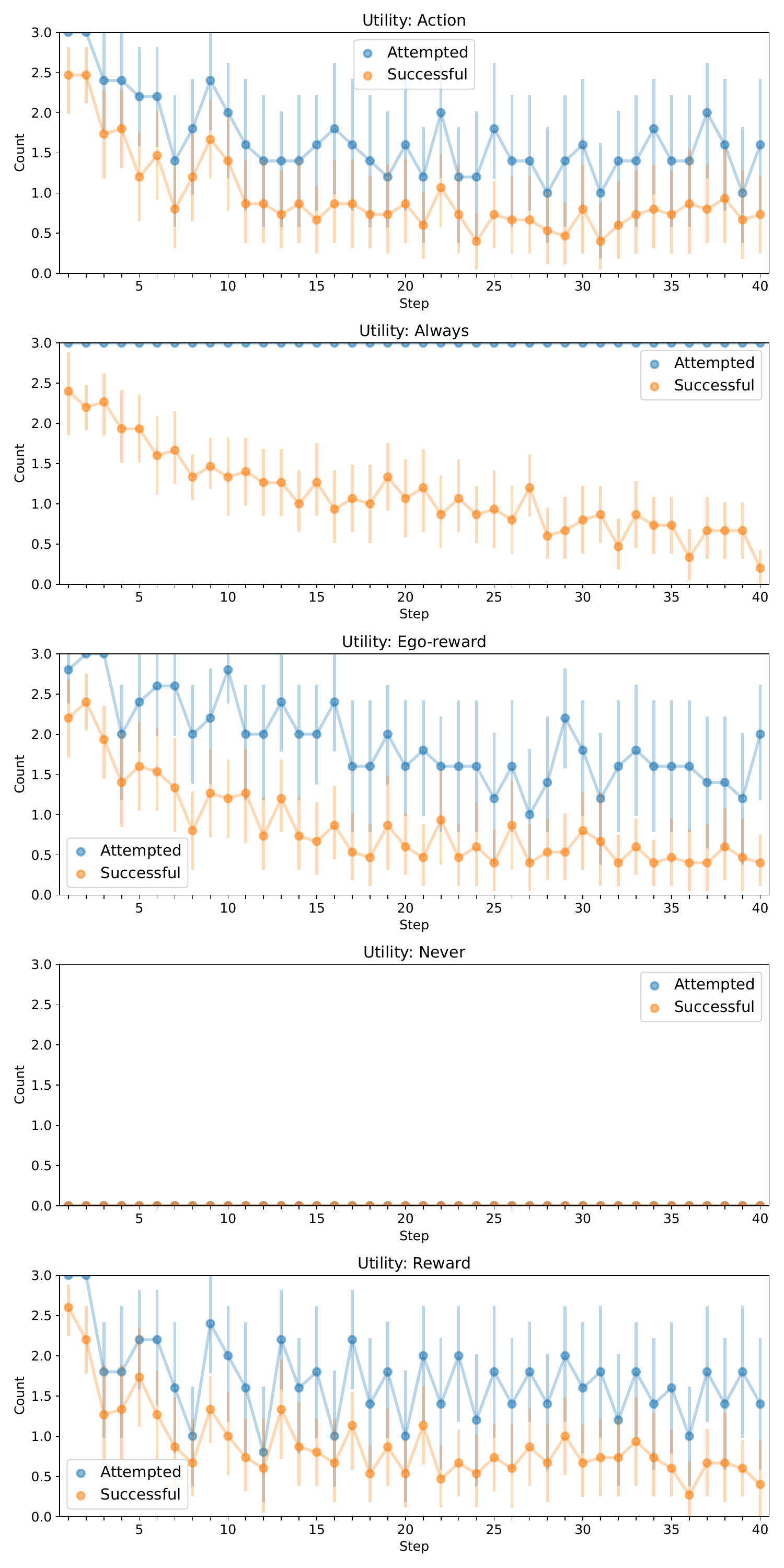}
  \caption{
  Amount of attempted and successful transmissions over time for different utility methods.
  Each step on the x-axis represents a robot taking a planning step. 
  }
  \label{fig:comms}
\end{figure}

We now present experimental results for the methods described in \Cref{sec:methods}.
We test performance on 3 sets of quantiles $\quantiles$: Quartiles (0.25, 0.5, 0.75), Median-Extrema (0.5, 0.9, 0.99), and Extrema (0.9, 0.95, 0.99).
For each combination of parameters, we run 5 seeds, resulting in 60 total instances.
We fix the following parameters: $\nrobots = 4$; budget $B_T = 10$ ($40$ steps total); initial spread $\spread = 0.2$; $\eta = 0.4$; communication radius $r = 15.0$; thresholds $\utilthresh_{rw} = 2.8 \times 10^{-4}$ and $\utilthresh_{ego} = 8.3 \times 10^{-5}$.
$\utilthresh_{rw}$ is used for Reward and Action, while $\utilthresh_{ego}$ is used for Ego-reward.
We selected these $\utilthresh$ experimentally by observing the received $\utility$ values during testing and setting $\utilthresh$ to the approximate 25th percentile of observed rewards. 
This can be interpreted as only messages that are in the top 75\% of utility being worth the cost of sending to the network, while those in the lower quarter are not.

The robots operate in simulation using a real-world dataset collected with a hyperspectral camera mounted on a drone from a lake in California. The environment is roughly $80 \times 60$ meters, discretized to $\gtlocations = 25 \times 25$ locations. Each set of measurements is an image of $5 \times 5$ pixel intensities for the 400nm channel of the hyperspectral image, normalized to $[0,1]$. To each measurement, we add zero-mean Gaussian noise with a standard deviation $0.05$.%

\subsection{Utility Methods Comparison}
We investigate the effectiveness of the different methods on performance in terms of both the network load as well as the final error (RMSE) between the estimated quantile values $\estimatedquantilevaluesfinal$ and $\quantilevalues$.
\Cref{tab:comms} displays numerical results, including a comparison of each method to Always in terms of the median percent change of each reported metric. 
These results are shown graphically in \Cref{fig:comms-tradeoff}, illustrating the tradeoff between network load and task performance.
\Cref{fig:comms-cumulative} shows the cumulative network load as measured by the attempted transmissions over time, while 
\Cref{fig:comms} shows the number of attempted and successful message transmissions plotted over time.
\Cref{fig:rmse-utility} shows boxplots of the final RMSE for different utility methods.
\Cref{fig:rasters} illustrates example paths taken 
superimposed on the hyperspectral image of the lake.

We observe in \Cref{fig:comms} that 
across all methods, successful transmissions generally decrease over time due to robots spreading out.
Comparing Never to the other methods in \Cref{fig:rmse-utility}, we can confirm our previous findings that inter-robot communication improves performance.
However, we are mainly interested in the effect of decreasing network load.

When we analyze results in \Cref{tab:comms} and \Cref{fig:comms-tradeoff}, we see that severe reductions in communication do not necessarily result in severe decreases in performance. 
We see that Action results in the best performance: 
Compared to Always, there is more than $40\%$ decrease in network load as measured by attempted message transmissions, but this translates to less than $25\%$ decrease in actual (successful) message transmissions. 
Most notably, in terms of final error, there is a $1.84\%$ decrease in mean RMSE, indicating that performance, on average, improves slightly even with restricted communication.

Other reward-based methods, Reward and Ego-reward, also perform favorably.
\Cref{fig:comms,fig:comms-cumulative} suggest that all three proposed methods have message transmission rates that generally decrease over time.
We speculate that this may be due to two reasons.
First, as robots explore more of the environment and receive more information from the others, their beliefs of their teammates will change such that $|\sensedlocations^{\other}|$ increases, leading to more accurate environment models and thus less value in receiving an additional set of measurements.
Second, with more exploration time, robots are more likely to be situated at further distances from each other. With distance, the expected utility will decrease according to the transmission probability, and even messages believed to be useful may not be deemed worthy because of their unlikelihood of successful delivery. 
We note, however, that the optimistic assumption in \Cref{eq:estimated-success-probability} will prevent a highly useful message from not being sent due to distance.

As shown qualitatively in \Cref{fig:comms,fig:comms-cumulative} and quantitatively in \Cref{tab:comms}, Action, Reward, and Ego-reward all have lower network load than Always, but their relative loads vary as do their final estimation error.
In particular, Reward results in the highest decrease in network load (excluding Never) at $52.5\%$, but also the largest increase in final error of more than $16\%$. 
Thus, such a metric may be valuable when network bandwidth is severely limited.
Ego-reward, on the other hand, has the smallest decrease in network load at $27.5\%$, but a relatively small increase in error at less than $6\%$. 
Ego-reward has the benefit of not requiring any onboard modeling of teammate beliefs; thus, this method could be powerful when onboard compute is limited. 
Action is the overall best performer, with a decrease in network load of $42.5\%$ and a decrease in final error of $1.84\%$. 
We suspect Action performs better because it directly considers the impact of a message on future behavior, leading to more informed and targeted measurement acquisition at the next step. 
We note that the comparison of relative network loads is more valuable here than the absolute value, as the threshold values used in \Cref{eq:expected-utility} are tunable parameters that should reflect actual network constraints in a real deployment.

\begin{figure}
  \centering
         \includegraphics[width=0.7\columnwidth]{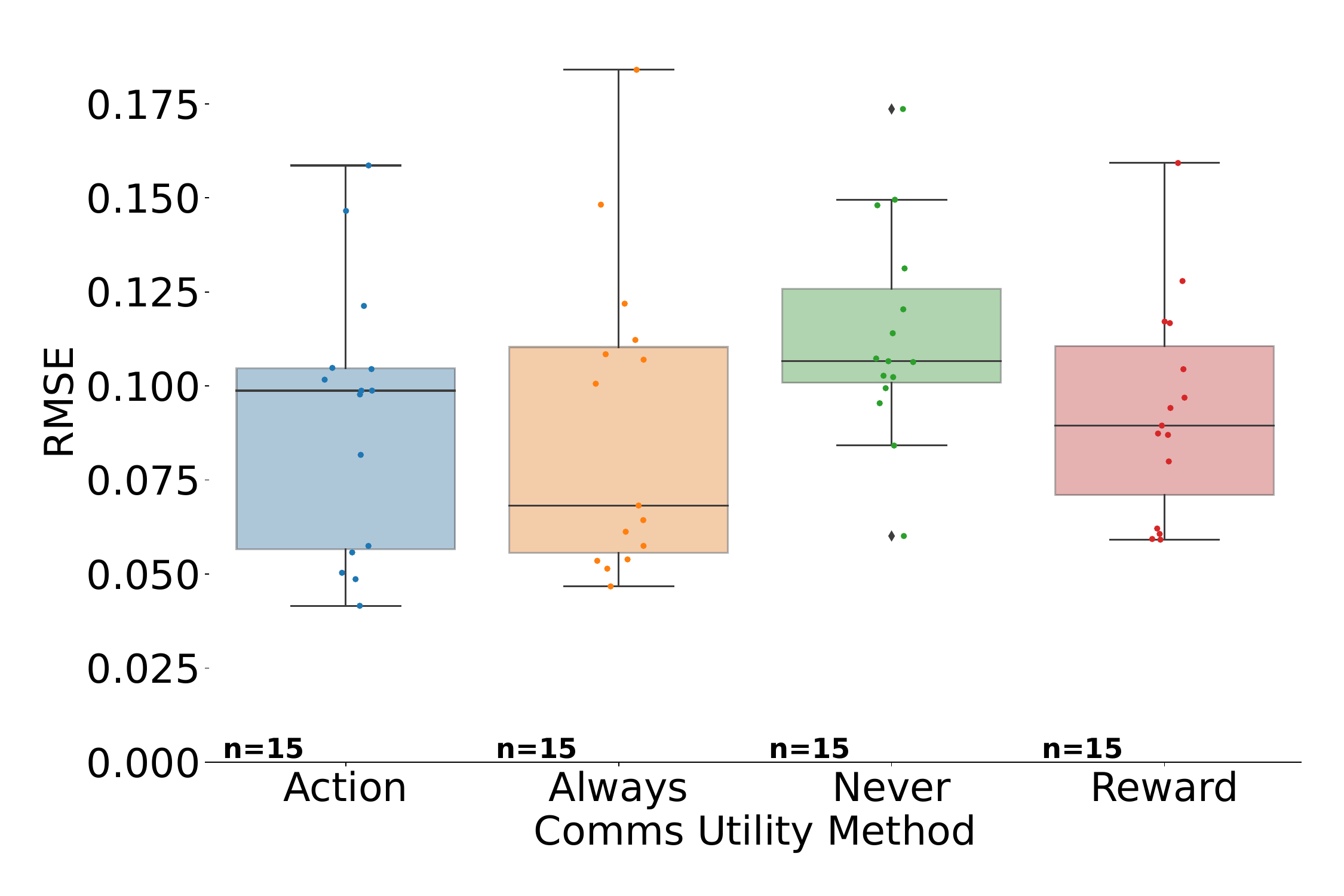}
  \caption{
  Final estimation error for different utility methods. 
  }
  \label{fig:rmse-utility}
  \end{figure}

\begin{figure}
  \centering
  %
 \includegraphics[trim={0.5cm 3cm 0.5cm 3.5cm}, clip, width=0.7\columnwidth]{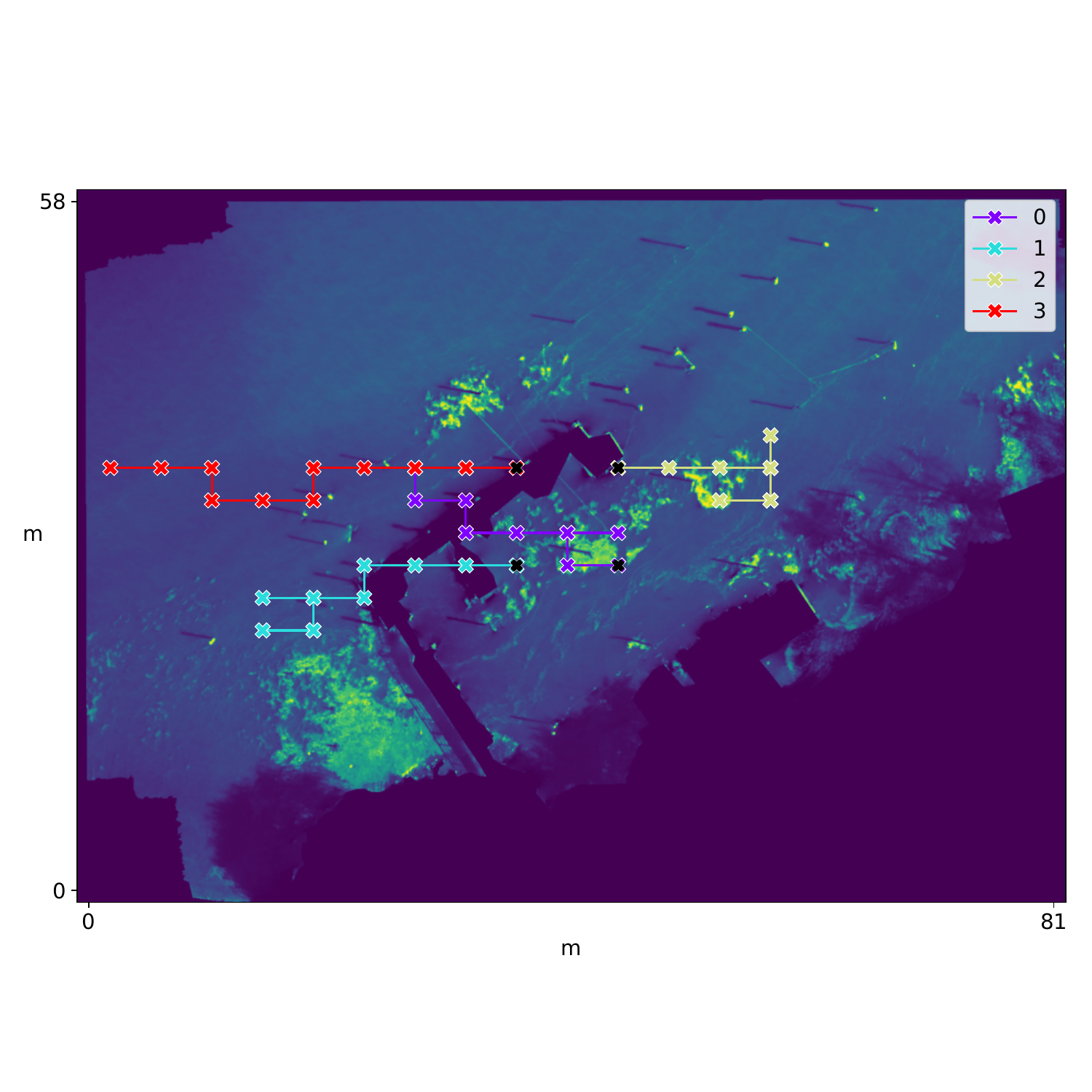}
  %
 \includegraphics[trim={0.5cm 3cm 0.5cm 3.5cm}, clip, width=0.7\columnwidth]{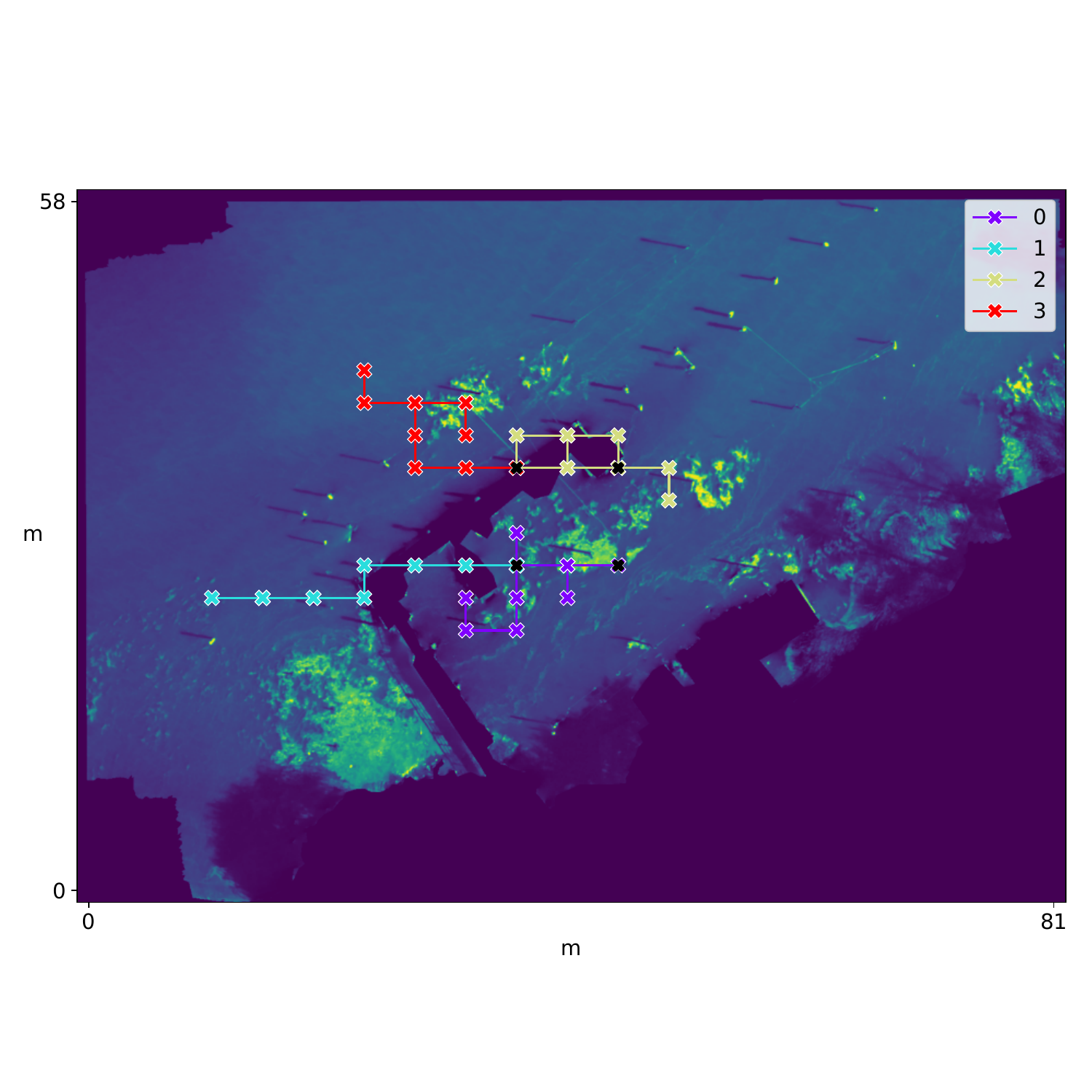}
  \caption{
  Example paths using two different utility methods with $\quantiles$ = $(0.9, 0.95, 0.99)$.
  Top: Action. Bottom: Reward.
}
  \label{fig:rasters}
\end{figure}

\subsection{Oracle Handshaking}
We implement a modification to the communication protocol where we assume an oracle handshaking routine, which gives $\ego$ an accurate confirmation handshake from each $\other$ indicating whether the message was successfully received; $\ego$ then only updates $M^{\other}$ when successful, leading to likely more correct models.
\Cref{fig:handshake} shows the final RMSE comparing the two; we see that there is a significant improvement with oracle handshaking (Wilcoxon signed-rank test $\wpres{1143}{0.05}$).
Though practically impossible, this suggests that implementing such a (probabilistic) routine in the network could improve performance further; however, the resulting burden of these messages must also be taken into account.
     
\begin{figure}
         \centering
         \includegraphics[width=0.7\columnwidth]{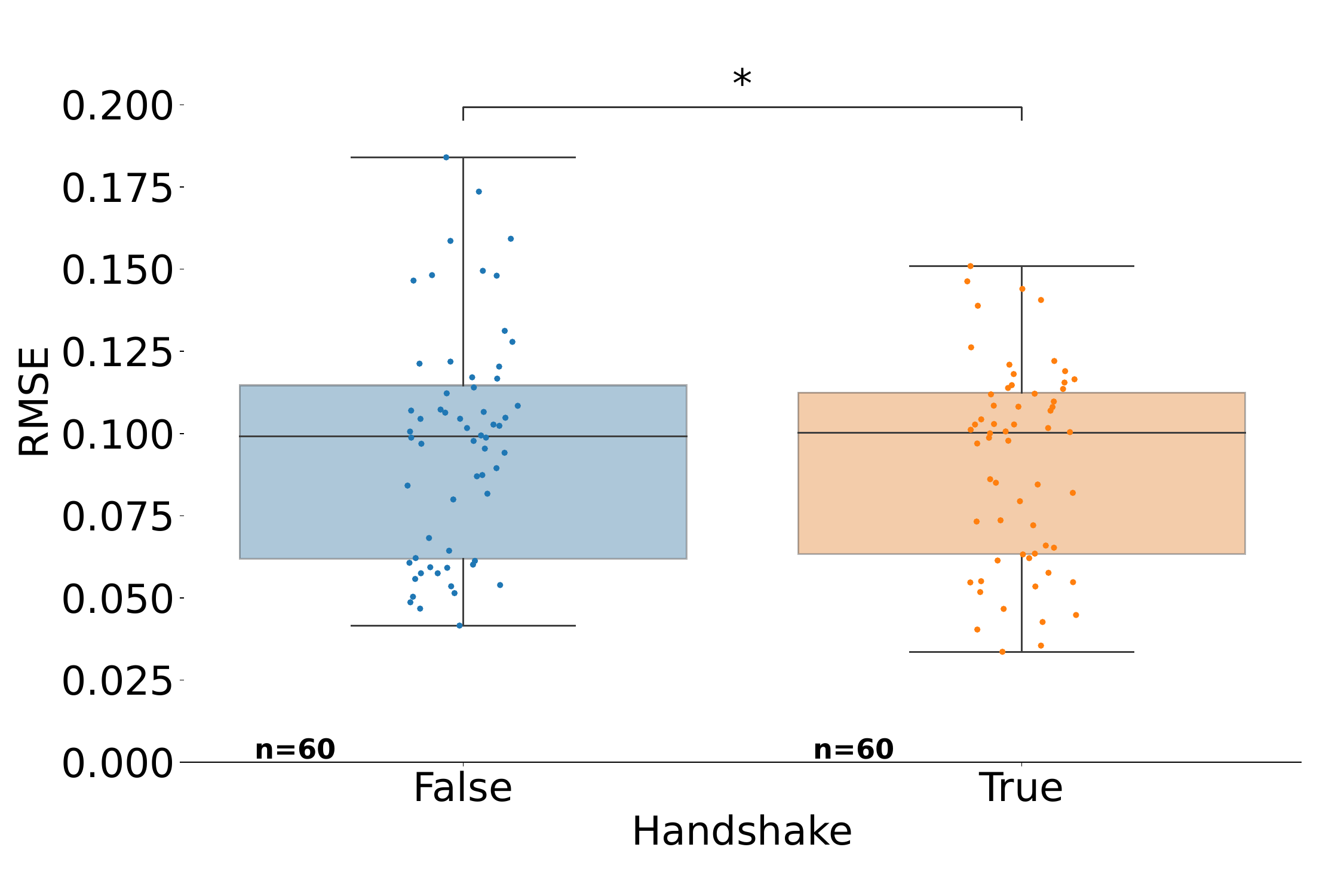}
  \caption{
  Right: the sender receives an oracle handshake
  when a message is successfully transmitted.
  Bar indicates significance under one-sided Wilcoxon signed-rank test; *: $\psig$.
  }
  \label{fig:handshake}
\end{figure}

\section{Conclusion}
In this chapter, 
we presented a study on restricted communication in the context of quantile estimation for environmental analysis.
We show that restricting message transmission based on whether the actions that the sender believes other robots will take at the next state would change with the message results in a $42\%$ decrease in network load while also decreasing quantile estimation error by $1.84\%$, when compared to transmitting every message possible.

Our analysis indicates that targeted communication during quantile estimation can have real benefits to real-world multirobot deployments when robots and their network are resource-constrained.
Importantly, these results allow users to relegate the task of deciding which messages to send to a network in an intelligent manner to the robots.

In addition to receipt handshaking, there are several interesting directions of future work.
First, we have chosen to implement robot planning using greedy action selection, but it would be valuable to extend this method to use a non-myopic planning strategy.
We also believe allowing the sender to choose the most valuable measurements from its collected data rather than restricting it to the most recent measurements may improve performance. 
However, this would lead to challenges in scalability as the environment is explored, and could be addressed by applying a rolling window over the previous measurements to mitigate this or by keeping and updating a limited set of high-utility measurements from which messages are selected.
Further, we are interested in methods for better approximating the models that a robot maintains for each of its teammates, which could be achieved by augmenting message headers with metadata such as that robot's current dataset size or current quantile estimates.
Finally, choosing the threshold value $\utilthresh$ is an important hyperparameter choice. It would be interesting and valuable to investigate ways in which this value could be automatically extracted or learned either offline or during exploration.

\chapter{Conclusion}
  \label{ch:conclusion}

In this thesis, we have proposed solutions which have advanced robot autonomy for long-horizon tasks in multiple different ways, by considering how task specification can be abstracted to interpretable and flexible descriptions.
To this end, we presented work in the domain of constrained manipulation and mobility, 
where we were able to avoid hand-specification of transition points between subtasks of a sequenced task by automatically discovering optimal constraint manifold intersection points.
We further proposed a way to learn the constraint manifolds themselves, bypassing the need to manually specify each constraint analytically.
We also presented work in the domain of robotic environmental monitoring. 
We proposed a flexible task specification framework allowing direct solutions to scientifically-grounded goals, as opposed to a generic solution.
This led us to systematically study how task specification with multirobot teams can be effectively used in these types of problems, finding that among various factors, inter-robot communication is key.
Finally, we developed a method which exploited this fact while respecting realistic network bandwidth restrictions, by autonomously selecting useful but limited information to share with the robot network.
These works, together, present advances along several axes for task specification abstraction, which we believe has the potential to lead to more interpretable, useful, and autonomous robot deployments for real-world, long-horizon tasks.

%
%
%
%
%
%
\clearpage
\chapter*{Bibliography}
\addcontentsline{toc}{chapter}{Bibliography}

\begin{singlespace}
  \patchcmd{\bibsetup}{\interlinepenalty=5000}{\interlinepenalty=10000}{}{}

  \setlength\bibitemsep{0.9\baselineskip}

  \raggedright
  \printbibliography[
    heading = none
  ]
\end{singlespace}


\chapter*{Appendices}
\addcontentsline{toc}{chapter}{Appendices}

\section*{Appendix A:~\smp}
\addcontentsline{toc}{section}{Appendix A:~\smp}
\label{appendix:psm}
As a reference, we provide in \Cref{sec:smp_single_tree} a variant of \smp~that grows a single tree over the manifold sequence without splitting it into individual subtrees, which we use as a baseline in the experiments. Afterwards, we summarize in \Cref{sec:rrt_extend} the extension step of RRT${}^*$ \cite{karaman2011sampling} that is called in the inner loop of the \smp~algorithm. 
In \Cref{sec:Probabilistic completeness} and \Cref{sec:Asymptotic optimality}, we provide a theoretical analysis regarding the probabilistic completeness and asymptotic optimality of the proposed algorithm.

\subsection*{\smp~(Single Tree)}
\addcontentsline{toc}{subsection}{\smp~(Single Tree)}
\label{sec:smp_single_tree}

\Cref{alg:smp_single_tree} describes a variant of \smp~that grows a single tree over the manifold sequence without splitting it into individual subtrees. For each node, we store the corresponding current and target manifold as well as the free configuration space, which is extracted by the functions \textit{GetCurrentManifold}, \textit{GetTargetManifold}, and \textit{GetFreeSpace} in \cref{algl:get_curr_manifold} -- \cref{algl:get_freespace}. Given this information, we call the same steering function that \smp~uses and the RRT$^*$ extend routine described in \Cref{alg:rrt_extend}. The main difference between \smp~(Single Tree) algorithm and \smp~is that \smp~(Single Tree) grows the tree on all manifolds while \smp~grows a subtree for every manifold in $\mathcal{M}$ in a sequential manner. 

\subsection*{RRT$^*$ Extension Step}
\addcontentsline{toc}{subsection}{RRT$^*$ Extension Step}
\label{sec:rrt_extend}
The RRT$^*$\_EXTEND routine (\Cref{alg:rrt_extend}) is a straightforward adaptation of the extension step in RRT$^*$ \cite{karaman2011sampling} to the \smp~problem.
\new{This routine checks the newly projected $q_\t{new}$ configuration for collision with the current free configuration space $C_{\t{free},i}$, performs rewiring steps, and eventually adds it as a node to the tree. 
The rewiring step is equivalent to the one in RRT$^*$ \cite{karaman2011sampling}, which checks if any nearby points can be reached with a shorter distance and updates its parent nodes accordingly.
It uses a distance function $c(q_0, q_1)\in \R_{\geq 0}$ between two nearby configurations and a function \emph{Cost}$(q)$ that stores the path costs from the root of the tree to a node $q$ in order to reconnect a new node $q$ to its neighbor that results in the shortest path from root $q_\t{start}$ to $q$.
Note that $c(q_0, q_1) \triangleq J(\overline{q_0q_1})$, where $\overline{q_0q_1}$ denotes the geodesic \cite{Boothby:107707} joining points $q_0$ and $q_1$ on the current manifold.
}

\begin{algorithm}[t]
\begin{algorithmic}[1]
	\algrenewcommand\algorithmicindent{1.5em}%
	\State $V = \{q_{\t{start}}\}$; $E = \emptyset$; $n=\text{len}(\mathcal{M})-1$
        \For{$k = 1$ to $n m$}
			\State $q_\t{rand} \leftarrow \t{Sample}(C)$ \label{alg:st_steer_start}
			\State $q_\t{near} \leftarrow \t{Nearest}(V, q_\t{rand})$
			\State $M_i \leftarrow \t{GetCurrentManifold}(q_\t{near})$ \label{algl:get_curr_manifold}
			\State $M_{i+1} \leftarrow \t{GetTargetManifold}(q_\t{near})$
			\State $C_{\t{free},i} \leftarrow \t{GetFreeSpace}(q_\t{near})$ \label{algl:get_freespace}
			\State{$q_\t{new} \leftarrow \t{\smp\_STEER} (\alpha, \beta, r, q_\t{near}, M_i, M_{i+1}$)}
	\If{\new{$||h_{M_{i}} (q_\t{new})|| > \ge$}}
	    \State continue
	\EndIf
 		    \State RRT$^*$\_EXTEND$(V_i, E_i, q_\t{near}, q_\t{new},$ \new{$C_{\t{free},i})$}
		\EndFor 
		\State \textbf{return} $\mathrm{OptimalPath}(V, E, q_\t{start}, M_{n+1})$
\end{algorithmic}
\caption{\smp~(Single Tree) $(\mathcal{M}, q_{\t{start}}, \ga, \gb, \ge, \rho, r, m)$}
\label{alg:smp_single_tree}
\end{algorithm}

\begin{algorithm}[t]
\begin{algorithmic}[1]
	\algrenewcommand\algorithmicindent{1.0em}%
			\If{$\t{CollisionFree}(q_\t{near}, q_\t{new}, $ \new{$C_{\t{free},i})$}}
				\State $Q_\t{near} = \t{Near}\Big(V, q_\t{new}, \min{} \Big\{\gg_\t{RRT*} \left(\tfrac{\log(|V|)}{ |V|}\right)^{1/k}\!, \ga\Big\}\Big)$
				\State $V \leftarrow V \cup \{q_{\t{new}} \}$
				\State $q_\t{min} = q_\t{near}$;  $c_\t{min} = \t{Cost}(q_\t{near}) + c(q_\t{near}, q_\t{new})$
				\For{$q_\t{near}\in Q_\t{near}$}
				\If{$\t{CollisionFree}(q_\t{near}, q_\t{new}, $ \new{$C_{\t{free},i})$} \textbf{and}\\$\quad\qquad\t{Cost}(q_\t{near}) + c(q_\t{near}, q_\t{new}) <c_\t{min}$}
					\State $q_\t{min} = q_\t{near}$; $c_\t{min} = \t{Cost}(q_\t{near}) + c(q_\t{near}, q_\t{new})$
				\EndIf
				\EndFor
				\State $E \leftarrow E \cup \{(q_\t{min}, q_\t{new})\}$
				\For{$q_\t{near}\in Q_\t{near}$}
				\If{$\t{CollisionFree}(q_\t{new}, q_\t{near}, $ \new{$C_{\t{free},i})$} \textbf{and}\\  $\qquad\quad\t{Cost}(q_\t{new}) + c(q_\t{new}, q_\t{near}) <\t{Cost}(q_\t{near})$}
					\State $q_\t{parent} = \t{Parent}(q_\t{near})$
					\State $E \leftarrow E ~\backslash~ \{(q_\t{parent}, q_\t{near})\}$
					\State $E \leftarrow E \cup \{(q_\t{new}, q_\t{near})\}$
				\EndIf
				\EndFor
				\State \textbf{return} True
			\Else
				\State \textbf{return} False
			\EndIf	
\end{algorithmic}
\caption{RRT$^*$\_EXTEND$~(V, E, q_\t{near}, q_\t{new}, $ \new{$C_{\t{free},i})$}}
\label{alg:rrt_extend}
\end{algorithm}

\subsection*{Probabilistic Completeness}
\addcontentsline{toc}{subsection}{Probabilistic Completeness}
\label{sec:Probabilistic completeness}
In this section, we prove the probabilistic completeness of \Cref{alg:smp_single_tree}. 
Note that, for ease of analysis, the analysis presented in this section and in the subsequent section assumes that $\rho = 0$. 
Moreover, in this analysis we refer to the collision-free region of a manifold when we allude to $M_i$.

Although in this section, we prove probabilistic completeness and asymptotic analysis for \Cref{alg:smp_single_tree}, they extend to \Cref{alg:smp} as well. We make this claim based on the following arguments. 
\Cref{alg:smp} is essentially a sequence of $n$ infinite loops in which, at the termination of the $i^{\text{th}}$ loop, the algorithm will have computed the optimal path from the start location to the intersection of manifold $M_i$ and $M_{i+1}$. On the contrary, \Cref{alg:smp_single_tree} consists of a single infinite loop which upon termination outputs the optimal path from start point through the sequence of manifolds to the goal manifold. 
It is important to note that each loop in the sequence of infinite loops in \Cref{alg:smp} can be initiated prior to terminating the previous loop. This argument stems from the fact that a search tree in a manifold can be initiated immediately when the nodes in the search tree associated with the previous manifold reach the intersection of the manifold and the previous manifold. Therefore, each infinite loop in \Cref{alg:smp} can be viewed as been executed in parallel and can be assumed to terminate simultaneously.  
Additionally, it is straightforward to understand that at the end of each infinite loop, \Cref{alg:smp} solves a subproblem of the problem that \Cref{alg:smp_single_tree} solves. 
Therefore, by the principle of optimality \cite[Chapter 5]{CoV_Liberzon}, both \Cref{alg:smp_single_tree} and \Cref{alg:smp} solve the same problem and are identical for our analysis. 

\begin{definition}
A collision-free path is said to have \emph{strong} $\delta$-\emph{clearance}  if the path lies entirely inside the $\delta$-interior of $\cup\mathcal{M}$, where $\cup \mathcal{M} \triangleq \cup_{i=1}^{n+1} M_i$ \cite{karaman2011sampling}. 
\end{definition}

We start by assuming that there exists a path $\Hat{\v\tau}$ with \emph{strong} $\delta$-\emph{clearance} connecting the goal manifold $M_{n+1}$ with the start configuration $q_\t{start}$ embedded on the sequence of manifolds under consideration. Let $L$ be the total length of the path, computed based on the pullback metric \cite{lee2006riemannian} of the manifolds due to their embedding in $\R^{k}$. Let $\xi>0$ be the minimum over the reach \cite{aamari2019estimating} of all manifolds in the sequence and the manifolds resulting from the pairwise intersection of adjacent manifolds in the sequence. Informally, the reach of a manifold is the size of an envelope around the manifold such that any point within the envelope and the manifold has an unique projection onto the manifold.  For the analysis presented here, we pick the steering parameter $\alpha$ such that $\xi \geq \alpha$. We use the notation $\textit{Tube}(M_i,\xi)$ to denote the set $\{x\in \R^k ~|~ d(x,M_i)<\xi\}$, where 
\begin{align}
    d(x,M_i) = \inf \{\|x-y\|_{\R^k}|\ y \in M_i\}
\end{align}
{is the minimum distance of the point $x$ to the manifold}. Now, if we define 
\begin{align}
\zeta_i = \sup\limits_{q \in \textit{Tube}(M_i,\xi)} \|h_{M_i}(q)\|,
\end{align}
then for the sake of analysis we assume that $r = \max\ \{\zeta_1,\cdots, \zeta_{n+1}\}$. If $\nu = \min\ (\delta, \alpha)$, then we define a sequence of points
\begin{align}
    \{&[q^1_0=q_\t{start},q^1_1, \cdots, q^1_{m_1}],[q^2_0,q^2_1, \cdots, q^2_{m_2}],\cdots,\nonumber\\  &[q^n_0,q^n_1, \cdots, q^n_{m_1}], [q^{n+1}_0]\}
\end{align} 
on $\Hat{\v\tau}$, such that $[q^i_0,q^i_1, \cdots, q^i_{m_i}] \in M_i$ and $\sum_{i=1}^{n} m_i = m$, where $m=\frac{5L}{\nu (n+1)}$ is the total number of points in the path. Without loss of generality, we assume that for every $M_i$ with $1<i<n+1$ there exists a non-negative integer $j<m_i$ such that  $q^i_0,q^i_1, \cdots, q^i_j \in M_{i-1} \cap M_{i}$ and $q^i_{j+1},q^i_{j+2}, \cdots, q^i_{m_i} \in M_{i} \setminus M_{i+1}$. In other words, there exist some points at the beginning of $[q^i_0,q^i_1, \cdots, q^i_{m_i}]$ that belong to $M_{i-1} \cap M_{i}$ and the rest of the points on the manifold belong exclusively to {$M_i$}.  For ease of analysis, the sequence of points on $\Hat{\v\tau}$ is chosen such that 
\begin{align}
      \|q^i_j - q^i_{j+1}\|_{\R^k} \leq \|q^i_j - q^i_{j+1}\|_{M_i} \leq \nu/5
\end{align}
\new{where $\|\cdot\|_{\R^k}$ and $\|\cdot\|_{M_i}$ are the distances between the points according to the metrics on the   ambient space and manifold respectively.}
We use $B(q^i_j,\nu) \subset \R^k$ to denote a ball of radius $\nu$ around $q^i_j$ under the standard Euclidean norm on $\R^k$. We denote the tree that is grown with RRT$^*$ as $T$.

We prove the probabilistic completeness of our strategy in two parts. The first part proves the probabilistic completeness of RRT$^*$ on a single manifold. In the second part, we prove that with probability one, the tree $T$ grown on a manifold can be expanded onto the next manifold as the number of samples tends to infinity. For the first part, as suggested in \cite[Section 5.3]{kingston2019ijrr}, the Lemma 1 in \cite{Kleinbort2019} can be shown to hold for the single manifold case and probabilistic completeness of RRT/RRT$^*$ on a manifold can be easily proven using \cite[Theorem 1]{Kleinbort2019}. We now focus on proving the second part that shows the probabilistic completeness of our strategy. 
We start by proving \Cref{lemma} which  enables us to prove that a tree grown with RRT$^*$ on a manifold can be extended to the next manifold. 

\begin{lemma}
\label{lemma}
Suppose that ${T}$ has reached $M_i$ and contains a vertex $\Tilde{q}^i_{m_i}$ such that $\Tilde{q}^i_{m_i} \in B(q^i_{m_i},\nu / 5)$. If a random sample $q_\t{rand}^{i+1}$ is drawn such that $q_\t{rand}^{i+1} \in B(q^{i+1}_{0},\nu / 5)$, then the straight path between $\textit{Project}(q_\t{rand}^{i+1}, M_{i} \cap M_{i+1})$ and the nearest neighbor $q_\t{near}$ of $q_\t{rand}^{i+1}$ in $T$ lies entirely in $C_{\t{free},i+1}$.
\end{lemma}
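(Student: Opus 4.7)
The plan is to chain a handful of short triangle-inequality bounds, each of which buys at most $\nu/5$ of Euclidean slack, and then to exploit the strong $\delta$-clearance of $\hat{\v\tau}$ together with convexity of Euclidean balls to conclude that the straight segment stays in the free space.

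First I would bound $\|q_\t{near}-q^{i+1}_0\|_{\R^k}$. Because $\tilde q^i_{m_i}$ already lies in $T$, the nearest-neighbor property forces
\begin{equation*}
\|q_\t{near}-q_\t{rand}^{i+1}\|_{\R^k} \;\leq\; \|\tilde q^i_{m_i}-q_\t{rand}^{i+1}\|_{\R^k}.
\end{equation*}
Applying the triangle inequality along $\tilde q^i_{m_i}\to q^i_{m_i}\to q^{i+1}_0\to q_\t{rand}^{i+1}$ and using the three $\nu/5$-bounds that hold by hypothesis (the ball around $q^i_{m_i}$, the discretization spacing between adjacent points of $\hat{\v\tau}$, and the ball around $q^{i+1}_0$), I get $\|q_\t{near}-q_\t{rand}^{i+1}\|_{\R^k} \leq 3\nu/5$, and hence $\|q_\t{near}-q^{i+1}_0\|_{\R^k}\leq 4\nu/5$.

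Second, I would argue that $\textit{Project}(q_\t{rand}^{i+1}, M_i\cap M_{i+1})$ is well-defined and lies within $\nu/5$ of $q^{i+1}_0$. Since $q^{i+1}_0\in M_i\cap M_{i+1}$ by construction of the discretization at the transition, and $\|q_\t{rand}^{i+1}-q^{i+1}_0\|_{\R^k}\leq \nu/5 \leq \alpha \leq \xi$, the sample sits inside $\textit{Tube}(M_i\cap M_{i+1},\xi)$; by the definition of reach, the projection $q_\t{proj}$ is unique, and it is the closest point of $M_i\cap M_{i+1}$ to $q_\t{rand}^{i+1}$, so $\|q_\t{proj}-q_\t{rand}^{i+1}\|_{\R^k}\leq \|q^{i+1}_0-q_\t{rand}^{i+1}\|_{\R^k}\leq \nu/5$. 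Combining with the previous triangle inequality gives $\|q_\t{proj}-q^{i+1}_0\|_{\R^k}\leq 2\nu/5$.

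Finally, I would close the argument by convexity. Both endpoints $q_\t{proj}$ and $q_\t{near}$ lie in the closed Euclidean ball $\overline{B}(q^{i+1}_0,\nu)$ (since $\max\{2\nu/5,4\nu/5\}\leq \nu$), and the closed ball is convex, so the entire straight segment joining them is contained in $\overline{B}(q^{i+1}_0,\nu)$. Because $\nu=\min(\delta,\alpha)\leq \delta$ and $q^{i+1}_0$ lies on $\hat{\v\tau}$ which has strong $\delta$-clearance, every point of the segment lies in the $\delta$-interior of $\cup\mathcal{M}$ and is in particular collision-free with respect to $C_{\t{free},i+1}$.

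The main obstacle I anticipate is the bookkeeping: each hypothesis individually contributes only $\nu/5$, but one must be careful that the discretization of $\hat{\v\tau}$ genuinely places $q^i_{m_i}$ within $\nu/5$ of a point on $M_i\cap M_{i+1}$ (which is why the endpoint $q^{i+1}_0$ is chosen on the intersection), that the choice $r=\max\zeta_i$ together with $\alpha\leq \xi$ really puts $q_\t{rand}^{i+1}$ inside a reach-sized tube of the intersection manifold so that $\textit{Project}$ is single-valued and non-expansive, and that the strong-clearance property is used with respect to the ambient Euclidean metric (not the pullback metric) since the CollisionFree routine operates in ambient space.
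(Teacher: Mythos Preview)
Your proposal is correct and follows essentially the same approach as the paper: both arguments use the nearest-neighbor inequality together with a chain of triangle-inequality bounds (each contributing at most $\nu/5$) to place both $q_\t{near}$ and the projected sample inside a ball of radius $\nu\leq\delta$ centered on the reference path, and then invoke strong $\delta$-clearance. The only cosmetic differences are that the paper centers the ball at $q^i_{m_i}$ rather than $q^{i+1}_0$ (obtaining the slightly looser bound $\nu$ instead of your $4\nu/5$), and it leaves the convexity-of-balls step implicit; your version is, if anything, a bit more careful.
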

\begin{proof}
By definition $\|q_\t{near}-q_\t{rand}^{i+1}\| \leq \|\Tilde{q}^i_{m_i}-q_\t{rand}^{i+1}\|$, then using the triangle inequality and some algebraic manipulation similar to that used in the proof of \cite[Lemma 1]{Kleinbort2019}, we can show that
\begin{align}
    \|q_\t{near}-q^i_{m_i}\| ~\leq~ &\|\Tilde{q}^i_{m_i}-q^i_{m_i} \| + 2 \|x^{i+1}_{0} - q_\t{rand}^{i+1} \| \nonumber\\ &+2\|q^i_{m_i}-q^{i+1}_{0}\|
\end{align}
which leads to $\|q_\t{near}-q^i_{m_i}\| \leq 5 \frac{\nu}{5}\leq \nu$ and \mbox{$q_\t{near} \in B(q^i_{m_i},\nu)$.} Again, by the triangle inequality, we can show that \mbox{$\|q_\t{near}-q_\t{rand}^{i+1}\| \leq \nu$.} As the sample $q_\t{rand}^{i+1}$ is taken from within the reach of $M_i$ there exists a unique nearest point of $q_\t{rand}^{i+1}$ on $M_i$ \cite{aamari2019estimating}. In other words, the operation $\textit{Project}(q_\t{rand}^{i+1}, M_i \cap M_{i+1})$ is well-defined. Therefore, as
\begin{align}
    \|q_\t{near}-\textit{Project}(q_\t{rand}^{i+1}, M_i \cap M_{i+1})\| \leq \|q_\t{near}-q_\t{rand}^{i+1}\| \leq \nu,
\end{align} the straight path between $\textit{Project}(q_\t{rand}^{i+1},  M_i \cap M_{i+1})$ and   $q_\t{near}$ lies entirely in $C_{\t{free},i+1}$.
\end{proof}

Note that the above lemma is an extension of \cite[Lemma 1]{Kleinbort2019}. The next theorem will prove that with probability one \smp~will yield a path as the number of samples goes to infinity. Since we are only concerned about the transition of $T$ from one manifold to the next, we focus on the iterations in \smp~after $T$ reaches a neighborhood of $q^i_{m_i} \in M_i$. We refer to such an iteration as a \textit{boundary iteration}. 

\begin{theorem}
The probability that \smp~fails to reach the final manifold $M_{n+1}$ from an initial configuration after $t$ boundary iterations is bounded from above by
$a \exp{(-b t)}$, for some positive real numbers $a$ and $b$.
\end{theorem}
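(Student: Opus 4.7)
The plan is to argue via a union of geometric tail bounds, one for each of the $n$ manifold transitions that must be completed to reach $M_{n+1}$. The key observation is that Lemma 1 already isolates, at each boundary, a single geometric event whose occurrence suffices to advance the tree from manifold $M_i$ onto manifold $M_{i+1}$: namely, that a uniform sample $q_\t{rand}^{i+1}$ lands in the ball $B(q^{i+1}_0, \nu/5)$, provided the tree already contains a vertex in $B(q^i_{m_i}, \nu/5)$. I would first show that the probability of this sampling event at any given iteration is bounded below by some uniform constant $p_0 > 0$, since $\t{vol}(B(q^{i+1}_0, \nu/5))$ is strictly positive (recall $\nu = \min(\delta, \alpha) > 0$) and the configuration space $C$ has finite volume; thus $p_0 = \min_i \t{vol}(B(q^{i+1}_0, \nu/5))/\t{vol}(C)$.

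Next I would bootstrap the single-manifold probabilistic completeness result (invoked from \cite[Theorem 1]{Kleinbort2019}) to conclude that, conditional on the tree having reached $B(q^i_{m_i}, \nu/5)$ on $M_i$, the tree in fact contains a vertex in that ball almost surely in a finite number of iterations. Combining this with the $p_0$ lower bound and Lemma 1, each boundary iteration at the $i$-th transition succeeds independently with probability at least $p_0$ (conditioned on the previous transitions having already been made). Let $N_i$ denote the number of boundary iterations until transition $i$ succeeds; then $N_i$ is stochastically dominated by a geometric random variable with parameter $p_0$, so $P(N_i > s) \leq (1-p_0)^s$.

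To obtain the theorem, I would bound the probability that the total number of boundary iterations needed to complete all $n$ transitions exceeds $t$. Setting $N = \sum_{i=1}^{n} N_i$, a standard union bound gives
\begin{align*}
P(N > t) \leq \sum_{i=1}^{n} P\bigl(N_i > t/n\bigr) \leq n (1-p_0)^{t/n}.
\end{align*}
Setting $a = n$ and $b = -\tfrac{1}{n}\log(1-p_0) > 0$ yields the desired bound $a \exp(-bt)$. Alternatively, one could apply a Chernoff bound to the sum of $n$ independent sub-geometric random variables, which yields the same exponential decay with possibly sharper constants.

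The main obstacle I expect is the subtle issue of conditioning: the events ``tree has reached $B(q^i_{m_i}, \nu/5)$'' are not strictly independent across $i$, and the tree structure together with the rewiring operations of RRT$^*$ could in principle introduce correlations. I would handle this by arguing, in the style of \cite{Kleinbort2019}, that once the tree possesses a vertex arbitrarily close to $q^i_{m_i}$, the \emph{fresh} random samples used in subsequent iterations are independent of the tree history, so the lower bound $p_0$ applies uniformly and the stochastic domination by the geometric distribution is legitimate. A secondary subtlety is ensuring that the projection step in $\t{\smp\_STEER}$ succeeds when triggered by a sample in $B(q^{i+1}_0, \nu/5)$; this is handled by the choice $r \geq \zeta_{i+1}$ and $\alpha \leq \xi$, which guarantee that $q_\t{rand}^{i+1}$ lies within the reach of $M_i \cap M_{i+1}$ so that $\textit{Project}$ is well-defined, as already noted in the proof of Lemma 1.
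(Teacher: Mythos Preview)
Your approach is sound and lands on the same probabilistic model as the paper---each boundary iteration is a Bernoulli trial with a uniform positive success probability, and $n$ successes are needed to traverse all $n$ intersections---but you handle the tail bound differently. The paper works directly with the binomial count $\Pi_t$ of successes in $t$ trials and bounds $\mathbb{P}[\Pi_t<n]\le n\binom{t}{n-1}(1-p)^t\le \frac{n}{(n-1)!}\,t^n e^{-pt}$, leaving a polynomial prefactor $t^n$ that still has to be absorbed into the exponential to reach the stated form. Your route---writing the total boundary time as $\sum_{i=1}^n N_i$ with each $N_i$ stochastically dominated by a geometric variable, then applying the pigeonhole union bound $P(\sum N_i>t)\le n(1-p_0)^{t/n}$---is a legitimate alternative and arguably cleaner, since it delivers $a e^{-bt}$ directly. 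Both are standard negative-binomial tail arguments.

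There is one gap to fix. You claim the per-iteration success probability is at least $p_0=\min_i |B(q^{i+1}_0,\nu/5)|/|C|$, invoking Lemma~1. But Lemma~1 only certifies that \emph{if} the sample lands in the ball then the straight segment from $q_\t{near}$ to the projection onto $M_i\cap M_{i+1}$ is collision-free; it does not guarantee that \smp\_STEER actually produces that projected point. The steering routine first tosses a coin (SteerPoint with probability $1-\beta$) and then draws a uniform threshold in $[0,r]$ to decide whether to project onto $M_i\cap M_{i+1}$ or onto $M_i$ alone. The paper accounts for both by taking $p=\frac{|B|}{|C|}(1-\beta)\frac{r-\|h_{M_{i+1}}(q_\t{new})\|}{r}$, strictly smaller than your $p_0$. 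Your argument survives verbatim once $p_0$ is replaced by this corrected constant, so the omission is cosmetic rather than structural; but your final paragraph on reach and well-definedness of \textit{Project} addresses a different issue and does not close this gap.
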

\begin{proof}
Let $B(q^i_{m_i},\nu / 5)$ contain a vertex of $T$. Let $p$ be the probability that in a boundary iteration a vertex contained in $B(q^{i+1}_{0},\nu / 5)$ is added to $T$. From \Cref{lemma}, if we obtain a sample $q_\t{rand}^{i+1} \in B(q^{i+1}_{0},\nu / 5)$, then $T$ can reach $\textit{Project}(q_\t{rand}^{i+1}, M_i \cap M_{i+1})$. The value $p$ can be computed as a product of the probabilities of two events: 1) a sample is drawn from $B(q^{i+1}_{0},\nu / 5)$, and 2) $T$ is extended to include $\textit{Project}(q_\t{rand}^{i+1}, M_i \cap M_{i+1})$. 
The probability that a sample is drawn from $B(q^{i+1}_{0},\nu / 5)$ is given by $|B(q^{i+1}_{0},\nu / 5)|/|C|$, where $|B(q^{i+1}_{0},\nu / 5)|$ and $|C|$ are the volumes of $B(q^{i+1}_{0},\nu / 5)$ and $C$ respectively. From the proof of \Cref{lemma}, we infer that the line joining $q_\t{near}$ and $q_\t{rand}^{i+1}$ is collision-free. 
Thus $T$ will be augmented with a new vertex contained in $M_i \cap M_{i+1}$ if line 9 and 14 in \Cref{alg:smp} are executed. The probability of execution of line 9 and 14 is $(1-\beta) \frac{r-\|h_{M_{i+1}}(q_\t{new})\|}{r}$, which results in the joint probability
\begin{align}
    p= \frac{|B(x^{i+1}_{0},\nu / 5)|}{|C|}(1-\beta)\frac{r-\|h_{M_{i+1}}(q_\t{new})\|}{r}.
\end{align}
Further,  $\|h_{M_{i+1}}(q_\t{new})\| \approx 0$ as $q_\t{new}$ is very close to $M_{i+1}$ and $ |B(x^{i+1}_{0},\nu / 5)| \ll |C|$, thus $\beta$ can be picked such that $p < 0.5$. For \smp~to reach $M_{i+1}$ from the initial point, the boundary iteration should successfully extend $T$ for at least $n$ times (there are $n$ intersections between the $n+1$ sequential manifolds). The process can be viewed as $t > n$ Bernoulli trials with success probability $p$. Let $\Pi_t$ denote the number of successes in $t$ trials,  then 
\begin{align}
    \mathbb{P}\left[\Pi_t < n\right] = \sum_{i=0}^{n-1}{t\choose i}p^i(1-p)^{t-i}\comma
\end{align}
where $\mathbb{P}[\cdot]$ denotes the probability of occurrence of an event. By using the fact that $n \ll t$, this can be upper bounded as \begin{align}
    \mathbb{P}\left[\Pi_t < n\right] &\leq \sum_{i=0}^{n-1}{t\choose n-1}p^i(1-p)^{t-i},
\end{align}
as $p < 0.5$,
\begin{align}
    \mathbb{P}\left[\Pi_t < n\right] \leq {t\choose n-1} \sum_{i=0}^{n-1} (1-p)^{t}~.
\end{align}
Applying $(1-p)\leq \exp{(-p)}$ yields
\begin{align}
    \mathbb{P}\left[\Pi_t < n\right] \leq n {t\choose n-1}  (\exp{(-pt)}).
\end{align}
Through further algebraic simplifications, we can show that 
\begin{align}
    \mathbb{P}\left[\Pi_t < n\right] \leq \frac{n}{(n-1)!}t^n\exp{(-p t)}~.
\end{align}
\end{proof}
As the failure probability of \smp~exponentially goes to zero as $t \to \infty$, \smp~is probabilistically complete. 

\subsection*{Asymptotic Optimality}
\addcontentsline{toc}{subsection}{Asymptotic Optimality}
\label{sec:Asymptotic optimality}
For ease of reference, we begin by giving some definitions and stating some lemmas initially introduced in \cite{karaman2011sampling}, which are required for proving the asymptotic optimality of \smp. 

\begin{definition}
A path $\tau_1$ is said to be homotopic to $\tau_2$ if there exists a continuous function $H : [0,1] \times [0,1] \rightarrow \cup \mathcal{M}$, called the \emph{homotopy} \cite{hatcher2002algebraic}, such that $H(t,0) = \tau_1(t)$, $H(t,1) = \tau_2(t)$, and $H(\cdot, \alpha)$ is a collision-free path for all $\alpha \in [0,1]$. 
\end{definition}

\begin{definition}
A collision-free path $\tau : [0,1] \rightarrow \cup\mathcal{M}$ is said to have \emph{weak} $\delta$-\emph{clearance} \cite{karaman2011sampling} if there exists a path $\tau'$ that has strong $\delta$-clearance and there exists a homotopy $H : [0,1] \times [0,1] \rightarrow \cup\mathcal{M}$ with $H(t,0)=\tau(t)$, $H(t,1)=\tau'(t)$, and for all $\alpha \in (0,1]$ there exists $\delta_{\alpha}>0$ such that $H(t, \alpha)$ has strong $\delta_{\alpha}$-clearance.
\end{definition}

\begin{lemma}
\label{lemma: strong to weak}
\cite[Lemma 50]{karaman2011sampling}
Let $\tau^*$ be a path with weak $\delta$-clearance. Let $\{\delta_n\}_{n\in \mathbb{N}}$ be a sequence of real numbers such that $\lim{n \rightarrow \infty} \delta_n=0$ and $0\leq\delta_n\leq\delta$ for all $n \in \mathbb{N}$. Then, there exists a sequence $\{\tau_n\}_{n\in \mathbb{N}}$ of paths such that $\lim{n \rightarrow \infty} \tau_n=\tau^*$ and $\tau_n$ has strong $\delta_n$-clearance for all $n \in \mathbb{N}$.
\end{lemma}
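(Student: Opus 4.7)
The idea is to mine the homotopy supplied by the weak clearance definition for a one-parameter family of strongly clear paths, and then pick a sequence of homotopy parameters that tends to $0$ slowly enough so that the corresponding clearances dominate the given sequence $\{\delta_n\}$.

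First, invoke the definition of weak $\delta$-clearance to obtain a homotopy $H : [0,1] \times [0,1] \to \cup\mathcal{M}$ with $H(\cdot,0) = \tau^*$, $H(\cdot,1) = \tau'$ (the latter having strong $\delta$-clearance), and such that for every $\alpha \in (0,1]$ the intermediate path $H(\cdot,\alpha)$ has strong $\delta_\alpha$-clearance for some $\delta_\alpha > 0$. Define the \emph{clearance profile}
\[
c(\alpha) \;=\; \inf_{t \in [0,1]} d\bigl(H(t,\alpha),\, \partial(\cup\mathcal{M})\bigr),
\]
so that $c(\alpha) \geq \delta_\alpha > 0$ for $\alpha \in (0,1]$ and $c(1) \geq \delta$.

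Next, I would show $c$ is continuous on $[0,1]$. Since $H$ is continuous on the compact square $[0,1]^2$, it is uniformly continuous; hence for any $\varepsilon > 0$ there is $\eta > 0$ with $\|H(t,\alpha) - H(t,\alpha')\| < \varepsilon$ whenever $|\alpha - \alpha'| < \eta$, uniformly in $t$. The function $q \mapsto d(q,\partial(\cup\mathcal{M}))$ is $1$-Lipschitz, so $|c(\alpha) - c(\alpha')| \leq \varepsilon$, giving continuity of $c$.

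Now construct $\alpha_n$. For each $n$, set
\[
\alpha_n \;=\; \inf\bigl\{\alpha \in (0,1] \;:\; c(\alpha') \geq \delta_n \text{ for all } \alpha' \in [\alpha, 1]\bigr\},
\]
and take $\tau_n = H(\cdot,\alpha_n)$. Continuity of $c$ and the fact $c(1) \geq \delta \geq \delta_n$ make the set nonempty, and continuity forces $c(\alpha_n) \geq \delta_n$, so $\tau_n$ has strong $\delta_n$-clearance. To see $\alpha_n \to 0$, split on cases: if $c(0) > 0$ then $\tau^*$ already has positive strong clearance and any choice $\alpha_n \downarrow 0$ (e.g.\ $\alpha_n = 1/n$ once $\delta_n \leq c(0)$) works; if $c(0) = 0$, then by continuity $c(\alpha) \to 0$ as $\alpha \to 0^+$, and because $\delta_n \to 0$ the defining infimum must also tend to $0$ — for if some subsequence $\alpha_{n_k}$ were bounded below by $\varepsilon > 0$, then on $[0,\varepsilon/2]$ one would need $c$ to drop below every $\delta_{n_k}$, but continuity of $c$ together with the defining ``for all $\alpha' \in [\alpha_{n_k},1]$'' clause yields a contradiction via the intermediate value theorem. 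Finally, uniform continuity of $H$ in $\alpha$ gives $\tau_n = H(\cdot,\alpha_n) \to H(\cdot,0) = \tau^*$ uniformly on $[0,1]$.

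\textbf{Main obstacle.} The delicate step is the monotone-like selection of $\alpha_n$: the map $c$ need not be monotone in $\alpha$, so a naive choice $\alpha_n = \inf\{\alpha : c(\alpha) \geq \delta_n\}$ could stay bounded away from $0$ even as $\delta_n \to 0$ (the path could wander closer and then farther from the obstacles along the homotopy). Using the ``for all $\alpha' \in [\alpha,1]$'' quantifier in the definition of $\alpha_n$ and combining it with continuity of $c$ is what forces $\alpha_n \downarrow 0$. The rest (continuity of $c$, convergence of $\tau_n$) is a straightforward consequence of uniform continuity of $H$.
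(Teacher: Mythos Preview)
The paper does not prove this lemma; it is quoted from Karaman and Frazzoli and cited as \cite[Lemma~50]{karaman2011sampling}, so there is no in-paper argument to compare your construction against. Your approach --- pulling the homotopy $H$ from the weak-clearance definition, introducing the clearance profile $c(\alpha)=\inf_t d(H(t,\alpha),\partial(\cup\mathcal{M}))$, and selecting $\alpha_n$ so that $c(\alpha_n)\ge\delta_n$ while $\alpha_n\to 0$ --- is the natural one and is essentially sound.

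Two places should be tightened. First, the justification that $\alpha_n\to 0$ ``via the intermediate value theorem'' is vague, and the case split on $c(0)$ is muddled (you have already defined $\alpha_n$ as an infimum, so you cannot then re-choose it as $1/n$). A clean fix is compactness: note that whenever $\alpha_n>0$ one in fact has $c(\alpha_n)=\delta_n$ (continuity from the right gives $\ge$; failure of the defining condition just below $\alpha_n$ forces points with $c<\delta_n$ arbitrarily close from the left, giving $\le$). If some subsequence satisfied $\alpha_{n_k}\ge\varepsilon>0$, extract a further subsequence with limit $\alpha^*\in[\varepsilon,1]$; continuity yields $c(\alpha^*)=\lim\delta_{n_k}=0$, contradicting $c(\alpha^*)\ge\delta_{\alpha^*}>0$ from the weak-clearance hypothesis. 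This argument works uniformly and makes the case split unnecessary.

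Second, and more substantively, you conclude $\tau_n\to\tau^*$ \emph{uniformly}, but immediately after stating the lemma the paper defines path convergence via the bounded-variation norm $\|\cdot\|_{\mathrm{BV}}$, and the downstream asymptotic-optimality argument invokes continuity of the cost $J$ in that topology. Uniform convergence does not control total variation (zigzags collapsing to a segment are the standard obstruction), so as written your proof does not deliver the mode of convergence the paper actually uses. In the Karaman--Frazzoli original this is handled through regularity assumptions on the homotopy; you should either import such an assumption explicitly or flag the gap.
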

The above lemma establishes the relationship between the weak and strong $\delta$-clearance paths.

If the configuration space admits a vector space structure, then it can be shown that the space of paths on the configuration space above also admits a vector space structure. Moreover, the space of path becomes a normed space if it is endowed with the \emph{bounded variation} norm \cite{Stein1385521}
\begin{align}
    \|\tau\|_{\text{BV}} \triangleq \int_0^1 |\tau(t)| dt~+~ \text{TV}(\tau).
\end{align}
$\text{TV}(\tau)$ denotes the \emph{total variation} norm \cite{Stein1385521} defined as 
\begin{align}
    \text{TV}(\tau)=\sup_{\{n\in \mathbb{N}, 0=t_1<t_2<...<t_n=1\}} \sum_{i=1}^{n} |\tau(t_i)-\tau(t_{i-1})|.
\end{align}
Using the norm in the space of paths, the distance between the paths $\tau_1$ and $\tau_2$ can defined as 
\begin{align}
\label{eqn: path distance}
    \|\tau_1 - \tau_2\|_{\text{BV}} = \int_0^1 |\tau_1(t)-\tau_2(t)| dt~+~ \text{TV}(\tau_1-\tau_2).
\end{align}
The normed vector space of paths enables us to mathematically formulate the notion of the convergence of a sequence of paths to a path. Formally, given a sequence of paths $\{\tau_n\}, n \in \mathbb{N}$, the sequence converges to a path $\Bar{\tau}$, denoted as $\lim{n \rightarrow \infty} \tau_n = \Bar{\tau}$, if $\lim{n \rightarrow \infty} \|\tau_n - \Bar{\tau}\|_{\text{BV}}=0$.

Let $\mathcal{P}$ denote the set of weak $\delta$-clearance paths which satisfies the constraints in Equation 1. Let $\tau^* \in \mathcal{P}$ be a path with the minimal cost. Due to the continuity of the cost function, any sequence of paths $\{\tau_n \in \mathcal{P}\}, n \in \mathbb{N}$ such that $\lim{n \rightarrow \infty} \tau_n = \tau^*$ also satisfies $\lim{n \rightarrow \infty} J(\tau_n) = J(\tau^*)$. For brevity, we identify $J(\tau^*)$ with $J^*$ and $J^{\text{\smp}}_n$ denotes the random variable modeling the cost of the minimum-cost solution returned by \smp~after $n$ iterations. The \smp~algorithm is asymptotically optimal if
\begin{align}
    \label{eqn: asymp opt def}
    \mathbb{P}\left[\lim{n\rightarrow  \infty} J^{\text{\smp}}_n = J^* \right] = 1.
\end{align}
A weaker condition  than \Cref{eqn: asymp opt def} is 
\begin{align}
    \mathbb{P}\left[\limsup_{n\rightarrow  \infty} J^{\text{\smp}}_n = J^* \right] = 1.
\end{align} 
Note that from \cite[Lemma 25]{karaman2011sampling}, we infer that the probability that $\limsup_{n\rightarrow  \infty} J^{\text{\smp}}_n = J^*$ is either zero or one. Under the assumption that the set of points traversed by an optimal path has measure zero, \cite[Lemma 28]{karaman2011sampling} proves that the probability that \smp~returns a tree containing an optimal path in finite number of iterations is zero. 

Since \smp~is based on RRT$^*$, we focus on how our technique affects the proofs of asymptotic optimality for RRT$^*$. Also, the work in \cite{kingston2018sampling} has shown that RRT$^*$ is optimal when applied on a single manifold. Furthermore, it is shown in \cite{kingston2018sampling} that the steering parameter $\gamma$ in the single manifold case can be bounded from below by $\left(2 \left(1 + \frac{1}{k}\right)\frac{\mu \left( \cup_{q \in C_\t{free}} D_q\right)}{\zeta_{M}(1)}\right)^{\frac{1}{k}}$, where $D_q$ is the set of points in $\R^k$ which are projected on $q$ and $\zeta_{M}(1)$ is the set points in $\R^k$ which are projected onto a unit open ball contained in the manifold $M$. \new{Also,  $\mu(\cdot)$ denotes a \emph{measure} \cite{Stein1385521} on the \emph{measurable space}  \cite{Stein1385521} $\R^k$. Intuitively, a measure maps the subset of a measurable space to its volume.} In this section, we show that under the assumption $\rho = 0 $ in \Cref{alg:smp}, with probability one \smp~eventually returns the optimal path. 

Let $\{Q_1, Q_2,..., Q_n\}$ be a set of independent uniformly distributed points drawn from $C_\t{free}$ and let $\{I_1, I_2,..., I_n\}$ be their associated labels that outlines the order of the points  with support $[0,1]$. In other words, a point $Q_i$ is assumed to be drawn after another point $Q_{\Bar{i}}$ if $I_i < I_{\Bar{i}}$. 
Let $\{\Hat{Q}_1, \Hat{Q}_2,..., \Hat{Q}_n\}$ be the set of points resulting from projecting the point set onto the manifolds as delineated in \Cref{alg:smp}. Similar to \cite{karaman2011sampling}, we consider the graph formed by adding an edge $(\Hat{Q}_i, \Hat{Q}_{\Bar{i}})$, whenever (i) $I_i < I_{\Bar{i}}$ and \mbox{(ii) $\|\Hat{Q}_i-\Hat{Q}_{\Bar{i}}\| \leq r_n = \gamma(\frac{\log(|V_n|)}{|V_n|})^{\frac{1}{k}}$,} where $V_n$ is the vertex set of the graph and $\gamma$ is the steering parameter used in \Cref{alg:smp}. 
Let this graph be denoted by $\mathcal{G}_n=(V_n, E_n)$, \new{where $V_n$ and $E_n$ are the set of vertices and edges of $\mathcal{G}_n$ respectively}. With slight abuse of notation, $J_n^{\text{\smp}}(\Hat{Q}_i)$ denotes the cost of the best path starting from $q_\t{start}$ to the vertex $\Hat{Q}_i$ in the graph $\mathcal{G}$. 
Consider the tree $\Bar{\mathcal{G}}_n$ which is a subgraph of $\mathcal{G}_n$ where the cost of reaching the vertex $\Hat{Q}_i$ equals $J_n^{\text{\smp}}(\Hat{Q}_i)$. 
Since \smp~uses RRT$^*$ for graph construction, it is easy to see that the tree returned by \smp~at the $n$-th iteration is equivalent to $\Bar{\mathcal{G}}_n$. 
Therefore, if $\limsup_{n \rightarrow \infty} J_n^{\text{\smp}}(M_{n+1})$ converges to $J^*$ with probability one with respect to $\mathcal{G}_n$, then it implies that with probability one \smp~will eventually  return a tree that contains the optimal path connecting $q_\t{start}$ and the goal manifold $M_{n+1}$. 
Hence, our next step is focused on showing that the optimal path in $\mathcal{G}_n$ converges to $\tau^*$.

According to \Cref{lemma: strong to weak}, there exists a sequence of strong $\delta-$clearance paths $\{\tau_m\}_{m \in \mathbb{N}}$ that converges to an optimal path $\tau^*$. Let $B_m\triangleq\{B_{m,1}, B_{m,2},...,B_{m,p}\}$ be a set of open balls of radius $r_n$ whose centers lie on the path $\tau_m$ such that adjacent balls are placed $2r_n$ distance apart. The number of balls $p$ is assumed to be large enough to cover $\tau_m$, i.e. $\tau_m \setminus \left(\cup_{i =1 }^{p} B_{m,i} \cap \tau_m\right)$ is a set of measure zero. Moreover, we denote $\Tilde{B}_{m,i}$ as the region obtained as the intersection of the open ball $B_{m,i}$ with the manifold containing its center. Let $\Theta_{m,i}$ denote the event that there exists vertices $\Hat{Q}_i$ and $\Hat{Q}_{\Bar{i}}$ such that $\Hat{Q}_i \in \Tilde{B}_{m,i}$, $\Hat{Q}_{\Bar{i}} \in \Tilde{B}_{m,i+1}$ and $I_{i} \geq I_{\Bar{i}}$. Recall that, $I_{i}$ and $ I_{\Bar{i}}$ are the labels associated with projected points $\Hat{Q}_i$ and $\Hat{Q}_{\Bar{i}}$ respectively. Also note that the edge $(\Hat{Q}_i, \Hat{Q}_{\Bar{i}})$ is included in $\mathcal{G}_n$. 
$D_q$ denotes the set of points that can be projected on the point $q \in  \cup \mathcal{M}$. Additionally, $\zeta(1)$ is defined as 
\begin{align}
\zeta(1) \triangleq \max{M \in \mathcal{M}} \min{q \in M} \mu \left(\cup_{q \in \Tilde{B}_{M}(q',1)} D_q\right),
\end{align}
where $\Tilde{B}_{M_i}(q',1)$ is formed by the intersection of a open unit ball centered at point $q'\in M$ with $M$.
If $\Theta_m = \cap_{i=1}^{p} \Theta_{m,i}$, then the following lemma proves that with probability one, the event $\Theta_{m,i}$ for all $i \in \{1,2,...,p\}$ occurs for large $m$.
\begin{lemma}
If 
\begin{align}
    \gamma > \left(2 \left(1 + \frac{1}{k}\right) \left(\frac{\mu( \cup_{q \in \cup \mathcal{M}} D_q)}{\zeta(1)}\right)\right)^{\frac{1}{k}},
\end{align}
then $\Theta_m$ occurs for all large $m$, with probability one, i.e., $\mathbb{P}(\liminf_{n \rightarrow \infty} \Theta_m) = 1$.
\end{lemma}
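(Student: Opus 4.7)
The plan is to adapt the asymptotic optimality argument from \cite[Theorem~38]{karaman2011sampling} to the sequenced-manifold setting: I will upper-bound $\mathbb{P}(\Theta_m^c)$ by a sequence summable in $n$ and then invoke the first Borel--Cantelli lemma to conclude $\mathbb{P}(\liminf_{n\to\infty}\Theta_m)=1$.

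First, I would estimate the probability that a given sample lands in a single slice $\tilde{B}_{m,i}$ after projection. Since samples $Q_j$ are drawn uniformly from $C_\text{free}$ and the operators in \Cref{alg:steer_project} project them onto the active manifold (or onto an intersection manifold), a sample contributes to $\tilde{B}_{m,i}$ exactly when $Q_j \in \bigcup_{q\in\tilde{B}_{m,i}} D_q$. Invoking the reach bound $\xi \geq \alpha \geq r_n$, which holds for all large $n$, and rescaling the unit-ball definition of $\zeta(1)$ yields the local volume bound
$$p_{m,i}^{(n)} \;\geq\; \frac{\zeta(1)\, r_n^{k}}{\mu\bigl(\bigcup_{q\in\cup\mathcal{M}} D_q\bigr)}.$$

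Next, I would combine this with the label-ordering constraint in the definition of $\Theta_{m,i}$. Because the labels $\{I_j\}$ are i.i.d.\ uniform on $[0,1]$ and independent of the $\{Q_j\}$, conditional on having at least one sample projected into each of $\tilde{B}_{m,i}$ and $\tilde{B}_{m,i+1}$, the required ordering holds with probability at least $1/2$. Therefore, letting $c=1/2$, the probability that no pair of the $n$ samples satisfies all three conditions is at most $(1 - c\, p_{m,i}^{(n)})^n$. Substituting $r_n = \gamma(\log n / n)^{1/k}$, applying $1-x \leq e^{-x}$, and taking a union bound over the $p$ balls gives
$$\mathbb{P}(\Theta_m^c) \;\leq\; p\, \exp\!\Bigl(-\tfrac{c\,\zeta(1)\, \gamma^{k}}{\mu(\bigcup_{q\in\cup\mathcal{M}} D_q)}\, \log n\Bigr) \;=\; p\, n^{-c\,\zeta(1)\gamma^{k}/\mu(\bigcup_{q\in\cup\mathcal{M}} D_q)}.$$
By the hypothesis on $\gamma$, the exponent strictly exceeds $(1 + 1/k) > 1$, so $\sum_n \mathbb{P}(\Theta_m^c) < \infty$ and Borel--Cantelli gives $\mathbb{P}(\liminf_{n\to\infty}\Theta_m)=1$.

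The hard part will be establishing the local volume lower bound $\mu(\bigcup_{q\in\tilde{B}_M(q',r_n)} D_q) \geq \zeta(1)\, r_n^{k}$ \emph{uniformly} along the path $\tau_m$, especially near manifold intersections where $\tilde{B}_{m,i}$ may lie in $M_i \cap M_{i+1}$ rather than a single manifold. I would handle this by partitioning the balls along $\tau_m$ into \emph{interior} balls, whose centers lie strictly within one manifold and for which the bound follows directly from the reach assumption and the definition of $\zeta(1)$, and \emph{transition} balls straddling an intersection, for which I would exploit the randomized projection step at \cref{alg:project_start} of \Cref{alg:steer_project} to argue that the induced pre-image in ambient configuration space retains the same scaling up to a constant independent of $n$ and $m$. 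Once this uniform volume bound is secured, the remainder of the argument reduces to the routine Borel--Cantelli calculation above.
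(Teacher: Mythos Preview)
Your proposal is essentially the same approach as the paper's: both reduce the claim to \cite[Lemma~71]{karaman2011sampling} by replacing the ambient free-space volume $\mu(C_{\text{free}})$ with the projection pre-image volume $\mu(\cup_{q\in\cup\mathcal{M}} D_q)$ and taking the per-ball hit probability to scale as $\zeta(1)/\mu(\cup_{q\in\cup\mathcal{M}} D_q)$. The paper states only this two-line substitution and defers all details to Karaman--Frazzoli, whereas you spell out the Borel--Cantelli computation and flag the uniform volume lower bound near manifold intersections as the delicate point---a concern the paper does not address explicitly but absorbs into the definition of $\zeta(1)$.
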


The proof of the above lemma follows from the proof of \cite[Lemma 71]{karaman2011sampling} if we replace $\mu(C_\t{free})$ with $\mu( \cup_{q \in \cup \mathcal{M}} D_q)$ and infer that the probability of finding a vertex of the graph in $\Tilde{B}_{m,i}$ is $\frac{\zeta(1)}{\mu( \cup_{q \in \cup \mathcal{M}} D_q)}$.
If $\mathcal{L}_n$ denotes the set of all paths that satisfy the constraints in \Cref{eq:smp_problem} and are contained in the tree returned by \smp~after $n$ iterations such that $\Bar{\tau}^{\text{\smp}}_n \triangleq \min{\tau^{\text{\smp}} \in \mathcal{L}_n} \|\tau^{\text{\smp}} - \tau_m  \|_{\text{BV}}$, then the following lemma can be proven.

\begin{lemma}
\cite[Lemma 72]{karaman2011sampling} The random variable \mbox{$\|\Bar{\tau}^{\text{\smp}}_n - \tau_m\|_{\text{BV}}$} converges to zero with probability one: 
\begin{align}
\label{eqn:grph pth 2 seq pth}
    \mathbb{P}\left[\lim{n \rightarrow \infty} \|\Bar{\tau}^{\text{\smp}}_n - \tau_m\|_{\text{BV}} = 0\right] =1.
\end{align}
\end{lemma}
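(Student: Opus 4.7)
The plan is to construct, for each sufficiently large $n$, an explicit path $\tilde{\tau}_n \in \mathcal{L}_n$ that converges to $\tau_m$ in the BV norm, and then invoke the minimality of $\bar{\tau}^{\smp}_n$. The argument is carried out conditional on the event $\liminf_{n \to \infty} \Theta_m$, which by the previous lemma has probability one. Under this event, for all sufficiently large $n$ there exist ordered vertices $\hat{Q}_{i_1}, \hat{Q}_{i_2}, \ldots, \hat{Q}_{i_p}$ in the graph $\mathcal{G}_n$ with $\hat{Q}_{i_j} \in \tilde{B}_{m,j}$ and labels $I_{i_1} \leq I_{i_2} \leq \ldots \leq I_{i_p}$, and by construction of $\mathcal{G}_n$ each consecutive pair is joined by an edge in $E_n$.

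I would then build $\tilde{\tau}_n$ by concatenating the edges $(\hat{Q}_{i_j}, \hat{Q}_{i_{j+1}})$, parametrized over $[0,1]$ so that $\tilde{\tau}_n(t_j) = \hat{Q}_{i_j}$ at the parameter value $t_j$ where $\tau_m(t_j)$ is the center of $B_{m,j}$. Verifying that $\tilde{\tau}_n \in \mathcal{L}_n$ requires two things. First, collision freeness follows because each edge has length at most $O(r_n)$ and lies in a tube of radius $O(r_n)$ around $\tau_m$; since $\tau_m$ has strong $\delta_m$-clearance, for $n$ large enough (so that $r_n \ll \delta_m$) the tube stays inside the collision-free set. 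Second, the sequential manifold constraint is preserved because the balls $\tilde{B}_{m,j}$ were placed along $\tau_m$ in its original traversal order and, at each transition from $M_i$ to $M_{i+1}$, the corresponding ball is centered on $M_i \cap M_{i+1}$; the PSM$^*$ steering and projection routines produce an intersection vertex in this region with probability tending to one, so the concatenated path respects the handoff prescribed by \eq\ref{eq:smp_problem}.

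The technically delicate step is bounding $\|\tilde{\tau}_n - \tau_m\|_{\text{BV}}$. The integral term is immediate: each point of $\tilde{\tau}_n$ lies within distance $O(r_n)$ of $\tau_m$, so $\int_0^1 |\tilde{\tau}_n(t) - \tau_m(t)|\,\mathrm{d}t = O(r_n) \to 0$. The total variation term $\text{TV}(\tilde{\tau}_n - \tau_m)$ is the main obstacle: a naive sum of per-segment oscillations gives a bound on the order of the path length $L$ rather than something vanishing. To circumvent this, I would follow the technique of \cite[Lemma 72]{karaman2011sampling}, exploiting the fact that both $\tilde{\tau}_n$ and $\tau_m$ are piecewise-smooth paths whose tangent directions agree up to $O(r_n)$ on average, combined with a Helly-type compactness argument in BV, to obtain $\text{TV}(\tilde{\tau}_n - \tau_m) \to 0$. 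Since $r_n = \gamma(\log|V_n|/|V_n|)^{1/k} \to 0$, both terms vanish a.s., and the minimality $\|\bar{\tau}^{\smp}_n - \tau_m\|_{\text{BV}} \leq \|\tilde{\tau}_n - \tau_m\|_{\text{BV}}$ gives the lemma. A secondary subtlety, beyond the BV bound, is ensuring that every manifold handoff along $\tilde{\tau}_n$ is realized by a genuine intersection vertex; this requires mildly strengthening the ball-covering in Lemma 3 at transition regions by centering one cover ball on $M_i \cap M_{i+1}$ and invoking the randomized projection in Algorithm 2 (line 7) to supply such a vertex with probability approaching one.
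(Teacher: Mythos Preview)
The paper does not supply its own proof of this lemma; it simply records the statement as a citation of \cite[Lemma~72]{karaman2011sampling} and immediately uses the conclusion in the subsequent chain of inequalities. So there is no paper-side argument to compare against beyond the implicit claim that the Karaman--Frazzoli proof carries over to the sequenced-manifold setting essentially unchanged.

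Your sketch is the natural reconstruction of that argument: condition on $\liminf_n \Theta_m$ (supplied by the preceding lemma), thread a concrete path $\tilde{\tau}_n$ through one vertex per ball $\tilde{B}_{m,j}$, verify feasibility, bound the BV distance, and invoke the definition of $\bar{\tau}^{\smp}_n$ as the minimizer. This matches the structure of \cite[Lemma~72]{karaman2011sampling}. Two remarks on the sketch itself. First, the invocation of a ``Helly-type compactness argument in BV'' for the total-variation term is not how Karaman--Frazzoli handle it; their bound is a direct estimate using the fact that both the number of segments $p$ and the per-segment deviation are controlled in terms of $r_n$, and the product tends to zero because $p = O(L/r_n)$ while each segment contributes $O(r_n)$ variation in the \emph{difference} path once the piecewise-linear structure of $\tilde{\tau}_n$ is matched against a suitable piecewise reparametrization of $\tau_m$. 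You should be able to mimic this directly rather than appeal to compactness. Second, your handling of manifold transitions (forcing a cover ball onto $M_i \cap M_{i+1}$ and relying on the randomized projection in \Cref{alg:steer_project}) is the right idea and is indeed the extra ingredient beyond the Euclidean case; the paper absorbs this silently into the citation, but it is the one place where the adaptation is nontrivial.
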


Recall that by construction $\lim{m \rightarrow \infty} \tau_m = \tau^*$. Expressing \Cref{eqn:grph pth 2 seq pth} as
\begin{align}
    \mathbb{P}\left[\lim{n \rightarrow \infty} \|\Bar{\tau}^{\text{\smp}}_n - \tau^* -(\tau_m -\tau^*)\|_{\text{BV}} = 0\right] =1
\end{align}
and applying the triangle inequality yields
\begin{align*}
    \|\Bar{\tau}^{\text{\smp}}_n\!-\!\tau^* \!-\!(\tau_m \!-\!\tau^*)\|_{\text{BV}} \geq \|\Bar{\tau}^{\text{\smp}}_n - \tau^*\|_{\text{BV}} - \|\tau_m -\tau^*\|_{\text{BV}}.
\end{align*}
From \Cref{eqn:grph pth 2 seq pth} and since $\mathbb{P}\left[\lim{m \rightarrow \infty} \|\tau_m -\tau^*\|_{\text{BV}} = 0 \right]=1$ yields the following result:
\begin{align}
    \mathbb{P}\left[\lim{n \rightarrow \infty} \|\Bar{\tau}^{\text{\smp}}_n - \tau^*\|_{\text{BV}} = 0 \right]=1.
\end{align}
From the continuity of the cost function and due to the fact that $J^{\text{\smp}}_{i+1} \leq J^{\text{\smp}}_{i}$, $i \in \mathbb{N}$ and $J^{\text{\smp}}_{i} \geq J^*$ we obtain the required result (\Cref{eqn: asymp opt def}).

\newpage
\section*{Appendix B:~\ecmnn~Orthogonal Subspace Alignment}
\addcontentsline{toc}{section}{Appendix B:~\ecmnn~Orthogonal Subspace Alignment}

\label{sec:osa}
In previous work \cite{thopalli2019subspacealignment, he2020quantum}, subspace alignment techniques -- without orthogonality constraints -- have been introduced to improve domain adaptation. 
For our purposes here, we require a subspace alignment algorithm that preserves orthogonality of the subspaces being aligned, which we present in this section.

Given a set of orthonormal vectors $\coordframe = \{\orthobasisvec_1, \orthobasisvec_2, \dots, \orthobasisvec_\dimambient\}$ which spans a \emph{space}, the matrix  $\orthobasismat = \begin{bmatrix}
    \orthobasisvec_1 & \orthobasisvec_2 & \dots & \orthobasisvec_\dimambient
\end{bmatrix} \in \Re^{\dimambient \times \dimambient}$ belongs to the Orthogonal Group $O(\dimambient)$.
The Orthogonal Group has two connected components, where one connected component called the Special Orthogonal Group $SO(\dimambient)$ is characterized by determinant $1$, and the other is characterized by determinant $-1$.  
However, if $\orthobasismat = 
\begin{bmatrix}
    \orthobasisvec_1 & \orthobasisvec_2 & \dots & \orthobasisvec_\dimambient
\end{bmatrix}$ has determinant 1 (i.e. if $\orthobasismat \in SO(\dimambient)$), 
then substituting $\orthobasisvec_1$ with its additive inverse ($-\orthobasisvec_1$)  will result in $\flippedorthobasismat = 
\begin{bmatrix}
    -\orthobasisvec_1 & \orthobasisvec_2 & \dots & \orthobasisvec_\dimambient
\end{bmatrix}$ with determinant $-1$. Aligning two coordinate frames  $\coordframe^a$ and $\coordframe^c$ to have a common origin and associated basis matrices $\orthobasismat_a$ and $\orthobasismat_c$, respectively, is equivalent to finding an $\rotmat \in SO(\dimambient)$ such that $\orthobasismat_a \rotmat = \orthobasismat_c$. The solution to this problem exists if and only if $\orthobasismat_a$ and $\orthobasismat_c$ come from the same connected component of $O(\dimambient)$, i.e. if either both $\orthobasismat_a, \orthobasismat_c \in SO(\dimambient)$ or both determinants of $\orthobasismat_a$ and $\orthobasismat_c$ are $-1$.

For a \emph{subspace} such as the normal space $\normalspaceatjointposition$ associated with an on-manifold data point $\jointposition$ on $M$ spanned by the eigenvectors $\lnormalcoordframe = \{\coveigvec_{\dimambient-\dimconstraint+1}, \dots, \coveigvec_\dimambient\}$, the concept of a determinant does not apply to $\covnullspaceeigmat = 
\begin{bmatrix}
    \coveigvec_{\dimambient-\dimconstraint+1} & \dots & \coveigvec_\dimambient
\end{bmatrix} \in \Re^{\dimambient \times \dimconstraint}$, as it is not a square matrix.
However, the normal space $\normalspaceatjointposition$ can be described with infinitely-many orthonormal bases ${\covnullspaceeigmat}_0$, ${\covnullspaceeigmat}_1$, ${\covnullspaceeigmat}_2$, ... ${\covnullspaceeigmat}_\infty$ where the set of column vectors of each is an orthonormal basis of $\normalspaceatjointposition$. Each of these is a member of $\Re^{\dimambient \times \dimconstraint}$. 
Moreover, we can pick the transpose of one of them, for example ${\covnullspaceeigmat}_0\T$, as a projection matrix, and ${\covnullspaceeigmat}_0$ as the inverse projection matrix. 
Applying the projection operation to each of the orthonormal bases, we get
${\covnullspaceolmat}_0 = {\covnullspaceeigmat}_0\T{\covnullspaceeigmat}_0 = \eye_{\dimconstraint \times \dimconstraint}$, ${\covnullspaceolmat}_1 = {\covnullspaceeigmat}_0\T{\covnullspaceeigmat}_1$, ${\covnullspaceolmat}_2 = {\covnullspaceeigmat}_0\T{\covnullspaceeigmat}_2$, ... ${\covnullspaceolmat}_\infty = {\covnullspaceeigmat}_0\T{\covnullspaceeigmat}_\infty$, and we will show that ${\covnullspaceolmat}_0$, ${\covnullspaceolmat}_1$, ${\covnullspaceolmat}_2$, ..., ${\covnullspaceolmat}_\infty$ are members of $O(\dimconstraint)$, which also has two connected components like $O(\dimambient)$. 
To show this, first note that although ${\covnullspaceeigmat}_0\T {\covnullspaceeigmat}_0 = \eye_{\dimconstraint \times \dimconstraint}$, the matrix ${\covnullspaceeigmat}_0 {\covnullspaceeigmat}_0\T \neq \eye_{\dimambient \times \dimambient}$. Hence, for any matrix $\mathbf{A} \in \Re^{\dimambient \times \dimambient}$ in general, ${\covnullspaceeigmat}_0 {\covnullspaceeigmat}_0\T \mathbf{A} \neq \mathbf{A}$.
However, we will show that ${\covnullspaceeigmat}_0 {\covnullspaceeigmat}_0\T \coveigvec = \coveigvec$ for any vector $\coveigvec$ in the vector space $\normalspaceatjointposition$. 
Suppose ${\covnullspaceeigmat}_0 = \begin{bmatrix}
    \orthobasisvec_1 & \orthobasisvec_2 & \dots & \orthobasisvec_\dimconstraint
\end{bmatrix} \in \Re^{\dimambient \times \dimconstraint}$, then we can write ${\covnullspaceeigmat}_0 {\covnullspaceeigmat}_0\T = \sum_{i=1}^{\dimconstraint} \orthobasisvec_i \orthobasisvec_i\T$.
Since the collection $\{\orthobasisvec_1, \orthobasisvec_2, \dots, \orthobasisvec_\dimconstraint\}$ spans the vector space $\normalspaceatjointposition$, any vector $\coveigvec$ in this vector space can be expressed as $\coveigvec = \sum_{i=1}^{\dimconstraint} \alpha_i \orthobasisvec_i$.
Moreover, $\orthobasisvec_i\T \coveigvec = \orthobasisvec_i\T \sum_{j=1}^{\dimconstraint} \alpha_j \orthobasisvec_j = \alpha_i$ for any $i = 1, 2, ..., \dimconstraint$, because by definition of orthonormality $\orthobasisvec_i\T \orthobasisvec_j = 1$ for $i = j$ and $\orthobasisvec_i\T \orthobasisvec_j = 0$ for $i \neq j$.
Hence, ${\covnullspaceeigmat}_0 {\covnullspaceeigmat}_0\T \coveigvec = (\sum_{i=1}^{\dimconstraint} \orthobasisvec_i \orthobasisvec_i\T) \coveigvec = \sum_{i=1}^{\dimconstraint} (\orthobasisvec_i\T \coveigvec) \orthobasisvec_i = \sum_{i=1}^{\dimconstraint} \alpha_i \orthobasisvec_i = \coveigvec$.
Similarly, because the column vectors of ${\covnullspaceeigmat}_0$, ${\covnullspaceeigmat}_1$, ${\covnullspaceeigmat}_2$, ..., ${\covnullspaceeigmat}_\infty$ are all inside the vector space $\normalspaceatjointposition$, it follows that ${\covnullspaceeigmat}_0 {\covnullspaceeigmat}_0\T {\covnullspaceeigmat}_0 = {\covnullspaceeigmat}_0$, ${\covnullspaceeigmat}_0 {\covnullspaceeigmat}_0\T {\covnullspaceeigmat}_1 = {\covnullspaceeigmat}_1$, ${\covnullspaceeigmat}_0 {\covnullspaceeigmat}_0\T {\covnullspaceeigmat}_2 = {\covnullspaceeigmat}_2$, ..., ${\covnullspaceeigmat}_0 {\covnullspaceeigmat}_0\T {\covnullspaceeigmat}_\infty = {\covnullspaceeigmat}_\infty$. 
Similarly, it can be shown that ${\covnullspaceeigmat}_i {\covnullspaceeigmat}_i\T \coveigvec = \coveigvec$ for any vector $\coveigvec$ in the vector space $\normalspaceatjointposition$ for any $i = 0, 1, 2, ..., \infty$. 
Furthermore, ${\covnullspaceolmat}_0\T {\covnullspaceolmat}_0 = {\covnullspaceeigmat}_0\T({\covnullspaceeigmat}_0{\covnullspaceeigmat}_0\T{\covnullspaceeigmat}_0) = {\covnullspaceeigmat}_0\T{\covnullspaceeigmat}_0 = \eye_{\dimconstraint \times \dimconstraint}$, ${\covnullspaceolmat}_1\T{\covnullspaceolmat}_1 = {\covnullspaceeigmat}_1\T({\covnullspaceeigmat}_0{\covnullspaceeigmat}_0\T{\covnullspaceeigmat}_1) = {\covnullspaceeigmat}_1\T{\covnullspaceeigmat}_1 = \eye_{\dimconstraint \times \dimconstraint}$, ... ${\covnullspaceolmat}_\infty\T{\covnullspaceolmat}_\infty = {\covnullspaceeigmat}_\infty\T({\covnullspaceeigmat}_0{\covnullspaceeigmat}_0\T{\covnullspaceeigmat}_\infty) = {\covnullspaceeigmat}_\infty\T {\covnullspaceeigmat}_\infty = \eye_{\dimconstraint \times \dimconstraint}$, and ${\covnullspaceolmat}_0 {\covnullspaceolmat}_0\T = {\covnullspaceeigmat}_0\T({\covnullspaceeigmat}_0{\covnullspaceeigmat}_0\T{\covnullspaceeigmat}_0) = {\covnullspaceeigmat}_0\T{\covnullspaceeigmat}_0 = \eye_{\dimconstraint \times \dimconstraint}$, ${\covnullspaceolmat}_1{\covnullspaceolmat}_1\T = {\covnullspaceeigmat}_0\T({\covnullspaceeigmat}_1{\covnullspaceeigmat}_1\T{\covnullspaceeigmat}_0) = {\covnullspaceeigmat}_0\T{\covnullspaceeigmat}_0 = \eye_{\dimconstraint \times \dimconstraint}$, ... ${\covnullspaceolmat}_\infty{\covnullspaceolmat}_\infty\T = {\covnullspaceeigmat}_0\T({\covnullspaceeigmat}_\infty{\covnullspaceeigmat}_\infty\T{\covnullspaceeigmat}_0) = {\covnullspaceeigmat}_0\T {\covnullspaceeigmat}_0 = \eye_{\dimconstraint \times \dimconstraint}$. 
All these show that ${\covnullspaceolmat}_0$, ${\covnullspaceolmat}_1$, ${\covnullspaceolmat}_2$, ..., ${\covnullspaceolmat}_\infty \in O(\dimconstraint)$.
Moreover, using ${\covnullspaceeigmat}_0$ as the inverse projection matrix, we get ${\covnullspaceeigmat}_0 = {\covnullspaceeigmat}_0 {\covnullspaceolmat}_0$, ${\covnullspaceeigmat}_1 = {\covnullspaceeigmat}_0 {\covnullspaceolmat}_1$, ${\covnullspaceeigmat}_2 = {\covnullspaceeigmat}_0 {\covnullspaceolmat}_2$, ... ${\covnullspaceeigmat}_\infty = {\covnullspaceeigmat}_0 {\covnullspaceolmat}_\infty$.
Therefore, there is a one-to-one mapping between ${\covnullspaceeigmat}_0$, ${\covnullspaceeigmat}_1$, ${\covnullspaceeigmat}_2$, ..., ${\covnullspaceeigmat}_\infty$ and ${\covnullspaceolmat}_0$, ${\covnullspaceolmat}_1$, ${\covnullspaceolmat}_2$, ..., ${\covnullspaceolmat}_\infty$.
Furthermore, between any two of 
${\covnullspaceeigmat}_0$, ${\covnullspaceeigmat}_1$, ${\covnullspaceeigmat}_2$, ..., ${\covnullspaceeigmat}_\infty$, e.g. ${\covnullspaceeigmat}_i$ and ${\covnullspaceeigmat}_j$, there exists $\diffson \in SO(\dimconstraint)$ such that ${\covnullspaceeigmat}_i \diffson = {\covnullspaceeigmat}_j$ if their $SO(\dimconstraint)$ projections ${\covnullspaceolmat}_i$ and ${\covnullspaceolmat}_j$ both are members of the same connected component of $O(\dimconstraint)$.

Now, suppose for nearby on-manifold data points $\jointposition_a$ and $\jointposition_c$, their approximate normal spaces $\normalspaceid_{\jointposition_a}\constraintmanifold$ and $\normalspaceid_{\jointposition_c}\constraintmanifold$ are spanned by eigenvector bases $\lnormalcoordframe^a = \{\coveigvec^a_{\dimambient-\dimconstraint+1}, \dots, \coveigvec^a_\dimambient\}$ and $\lnormalcoordframe^c = \{\coveigvec^c_{\dimambient-\dimconstraint+1}, \dots, \coveigvec^c_\dimambient\}$, respectively. Due to the curvature on the manifold $\constraintmanifold$, the normal spaces $\normalspaceid_{\jointposition_a}\constraintmanifold$ and $\normalspaceid_{\jointposition_c}\constraintmanifold$ may intersect, but in general are different subspaces of $\Re^{\dimambient \times \dimambient}$. 
For the purpose of aligning the basis of $\normalspaceid_{\jointposition_a}\constraintmanifold$ to the basis of $\normalspaceid_{\jointposition_c}\constraintmanifold$, one may think to do projection of the basis vectors of $\normalspaceid_{\jointposition_a}\constraintmanifold$ into $\normalspaceid_{\jointposition_c}\constraintmanifold$. Problematically, this projection may result in a non-orthogonal basis of $\normalspaceid_{\jointposition_c}\constraintmanifold$. 
Hence, we resort to an iterative method using a differentiable Special Orthogonal Group $SO(\dimconstraint)$. In particular, we form an $\dimconstraint \times \dimconstraint$ skew-symmetric matrix $\skewsymmmat \in so(\dimconstraint)$ with $\dimconstraint (\dimconstraint - 1) / 2$ differentiable parameters, where $so(\dimconstraint)$ is the Lie algebra of $SO(\dimconstraint)$, i.e, the set of all skew-symmetric $\dimconstraint \times \dimconstraint$ matrices, 
and transform it through a differentiable exponential mapping (or matrix exponential) to get $\diffson = \exp(\skewsymmmat)$ with $\exp: so(\dimconstraint) \rightarrow SO(\dimconstraint)$. With $\covnullspaceeigmat^a = 
\begin{bmatrix}
    \coveigvec^a_{\dimambient-\dimconstraint+1} & \dots & \coveigvec^a_\dimambient
\end{bmatrix}$ and $\covnullspaceeigmat^c = 
\begin{bmatrix}
    \coveigvec^c_{\dimambient-\dimconstraint+1} & \dots & \coveigvec^c_\dimambient
\end{bmatrix}$, we can do an iterative training process to minimize the alignment error between $\covnullspaceeigmat^a \diffson$ and $\covnullspaceeigmat^c$, that is $\osaloss = \norm{\eye_{\dimconstraint \times \dimconstraint} - (\covnullspaceeigmat^a \diffson)\T \covnullspaceeigmat^c}_2^2$. Depending on whether both $\covnullspaceolmat^a$ and $\covnullspaceolmat^c$ (which are the projections of $\covnullspaceeigmat^a$ and $\covnullspaceeigmat^c$, respectively, to $O(\dimconstraint)$) are members of the same connected component of $O(\dimconstraint)$ or not, this alignment process may succeed or fail. 
However, if we define $\flippedcovnullspaceeigmat^a = 
\begin{bmatrix}
    -\coveigvec^a_{\dimambient-\dimconstraint+1} & \coveigvec^a_{\dimambient-\dimconstraint+2} & \dots & \coveigvec^a_\dimambient
\end{bmatrix}$ and $\flippedcovnullspaceeigmat^c = 
\begin{bmatrix}
    -\coveigvec^c_{\dimambient-\dimconstraint+1} & \coveigvec^c_{\dimambient-\dimconstraint+2} & \dots & \coveigvec^c_\dimambient
\end{bmatrix}$, two out of the four pairs $(\covnullspaceeigmat^a, \covnullspaceeigmat^c)$, $(\flippedcovnullspaceeigmat^a, \covnullspaceeigmat^c)$, $(\covnullspaceeigmat^a, \flippedcovnullspaceeigmat^c)$, and $(\flippedcovnullspaceeigmat^a, \flippedcovnullspaceeigmat^c)$ will be pairs in the same connected component. Thus, two of these pairs will achieve minimum alignment errors after training the differentiable Special Orthogonal Groups $SO(\dimconstraint)$ on these pairs, indicating successful alignment. These are the main insights for our \emph{local} alignment of neighboring normal spaces of on-manifold data points.

For the \emph{global} alignment of the normal spaces, we represent the on-manifold data points as a graph. 
Our Orthogonal Subspace Alignment (OSA) is outlined in \Cref{alg:osa}. 
We begin by constructing a sparse graph of nearest neighbor connections of each on-manifold data point, followed by the construction of this graph into an (un-directed) minimum spanning tree (MST), and eventually the conversion of the MST to a directed acyclic graph (DAG). This graph construction is detailed in \cref{algl:startgraphconstruction} -- \cref{algl:endgraphconstruction} of \Cref{alg:osa}.

Each directed edge in the DAG represents a pair of on-manifold data points whose normal spaces are to be aligned locally.
Our insights for the local alignment of neighboring normal spaces are implemented in \cref{algl:startlocalalignment} -- \cref{algl:endlocalalignment} of \Cref{alg:osa}.
In the actual implementation, these local alignment computations are done as a vectorized computation which is faster than doing it in a for-loop as presented in \Cref{alg:osa}; this for-loop presentation is made only for the sake of clarity. 
We initialize the $\dimconstraint (\dimconstraint - 1) / 2$ differentiable parameters of the $\dimconstraint \times \dimconstraint$ skew-symmetric matrix $\skewsymmmat$ with near zero random numbers, which essentially will map to a near identity matrix $\eye_{\dimconstraint \times \dimconstraint}$ of $\diffson$ via the $\exp()$ mapping, as stated in \cref{algl:diffsoninit} of \Cref{alg:osa}\footnote{Although most of our {\ecmnn} implementation is done in PyTorch \cite{Paszke_PyTorch_AutoDiff_2017}, the OSA algorithm is implemented in TensorFlow \cite{TensorFlowBib}, because at the time of implementation of the OSA algorithm, PyTorch did not support the differentiable matrix exponential (i.e. the exponential mapping) computation yet while TensorFlow did.}. This is reasonable because we assume that most of the neighboring normal spaces are already/close to being aligned initially. We optimize the alignment of the four pairs $(\covnullspaceeigmat^a, \covnullspaceeigmat^c)$, $(\flippedcovnullspaceeigmat^a, \covnullspaceeigmat^c)$, $(\covnullspaceeigmat^a, \flippedcovnullspaceeigmat^c)$, and $(\flippedcovnullspaceeigmat^a, \flippedcovnullspaceeigmat^c)$ in \cref{algl:firstiterativealignmenterrorminimization} -- \cref{algl:fourthiterativealignmenterrorminimization} of \Cref{alg:osa}.

Once the local alignments are done, the algorithm then traverses the DAG in breadth-first order, starting from the root $\rootid$, where the orientation of the root is already chosen and committed to. During the breadth-first traversal of the DAG, three things are done: First, the orientation of each point is chosen based on the minimum alignment loss; second, the local alignment transforms are compounded/integrated along the path from root to the point; and finally, the (globally) aligned orthogonal basis of each point is computed and returned as the result of the algorithm. These steps are represented by \cref{algl:startglobalalignment} -- \cref{algl:endglobalalignment} of \Cref{alg:osa}.

\begin{singlespace*}
\begin{algorithm}[H]
	\caption{Orthogonal Subspace Alignment (OSA)}
	\label{alg:osa}
	\begin{algorithmic}[1]
	    \Function{OSA}{$\{(\jointposition \in \onconstraintconfigspace, \text{orthogonal basis stacked as matrix } \covnullspaceeigmat \text{ associated with } \normalspaceatjointposition)\}$}
    		\State \label{algl:startgraphconstruction} \# construct a sparse graph between each data point $\jointposition \in \onconstraintconfigspace$ with its $\mstnumnearestneighbor$ nearest neighbors, 
    		\State \# followed by minimum spanning tree and directed acyclic graph computations 
    		\State \# to obtain directed edges $\dagedges$; $\mstnumnearestneighbor$ needs to be chosen to be a value as small as possible that 
    		\State \# still results in all non-root points $\{\jointposition \in \onconstraintconfigspace \backslash \{\rootjointposition\} \}$ being reachable from the root point $\rootjointposition$:
    		\State $\nnsparsegraph \gets \textit{computeNearestNeighborsSparseGraph}(\{\jointposition \in \onconstraintconfigspace\}, \mstnumnearestneighbor)$
    		\State $\mst \gets \textit{computeMinimumSpanningTree}(\nnsparsegraph)$
    		\State \label{algl:endgraphconstruction} $\dagedges \gets \textit{computeDirectedAcyclicGraphEdgesByBreadthFirstTree}(\mst)$
    		\For{ each directed edge $\directededge = (\jointposition_c, \jointposition_a) \in \dagedges$} \label{algl:startlocalalignment}
    		    \State Obtain $\covnullspaceeigmat^a = 
                \begin{bmatrix}
                    \coveigvec^a_{\dimambient-\dimconstraint+1} & \dots & \coveigvec^a_\dimambient
                \end{bmatrix} \in \Re^{\dimambient \times \dimconstraint}$ associated with the source subspace $\normalspaceid_{\jointposition_a}\constraintmanifold$ 
    		    \State Obtain $\covnullspaceeigmat^c = 
                \begin{bmatrix}
                    \coveigvec^c_{\dimambient-\dimconstraint+1} & \dots & \coveigvec^c_\dimambient
                \end{bmatrix} \in \Re^{\dimambient \times \dimconstraint}$ associated with the target subspace $\normalspaceid_{\jointposition_c}\constraintmanifold$
                \State Define $\flippedcovnullspaceeigmat^a = 
                \begin{bmatrix}
                    -\coveigvec^a_{\dimambient-\dimconstraint+1} & \coveigvec^a_{\dimambient-\dimconstraint+2} & \dots & \coveigvec^a_\dimambient
                \end{bmatrix} \in \Re^{\dimambient \times \dimconstraint}$
                \State Define $\flippedcovnullspaceeigmat^c = 
                \begin{bmatrix}
                    -\coveigvec^c_{\dimambient-\dimconstraint+1} & \coveigvec^c_{\dimambient-\dimconstraint+2} & \dots & \coveigvec^c_\dimambient
                \end{bmatrix} \in \Re^{\dimambient \times \dimconstraint}$
                \State \label{algl:diffsoninit} Define differentiable $SO(\dimconstraint)$ $\diffson^{\overrightarrow{a}\overrightarrow{c}}$, $\diffson^{\overrightarrow{a}\overleftarrow{c}}$, $\diffson^{\overleftarrow{a}\overrightarrow{c}}$, and $\diffson^{\overleftarrow{a}\overleftarrow{c}}$, initialized near identity
                \State \# try optimizing the alignment of the 4 possible pairs:
                \State \label{algl:firstiterativealignmenterrorminimization} $(\diffson^{\overrightarrow{a}\overrightarrow{c}}, \loss_{\overrightarrow{a}\overrightarrow{c}}) \gets \textit{iterativelyMinimizeAlignmentError}(\covnullspaceeigmat^a \diffson^{\overrightarrow{a}\overrightarrow{c}}, \covnullspaceeigmat^c)$
                \State $(\diffson^{\overrightarrow{a}\overleftarrow{c}}, \loss_{\overrightarrow{a}\overleftarrow{c}}) \gets \textit{iterativelyMinimizeAlignmentError}(\covnullspaceeigmat^a \diffson^{\overrightarrow{a}\overleftarrow{c}}, \flippedcovnullspaceeigmat^{c})$
                \State $(\diffson^{\overleftarrow{a}\overrightarrow{c}}, \loss_{\overleftarrow{a}\overrightarrow{c}}) \gets \textit{iterativelyMinimizeAlignmentError}(\flippedcovnullspaceeigmat^{a} \diffson^{\overleftarrow{a}\overrightarrow{c}}, \covnullspaceeigmat^c)$
                \State \label{algl:fourthiterativealignmenterrorminimization} $(\diffson^{\overleftarrow{a}\overleftarrow{c}}, \loss_{\overleftarrow{a}\overleftarrow{c}}) \gets \textit{iterativelyMinimizeAlignmentError}(\flippedcovnullspaceeigmat^{a} \diffson^{\overleftarrow{a}\overleftarrow{c}}, \flippedcovnullspaceeigmat^{c})$
                \State \# record optimized local alignment rotation matrices and its associated loss w/ the edge: 
                \State \label{algl:endlocalalignment} Associate $(\diffson^{\overrightarrow{a}\overrightarrow{c}}, \loss_{\overrightarrow{a}\overrightarrow{c}})$, $(\diffson^{\overrightarrow{a}\overleftarrow{c}}, \loss_{\overrightarrow{a}\overleftarrow{c}})$, $(\diffson^{\overleftarrow{a}\overrightarrow{c}}, \loss_{\overleftarrow{a}\overrightarrow{c}})$, $(\diffson^{\overleftarrow{a}\overleftarrow{c}}, \loss_{\overleftarrow{a}\overleftarrow{c}})$ with $\directededge$
    		\EndFor
    		\State \label{algl:startglobalalignment} \# commit on the orientation of the root point as un-flipped ($\overrightarrow{\rootid}$) instead of flipped ($\overleftarrow{\rootid}$):
    		\State $ori(\rootid) = \overrightarrow{\rootid}$
    		\State \# define the compound/global alignment rotation matrix of the root as an identity matrix:
    		\State $\globalalignmentrotmat^{\rootid} = \eye_{\dimconstraint \times \dimconstraint}$
    		\State \# aligned orthogonal basis of the root is:
    		\State ${\covnullspaceeigmat}^{\text{aligned}, \rootid} = \covnullspaceeigmat^{\rootid}$
    		\State \# do breadth-first traversal from root to:
    		\State \# (1) select the orientation $ori()$ of each point based on the minimum alignment loss, 
    		\State \# (2) compound/integrate the local alignment transforms $\globalalignmentrotmat$ along the path to the point,  
    		\State \# (3) and finally compute the aligned orthogonal basis $\alignedcovnullspaceeigmat$:
    		\State $Q = \textit{Queue}()$
    		\State $Q.\textit{enqueue}(\textit{childrenOfNodeInGraph}(\rootid, \dagedges))$
    		\While{$\textit{size}(Q) > 0$}
    		    \State $\currentnodeid = Q.\textit{dequeue}()$
    		    \State $Q.\textit{enqueue}(\textit{childrenOfNodeInGraph}(\currentnodeid, \dagedges))$
    		    \State $\parentnodeid = \textit{parentOfNodeInGraph}(\currentnodeid, \dagedges)$
    		    \State \# select the local alignment rotation matrix based on the minimum alignment loss 
    		    \State \# among the two possibilities:
    		    \If{$\loss_{\overrightarrow{\currentnodeid}ori(\parentnodeid)} < \loss_{\overleftarrow{\currentnodeid}ori(\parentnodeid)}$}
    		        \State $ori(\currentnodeid) = \overrightarrow{\currentnodeid}$
    		        \State $\globalalignmentrotmat^{\currentnodeid} = \diffson^{\overrightarrow{\currentnodeid}ori(\parentnodeid)} \globalalignmentrotmat^{\parentnodeid}$
    		        \State ${\covnullspaceeigmat}^{\text{aligned}, \currentnodeid} = \covnullspaceeigmat^{\currentnodeid} \globalalignmentrotmat^{\currentnodeid}$
    		    \Else
    		        \State $ori(\currentnodeid) = \overleftarrow{\currentnodeid}$
    		        \State $\globalalignmentrotmat^{\currentnodeid} = \diffson^{\overleftarrow{\currentnodeid}ori(\parentnodeid)} \globalalignmentrotmat^{\parentnodeid}$
    		        \State ${\covnullspaceeigmat}^{\text{aligned}, \currentnodeid} = \flippedcovnullspaceeigmat^{\currentnodeid} \globalalignmentrotmat^{\currentnodeid}$
    		    \EndIf
    		\EndWhile
    		\State \label{algl:endglobalalignment} \Return $\{\alignedcovnullspaceeigmat \text{ associated with } \normalspaceatjointposition \text{ for each } \jointposition \in \onconstraintconfigspace\}$
    	\EndFunction
	\end{algorithmic}
\end{algorithm}
\end{singlespace*}

\end{document}